\newcommand{\I}{\mathbb{I}}
\newcommand{\N}{\mathbb{N}}
\newcommand{\Z}{\mathbb{Z}}
\newcommand{\R}{\mathbb{R}}
\newcommand{\C}{\mathbb{C}}
\newcommand{\Ebb}{\mathbb{E}}
\newcommand{\p}[1]{\mathbb{P}(#1)}
\newcommand{\abs}[1]{\left|#1\right|}
\newcommand{\avg}[1]{\left<#1\right>}
\newcommand{\E}[2][]{\Ebb_{#1}[#2]}
\newcommand{\gauss}[3][]{\mathcal{N}(#1 #2, #3)}
\newcommand{\htheta}{\hat{\theta}}
\newcommand{\hTheta}{\hat{\Theta}}
\newcommand{\cK}{\mathcal{K}}
\newcommand{\cAK}{\mathcal{AK}}
\newcommand{\cHK}{\mathcal{HK}}
\newcommand{\oC}{\widetilde{C}}
\newcommand{\Var}{\operatorname{Var}}
\newcommand{\trans}{\operatorname{trans}}
\newcommand{\inter}{\operatorname{inter}}
\newcommand{\intra}{\operatorname{intra}}
\DeclareMathOperator*{\argmax}{arg\,max}
\DeclareMathOperator*{\argmin}{arg\,min}
\newtheorem{counter}{Counter}[section]
\newtheorem{lemma}[counter]{Lemma}
\newtheorem{proposition}[counter]{Proposition}
\newtheorem{theorem}[counter]{Theorem}
\newtheorem{corollary}[counter]{Corollary}
\newtheorem{definition}[counter]{Definition}
\newtheorem{hypothesis}[counter]{Hypothesis}
\newtheorem{remark}[counter]{Remark}
\newcommand{\iid}{\stackrel{iid}{\sim}}
\newcommand{\lln}{\stackrel{LLN}{\sim}}
\newcommand{\texteq}[1]{\stackrel{#1}{=}}
\begin{document}

\title[]{Communities in the Kuramoto Model: Dynamics and Detection via Path Signatures}
\author{Tâm J Nguyên$^{1,2}$, Darrick Lee$^3$ and Bernadette J Stolz$^{1,4,}$\textsuperscript{*}}

\address{$^1$ Laboratory for Topology and Neuroscience, School of Life Sciences, École Polytechnique Fédérale de Lausanne, 1015 Lausanne, Switzerland}
\address{$^2$ Laboratory of Computational Neuroscience, School of Computer and Communication Sciences and School of Life Sciences, École Polytechnique Fédérale de Lausanne, 1015 Lausanne, Switzerland}
\address{$^3$ School of Mathematics and Maxwell Institute for Mathematical Sciences, University of Edinburgh, Edinburgh EH9 3FD, Scotland}
\address{$^4$ Department of Machine Learning and Systems Biology, Max Planck Institute of Biochemistry, Am Klopferspitz 18, 82152 Martinsried, Germany}
\thanks{\textsuperscript{*}Corresponding author. Email: \texttt{stolz@biochem.mpg.de}}

\begin{abstract}

The behavior of multivariate dynamical processes is often governed by underlying structural connections that relate the components of the system. For example, brain activity, which is often measured via time series is determined by an underlying structural graph, where nodes represent neurons or brain regions and edges represent cortical connectivity. Existing methods for inferring structural connections from observed dynamics, such as correlation-based or spectral techniques, may fail to fully capture complex relationships in high-dimensional time series in an interpretable way. Here, we propose the use of path signatures—a mathematical framework that encodes geometric and temporal properties of continuous paths—to address this problem. 
Path signatures provide a reparametrization-invariant characterization of dynamical data and, in particular, can be used to compute the lead matrix, which reveals lead-lag phenomena. 
We showcase our approach on time series from coupled oscillators in the Kuramoto model defined on a stochastic block model graph, termed the \emph{Kuramoto Stochastic Block Model} (KSBM). Using mean-field theory and Gaussian approximations, we analytically derive reduced models of KSBM dynamics in different temporal regimes and theoretically characterize the lead matrix in these settings. Leveraging these insights, we propose a novel signature-based community detection algorithm, achieving exact recovery of structural communities from observed time series in multiple KSBM instances. {We also explored the performance of our community detection on a stochastic variant of the KSBM as well as on real neuropixels of cortical recordings to demonstrate applicability on real-world data}. Our results demonstrate that path signatures provide a novel perspective on analyzing complex neural data and other high-dimensional systems, explicitly exploiting temporal functional relationships to infer underlying structure.
\end{abstract}

\maketitle

\section{Introduction}

\begin{figure}[ht!]
  \centering
  \includegraphics[width=1\linewidth]{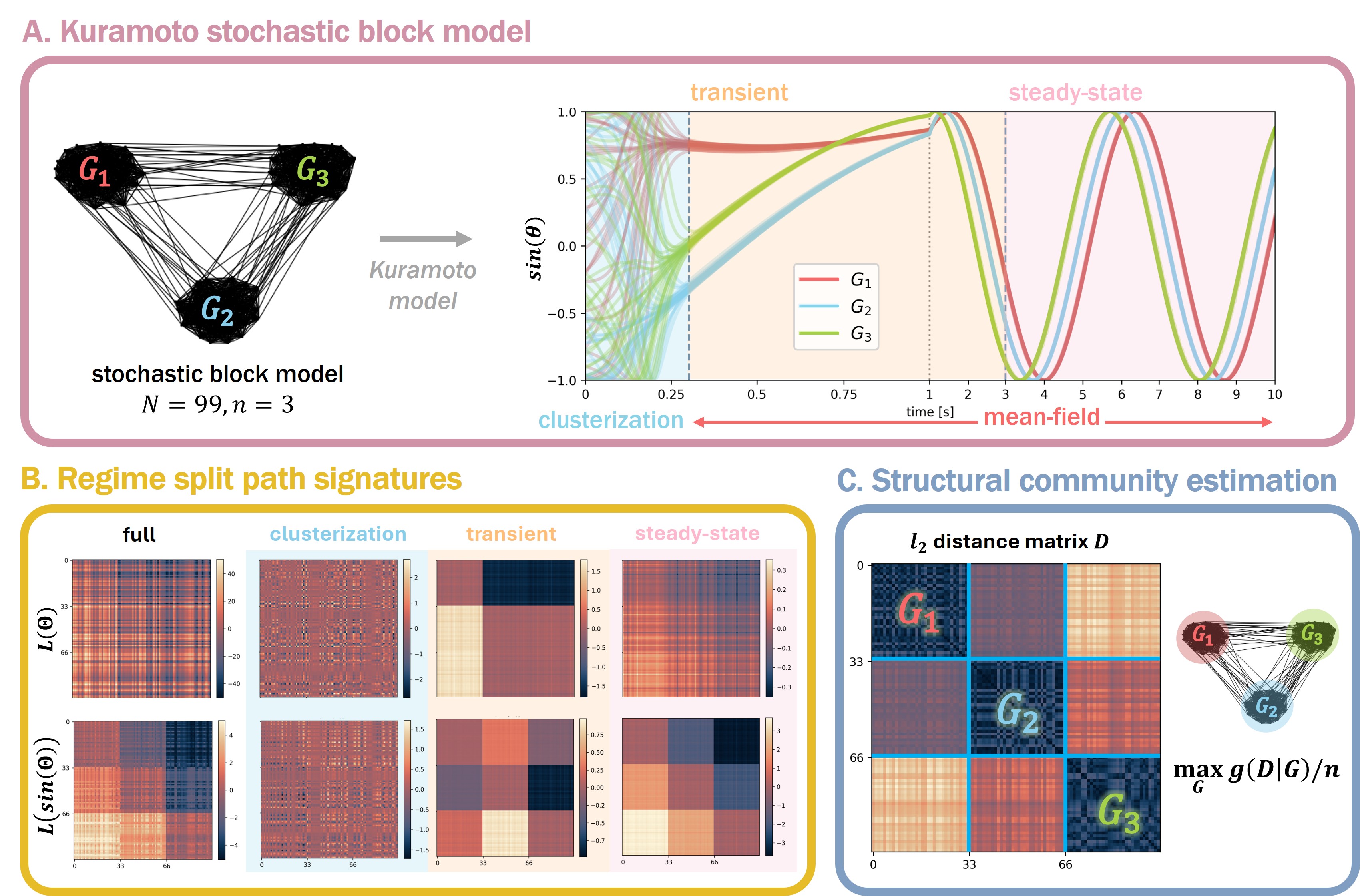}
  \caption{\textbf{Community estimation in the Kuramoto Stochastic Block Model}\\
   (A) The Kuramoto Stochastic Block Model (KSBM) is a version of the Kuramoto model with underlying coupling between oscillators defined by a stochastic block model graph. Time series $\Theta$ arising from the KSBM can be split into different temporal regimes with different macro-scale properties. (B) We investigate high-dimensional time series using path signatures. From the path signatures, we derive lead matrices $L(\Theta)$ and $L(\sin(\Theta))$ which capture lead and lag behavior. When computed over the (full) time series produced by our KSBM, i.e. ignoring the different time regimes shown in (A), {high variation of lead values within community blocks} prevents identification of communities in $L(\Theta)$. When splitting the time series into different time regimes, we find clearer block patterns in $L(\Theta)$ for the transient regime and in $L(\sin(\Theta))$ for the transient and steady state regimes. (C) We propose a novel clustering algorithm based on path signatures and lead matrices that estimates communities in the KSBM's underlying stochastic block model graph from time series split into different time regimes. Our algorithm is based on maximizing a block-clustering metric $g(D|G)/n$ of the distance matrix $D$ over community assignment $G$ with $n$ communities.}
  \label{fig:overview}
\end{figure}

Many dynamical processes in our physical world are rooted in structured interactions between entities of interest. This underlying structure is often not accessible, and we are often limited to observing resulting dynamical processes through time series from experiments.  
In many research fields such as neuroscience, estimating structural interactions which give rise to functional activity of a system is of great interest \cite{tirabassi2015, rubido2014, shandilya2011, timme2007, timme2014, levnajic2011}. 

\subsection{Motivation: Neuroscience}

Cognitive function is determined by anatomical connections between neurons or brain regions. These underlying connections can be interpreted as structural networks in which the neurons or brain regions are nodes and their connections represent edges.
Identifying the underlying structural networks that give rise to specific cognitive functions is one of the key questions in experimental neuroscience~\cite{Purves2001}. However, due to experimental and ethical constraints, direct observation of these structural connections is often impossible. Instead, we typically study neural activity that reflects the dynamics arising in connected neurons or brain regions. Linking this observed neural activity to the underlying structural network is a complex challenge. Brain activity arises from the simultaneous occurrence of multiple neural processes involving many neurons; the resulting data is thus often high-dimensional and noisy. Biologically, brain activity is driven by overlapping circuits of connected neurons, which fire in response to inputs from other neurons. In a neural circuit, the axon connection between a presynaptic and a postsynaptic neuron induces time synchrony in the responses of the connected neurons, i.e. the firing of the presynaptic neuron will typically lead to a time-delayed firing in the postsynaptic neuron. This neural activity can be represented in a lower-dimensional latent space, known as the neural manifold \cite{Langdon2023-cr}, which enables the study of phenomena such as synchrony \cite{Gine2013}. In this latent space, the time-delayed firing induced by synaptic connections reflects a form of synchronization between the activity of connected neurons with a temporal offset.

The degree of synchronization in neural activity is referred to as \emph{functional connectivity}~\cite{petersen2015,sporns2015,Bullmore2011,Bullmore2009,papo2014}, which can be measured from coupled time series data, such as fMRI or EEG, using a variety of methods. A common approach is to calculate the pairwise cross-correlation between time series. This functional connectivity can then be used to construct a functional network, where nodes represent neurons or brain regions, and edges are weighted according to the degree of functional connectivity. Such networks can be analyzed using techniques from graph theory or network science\footnote{Note that one can also study coupled time series using other tools, e.g. \cite{tirabassi2015,nakamura2016,sun2014}.} \cite{petersen2015,sporns2015,Bullmore2011,Bullmore2009,papo2014}.
However, because neurons often participate in multiple circuits, functional connectivity alone is typically insufficient to distinguish between different circuits and thus directly infer structural connections.

The highly structured nature of brain circuits allows us to utilize knowledge about neuronal organization, such as their partition into densely-connected communities, to facilitate the inference of structural connections.
Let us consider a motivating example that demonstrates principles of neuronal organization. Cortical processes of perception and sensation are active in most background brain activities. The neural activity generated by these cortical processes involves sensory cells first transducing their sensory inputs to the primary cortical areas followed by the synchronization of cortical columns.
These columns consist of neurons whose activity continuously represents the perceived space
    (known as topographic mapping\footnote{Mapping is understood here in the biological sense: each neuron is activated by a specific part of the space of stimuli, called a receptive field. Formally, we could write the neural activity map as $f: X \times Y \rightarrow \R$, where $X$ is some embedding of the cortex and $Y$ the space of stimuli. Neighbourhoods of neurons are activated by neighbourhoods of receptive fields, hence this mapping of the stimuli space to the activity of neuron is continuous.} in neuroscience\cite{Kaas2001}). For example, in auditory perception the perceived space is the space of frequencies perceivable by a human. This space is represented by cortical columns in the primary auditory cortex\cite{Purves2001}; each column is activated by the presence of a specific frequency in the perceived sound. Those columns continuously represent lower to higher frequencies, preserving the structure of the frequency space. Due to this continuous representation, neurons in the same column respond similarly to similar inputs. Upon activation, neurons in the same cortical column activate downstream neurons in other cortical regions,
synchronizing their activity over time. Untangling the activity of synchronized neurons into the specific cortical columns they belong to is highly relevant to a mechanistic understanding of processing of sensory information in the brain. We can think of neurons in the same cortical column as communities which are densely connected to each other anatomically while being sparsely connected to neurons in other cortical columns. These communities possess a structural connectivity given by the underlying synaptic circuitry and additional observed functional connectivity given by the synchronization of their activities.
Ideally, we would want to estimate entire underlying circuits from observed functional activity of those anatomical circuits. 
The primary aim of the present work is to develop a novel \emph{structural community estimation} (SCE) algorithm which identifies underlying communities from multivariate time series whose dynamics are determined by unknown underlying structural connections.

\subsection{Kuramoto Stochastic Block Model}

To provide a rigorous analytical study of our method, we focus our work on the Kuramoto model, a popular mathematical model in neuroscience which exhibits dynamical properties similar to real neural systems\cite{frank2000, Luke2012}.
Here, we use the Kuramoto model as a source of synthetic time series data to avoid effects of noise that is often present in experimental data and the complexity inherent to modelling spiking neuron dynamics\cite{Gerstner2014}. 
We consider a specific version of the Kuramoto model where the structural coupling is governed by a class of random graphs: the \textit{stochastic block model} (SBM)~\cite{Abbe2018}.

SBMs are graphs, where the set of nodes is partitioned into communities and the probability of an edge between two nodes depends only on their respective communities and not their individual identities. Similarly to cortical columns, we consider \emph{assortative structures}, where edges between nodes occur preferably inside the same node community, and far less frequently between different node communities. Coupling between oscillators drives alignment of phases over time, which typically results in frequency synchronization. 
We refer to this model as the \textit{Kuramoto Stochastic Block Model} (KSBM). It is a simple yet rich model of synchronized and community-clustered processes (see Figure~\ref{fig:overview}) that has been previously used, e.g., in~\cite{Stolz2017-lw}~\cite{bassett2013robust}~\cite{Leo2023}, and exhibits {the characteristic dynamics of the Kuramoto model: synchronization of oscillators at steady state when their coupling strength is chosen above a critical threshold.}

Previous work by Arenas \emph{et al.}~\cite{arenas2006-sync-top-comp-net,arenas_synchronization_2006,arenas_synchronization_2007,arenas_synchronization_2008} studied the hierarchical structure of fixed networks using the Kuramoto model by observing the dynamics at random initial conditions, including with spectral methods. 
These articles, along with several others~\cite{li_synchronization_2008,boccaletti_detecting_2007} developed community detection algorithms based on the Kuramoto dynamics. The focus of these early articles is to leverage known Kuramoto dynamics, as a \emph{method} to detect intrinsic community structure in fixed graphs. {For example, Li \emph{et al.}~\cite{li_synchronization_2008} and Boccaletti \emph{et al.}~\cite{boccaletti_detecting_2007} focused on identifying communities in graphs derived from real-world data by simulating the Kuramoto model, using the graph structure to define coupling between oscillators. Communities are then estimated from the synchronization behavior of oscillators. In contrast, our work considers the KSBM as a \emph{generating model} of time series with oscillators coupled according to an underlying community structure that we wish to recover. We determine communities directly from the resulting time series without knowledge of the underlying structural graph.}
The stability of intra-community synchronization in Kuramoto models with underlying communities\footnote{In the context of SBM, communities are also often referred to as clusters.} has previously been studied~\cite{Favaretto2017-wd, Menara2019-ok, Menara2020-ul}.
In particular, Achterhof \emph{et al.}~\cite{Achterhof2021-I, Achterhof2021-II} investigated the phase diagram of synchronized steady states in the context of two communities.
More recently, Timofeyev and Patania~\cite{timofeyev_cluster_2025} studied the relationship between Laplacian spectra and Kuramoto dynamics of almost equitable partitions and weighted generalizations.

\subsection{Contribution 1: Temporal Regimes and Dynamics of the KSBM}
In this article, we build upon this previous work, and show analytically that KSBM time series can be partitioned into previously observed distinct temporal regimes, and provide a detailed study of the dynamical behavior within these regimes. 
Although the KSBM is a widely used toy model with well-characterized dynamics, the separation of its time series into distinct temporal regimes has not, to our knowledge, been formally shown. synchronization properties depending on the underlying network structure have been investigated formally, see e.g. recent work by Nagpal \emph{et al.}\cite{nagpal2024}, Townsend \emph{et al.}\cite{townsend2020}, and Kassabov \emph{et al.} \cite{kassabov2021}, but these do not consider KSBMs and in particular do not investigate different temporal regimes. Indeed, Nagpal \emph{et al.}\cite{nagpal2024} point to the KSBM as being a case of interest for analytical studies.

A visual representation of the temporal regimes is shown in Figure~\ref{fig:overview}A.
Within this version of the Kuramoto model, synchronization occurs more quickly and frequently among oscillators within the same community, while oscillators from different communities require more time to synchronize, if they synchronize at all. As a result, each community of oscillators forms a cohesive, synchronized unit that initially operates independently from others, giving rise to lower-dimensional dynamics governed by the community structure.
While the functional connectivity observed in this model reflects the underlying SBM structural network, this behavior contrasts with the standard Kuramoto model, where all oscillators eventually form a single synchronized unit, lacking any notion of community structure.

We use approximations to analytically study the dynamics of the temporal regimes; to our knowledge, we are the first to explore KSBM dynamics analytically on more than two communities~\cite{Achterhof2021-I, Achterhof2021-II} and outside of steady state. In the initial clusterization regime, we use Gaussian approximation~\cite{Sonnenschein2013-mh} of communities to reduce the dynamics to a model that includes only the mean and the variance of the oscillator ensemble.
Once the communities are clustered, the dynamics are driven by the average over the oscillators of each of the communities, where we use the mean-field theory developed in~\cite{Chiba2019-et, Kaliuzhnyi-Verbovetskyi2018-qr} to study the dynamics. 
These approximations are especially useful for estimating error bounds and assessing the accuracy of clustering methods applied to time series from KSBMs.

\subsection{Contribution 2: Structural Community Estimation}

Equipped with a theoretical understanding of the dynamics and temporal regimes that lead to synchronization, we propose a novel method, called \textit{structural community estimation}, to identify structural communities from observed dynamics. A fundamental component of our approach is to leverage the \emph{path signatures}~\cite{chevyrev_primer_2016, giusti_iterated_2020}, a characterization of paths (up to time-reparametrization), which has recently been used to study both stochastic differential equations~\cite{lyons_differential_2007,friz_multidimensional_2010} and time series data in machine learning~\cite{lyons_signature_2022, lee_signature_2023}. The path signature encodes information between collections of time series, {which can capture more complex interactions in high-dimensions than existing methods. For example, covariance based methods may struggle with causal interactions which depend on the coordinated activity of triplets of neurons, or (trial-) averaging may destroy fine temporal structure for events with variable timing and duration (e.g., a mouse licking)}. We focus on specific pairwise components of the path signature that are summarised in the \emph{lead matrix}~\cite{baryshnikov_cyclicity_2016}. The lead matrix encodes pairwise lead-lag behavior between time series, e.g., how changes in one time series determine changes in the other.
Lead matrices have the additional advantage of being time-reparametrization invariant. This property allows the investigation of cyclic (non-periodic) and noisy dynamics, such as those found spiking neural networks.

Readily applying lead matrices or covariance matrices of coupled time series generated from the KSBM results in poor estimators of communities (Figure~\ref{fig:overview}B). By partitioning the time series into separate temporal regimes, we can remove most of the {heterogeneity} and, in particular, {separate communities that are structurally distinct but functionally equivalent at steady state}. We show that lead matrices behave very differently in each temporal regime, and by building lower-dimensional models of the KSBM in each regime, we are able to reconstruct our communities from the lead matrices. 
An advantage of the Kuramoto model in this context, is that synchronization of oscillators has a straightforward interpretation for lead matrices. Indeed, in the case of synchronized oscillators the lead corresponds to time offsets between oscillators.
However, our approach is not limited to KSBMs, but generalizes to similar problems in other fields, e.g., in physics or biology. {In particular, we show that in real neural recordings from mice in a sensory-motor task our method is able to identify clusters of neurons with structured activity in this task.} 
We define a notion of \emph{block-clustering} for matrices, and develop an algorithm for \textit{structural community estimation} (Figure~\ref{fig:overview}C) which leverages the lead matrix to infer communities from time series with similar temporal patterns, revealing underlying structural connections in the data.
While there are other approaches for inferring structural connections from functional data \cite{tirabassi2015, rubido2014, shandilya2011, timme2007, timme2014, levnajic2011}, we are the first to propose path signatures and the lead matrix for this task.

\subsection{Outline} 
In Section~\ref{sec:KSBM-dynamics}, we develop mean-field models (Theorem \ref{thm:MF-KSBM}) for the transient and steady state regimes and Gaussian low-rank models (Theorem \ref{thm:gaussian-assortative-KSBM}) for the clusterization regime in the KSBM. Using these models, we provide expected times of transitions between regimes which we call the \emph{transition time} (Lemma \ref{lemma:transition-time-assortative-KSBM}). We show that the resulting regime split is particularly relevant since clusterization hinders community estimation while the mean-field regime (corresponding to the transient and steady state regimes) enables it.
In Section~\ref{sec:S-community-detection}, we compute analytical expressions for path signatures at synchronized steady state for the KSBM oscillators' time series $\Theta$ and $\sin(\Theta)$. Motivated by the structure of these expressions, we construct a metric which we call \emph{block-clustering} for functional connectivity matrices. We use this metric to develop a new community detection algorithm tailored to path signatures, called the structural community estimation algorithm (Alg.\ref{alg:structural-community-estimation}). We show that our algorithm can recover communities in numerical experiments when considering distinct time regimes.

Finally, in Section~\ref{sec:experiments}, we provide experiments to numerically verify our results. In particular, we show in Section~\ref{ssec:gauss_mf} that our analytic approximations agree with numerical results, and we demonstrate the efficacy of our structural community estimation algorithm in Sections~\ref{ssec:exp_community} and \ref{ssec:stochastic-ksbm}. {To conclude, we show that in real neural data from mice our algorithm identifies clusters of neurons with activity specific to a sensory-motor-task.}

\section{Kuramoto Stochastic Block Model and its Dynamics}
\label{sec:KSBM-dynamics}
\subsection{Kuramoto Stochastic Block Model}

The Kuramoto Stochastic Block Model (KSBM) is a version of the generalized Kuramoto model\footnote{In the rest of this paper, we will refer to the generalized Kuramoto model as the Kuramoto model; whenever we need to make the distinction with the original model, we will simply refer to the non-generalized version as the standard Kuramoto model.}\cite{rodrigues_kuramoto_2016} in which the coupling is given by an underlying stochastic block model (SBM) graph\cite{Abbe2018}, and where the intrinsic frequency of each oscillator is drawn from a distribution specific to its community in the SBM. In general, the Kuramoto model consists of $N$ oscillators $\theta_i$ valued in the unit circle $S^1 := \R \text{ mod } 2\pi$. For simplified notation, we define the set $[N] := \{1, ..., N\}$. The oscillators in the Kuramoto model satisfy the following system of differential equations,

\begin{equation}
\label{eq:generalized-kuramoto}
    \dot{\theta_i}(t) = \omega_i + \sum_{j\in[N]}{\oC_{ij}\sin(\theta_j(t) - \theta_i(t))}.
\end{equation}
We denote the collection of all phases by $\Theta(t) \coloneqq (\theta_i(t))_{i=1}^N$.
The intrinsic frequencies $\omega_i$ correspond to the angular speed of the oscillators when not influenced by other oscillators, i.e. when the oscillators are not coupled. Coupling between oscillators is defined by the coupling matrix $\oC \in  \R^{N \times N}$. The factor $\sin(\theta_j(t) - \theta_i(t))$ drives the phase alignment of oscillators $\theta_i$ and $\theta_j$ whenever the coupling is positive, i.e., $\oC_{ij} > 0$.

The coupling matrix $\oC$ encodes structural information underlying the Kuramoto model. In particular, if $\oC_{ij} = 0$, then oscillator $\theta_j$ has no influence on oscillator $\theta_i$. We can isolate this structural information using an $N \times N$ binary matrix where $A_{ij} = 1$ if and only if $\oC_{ij} \neq 0$. This is a directed adjacency matrix of an underlying graph. Here, a parallel to neuroscience can be drawn and the adjacency matrix can be interpreted as underlying structural connections between the oscillators.

The KSBM is a special case of the Kuramoto model where $A$ is determined by an SBM. An SBM is a random graph model, where the nodes are partitioned into \emph{communities}, and the probability of an edge existing depends only on the communities of the two endpoints. Note that random graph models can be equivalently defined as a random adjacency matrix.

We consider SBM consisting of $n$ communities with $m$ nodes each, for a total of $N = mn$ nodes. We fix a balanced community assignment	 $\phi: [N] \to [n]$ by $\phi(i) = \left\lceil \frac{i}{n}\right\rceil$. We define the \emph{nodes in community $r$} as $G_r = \phi^{-1}(r)$ for any $r\in[n]$.
The distribution of a random adjacency matrix $A$ of an SBM is specified by a probability matrix $P \in [0,1]^{n\times n}$:
\begin{align*}
    \p{A_{ij} = 1} = P_{\phi(i), \phi(j)}.
\end{align*}
We write $G \sim SBM(n,m,P)$ to denote such a random (directed) graph.

Throughout this article, we only consider balanced SBMs to simplify the notation. However, our results can be extended to the unbalanced setting using similar methods.

We define the KSBM as follows.

\begin{definition} \label{def:ksbm}
    (Kuramoto Stochastic Block Model, KSBM)\\
    The \emph{Kuramoto SBM} $\cK = \cK(n,m,p, C, \mu, \sigma, \theta^0)$ is a random Kuramoto model defined by Equation~\ref{eq:generalized-kuramoto}, such that 
    \begin{itemize}
        \item the \emph{adjacency matrix} $A \sim SBM(n,m, P)$ is an SBM with $N$ nodes, $n$ communities of $m$ nodes each denoted by $\{G_r\}_{r=1}^n$, and probability matrix $P$;
        \item the \emph{coupling matrix} $\oC$ is defined by the \emph{community coupling matrix} $C \in \R^{n \times n}$, where $C_{rs} = C_{sr}$, by $\oC_{ij} = C_{\phi(i), \phi(j)} \cdot A_{ij}$;
        \item the \emph{intrinsic frequencies} $\omega_i$ are i.~i.~d.~normally distributed as $\omega_i \iid \gauss{\mu_r}{ \sigma^2}$ for all $r\in [n]$, where $\mu_i \in \R$ is the community mean frequency and $\sigma^2 \in \R^+$ is a fixed variance; and
        \item the \emph{initial conditions} $\theta_i(0) = \theta_i^0$ are fixed. 
    \end{itemize}
    We refer to the random variable $(A,\omega)$ representing the adjacency matrix and intrinsic frequencies as the \emph{KSBM randomness} and to a specific sample of this random variable as a \emph{realisation} of the KSBM. 
\end{definition}

We will also consider a variant of KSBMs where the analytical computations are more tractable: the \emph{assortative KSBM}. An assortative KSBM $\cAK = \cAK(n,m,\kappa, \mu, \sigma, \theta^0)$ is a random Kuramoto model defined by Equation~\ref{eq:generalized-kuramoto} such that
\begin{itemize}
    \item communities are fully connected, $P_{rr} = 1$ for any $r \in [n]$;
    \item for each node $i \in G_r$, we add one edge between node $i$ and a node $j$ in another community $j\in G_s$ with $s\neq r$, and 
    \item the coupling strengths are uniform, $\oC_{ij} = \frac{\kappa}{N}$ for all $i,j\in [n]$ for some $\kappa \in \R$;
    \item intrinsic frequencies $\omega_i$ and initial conditions $\theta_i(0)$ are set according to Definition~\ref{def:ksbm}.
\end{itemize}

{By construction, in the final graph each node may be connected to more than one node outside its community.}

The assortative KSBM\footnote{Strictly speaking, this is not the same model of the SBM as defined for the KSBM since the edge probabilities are distributed differently. For instance, if $i \in G_r$ and $j \in G_s$, then $A_{i,j}$ and $A_{ik}$ are \emph{independent} in the KSBM, but \emph{dependent} in the assortative KSBM. However, they are functionally equivalent when the number of nodes grows to infinity\cite{Abbe2018}.} is the model we use in our simulations and for the Gaussian approximation in Section~\ref{ssec:gaussian-KSBM}. We can view this as constructing $n$ fully-connected Kuramoto models and then weakly coupling them using a single edge across communities for each node. It is directly inspired by the models studied in \cite{Stolz2017-lw,Leo2023,Menara2019-ok,Menara2020-ul} with the added change of forcing the communities to be assortative for analytical simplicity.

Our primary running example is an assortative KSBM with $N = 99$ oscillators and $n = 3$ communities with high coupling ($\kappa = 100$) and low {heterogeneity} $\sigma = 0.1$. We will refer to it as the \textbf{standard configuration}, we show a simulation in Figure~\ref{fig:example-dynamic}.
Here, we qualitatively observe several time regimes. First, there is a marked qualitative change in dynamics around $t = 0.3 s$. We refer to the time regime up to $t = 0.3 s$ as \textit{clusterization}, as it represents a transition from a uniform initial state to synchronization of frequencies and phases within the structural communities of oscillators. In this first time regime, the oscillators are primarily driven by the coupling inside their own community. Thus, each community can be studied approximately independently from one another. 

In the second time regime, starting where $t > 0.3 s$, the clustered communities start influencing each other before reaching a steady state, which in our model corresponds to the frequency synchronization of all oscillators. During this time regime, oscillators within the same community behave homogeneously, so instead of treating them as separate units, we can focus on the interactions between average oscillators from each community. We refer to this time regime as the \emph{mean-field regime} where the system behaves like a Kuramoto model with each community represented as a single average oscillator.
We show this property of the KSBM in Theorem \ref{thm:MF-KSBM}. One can further split this second regime into a \textit{transient} regime followed by the \textit{steady state}. While the steady state is usually the preferred setting in which to analyze properties of Kuramoto models (see \cite{Favaretto2017-wd,Menara2020-ul}), this regime can obfuscate a lot of useful information that would allow us to identify oscillator communities. Indeed, if all intrinsic frequencies are the same, then the steady state will exhibit all oscillators converged to a single large unit rotating along the unit circle (see Figure~ \ref{fig:KSBM-dynamics-collapsed}), voiding any distinction between the underlying structural communities. Therefore, to cluster oscillators effectively, we need to study the transient effects carefully and understand how community coupling relates to observed dynamics.

\begin{figure}[ht!]
    \centering
    \includegraphics[width=1\linewidth]{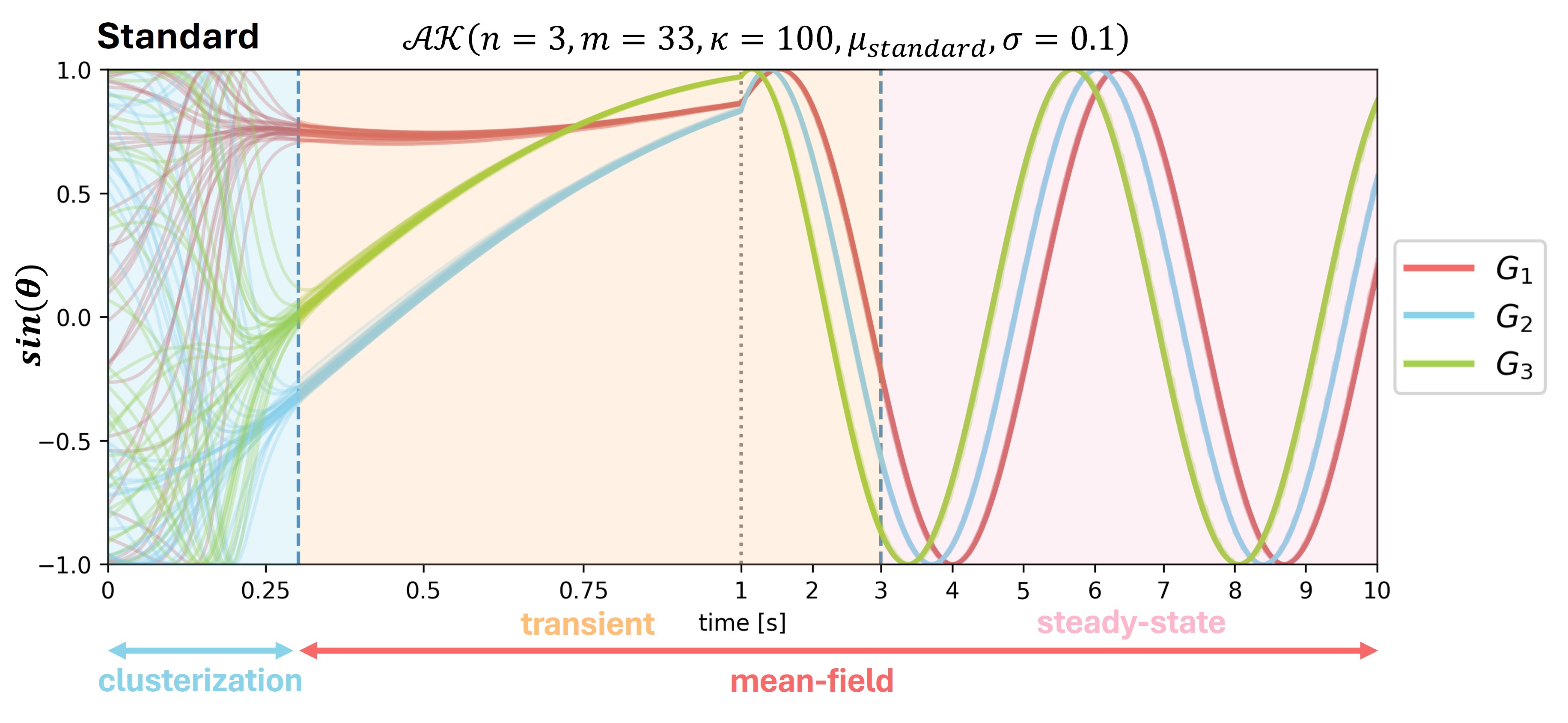}
    \caption{\textbf{Example of KSBM dynamics and temporal regimes}\\
    We show time series of the standard assortative KSBM $\cAK(n=3, m=33, \kappa=100, \mu=\{\frac{2}{3}, \frac{14}{9}, 2 ~rad/s\}, \sigma=0.1)$ with communities $G_1, G_2, G_3$. The time interval from 0 to 1 second is expanded to highlight the clustering regime, which ends around time = 0.3 seconds, transitioning into a mean-field regime where the behavior of individual oscillators is indistinguishable from the behavior of other oscillators within the same community. The mean-field regime can be further divided into transient synchronizing dynamics followed by a steady state at around time = $3$ seconds.}
    \label{fig:example-dynamic}
\end{figure}

\subsection{Analysis of KSBM Dynamics}

In this section, we analytically study the behavior of KSBMs in both the mean-field and clusterization regimes. We begin by focusing on the \emph{mean-field regime}, where we approximate a system of $N$ oscillators as a system of $n$ mean oscillators, which we call the mean-field KSBM (Theorem \ref{thm:MF-KSBM}). Next, we move on to the \emph{clusterization regime}, where we consider the specific case of the assortative KSBM, and use Gaussian approximations to study the transient variance of phases, which we call the \emph{Gaussian KSBM} (Theorem \ref{thm:gaussian-assortative-KSBM}). 

We first cover some useful terminology to define our regimes. For $\epsilon > 0$ and $t > 0$, a subset $U \subseteq [N]$ of oscillators $(\theta_i)_{i \in U}$ is \emph{$\epsilon$-clustered at time $t$} if
\[
    \max_{i,j \in U}{\abs{\theta_i(t) - \theta_j(t)}} \leq \epsilon,
\]
and is \emph{$\epsilon$-variance-clustered at time $t$} if
\[
    \Var_{i \in U}(\theta_i(t)) \leq \epsilon.
\]

\begin{definition}
\label{def:KSBM-clusterization}
    (KSBM Clusterization)\\
    Let $\epsilon > 0$ and $T > 0$. 
    A KSBM is \emph{$\epsilon$-(variance-)clustered by time $T$} if for each community, all oscillators within are $\epsilon$-(variance-)clustered for all time $t \geq T$.\\
    Moreover, it is \emph{$\epsilon$-strong-(variance-)clustered by time $T$} if in addition, for all time $t \geq T$
    \[
    \min_{r,s \in [n]}{\abs{\theta_{G_r}(t)-\theta_{G_s}(t)}} > \epsilon.
    \]
\end{definition}

\begin{definition}
    (Synchronization)\\
    We say that a KSBM is \emph{synchronized} at time $t$ if $\dot{\theta}_i(t) = \dot{\theta}_j(t)$ for all $i, j \in [N]$.
    We say it is in \emph{synchronized steady state by time $T$} if it is synchronized for all time $t \geq T$.
\end{definition}

\subsubsection{Mean-Field KSBM}

In this section, we consider a collection of KSBMs
\[
\cK_m = \cK(n,m,P,\mu,\sigma, \theta^0)
\]
and their realisations $\Theta^{(m)}$, with varying number of nodes $m$ in each community where all other parameters are fixed. Furthermore, we consider a specific instance of the intrinsic frequencies $\{\omega^{(m)}_i\}_{i=1}^N$, which is fixed throughout this section. In particular, we only consider the graph randomness inherent in the random SBM.

The dynamics in Figure~\ref{fig:example-dynamic} beyond the clusterization regime are qualitatively governed by the mean of the oscillators in each community.
This suggests the use of \emph{mean-field theory}\cite{Kaliuzhnyi-Verbovetskyi2018-qr}, where we approximate the dynamics of such mean oscillators, i.e. the mean phases and intrinsic frequencies in each community.
We denote the mean phase and intrinsic frequency in community $G_r$ as 
\begin{align*}
\theta_{G_r} := \avg{\theta_i}_{i \in G_r}
\quad \text{and} \quad \omega_{G_r} := \avg{\omega_i}_{i \in G_r}  
\end{align*}
respectively. Here, our aim is to show that in mean-field regime, the mean oscillators $\theta_{G_r}$ behave deterministically. 

Our main tool is the mean-field theory for Kuramoto models established in~\cite{Chiba2019-et, Kaliuzhnyi-Verbovetskyi2018-qr}. Fix a number $m \in N$ of nodes in each community. We define a \emph{deterministic} Kuramoto model with $N$ nodes, $\hTheta^{(m)} =\{\htheta^{(m)}_i\}_{i=1}^N$ where $\htheta^{(m)}_i: [0,T] \to S^1$, and governed by the dynamics
\begin{align} \label{eq:mean_field_deterministic}
    \dot{\htheta}^{(m)}_i(t) = \omega_i^{(m)} + \sum_{j=1}^N P_{\phi(i), \phi(j)} C_{\phi(i), \phi(j)} \sin(\htheta^{(m)}_i(t) - \htheta^{(m)}_j(t))
\end{align}
with the same initial conditions as our KSBM, $\htheta^{(m)}_i(0) = \theta^0_i$. In particular, this is a fully connected Kuramoto model, where the intrinsic frequency is given by the mean intrinsic frequency $\mu_r$ of the KSBM model and the \emph{deterministic} coupling strength is scaled by the corresponding edge probabilities. {Next, we state an important result by Kaliuzhnyi and Verbovetskyi~\cite{Kaliuzhnyi-Verbovetskyi2018-qr}[Lemma 1.1], originally proved in~\cite{Chiba2019-et}, in the specific case of the KSBM linking it to the deterministic Kuramoto model of Equation~\ref{eq:mean_field_deterministic}.}

\begin{theorem}{\cite[Lemma 1.1]{Kaliuzhnyi-Verbovetskyi2018-qr}} \label{thm:mean_field_kuramoto}
    Let $\cK_m = \cK(n,m,P,\mu,\sigma, \theta^0)$ be a collection of KSBMs where $n, P, \mu, \sigma$ and $\theta^0$ are fixed. Let $\hTheta^{(m)}$ be the Kuramoto model defined in Equation~\ref{eq:mean_field_deterministic}. Then, 
    \begin{align}
        \lim_{m \to \infty} \sup_{t \in [0,T]} \frac{1}{\sqrt{N}}\left\| \Theta^{(m)}(t) - \hTheta^{(m)}(t)\right\|_2 = 0
    \end{align}
    almost surely (with respect to the randomness of SBM).
\end{theorem}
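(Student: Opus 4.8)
The plan is to prove the convergence directly by a Grönwall estimate on the pathwise difference, letting the graph randomness enter only through a single fluctuation term that concentrates as $m\to\infty$; this is the standard route for mean-field limits, and it specializes the general results of~\cite{Chiba2019-et, Kaliuzhnyi-Verbovetskyi2018-qr} to the block structure of the KSBM. Write $e_i(t) := \theta_i^{(m)}(t) - \htheta_i^{(m)}(t)$, so that $e_i(0)=0$ by the matched initial conditions, and abbreviate the expected coupling by $\bar c_{ij} := P_{\phi(i),\phi(j)}C_{\phi(i),\phi(j)}$, which satisfies $\Ebb[\oC_{ij}] = \bar c_{ij}$. Subtracting the two systems (read with consistent coupling signs) and adding and subtracting $\sum_j \oC_{ij}\sin(\htheta_j - \htheta_i)$ decomposes the error derivative as
\[
\dot e_i = \sum_{j}\oC_{ij}\bigl[\sin(\theta_j-\theta_i)-\sin(\htheta_j-\htheta_i)\bigr] + \sum_{j}(\oC_{ij}-\bar c_{ij})\,\sin(\htheta_j^{(m)}-\htheta_i^{(m)}) =: \text{(Lip)}_i + F_i(t),
\]
where the first (deterministic-coefficient) term is Lipschitz in $e$ and the second term $F_i$ carries all the randomness.

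First I would handle the Lipschitz part. Since $\sin$ is $1$-Lipschitz, $\abs{\sin(\theta_j-\theta_i)-\sin(\htheta_j-\htheta_i)} \leq \abs{e_i}+\abs{e_j}$, so an energy estimate on $\norm{e}_2^2$ combined with Cauchy--Schwarz yields $\frac{d}{dt}\norm{e}_2^2 \leq 2L\,\norm{e}_2^2 + 2\norm{e}_2\,\norm{F}_2$, where $L$ is controlled by the maximal row sum $\max_i \sum_j \abs{\oC_{ij}}$. Under the normalization that makes the mean-field limit nondegenerate (row sums $\sum_j \bar c_{ij}$ bounded, as for the assortative KSBM with $\oC_{ij}=\kappa/N$), these row sums are $O(1)$ almost surely by concentration, so $L$ is bounded uniformly in $m$. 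Dividing by $\norm{e}_2$ and integrating gives the Grönwall bound $\frac{1}{\sqrt N}\norm{e(t)}_2 \leq \frac{e^{LT}-1}{L}\,\sup_{s\in[0,T]} \frac{1}{\sqrt N}\norm{F(s)}_2$, which shows that the $\tfrac{1}{\sqrt N}$ normalization in the statement is exactly the one that keeps the prefactor bounded, and reduces everything to proving $\sup_{t}\tfrac{1}{\sqrt N}\norm{F(t)}_2 \to 0$ almost surely.

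Finally I would control the fluctuation. The decisive structural point is that $\htheta^{(m)}$ is \emph{deterministic}, hence independent of the SBM adjacency matrix $A$; so for fixed $t$ and $i$, $F_i(t) = \sum_j (A_{ij}-P_{\phi(i),\phi(j)})C_{\phi(i),\phi(j)}\sin(\htheta_j-\htheta_i)$ is a sum of independent, bounded, mean-zero terms, and Hoeffding's inequality gives exponential concentration, with $\Var(F_i(t))\to 0$ under the above scaling so that $\Ebb\,\tfrac{1}{N}\norm{F(t)}_2^2 \to 0$. To upgrade a fixed-time bound to a uniform-in-time and almost-sure bound, I would use that $t\mapsto F_i(t)$ is Lipschitz (the velocities $\dot{\htheta}_i$ are uniformly bounded), discretize $[0,T]$ into a net of mesh $\delta$, apply Hoeffding at the net points with a union bound over the net and over $i\in[N]$, interpolate by the Lipschitz-in-$t$ estimate, and then invoke Borel--Cantelli across $m$ to pass from high-probability to almost-sure convergence. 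The main obstacle is precisely this last step: the pointwise concentration is routine, but obtaining a \emph{uniform-in-$t$, almost-sure} vanishing bound requires carefully combining the time-regularity of $\htheta$ with the union bound and Borel--Cantelli, while simultaneously ensuring the random Grönwall factor $e^{LT}$ stays bounded almost surely via the row-sum concentration noted above.
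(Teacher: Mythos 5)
You cannot be compared against an in-paper proof here, because the paper does not prove this statement: Theorem~\ref{thm:mean_field_kuramoto} is imported verbatim, with citation, from Kaliuzhnyi-Verbovetskyi and Medvedev \cite{Kaliuzhnyi-Verbovetskyi2018-qr} (Lemma 1.1, originally due to Chiba and Medvedev \cite{Chiba2019-et}). That said, your Gr\"onwall-plus-concentration argument is precisely the standard route by which results of this type are proved in those references, and your sketch is essentially sound: the decomposition into a Lipschitz term with random coefficients plus a mean-zero fluctuation $F_i(t)$, the reduction via Gr\"onwall to showing $\sup_{t}\,N^{-1/2}\|F(t)\|_2 \to 0$, and the control of $F$ by Hoeffding (legitimate because $\hTheta^{(m)}$ is deterministic and, for fixed $i$, the SBM entries $A_{ij}$, $j\in[N]$, are independent), followed by a time-net, union bound, and Borel--Cantelli, are all the right ingredients; note also that the paper freezes the intrinsic frequencies in this section, so the Gaussian $\omega_i$ do not threaten your Lipschitz-in-$t$ estimate. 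The one point you should make fully explicit is the coupling normalization: the theorem as stated in the paper suppresses how $C$ depends on $m$, and with $C$ of order one the row sums of $\oC$ grow like $m$, so both your Gr\"onwall constant and your fluctuation bound degenerate and the argument collapses. You correctly identify that the mean-field scaling $C = O(1/N)$ (as in every instance the paper uses, e.g.\ $\oC_{ij}=\kappa/N$, and as assumed in Theorem~\ref{thm:MF-KSBM} via $\bar C = O(1/N)$) is what is intended; under that scaling the row sums of $|\oC|$ are in fact bounded deterministically, so no concentration is needed for the constant $L$ at all, and the cross term $\sum_{i,j}|\oC_{ij}||e_i||e_j|$ in your energy estimate should be bounded by the operator norm of $|\oC|$ (geometric mean of maximal row and column sums), which is likewise $O(1)$. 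With these details filled in, your proposal amounts to a correct, self-contained reconstruction of the cited lemma rather than an appeal to it.
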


Applying Theorem~\ref{thm:mean_field_kuramoto}, we show that in the large $m \gg 1$ limit, the mean oscillators approximately behave deterministically with respect to a Kuramoto model on $n$ oscillators. 

\begin{theorem}
\label{thm:MF-KSBM}
    (Mean-Field KSBM)\\
    Let $\Theta^{(m)}$ be a realization of the KSBM $\cK_m$ and $\hTheta^{(m)}$ be the Kuramoto model defined in Equation~\ref{eq:mean_field_deterministic}. Suppose $\Theta^{(m)}$ is $\epsilon$-clustered with $\epsilon \ll 1$, $\bar{C} := \avg{C_{rs}P_{rs}}_{r\neq s\in [n]}=O(1/N)$  and $m \gg 1$. Then for all $r\in [n]$,
    \[
    \dot{\theta}^{(m)}_{G_r}(t) = \mu_r + m\sum_{s=1}^n{P_{rs}C_{rs}\sin(\htheta^{(m)}_{G_s}(t) - \htheta^{(m)}_{G_r}(t))} + O(\epsilon)
    \]
    almost surely (with respect to SBM randomness).
\end{theorem}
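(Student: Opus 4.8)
The plan is to differentiate the community mean directly from Equation~\ref{eq:generalized-kuramoto}, split the resulting coupling sum into intra- and inter-community parts, and show that the intra part is absorbed into the $O(\epsilon)$ error while the inter part reproduces the claimed drift. Writing $\theta^{(m)}_{G_r}=\frac1m\sum_{i\in G_r}\theta_i$, I obtain
\[
\dot\theta^{(m)}_{G_r}(t)=\omega_{G_r}+\frac1m\sum_{i\in G_r}\sum_{j\in[N]}\oC_{ij}\sin(\theta_j(t)-\theta_i(t)),\qquad \omega_{G_r}=\frac1m\sum_{i\in G_r}\omega_i .
\]
Since the frequencies in $G_r$ are a fixed, typical instance of i.i.d.\ $\gauss{\mu_r}{\sigma^2}$ draws, the strong law gives $\omega_{G_r}=\mu_r+o(1)$ as $m\to\infty$, which I fold into the $m\gg1$ regime. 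I then split the inner sum according to whether $j\in G_r$ or $j\in G_s$ with $s\neq r$.

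For the intra-community part, $\epsilon$-clustering gives $\abs{\theta_j-\theta_i}\le\epsilon$ for $i,j\in G_r$, hence $\sin(\theta_j-\theta_i)=O(\epsilon)$; the leftover coupling mass $\frac1m\sum_{i,j\in G_r}\oC_{ij}=C_{rr}\cdot\frac1m\sum_{i,j\in G_r}A_{ij}$ is $O(mC_{rr}P_{rr})=O(1)$ under the standard $1/N$ normalization $C_{rr}=O(1/N)$ together with edge concentration $\sum_{i,j\in G_r}A_{ij}\approx m^2P_{rr}$, so the whole intra part is $O(\epsilon)$. For the inter-community part, clustering lets me replace each pairwise difference by the difference of community means, $\sin(\theta_j-\theta_i)=\sin(\theta_{G_s}-\theta_{G_r})+O(\epsilon)$; the accumulated $O(\epsilon)$ error multiplies the total inter-community mass $m\sum_{s\neq r}C_{rs}P_{rs}$, which is $O(1)$ because $\bar C=\avg{C_{rs}P_{rs}}_{r\neq s}=O(1/N)$ and $N=mn$, so this error is again $O(\epsilon)$.

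What survives of the inter part is $\frac1m\sum_{s\neq r}\sin(\theta_{G_s}-\theta_{G_r})\sum_{i\in G_r,j\in G_s}\oC_{ij}$. Replacing the random edge counts by their means via the strong law for Bernoulli sums, $\sum_{i\in G_r,j\in G_s}A_{ij}=m^2P_{rs}(1+o(1))$ almost surely in the SBM randomness, turns this into $m\sum_{s\neq r}C_{rs}P_{rs}\sin(\theta_{G_s}-\theta_{G_r})$, and the $s=r$ term may be reinstated since its sine vanishes. Finally I would invoke Theorem~\ref{thm:mean_field_kuramoto} to pass from realised to deterministic means: since $\abs{\theta_{G_s}-\htheta^{(m)}_{G_s}}\le\frac{1}{\sqrt m}\norm{\Theta^{(m)}-\hTheta^{(m)}}_2=\sqrt n\cdot\frac{1}{\sqrt N}\norm{\Theta^{(m)}-\hTheta^{(m)}}_2\to0$ and $\sin$ is $1$-Lipschitz, each $\sin(\theta_{G_s}-\theta_{G_r})$ equals $\sin(\htheta^{(m)}_{G_s}-\htheta^{(m)}_{G_r})+o(1)$, which assembles into the stated identity.

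The hard part will be the bookkeeping of two independent small parameters. The subtle point is that the intra-community drift naively scales like $m\epsilon$ (many strongly clustered neighbours) and is tamed to $O(\epsilon)$ only because the $1/N$ normalization forces $mC_{rr}P_{rr}=O(1)$, while that same normalization simultaneously keeps the inter-community drift at the nonvanishing order $O(1)$ rather than letting it collapse. I expect the main care to be in verifying that the mean-field replacement error and the edge-count fluctuations, both $o_m(1)$, are genuinely subdominant to $O(\epsilon)$, so that they can honestly be absorbed into the $m\gg1$ hypothesis instead of surfacing as a separate error term.
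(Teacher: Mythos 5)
Your route is genuinely different from the paper's, and the difference matters. The paper first invokes Theorem~\ref{thm:mean_field_kuramoto} to replace the realised dynamics wholesale by the deterministic fully-connected model of Equation~\ref{eq:mean_field_deterministic} (uniform coupling $C_{rs}P_{rs}$ on every pair), then averages over $G_r$ and applies Lemma~\ref{lemma:clustered-convergence-sine}. You instead average the random dynamics first, treat the random adjacency directly by edge-count concentration, and call on Theorem~\ref{thm:mean_field_kuramoto} only at the very end, inside the sines. Your ordering has one genuine advantage: you only ever use trajectory-level closeness $\abs{\theta_{G_s}-\htheta^{(m)}_{G_s}}\to 0$ together with Lipschitzness of $\sin$, whereas the paper's first step tacitly upgrades closeness of trajectories to closeness of the right-hand sides of the ODEs. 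Your handling of the inter-community part, the law of large numbers for $\avg{\omega_i}_{i\in G_r}\to\mu_r$, and the final Lipschitz substitution are all sound.

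The gap is in the intra-community term. You bound $\frac{C_{rr}}{m}\sum_{i,j\in G_r}A_{ij}\sin(\theta_j-\theta_i)$ by (edge mass)$\,\times\,\epsilon$, i.e.\ by $O(mC_{rr}P_{rr}\,\epsilon)$, and then declare this $O(\epsilon)$ by invoking ``the standard $1/N$ normalization $C_{rr}=O(1/N)$''. But the theorem does not grant that: the hypothesis $\bar{C}=O(1/N)$ averages only over $r\neq s$, and the diagonal couplings $C_{rr}P_{rr}$ are unconstrained, so $mC_{rr}P_{rr}$ may diverge and your bound fails. The paper never needs a diagonal hypothesis because, after passing to the deterministic model, the intra-block coupling is uniform and the double average $\avg{\sin(\htheta^{(m)}_j-\htheta^{(m)}_i)}_{i,j\in G_r}$ vanishes \emph{identically} by antisymmetry (Lemma~\ref{lemma:expectation-sin-theta-is-0}); no smallness of $C_{rr}$ ever enters. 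In your random sum this exact cancellation is destroyed: the SBM is directed, so $A_{ij}\neq A_{ji}$ in general and the $(i,j)\leftrightarrow(j,i)$ pairing does not cancel. What you would actually need is concentration of $\sum_{i,j\in G_r}(A_{ij}-P_{rr})\sin(\theta_j(t)-\theta_i(t))$ around zero, and that is not routine, because the phases are functions of the sampled graph, so you cannot freeze $\Theta$ and apply a Hoeffding-type bound; a worst-case estimate just reproduces $O(mC_{rr}P_{rr}\,\epsilon)$. So either state the diagonal assumption $C_{rr}P_{rr}=O(1/N)$ explicitly as an added hypothesis (it does hold for the assortative KSBM, where $C_{rr}=\kappa/N$, but it narrows the theorem), or adopt the paper's order of operations --- deterministic replacement first, average second --- which is exactly what makes the intra-community term vanish without any such assumption.
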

\begin{proof}
    Appendix \ref{proof:thm:MF-KSBM}
\end{proof}

\subsubsection{Gaussian KSBM}\label{ssec:gaussian-KSBM}

In this subsection, we focus on the \emph{clusterization regime}: the transient process of oscillators going from an unclustered state to a clustered state.
In simulations, this process occurs for a sufficiently strong coupling $\kappa$ (Figure~\ref{fig:example-dynamic}). Here, we will analytically study the clusterization regime of the assortative KSBM. Instead of the mean-field approximation, we will now consider a \emph{Gaussian approximation}\cite{Sonnenschein2013-mh}, where we assume that the oscillators within a community are i.i.d.~normally distributed at every point in time, and we call this the \emph{Gaussian KSBM}. generalizationWe derive differential equations describing the mean and variance for these communities and use this to study the clusterization of this approximation. 
In particular, we find that for any $\epsilon > 0$, we can scale the coupling strength $\kappa$ so that KSBM is clustered with factor $\epsilon$ (Theorem~\ref{thm:dominated-gaussian-assortative-KSBM}).

\begin{hypothesis}
\label{hypothesis:gaussian-phase}
    (Gaussian Assumption)\\
    We assume that at time $t$ for any $r \in [n]$,
    \[
    \theta_i(t) \iid \gauss{\theta_{G_r}(t)}{V_{G_r}(t)},
    \]
    for all $i \in G_r$, where $V_{G_r}:[0,T] \rightarrow \R_{\geq 0}$ is the variance of $\theta_i$ in cluster $G_r$ at time $t$.
\end{hypothesis}

Although the distribution of $\theta_i$ for $i \in G_r$ in a KSBM is not Gaussian, communities of oscillators concentrate around their mean given sufficiently large intra-community coupling, qualitatively giving the distribution of each community a bell shape after a few time steps~\cite{Sonnenschein2013-mh}, which we \emph{approximate} by Gaussians. Within this model, a community of oscillators is characterized by its mean and variance, both as functions of time. We will see that the Gaussian KSBM is consistent with the Mean-Field KSBM and can therefore be viewed as a generalization of the mean-field analysis of the KSBM.

We will restrict our analysis to the Gaussian approximation of the assortative KSBM with positive coupling $\kappa > 0$, in which we denote the phases by $\Theta^{A}$. We leave the generalization of all theorems and results below to the general (non-assortative) KSBM as future work; these generalizations would require more involved necessary conditions on the coupling strengths. 

In this Section we will focus on a simpler model where we can understand the clusterization mechanisms and the minimal variance it can reach. {We leave the generalization to the full model to the appendix (Thm.~\ref{thm:gaussian-assortative-KSBM})}. We will therefore introduce an additional simplifying assumption:

\begin{hypothesis}
\label{hypothesis:intra-community-dominated}
    (Intra-Community Dominated Dynamics)\footnote{Notice that under this hypothesis, the KSBM consists of $n$ independent fully-connected Kuramoto models.}\\
    We assume
    \[
    \dot{\theta}_i^{A}(t) = \omega_i + \frac{\kappa}{N}\sum_{j\in G_r}{\sin(\theta^{A}_j(t)-\theta_i^{A}(t))}.
    \]
\end{hypothesis}

Hypothesis \ref{hypothesis:intra-community-dominated} is justified by the assortative structure of the coupling graph: $\theta^A_i$ connects to the $m$ other oscillators in $G_r$, and only finitely many other nodes in $\bigcup_{s\neq r}G_s$ almost surely as $m \to \infty$.  It follows that during clusterization, $\dot{\theta}^{A}_i$ is driven by $\sum_{j\in G_r}{\sin(\theta^{A}_j(t)-\theta_i^{A}(t))}$ since it has infinitely many non-zero terms. This assumption ceases to hold almost surely once the clusterization process ends, since the oscillators inside the community will have synchronized in phase and thus the terms of the intra-community sum vanish.

\begin{theorem}
\label{thm:dominated-gaussian-assortative-KSBM}
    (Dominated Gaussian Assortative KSBM)\\
    Under Hypothesis \ref{hypothesis:gaussian-phase} and \ref{hypothesis:intra-community-dominated} when $m \gg 1$ and $n\sigma \ll \kappa$, the assortative KSBM intra-community variance $V(t):=V^{A}_{G_r}(t)$ for any $r \in [n]$ follows,
    $$\frac{dV(t)}{dt} = \epsilon-\frac{2\kappa}{n}V(t) e^{-V(t)}$$
    where $\epsilon>0$ is a constant such that $\epsilon  \leq \frac{\sigma^2\pi n}{\kappa}$.
    Furthermore, if $e\sigma^2\pi n^2 < 2\kappa^2$, the variance has a stable steady state $V^{SS}$ which satisfies $V^{SS} \leq v^*$ where $v^* \geq 0$ is the smallest fixed point of,
    $$v = \frac{\pi}{2}\left(\frac{\sigma n}{\kappa}\right)^2e^{v}.$$
\end{theorem}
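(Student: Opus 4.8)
The plan is to turn the microscopic dynamics of Hypothesis~\ref{hypothesis:intra-community-dominated} into a closed scalar ODE for $V(t)$ by propagating the Gaussian ansatz of Hypothesis~\ref{hypothesis:gaussian-phase} through the coupling term, and then to read off the steady-state behavior from an elementary fixed-point analysis. First I would fix a community $G_r$, write $\delta_i := \theta_i^{A} - \theta_{G_r}$ and $\eta_i := \omega_i - \omega_{G_r}$, and observe that the intra-community coupling does not move the mean: summing $\sin(\theta_j^{A}-\theta_i^{A})$ over all $i,j \in G_r$ vanishes by antisymmetry, so $\dot{\theta}_{G_r} = \omega_{G_r}$. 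Differentiating $V = \avg{\delta_i^2}$ then yields $\dot{V} = 2\avg{\delta_i\eta_i} + \tfrac{2\kappa}{N}\avg{\delta_i\sum_{j\in G_r}\sin(\theta_j^{A}-\theta_i^{A})}$. The first term is $2\,\mathrm{Cov}(\theta_i^{A},\omega_i)$, which I will christen $\epsilon$. For the coupling term I would use the Gaussian assumption to replace the sum over the $m$ neighbours by $m$ independent copies and evaluate the resulting Gaussian integral via the characteristic function: with $\delta_i,\delta_j \iid \gauss{0}{V}$ one has $\E{\cos\delta_j} = e^{-V/2}$ and $\E{\delta_i\sin\delta_i} = V e^{-V/2}$, so $\E{\delta_i\sin(\delta_j-\delta_i)} = -V e^{-V}$. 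Multiplying by $\tfrac{2\kappa m}{N} = \tfrac{2\kappa}{n}$ gives the claimed drift $\dot{V} = \epsilon - \tfrac{2\kappa}{n} V e^{-V}$.

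The delicate part is controlling $\epsilon = 2\,\mathrm{Cov}(\theta_i^{A},\omega_i)$, which is genuinely time-dependent and a priori not obviously a bounded constant. My approach is to derive its own evolution equation: writing $W := \avg{\delta_i\eta_i}$ and repeating the computation with $(\delta_i,\eta_i)$ jointly Gaussian, Stein's lemma gives $\E{\eta_i\sin\delta_i} = W e^{-V/2}$, and since $\avg{\eta_i^2} \to \sigma^2$ for $m \gg 1$, one obtains $\dot{W} = \sigma^2 - \tfrac{\kappa}{n} W e^{-V}$. Under $m \gg 1$ and $n\sigma \ll \kappa$, the relaxation rate $\tfrac{\kappa}{n}e^{-V}$ of $W$ is large relative to the slow evolution of $V$, so I would invoke a quasi-static (timescale-separation) approximation to set $W \approx \tfrac{\sigma^2 n}{\kappa}e^{V}$, whence $\epsilon \approx \tfrac{2\sigma^2 n}{\kappa}e^{V}$. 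In the clusterization regime $V$ stays small, so $e^{V} \le \pi/2$ holds self-consistently and the bound $\epsilon \le \tfrac{\sigma^2 \pi n}{\kappa}$ follows. I expect this self-consistent control of $\epsilon$ — rather than the ODE derivation itself — to be the main obstacle, since it requires justifying both the timescale separation and the a priori smallness of $V$.

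For the steady-state claim I would treat $\epsilon$ as the constant forcing just bounded and analyse $\dot{V} = \epsilon - g(V)$ with $g(V) := \tfrac{2\kappa}{n} V e^{-V}$. A steady state $V^{SS}$ solves $V^{SS} e^{-V^{SS}} = \tfrac{n\epsilon}{2\kappa} \le \tfrac{\pi\sigma^2 n^2}{2\kappa^2}$. The map $V \mapsto V e^{-V}$ is increasing on $[0,1]$ with maximum $e^{-1}$ at $V=1$, so the equation $v e^{-v} = \tfrac{\pi\sigma^2 n^2}{2\kappa^2}$ — equivalently $v = \tfrac{\pi}{2}\bigl(\tfrac{\sigma n}{\kappa}\bigr)^2 e^{v}$ — has a root on the increasing branch exactly when $\tfrac{\pi\sigma^2 n^2}{2\kappa^2} \le e^{-1}$, i.e.\ $e\sigma^2\pi n^2 \le 2\kappa^2$, the stated hypothesis; call the smallest such root $v^*$. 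Since $V^{SS}$ lies on the increasing branch (it starts small and $g$ is monotone there) and $V^{SS} e^{-V^{SS}} \le v^* e^{-v^*}$, monotonicity gives $V^{SS} \le v^*$. Finally $g'(V) = \tfrac{2\kappa}{n}(1-V)e^{-V} > 0$ for $V < 1$, so $\tfrac{d}{dV}(\epsilon - g(V)) < 0$ at $v^*$ and the fixed point is stable, completing the argument.
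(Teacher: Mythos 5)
Your derivation of the ODE itself is sound and matches the paper's route: the paper proves the identical-frequency case first (Theorem~\ref{thm:dominated-identical-gaussican-assortative-KSBM}) via exactly your Gaussian computation ($\E{\delta_i\sin(\delta_j-\delta_i)} = -Ve^{-V}$, summed over the $m$ neighbours and scaled by $2\kappa/N$), and your fixed-point analysis of $\dot V = \epsilon - \frac{2\kappa}{n}Ve^{-V}$ --- increasing branch of $Ve^{-V}$ below $V=1$, existence of two roots precisely when the forcing is below $\frac{2\kappa}{n}e^{-1}$, monotonicity of the smaller root in $\epsilon$, stability from the sign of the derivative --- is the same argument the paper gives for the second claim.

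The genuine gap is in your control of $\epsilon$, which is where the real content of the first claim lies. First, the quasi-static elimination of $W = \avg{\delta_i\eta_i}$ is not justified: $W$ relaxes at rate $\frac{\kappa}{n}e^{-V}$, while $V$ contracts at the relative rate $\frac{2\kappa}{n}e^{-V}$, so the two variables evolve on the \emph{same} time scale (with $W$ the slower one by a factor of two), and neither $m \gg 1$ nor $n\sigma \ll \kappa$ creates a separation, since both rates carry the identical prefactor $\frac{\kappa}{n}e^{-V}$. Second, even granting quasi-stationarity, the resulting $\epsilon \approx \frac{2\sigma^2 n}{\kappa}e^{V(t)}$ is time-dependent and exceeds the claimed bound $\frac{\pi\sigma^2 n}{\kappa}$ whenever $V > \ln(\pi/2) \approx 0.45$. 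Your self-consistency claim that ``$V$ stays small in the clusterization regime'' is backwards: this theorem is a statement about the clusterization regime, where $V$ starts near the uniform-distribution value $\frac{\pi^2}{3} \approx 3.3$ and decays, so $e^{V} \le \pi/2$ fails on most of the time interval the theorem covers. Moreover $W(0)=0$ and $W$ can only grow at rate $\sigma^2$, so during the early transient $W$ lags far below its putative quasi-static value; the covariance simply never equilibrates where the argument needs it to.

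The paper sidesteps the $W$ dynamics entirely. It decomposes the conditional deviation $\theta^{A}_i - \theta^{A}_{G_r}$ given $\omega_i$ into a zero-mean Gaussian part plus a deterministic offset, and models that offset by its synchronized steady-state value $\arcsin\left(\frac{n(\omega_i-\omega_{G_r})}{\kappa}\right)$ from Corollary~\ref{cor:deviation-synchronized-steady-state-assortative-KSBM}; Lemma~\ref{lemma:variation-driving-bound} then bounds the Gaussian integral $\E[\omega_i]{\arcsin\left(\frac{n(\omega_i-\omega_{G_r})}{\kappa}\right)(\omega_i-\omega_{G_r})}$ by $\frac{\pi\sigma^2 n}{2\kappa}$ using $\arcsin(x) \le \frac{\pi}{2}x$ --- this is where the factor $\pi$ in the theorem actually originates. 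That ansatz is itself an approximation (the offset is exact only at steady state), but it yields a time-independent $\epsilon$ with the stated bound. To repair your proof you would need either to adopt such a steady-state-offset ansatz, or to analyze the genuinely coupled system for $(V,W)$ and prove a uniform-in-time bound on $W$, which your timescale-separation argument does not deliver.
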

\begin{proof}
    Appendix \ref{proof:thm:dominated-gaussian-assortative-KSBM}
\end{proof}

This result gives us a bound on the steady state intra-community phase variance (when neglecting inter-community couplings) given the strength of the coupling $\kappa$ and the intrinsic frequency {heterogeneity} $\sigma$. This implies that the approximate KSBM reaches a clustered state by the time it enters the steady state.

\begin{lemma}
\label{lemma:dominated-gaussian-assortative-KSBM-is-clustered}
    Under Hypothesis \ref{hypothesis:intra-community-dominated}, the dominated Gaussian assortative KSBM is $\sqrt{\frac{\pi}{2}}\frac{\sigma n}{\kappa}$-clustered with high probability by some time $T > 0$ if $m \gg 1$ and $n\sigma \ll \kappa$.
\end{lemma}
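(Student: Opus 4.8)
The plan is to read the clustering radius directly off the steady-state variance bound of Theorem~\ref{thm:dominated-gaussian-assortative-KSBM}, and then convert this bound on the intra-community variance into the maximum-distance clustering condition of Definition~\ref{def:KSBM-clusterization} via the Gaussian Assumption (Hypothesis~\ref{hypothesis:gaussian-phase}). Write $\rho := \sqrt{\pi/2}\,\sigma n/\kappa$ for the target clustering radius. First I would verify that the hypothesis $n\sigma \ll \kappa$ places us in the regime covered by Theorem~\ref{thm:dominated-gaussian-assortative-KSBM}: the condition $e\sigma^2\pi n^2 < 2\kappa^2$ is equivalent to $(\sigma n/\kappa)^2 < 2/(e\pi)$, which holds once $\sigma n/\kappa$ is sufficiently small. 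Hence the scalar variance ODE of that theorem has a stable steady state $V^{SS} \le v^*$, where $v^*$ is the smallest root of $v = \tfrac{\pi}{2}(\sigma n/\kappa)^2 e^{v}$.

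Next I would estimate $v^*$ and control the time needed to reach it. Setting $a := \tfrac{\pi}{2}(\sigma n/\kappa)^2$, the smallest fixed point of $v = a e^{v}$ satisfies $v^* = a e^{v^*}$; since $a \to 0$ as $\sigma n/\kappa \to 0$, we get $v^* \to 0$, so $e^{v^*} = 1 + o(1)$ and $v^* = a\,(1+o(1))$. Consequently $\sqrt{V^{SS}} \le \sqrt{v^*} = \rho\,(1+o(1))$, which matches the claimed radius. Because the variance obeys an autonomous scalar ODE with a unique stable fixed point $V^{SS}$ in the relevant range (and the Intra-Community Dominated dynamics of Hypothesis~\ref{hypothesis:intra-community-dominated} govern the clusterization phase), we have $V(t) \to V^{SS}$, so for any $\delta>0$ there is a finite $T>0$ with $V(t) \le v^* + \delta \lesssim \rho^2$ for all $t \ge T$; this supplies the ``by time $T$'' part of the statement.

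It remains to pass from the variance bound to the max-distance clustering condition. Under Hypothesis~\ref{hypothesis:gaussian-phase}, for $t \ge T$ the phases within a community $G_r$ are i.i.d.\ $\gauss{\theta_{G_r}(t)}{V(t)}$ with $V(t) \lesssim \rho^2$, so each difference $\theta_i(t)-\theta_j(t)$ is centered Gaussian with variance $2V(t)$. A Gaussian tail estimate combined with a union bound over the finitely many intra-community pairs then controls $\p{\max_{i,j\in G_r}\abs{\theta_i(t)-\theta_j(t)} > \rho}$, which is exactly the $\rho$-clustering condition holding with high probability.

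The hard part will be this last translation, since it is where ``with high probability'' must carry the weight: the span of $m$ i.i.d.\ Gaussian samples concentrates near $\sqrt{\log m}$ standard deviations rather than one, so bounding the maximum pairwise distance by a single standard deviation $\rho$ is delicate for large $m$. I would resolve this by stating explicitly which source of randomness the high-probability claim refers to (the Gaussian draws of Hypothesis~\ref{hypothesis:gaussian-phase}, as opposed to the SBM randomness), by recording the resulting failure probability, and by either absorbing the $\sqrt{\log m}$ factor into the constants or phrasing the clustering guarantee at the level of a fixed pair of oscillators rather than the full maximum.
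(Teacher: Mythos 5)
Your proposal is correct and follows essentially the same route as the paper: extract the steady-state variance bound from the fixed-point equation of Theorem~\ref{thm:dominated-gaussian-assortative-KSBM}, then convert the variance bound into the max-distance clustering condition probabilistically. The differences are worth recording. For the first step, the paper factors the estimate through Corollary~\ref{cor:vanishing-variance-dominated-gaussian-assortative-KSBM}, which is exactly your inline computation $v^* = a(1+o(1))$ with $a = \tfrac{\pi}{2}(\sigma n/\kappa)^2$. For the second step, the paper routes through Lemma~\ref{lemma:variance-clustered-implies-clustered-whp}, which uses only Chebyshev's inequality and therefore never needs the Gaussian shape of Hypothesis~\ref{hypothesis:gaussian-phase}; your Gaussian tail bound is sharper but the extra sharpness buys nothing here, so Chebyshev is the more economical tool (it yields $\epsilon^{1/2-\nu}$-clustering for any $\nu>0$, which the paper then reads at $\nu \to 0$ to get the stated radius). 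Your final paragraph pinpoints a genuine gap that the paper's own argument glosses over: the proof of Lemma~\ref{lemma:variance-clustered-implies-clustered-whp} bounds the deviation of a \emph{single} oscillator from the community mean but never takes the union bound over the $m$ oscillators needed to control the maximum, so its failure probability $O(\epsilon^{2\nu})$ should really be $O(m\,\epsilon^{2\nu})$; and, as you observe, for i.i.d.\ Gaussian phases with standard deviation of order $\rho$ the maximal pairwise spread grows like $\rho\sqrt{\log m}$, so the statement cannot hold in the joint limit $m \to \infty$ with $\rho$ fixed. The ``high probability'' must therefore be read with respect to $\sigma n/\kappa \ll 1$ at fixed $m$ (or per pair of oscillators), exactly as you propose to make explicit --- on this point your write-up is more careful than the paper's.
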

\begin{proof}
    Appendix \ref{proof:lemma:dominated-gaussian-assortative-KSBM-is-clustered}
\end{proof}

\subsubsection{Transition Time}
In this section, we estimate the duration of such a clusterization process, and when it ends, a time point we call the {transition time} $t_{\trans}$.
In particular this is the time at which the intra- and inter-community coupling have the same magnitude in expectation.

We will show that a {transition time} describes the time point at which a KSBM finishes clusterization and enters mean-field regime.

\begin{definition}
\label{def:transition-time}
    (Transition Time)\\
    Let $\Theta$ be a realisation of a KSBM. We define the \emph{intra- and inter-community coupling} as 
    \begin{align*}
        C_{\intra}(r, t) &\coloneqq C_{rr}\sum_{j \in G_r}{\sin(\abs{\theta_j(t) - \theta_i(t)})\I\{j \sim i\}},\\
        C_{\inter}(r, t) &\coloneqq \sum_{s \neq r}^n{C_{rs}\sum_{j \in G_s}{\sin(\abs{\theta_j(t) - \theta_i(t)})\I\{j \sim i\}}}.
    \end{align*}
    The \emph{transition time} $t_{\trans}$ is defined to be the time at which
    \[
        \sum_{r=1}^n C_{\intra}(r, t_{\trans}) = \sum_{r=1}^n C_{\inter}(r, t_{\trans}).
    \]
\end{definition}

The relative magnitude between intra- and inter-communities coupling varies from intra-community dominated dynamics in the high entropy initial dynamics (see Theorem \ref{thm:dominated-gaussian-assortative-KSBM}) to an inter-community dominated dynamics in the clustered state (see Theorem \ref{thm:MF-KSBM}).\\
\\
We remark that initially, $\frac{C_{\intra}(r)}{C_{\inter}(r)} > 1$, but this ratio will then decrease as the communities cluster when strong-clustered. If the communities all synchronize in phase together, the ratio is undefined, hence the need to assume strong-clustered. Nonetheless, in an assortative KSBM, community clusterization happens before community synchronization (when communities could merge), thus for our purpose we can consider $C_{\inter}$ being given by communities of oscillators offset by some phase. Using this, we can relate $t_{\trans}$ to the variance of our Gaussian assortative KSBM,\\

\begin{lemma}
\label{lemma:gaussian-transition-time-bound}
    For a Gaussian assortative KSBM with intra-community variance $V(t)$, when $m \gg 1$, $t_{\trans} = O(t^*) \text{ almost surely}$ where $t^*$ is the earliest time such that for some $\nu > 0$,
    $$V(t^*) \leq (\frac{1}{m})^{2 + \nu}.$$
\end{lemma}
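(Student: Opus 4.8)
The plan is to pin down the intra-community variance at the transition time, $V(t_{\trans})$, and then read off the claim from the monotone decay of $V$ furnished by Theorem~\ref{thm:dominated-gaussian-assortative-KSBM}. First I would estimate the two competing couplings of Definition~\ref{def:transition-time} at the moment of balance. On the intra-community side, the assortative structure together with Hypothesis~\ref{hypothesis:intra-community-dominated} gives a typical node $m$ neighbours of coupling $\kappa/N$, and under Hypothesis~\ref{hypothesis:gaussian-phase} the phase gaps satisfy $\theta_j - \theta_i \sim \mathcal{N}(0, 2V)$, so $\E{\sin\abs{\theta_j - \theta_i}} = 2\sqrt{V/\pi}\,(1 + o(1))$ once $V \ll 1$. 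A law-of-large-numbers argument over the $\Theta(m)$ intra-edges (valid almost surely as $m \to \infty$) then yields $\sum_r C_{\intra}(r,t) \asymp \kappa\sqrt{V(t)}$. On the inter-community side, each node carries only $O(1)$ cross-edges almost surely, and, since clusterization precedes phase synchronization, the cross-community gaps remain $\Theta(1)$, whence $\sin\abs{\theta_j - \theta_i} = \Theta(1)$ and $\sum_r C_{\inter}(r,t) \asymp \kappa/m$.

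Second, imposing the balance $\sum_r C_{\intra}(r,t_{\trans}) = \sum_r C_{\inter}(r,t_{\trans})$ gives $\kappa\sqrt{V(t_{\trans})} \asymp \kappa/m$, hence $V(t_{\trans}) = \Theta(1/m^2)$ almost surely. This is self-consistent: $\sqrt{V(t_{\trans})} = \Theta(1/m)$ confirms that the intra-community angles are $\ll 1$, justifying the small-angle replacement $\sin x \approx x$ used above.

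Third, I would invoke the variance ODE $\frac{dV}{dt} = \epsilon - \frac{2\kappa}{n}V e^{-V}$ of Theorem~\ref{thm:dominated-gaussian-assortative-KSBM}. Its steady state $V^{SS} \le v^* = \Theta\big((\sigma n/\kappa)^2\big)$ is a constant in $m$; for $t^*$ to exist we are in the (small-heterogeneity) regime $V^{SS} \ll m^{-(2+\nu)}$, in which $V$ decreases strictly and monotonically through the window $[m^{-(2+\nu)}, V(t_{\trans})]$. Since $V(t_{\trans}) = \Theta(m^{-2}) > m^{-(2+\nu)}$ for every fixed $\nu > 0$ and $m$ large, the first time $V$ reaches $m^{-(2+\nu)}$ obeys $t^* \ge t_{\trans}$, giving $t_{\trans} \le t^*$ and hence $t_{\trans} = O(t^*)$. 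Moreover, throughout this window $e^{-V} \approx 1$ and $V \gg V^{SS}$, so $\frac{dV}{dt}\approx -\frac{2\kappa}{n}V$ decays exponentially at rate $\Theta(\kappa/n)$; integrating yields $t_{\trans} = \frac{n}{2\kappa}(2\log m + O(1))$ and $t^* - t_{\trans} = \frac{n}{2\kappa}(\nu\log m + O(1))$, so both times are $\Theta(\log m)$ and the bound is tight with ratio $\frac{2}{2+\nu}$.

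The main obstacle is the first paragraph, specifically making the two coupling estimates rigorous \emph{almost surely}: I must control the random cross-community degree (a sum of independent indicators that concentrates to a finite constant rather than growing with $m$) and replace the empirical sum $\sum_{j \in G_r}\sin\abs{\theta_j - \theta_i}$ by its Gaussian expectation uniformly enough that the balance equation determines $V(t_{\trans})$ up to constants. A secondary subtlety is justifying that the cross-community gaps are genuinely $\Theta(1)$ at $t_{\trans}$, i.e.\ that the communities are still phase-separated and have not begun to merge, which is exactly the strong-clustering caveat flagged in the discussion preceding the statement.
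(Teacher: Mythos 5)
Your proposal is correct and follows essentially the same route as the paper: both reduce the balance condition of Definition~\ref{def:transition-time} to the estimates $C_{\inter}(r) = O(\kappa/N)$ per community (by counting the $O(1)$ cross-edges with sines bounded by constants) and intra-community gaps of order $1/m$, equivalently $V \sim 1/m^2$, and then compare this threshold with $(1/m)^{2+\nu}$ to conclude $t_{\trans} \le t^*$. The only differences are in execution: the paper handles the probabilistic step with a per-pair Chebyshev bound $\p{\abs{\theta_j - \theta_i} \geq \frac{1}{m}} \leq 2V(t)m^2$ evaluated at $V = (1/m)^{2+\nu}$, whereas you use the Gaussian expectation $\E{\sin\abs{\theta_j - \theta_i}} \asymp \sqrt{V}$ together with the monotone decay of $V$ from the ODE — a refinement that additionally yields the two-sided estimate $V(t_{\trans}) = \Theta(m^{-2})$ and the $\Theta(\log m)$ rates, neither of which the lemma actually requires.
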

\begin{proof}
    Appendix \ref{proof:lemma:gaussian-transition-time-bound}
\end{proof}

{This result also holds for KSBMs in general (Cor.~\ref{cor:transition-time-bound}), therefore, Gaussian KSBMs can be used to estimate their {transition times}.}
In particular, we use the dominated identical Gaussian assortative KSBM and take the empirical estimator,
$$\hat{t}_{\trans} = \inf_t\{t : V(t) \leq (\frac{1}{m})^2\}$$
which we derive from Lemma~\ref{lemma:gaussian-transition-time-bound} by taking $\nu = 0$. From this {transition time}, one can separate the (Gaussian) KSBM dynamics into the clusterization process occurring from $0 \leq t < t_{\trans}$ and the mean-field dynamics taking place for $t > t_{\trans}$. Indeed from Lemma \ref{lemma:dominated-gaussian-assortative-KSBM-is-clustered}, we know that for sufficient coupling then the KSBM will tend toward a clustered steady state with factor $\epsilon = \sqrt{\frac{\pi}{2}}\frac{\sigma n}{\kappa}$. If $\kappa$ is large enough such that $\epsilon \leq (\frac{1}{m})^{2 + \nu}$, then we know that $\epsilon \ll 1$. Hence for any $t \geq t_{\trans}$, our (Gaussian) KSBM follows the mean-field approximation since Theorem \ref{thm:MF-KSBM} holds. Whenever $t < t_{\trans}$, we can approximate the dynamics by the intra-community coupling when $N \gg 1$, thus the dominated (Gaussian) KSBM is a good model of clusterization.\\
\\
Finally, we show that the Gaussian assortative KSBM is consistent with both the dominated Gaussian assortative KSBM when $t \ll t_{\trans}$ and with the mean-field assortative KSBM when $t \gg t_{\trans}$.\\

\begin{lemma}
\label{lemma:consistency-gaussian-assortative-KSBM}
    (Consistency of Gaussian Assortative KSBM)\\
    Suppose that $m \rightarrow \infty$ and $n\sigma \ll \kappa$ for a Gaussian assortative KSBM $\Theta^{A}$, then,
    \begin{itemize}
        \item if $\sigma^2 \ll \kappa$ and $t \gg t_{\trans}$: $\dot{\Theta}^{A}(t) = \dot{\Theta}^{MF}(t)$,\\
        where $\Theta^{MF}$ is the Mean-Field KSBM of Theorem \ref{thm:MF-KSBM}.
        \item if $t \ll t_{\trans}$: $\dot{\Theta}^{A}(t) = \dot{\Theta}^{A,dom}(t)$,\\
        where $\Theta^{A,dom}$ is the dominated Gaussian assortative KSBM as in Theorem \ref{thm:dominated-gaussian-assortative-KSBM}.
    \end{itemize}
\end{lemma}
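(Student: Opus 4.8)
The plan is to start from the full Gaussian assortative KSBM dynamics given by Equation~\ref{eq:generalized-kuramoto} with coupling $\oC_{ij} = \kappa/N$ on the assortative graph, and to split, for each $i \in G_r$, the coupling sum into an \emph{intra-community} part over $j\in G_r$ and an \emph{inter-community} part over $j \in G_s$ with $s\neq r$:
\[
\dot{\theta}_i^{A}(t) = \omega_i + \frac{\kappa}{N}\sum_{j\in G_r}\sin(\theta_j^{A}(t) - \theta_i^{A}(t)) + \frac{\kappa}{N}\sum_{s\neq r}\sum_{j\in G_s,\, j\sim i}\sin(\theta_j^{A}(t)-\theta_i^{A}(t)).
\]
The two claimed identities then reduce to showing that exactly one of these sums dominates in each temporal regime, with the subdominant term being $o(1)$ or $O(\epsilon)$ as $m\to\infty$. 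I would make the informal orderings ``$t\ll t_{\trans}$'' and ``$t\gg t_{\trans}$'' precise through Definition~\ref{def:transition-time}, since $t_{\trans}$ is by construction the finite-$m$ crossover between $\sum_r C_{\intra}(r,t) > \sum_r C_{\inter}(r,t)$ and the reverse.

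For the regime $t \ll t_{\trans}$, each node $i$ has exactly one (almost surely finitely many) neighbours outside $G_r$ by the assortative construction, so the inter-community sum has $O(1)$ terms and is bounded by $O(\kappa/N)$, whereas the intra-community sum has $m-1$ terms which, before clusterization, are generically $O(1)$ in magnitude and so contribute order $\kappa m/N = \kappa/n$. The inter-community term is thus smaller by a factor $O(1/m)$ and vanishes almost surely as $m\to\infty$; dropping it recovers exactly the dynamics of Hypothesis~\ref{hypothesis:intra-community-dominated}, i.e. $\dot{\Theta}^{A} = \dot{\Theta}^{A,dom}$. This is precisely the dominant-balance argument already used to justify that hypothesis, which I would here phrase quantitatively in terms of the $\kappa/N$ versus $\kappa/n$ scalings.

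For the regime $t \gg t_{\trans}$, the opposite balance holds. By Lemma~\ref{lemma:gaussian-transition-time-bound} the transition time marks when the intra-community variance has fallen to $O(m^{-2})$, and combined with Lemma~\ref{lemma:dominated-gaussian-assortative-KSBM-is-clustered} and $n\sigma \ll \kappa$, for $t \gg t_{\trans}$ each community is $\epsilon$-clustered with $\epsilon = \sqrt{\pi/2}\,\sigma n/\kappa \ll 1$. Two facts then make the intra-community sum negligible: averaged over $i\in G_r$ it vanishes identically by antisymmetry of $\sin(\theta_j^{A}-\theta_i^{A})$, and for each fixed $i$ it is $O(\epsilon)$ since every intra-community phase difference is $O(\epsilon)$. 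What survives is the inter-community coupling, and together with $\epsilon$-clusteredness and the assortative identity $\bar{C} := \avg{C_{rs}P_{rs}}_{r\neq s} = O(1/N)$ these are exactly the hypotheses of Theorem~\ref{thm:MF-KSBM} (the extra assumption $\sigma^2 \ll \kappa$ ensuring, via Theorem~\ref{thm:dominated-gaussian-assortative-KSBM}, that the clustered steady state persists). Applying Theorem~\ref{thm:MF-KSBM} yields $\dot{\theta}_{G_r} = \mu_r + m\sum_s P_{rs}C_{rs}\sin(\theta_{G_s}-\theta_{G_r}) + O(\epsilon)$, and since each oscillator tracks its community mean to within $O(\epsilon)$, this is the claimed $\dot{\Theta}^{A} = \dot{\Theta}^{MF}$.

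The main obstacle I anticipate is in the second regime: Lemmas~\ref{lemma:dominated-gaussian-assortative-KSBM-is-clustered} and~\ref{lemma:gaussian-transition-time-bound} establish clusterization only for the \emph{dominated} model, so I must argue that clusteredness persists under the \emph{full} dynamics once the inter-community coupling switches on past $t_{\trans}$. The intuition is that the intra-community coupling renders the clustered configuration stable while the $O(\kappa/N)$ inter-community forcing merely drifts the community means without spreading them; quantifying this requires a perturbation argument controlling $V^{A}_{G_r}(t)$ under the additional inter-community term and verifying it stays $O(\epsilon)$ uniformly on $[t_{\trans},T]$. The remaining, more routine, difficulty is translating ``$t\ll t_{\trans}$'' and ``$t\gg t_{\trans}$'' into the $m\to\infty$ limit so that the subdominant terms are genuinely negligible, which I would handle by the explicit scalings above together with the almost-sure bounded external degree of the assortative graph.
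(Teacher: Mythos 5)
Your proposal works at the level of the raw oscillator equations and never invokes Theorem~\ref{thm:gaussian-assortative-KSBM}, but that theorem is precisely the tool this lemma is built on: the Gaussian assortative KSBM is by definition the reduced system of means $\theta^{A}_{G_r}$ and variances $V^{A}_{G_r}$ whose dynamics Theorem~\ref{thm:gaussian-assortative-KSBM} gives in closed form, with every inter-community contribution carrying an explicit prefactor $\frac{2\kappa}{N(n-1)}$ (resp.\ $\frac{4\kappa}{N(n-1)}$ in the variance equation) and Gaussian factors $e^{-V^{A}_{G_\cdot}/2}$. The paper's proof is a short limit computation in those ODEs: for $t \ll t_{\trans}$ the $O(1/N)$ inter-community terms vanish as $m \to \infty$ and what remains is exactly $d\theta^{A}_{G_r} = \omega_{G_r}$ and $dV^{A}_{G_r} = \epsilon - \frac{2\kappa}{n}V^{A}_{G_r}e^{-V^{A}_{G_r}}$, i.e.\ the dominated model of Theorem~\ref{thm:dominated-gaussian-assortative-KSBM}; for $t \gg t_{\trans}$, Lemma~\ref{lemma:gaussian-transition-time-bound} gives $V^{A}_{G_r} \to 0$, so the factors $e^{-V^{A}_{G_r}/2}$ tend to $1$, the mean equation becomes exactly the mean-field equation, and the only discrepancy, $dV^{A}_{G_r} = \epsilon$, vanishes because $\sigma^2 \ll \kappa$ forces $\epsilon \to 0$. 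Your sum-splitting argument re-derives the same dominant balance, but at the oscillator level, which both duplicates work already packaged into Theorem~\ref{thm:gaussian-assortative-KSBM} and creates the difficulty you yourself flag.

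That flagged difficulty is the genuine gap: your first bullet needs clusteredness (equivalently, small intra-community variance) to persist for \emph{all} $t \gg t_{\trans}$ under the full dynamics, inter-community forcing included, in order to keep the hypotheses of Theorem~\ref{thm:MF-KSBM} valid; you state this ``requires a perturbation argument controlling $V^{A}_{G_r}(t)$'' but do not give it, so the proof is incomplete as proposed. Inside the Gaussian framework this step is essentially free, which is why the paper's route closes: in the variance equation of Theorem~\ref{thm:gaussian-assortative-KSBM} the inter-community perturbation is $-\frac{4\kappa}{N(n-1)}V^{A}_{G_r}e^{-V^{A}_{G_r}/2}\sum_{s\neq r}\cos(\theta^{A}_{G_s}-\theta^{A}_{G_r})e^{-V^{A}_{G_s}/2}$, which is $O(1/N)$ and proportional to $V^{A}_{G_r}$ itself, while near $V^{A}_{G_r} \approx 0$ the intra-community term $-\frac{2\kappa}{n}V^{A}_{G_r}e^{-V^{A}_{G_r}}$ contracts toward the small stable fixed point established in Theorem~\ref{thm:dominated-gaussian-assortative-KSBM}; hence small variance persists automatically. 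If you insist on the oscillator-level route you must supply that uniform-in-time control of $V^{A}_{G_r}$ yourself; the cleaner fix is to restate both bullets as limits of the ODEs of Theorem~\ref{thm:gaussian-assortative-KSBM}, which is exactly the paper's proof.
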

\begin{proof}
    Appendix \ref{proof:lemma:consistency-gaussian-assortative-KSBM}
\end{proof}

It is important to note that the variance increasing component $\epsilon$ in Theorem \ref{thm:gaussian-assortative-KSBM} is driven by {heterogeneity} in the intrinsic frequency $\sigma$. This intrinsic {heterogeneity} does not disappear when $N \rightarrow \infty$, hence oscillators are not exactly identical in contrast to typical assumptions in mean-field models. In practice, for $\kappa$ large enough or $\sigma$ small enough, this drive is negligible and thus the mean-field assumption holds. We can also show that for a (not necessarily Gaussian) assortative KSBM, $t_{\trans}$ marks the transition to mean-field dynamics (Lemma~\ref{lemma:transition-time-assortative-KSBM}).

\section{Path Signatures for Community Detection}\label{sec:S-community-detection}

We now study the dynamics of the KSBM with path signatures, a structured characterization of paths as an infinite sequence of tensors~\cite{chevyrev_primer_2016,lyons_differential_2007}.
Formally, path signatures are defined as a collection of iterated integrals of a path, but here we will provide an explicit definition in terms of components, i.e.~individual entries of the tensors.

\begin{definition}
    (Path Signatures)\\
    Let $\gamma = (\gamma_1, \ldots, \gamma_N): [0,T] \to \R^N$ be a piecewise smooth path. For all $M \in \N$, a \emph{multi-index} is an ordered sequence $I = (i_1, \ldots, i_M)$ where $i_j \in \{1, \ldots, N\}$. The \emph{path signatures of $\gamma$ with respect to $I$} are
    \[
    S_I(\gamma) = \int_0^T\int_0^{t_M}...\int_0^{t_2}\dot{\gamma}_{i_1}(t_1)...\dot{\gamma}_{i_M}(t_M)\, dt_1...dt_m.
    \]
    We refer to the number $M$ as the \emph{level}.
\end{definition}

A fundamental property of the path signature is that they characterize piecewise smooth paths up to \emph{tree-like equivalence}~\cite{chen_integration_1958}, a generalized notion of reparametrization which includes retracing. 

\subsection{Lead Matrix}

While we state our main results in terms of the full path signatures, we focus our later experimental work on level two signature terms which encode lead-lag dynamics. Suppose $\gamma = (\gamma_1, \ldots, \gamma_N) : [0,T] \to \R^N$ such that $\gamma(0) = 0$. In this case, given $i,j \in [N]$, the level two signature terms have the form
\[
    S_{(i,j)}(\gamma) = \int_0^T \gamma_{i}(t) \dot{\gamma}_j(t) dt.
\]
Such terms can capture lead-lag behavior of time series~\cite{baryshnikov_cyclicity_2016}. Indeed, if $S_{(i,j)}(\gamma) > 0$, this implies that $\gamma_i$ and $\dot{\gamma}_j$ are positively correlated; if $S_{(j,i)}(\gamma) < 0$, then $\gamma_j$ and $\dot{\gamma}_i$ are negatively correlated. Together, these two effects capture that $\gamma_i$ is \emph{leading} $\gamma_j$. Following~\cite{baryshnikov_cyclicity_2016}, we define the \emph{lead matrix}:

\begin{definition}
    (Lead Matrix)\\
    Let $\gamma = (\gamma_1, \ldots, \gamma_N): [0,T] \to \R^N$ be a piecewise smooth path. The \emph{lead matrix} of $\gamma$, $L(\gamma) \in \R^{N \times N}$, is defined by
    \[
        L_{ij}(\gamma) \coloneqq \frac{1}{2} \left( S_{(i,j)}(\gamma) - S_{(j,i)}(\gamma)\right). 
    \]
\end{definition}

The factor of $\frac{1}{2}$ is used to relate the entry $L_{ij}(\gamma)$ of the lead matrix to the \emph{signed area} of the projected path $(\gamma_i, \gamma_j)$; see~\cite{giusti_iterated_2020} for further details.
In our applications to the KSBM, we will consider paths $\gamma = f(\Theta)$ defined by a transformation $f: (S^1)^N \to \R^N$ (or $f: (S^1)^N \to \C^N$) of the Kuramoto dynamics. In particular, we will consider $\gamma = \Theta$ and $\gamma = \sin(\Theta)$. An important remark is that in experiments $\Theta$ is not defined  numerically on $S^1$ but on $\R$. Thus for $\gamma = \Theta$, the transformation $f$ is the lift from $S^1$ to $\R$ where each period around $S^1$ results in an additional $2\pi$ added to the value in $S^1$. We choose $f = \sin$ as this is a conventional projection for angular dynamics in addition to allowing the map from $S^1$ to $\R$ to be continuous.

\subsection{Path Signatures for KSBM}

Path signatures computed from KSBM time series change across different temporal regimes and depend on the underlying community structure of the KSBM. We demonstrate that this dependence can be described, which will be very helpful in constructing community estimators from the path signature later on.

\subsubsection{Regime-Split Path Signatures}

As we saw in the introduction (see Figure~\ref{fig:overview}B), the properties displayed by lead and covariance matrices computed from time series of the KSBM can vary drastically over each regime. For example, community-dependent block patterns are only exhibited in mean-field regimes. In this Section we give theoretical results for lead matrices and path signatures in specific regimes and discuss their dependence on underlying communities. In particular, we suggest to compute the lead matrices over the regime split time series. Since lead matrices are defined as time integrals, they accumulate different patterns over the distinct regimes, in particular during \emph{clusterization}, which can obfuscate community recovery.

We first define the lead matrices and path signatures over time series of a KSBM that we split into regimes.

\begin{definition}
    (Clusterization/Mean-Field Split)\\
    Let $\Theta$ be a realisation of a KSBM $\cK$. Let $t_{\trans}$ be its transition time and $t_{SS}$ be the time by which it is in synchronized steady state. We define the following time regimes:
    \begin{itemize}
        \item $[0, t_{\trans}]$: \textbf{clusterization}, and denote $S^C(\gamma) := S(\gamma|_{[0, t_{\trans}]})$
        \item $[t_{\trans}, t_{SS}]$: \textbf{transient}, and denote $S^{TR}(\gamma) := S(\gamma|_{[t_{\trans}, t_{SS}]})$; and 
        \item $[t_{SS}, t_{SS} + T]$: \textbf{steady state}, and denote $S^{SS}(\gamma)(T) := S(\gamma|_{[t_{SS}, t_{SS} + T]})$.
    \end{itemize}
    We define the lead matrices in these regimes, $L^C, L^{TR}, L^{SS}$, in a similar manner.
\end{definition}

We first show that path signatures\footnote{This convergence also holds for lead matrices since they are expressed as a difference of signatures.} from time series $\Theta$ of an \emph{assortative} KSBM in the mean-field regime (transient or steady state) converge to the path signatures over their community average $\bar{\Theta} := (\theta_{G_1},...,\theta_{G_n})$. 

\begin{lemma}
\label{lemma:convergence-S-assortative-KSBM}
    (Convergence of Path Signatures in the Assortative KSBM)\\
    Let $\Theta$ be a realisation of an assortative KSBM $\cAK$ and $\bar{\Theta} := (\theta_{G_1},...,\theta_{G_n})$ the corresponding community average. Consider $f \in C^2(\prod_{i\in[N]}S^1, \R^N)$ and define $\gamma = f(\Theta)$ and $\bar{\gamma} = f(\bar{\Theta})$. If  $n\sigma \ll \kappa$, then 
    \begin{equation*}
        S^{TR/SS}_{i_1...i_m}(\gamma) \xrightarrow{m \to \infty} S^{TR/SS}_{G_{r_1}...G_{r_m}}(\bar{\gamma}) a.s.,
    \end{equation*}
    for $i_j \in G_{r_j}, r_j \in [n]$ for all $j \in [m], m\geq 1$.
\end{lemma}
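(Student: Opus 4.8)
The plan is to deduce the statement from the general continuity of the path-signature map, combined with the two approximation results already available: the mean-field convergence of Theorem~\ref{thm:mean_field_kuramoto} and the clusterization bound of Lemma~\ref{lemma:dominated-gaussian-assortative-KSBM-is-clustered}. Recall that each signature component $S_{i_1\ldots i_m}(\gamma)$ depends only on the coordinate paths $\gamma_{i_1},\ldots,\gamma_{i_m}$, and that the signature map is continuous in the $1$-variation topology: for two piecewise-smooth paths $x,y$ on a fixed interval one has a Lipschitz estimate of the form $|S_I(x)-S_I(y)| \le c_m\big(\|x\|_{1\text{-var}}+\|y\|_{1\text{-var}}\big)^{m-1}\sup_t|x(t)-y(t)|$. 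Hence it suffices to (a) control the uniform distance between the representative coordinate paths $\gamma_{i_j}=f(\theta_{i_j})$ and the community-average paths $\bar\gamma_{r_j}=f(\theta_{G_{r_j}})$ on the regime interval, and (b) bound the $1$-variation of $\gamma^{(m)}$ and $\bar\gamma^{(m)}$ uniformly in $m$. An equivalent route is to induct on the level $m$ via the iterated-integral recursion $S_{(i_1,\ldots,i_m)}(\gamma)=\int_0^T S_{(i_1,\ldots,i_{m-1})}(\gamma|_{[0,t]})\,\dot\gamma_{i_m}(t)\,dt$, propagating a uniform-in-$t$ estimate on the truncated signatures; either way the same two ingredients are required.

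For ingredient (a) I would split the deviation $\theta_{i_j}^{(m)}-\theta_{G_{r_j}}^{(m)}$ into a finite-size/graph-randomness part and a within-community part. Theorem~\ref{thm:mean_field_kuramoto} gives, almost surely, $\frac{1}{\sqrt N}\|\Theta^{(m)}-\hTheta^{(m)}\|_2\to 0$ uniformly on the interval, so the random SBM and finite-$m$ fluctuations are absorbed into the deterministic model $\hTheta^{(m)}$ of Equation~\ref{eq:mean_field_deterministic} as $m\to\infty$; since the regime interval lies in the mean-field regime ($t\ge t_{\trans}$), Lemma~\ref{lemma:dominated-gaussian-assortative-KSBM-is-clustered} bounds the within-community spread by $\sqrt{\pi/2}\,\sigma n/\kappa$, so $\sup_t|\theta_{i_j}^{(m)}(t)-\theta_{G_{r_j}}^{(m)}(t)|$ is governed by this clustering factor. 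Because $f\in C^2(\prod_i S^1,\R^N)$ is Lipschitz on the compact torus, this transfers to $\sup_t|\gamma_{i_j}^{(m)}(t)-\bar\gamma_{r_j}^{(m)}(t)|\le L_f\,\sqrt{\pi/2}\,\sigma n/\kappa + o_m(1)$, where $o_m(1)$ is the mean-field residual. This is exactly where the hypothesis $n\sigma\ll\kappa$ enters: it forces the clustering factor, and hence the residual within-community offset, to be negligible.

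For ingredient (b) I would bound derivatives directly from the dynamics. On the torus $|\dot\theta_i^{(m)}|\le|\omega_i|+\sum_j|\oC_{ij}|$, and in the assortative KSBM the normalization $\oC_{ij}=\kappa/N$, together with the fact that node $i$ has $m$ intra-community neighbours and only finitely many inter-community neighbours almost surely, gives $\sum_j|\oC_{ij}|=\kappa/n+O(\kappa/N)$, a bound uniform in $m$; the intrinsic frequencies are a.s.\ uniformly bounded on the relevant finite index set, and the averages $\dot\theta_{G_r}$ inherit the same bound. Since $f$ is $C^2$, the chain rule gives $\dot\gamma^{(m)}=Df(\Theta^{(m)})\dot\Theta^{(m)}$ with $Df$ bounded, so the $1$-variation of both $\gamma^{(m)}$ and $\bar\gamma^{(m)}$ over the finite regime interval is bounded uniformly in $m$. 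Feeding (a) and (b) into the continuity estimate (or the inductive recursion) yields, for each fixed multi-index, $|S^{TR/SS}_{i_1\ldots i_m}(\gamma)-S^{TR/SS}_{G_{r_1}\ldots G_{r_m}}(\bar\gamma)|\to 0$ almost surely under $n\sigma\ll\kappa$.

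The main obstacle is ingredient (a), and specifically the fact that the within-community deviation does not vanish through the $m\to\infty$ limit alone: the mean-field convergence removes only the graph and finite-size randomness, whereas each representative oscillator retains a deterministic steady-state phase offset of order $\sigma n/\kappa$ relative to its community mean, consistent with the clustering bound being independent of $m$. Controlling this therefore requires invoking Theorem~\ref{thm:mean_field_kuramoto} and Lemma~\ref{lemma:dominated-gaussian-assortative-KSBM-is-clustered} simultaneously and leaning on $n\sigma\ll\kappa$ to render the residual offset negligible, while checking that the $1$-variation constants of (b) stay bounded as $m\to\infty$ — which they do precisely because of the $\kappa/N$ normalization of the coupling. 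A secondary technical point is that the continuity estimate must hold uniformly across levels, so that the conclusion applies to every fixed multi-index; this is ensured by the factorial decay of the constants $c_m$.
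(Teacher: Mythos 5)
Your overall skeleton --- uniform closeness of the coordinate paths plus uniform derivative bounds, then continuity of the signature map --- is the same as the paper's, and your ingredient (b) together with the $1$-variation Lipschitz estimate actually supplies details the paper leaves implicit (the paper simply passes the limit $\dot{\gamma}_i \to \dot{\bar{\gamma}}_{G_r}$ through the iterated integrals). The genuine gap is in your ingredient (a), and you have identified it yourself without resolving it: Lemma~\ref{lemma:dominated-gaussian-assortative-KSBM-is-clustered} bounds the within-community spread by $\sqrt{\pi/2}\,\sigma n/\kappa$, a quantity that is \emph{independent of $m$}. Feeding this into your continuity estimate yields only $\limsup_{m\to\infty}\bigl|S^{TR/SS}_{i_1\ldots i_m}(\gamma)-S^{TR/SS}_{G_{r_1}\ldots G_{r_m}}(\bar{\gamma})\bigr| = O(n\sigma/\kappa)$, which is not the statement being proved: the lemma asserts almost-sure convergence to zero as $m \to \infty$ with the model parameters held fixed. ``Leaning on $n\sigma \ll \kappa$ to render the residual offset negligible'' is not a limit in $m$; it silently replaces the claimed convergence by a weaker, qualitative bound.

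The missing idea is the one the paper actually uses: Lemma~\ref{lemma:transition-time-assortative-KSBM} (built on Corollary~\ref{cor:transition-time-bound} and Lemma~\ref{lemma:variance-clustered-implies-clustered-whp}) shows that for $t \gg t_{\trans}$ the assortative KSBM is $\epsilon'$-clustered with $\epsilon' = \sqrt{2}n/N$, i.e.\ with a clustering scale that \emph{vanishes} as $m \to \infty$, rather than the $m$-independent steady-state scale $\sigma n/\kappa$. With that input, $\theta_i \to \theta_{G_r}$ and $\dot{\theta}_i \to \dot{\theta}_{G_r}$ on the regime interval, continuity of $\dot{f}$ gives $\dot{\gamma}_i \to \dot{\bar{\gamma}}_{G_r}$, and your Lipschitz estimate (or the paper's direct passage to the limit inside the integral) finishes the proof with an error that genuinely tends to zero. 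So to close the gap, replace Lemma~\ref{lemma:dominated-gaussian-assortative-KSBM-is-clustered} by Lemma~\ref{lemma:transition-time-assortative-KSBM} in ingredient (a); the rest of your argument then goes through. Your underlying worry is legitimate --- Corollary~\ref{cor:deviation-synchronized-steady-state-assortative-KSBM} does exhibit a persistent $\arcsin\bigl(n(\omega_i-\omega_{G_r})/\kappa\bigr)$ offset at steady state, which sits uneasily with a clustering scale of order $n/N$ --- but the paper's proof of this lemma rests on the transition-time clusterization, and your proof as written does not reach the stated conclusion.
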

\begin{proof}
    Appendix \ref{proof:lemma:convergence-S-assortative-KSBM}
\end{proof}

{In particular, this implies that for any $i \in G_r, j \in G_s$, the lead matrix $L_{ij}(f(\Theta))$ converges to $L_{G_rG_s}(f(\bar{\Theta}))$ as $m \rightarrow \infty$ and $f$ a sufficiently smooth function}.
For lead matrices with $f=\sin$ from the steady state time series of a KSBM (not necessarily assortative) {the first and second moments can be expressed in terms of the community average $\bar{\Theta}$ (Lemma~\ref{lemma:expectation-variance-lead-matrix})}. In particular, the error between the lead matrix of individual oscillators in the KSBM and the community vanishes as $O(\frac{\sigma \omega T}{m})$ (Lemma~\ref{lemma:frequency-noise-lead-matrix}).

In the case of $\gamma = \Theta$ where $f$ is the lift from $S^1$ to $\R$ (or $f=e^{\imath-}$), the steady state lead matrices are zero everywhere (see Lemma \ref{lemma:S-theta} and \ref{lemma:S-exp-i-theta}). In contrast, $f=\sin$ results in steady state lead matrices which are explicitly dependent on the offset between oscillators (Lemma~\ref{lemma:L-sin-theta}), and hence on the community structure, as shown by Lemma \ref{lemma:deviation-synchronized-steady-state}. Lead matrices at steady state can therefore be used to distinguish communities with distinct intrinsic frequencies, i.e. communities that will not have merged into a single synchronized unit at steady state.

\begin{lemma}
\label{lemma:L-sin-theta}
     (Steady State Lead Matrix of the Sinusoid)\\
    Let $\Theta$ a realisation of a KSBM $\cK$ in steady state, then for any $i,j \in [N]$ the offset between oscillators' phase $\Delta\theta_{ij} := \theta_i(t)-\theta_j(t)$ is constant and,
    $$L^{SS}_{ij}(\sin(\Theta)) = \frac{\sin(\Delta\theta_{ij})}{2}(\omega T + \sin(\omega T)).$$
\end{lemma}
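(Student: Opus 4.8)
The plan is to exploit the rigidity of the synchronized steady state. By the definition of synchronization, $\dot{\theta}_i(t) = \dot{\theta}_j(t)$ for all $i,j$ and all $t$ in the steady-state window, so $\frac{d}{dt}\Delta\theta_{ij} = \dot\theta_i - \dot\theta_j = 0$; this already gives the first assertion, that $\Delta\theta_{ij}$ is constant. Feeding the constancy of the offsets back into Equation~\ref{eq:generalized-kuramoto} shows that the common frequency $\dot\theta_i \equiv \omega$ is itself constant in time, so each oscillator is affine, $\theta_i(t) = \omega t + \theta_i^{SS}$ with $\theta_i^{SS}$ fixed. Writing the coordinate path as $\gamma_i(t) = \sin(\omega t + \theta_i^{SS})$ and shifting $t \mapsto t - t_{SS}$, I may compute over the window $[0,T]$.

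Next I would unfold the level-two signature keeping the base-point (chord) correction that is implicit in the iterated-integral definition, $S_{(i,j)}(\gamma) = \int_0^T (\gamma_i(t) - \gamma_i(0))\,\dot\gamma_j(t)\,dt$, so that
\[
2L^{SS}_{ij} = \int_0^T (\gamma_i\dot\gamma_j - \gamma_j\dot\gamma_i)\,dt \;-\; \gamma_i(0)\!\int_0^T \dot\gamma_j\,dt \;+\; \gamma_j(0)\!\int_0^T \dot\gamma_i\,dt.
\]
The antisymmetric area integrand collapses under the angle-subtraction identity, $\gamma_i\dot\gamma_j - \gamma_j\dot\gamma_i = \omega\sin((\omega t + \theta_i^{SS}) - (\omega t + \theta_j^{SS})) = \omega\sin(\Delta\theta_{ij})$, which is constant and integrates to the leading term $\omega T\sin(\Delta\theta_{ij})$.

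For the remaining boundary piece I would use $\int_0^T \dot\gamma_i\,dt = \gamma_i(T) - \gamma_i(0)$; the $\gamma_i(0)\gamma_j(0)$ cross terms cancel, leaving $\gamma_j(0)\gamma_i(T) - \gamma_i(0)\gamma_j(T) = \sin(\theta_j^{SS})\sin(\omega T + \theta_i^{SS}) - \sin(\theta_i^{SS})\sin(\omega T + \theta_j^{SS})$. Expanding each $\sin(\omega T + \theta^{SS})$ and cancelling the $\cos(\omega T)$ contributions reduces this to a bounded term proportional to $\sin(\omega T)\sin(\Delta\theta_{ij})$. Assembling the two pieces and dividing by two then yields the claimed closed form $L^{SS}_{ij}(\sin\Theta) = \tfrac{1}{2}\sin(\Delta\theta_{ij})(\omega T + \sin(\omega T))$.

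The one genuinely delicate step is this bounded correction: unlike the leading $\omega T$ term it arises entirely because $\gamma(0) = \sin(\Theta^{SS}) \neq 0$, so one must retain the chord-correction boundary contributions in the signature and track their sign carefully through the product-to-sum identities — discarding them, as one would for a closed loop, would drop the $\sin(\omega T)$ term altogether. A secondary point I would state explicitly is window invariance: replacing $[0,T]$ by $[t_{SS}, t_{SS}+T]$ shifts every $\theta_i^{SS}$ by the common constant $\omega t_{SS}$, so $\Delta\theta_{ij}$, and hence the entire expression, is unchanged.
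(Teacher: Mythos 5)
Your route is structurally the same as the paper's (Lemma~\ref{lemma:S-sin-theta}): write $\theta_i(t)=\omega t+\theta_i^{SS}$ on the steady-state window, split the antisymmetrized level-two signature into the area integral plus the base-point (chord) corrections, collapse the area integrand by the angle-subtraction identity, and reduce the boundary terms by product-to-sum. The only cosmetic difference is that you antisymmetrize before integrating, whereas the paper computes the full $S^{SS}_{ij}$ and then cancels the permutation-symmetric terms.

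There is, however, a genuine gap at exactly the step you yourself flag as delicate, and it does not close the way you assert. Your decomposition
\begin{equation*}
S_{(i,j)}-S_{(j,i)}=\int_0^T\left(\gamma_i\dot\gamma_j-\gamma_j\dot\gamma_i\right)dt+\gamma_j(0)\gamma_i(T)-\gamma_i(0)\gamma_j(T)
\end{equation*}
is correct, and the area term is $\omega T\sin(\Delta\theta_{ij})$. But the boundary piece, with $\delta:=\Delta\theta_{ij}=\theta_i^{SS}-\theta_j^{SS}$, evaluates to
\begin{equation*}
\sin(\theta_j^{SS})\sin(\omega T+\theta_i^{SS})-\sin(\theta_i^{SS})\sin(\omega T+\theta_j^{SS})
=\tfrac{1}{2}\left[\cos(\omega T+\delta)-\cos(\omega T-\delta)\right]
=-\sin(\omega T)\sin(\Delta\theta_{ij}),
\end{equation*}
so the unstated proportionality constant in your "proportional to $\sin(\omega T)\sin(\Delta\theta_{ij})$" is $-1$, not $+1$. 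Assembling the two pieces then gives $L^{SS}_{ij}(\sin\Theta)=\tfrac{1}{2}\sin(\Delta\theta_{ij})\left(\omega T-\sin(\omega T)\right)$, with a minus sign. A concrete sanity check: take $\theta_i^{SS}=\pi/2$, $\theta_j^{SS}=0$, so $(\sin\theta_i,\sin\theta_j)=(\cos\omega t,\sin\omega t)$ traces an arc of the unit circle; the lead entry is the signed area of the arc closed by its chord, i.e.\ the circular segment $\tfrac{1}{2}(\omega T-\sin(\omega T))$, which vanishes to third order in $T$, whereas the claimed $\tfrac{1}{2}(\omega T+\sin(\omega T))\approx\omega T$ does not. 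Note that the same slip sits in the paper itself: in Lemma~\ref{lemma:S-sin-theta} the base-point correction enters $S^{SS}_{ij}$ with the sign $+[\sin(\theta_j(t))]_0^T\sin(\Delta\theta_i)$, whereas the iterated-integral definition $S_{(i,j)}=\int_0^T(\gamma_i(t)-\gamma_i(0))\dot\gamma_j(t)\,dt$ forces a minus sign there, and that error propagates to the $+\sin(\omega T)$ in the statement of Lemma~\ref{lemma:L-sin-theta}. So your setup, carried through with the signs tracked, actually corrects the paper's formula rather than reproduces it; asserting the $+$ at the end contradicts your own decomposition.
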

\begin{proof}
    Appendix \ref{proof:lemma:L-sin-theta}
\end{proof}

Notice that the lead is subject to fluctuations of $\sin(\omega T)$, hence for small $T$ we expect the community-dependent lead to vanish periodically.

While lead matrices at steady state are able to encode communities with distinct intrinsic frequencies, it is not possible to distinguish communities, which share the same intrinsic frequencies and are therefore merged when synchronized (Lemma \ref{lemma:L-sin-theta}).

In contrast, lead matrices for time series in the transient regime can allow us to distinguish such communities.
Indeed, the transient regime follows the {clusterization where initial phases are uniformly sampled and highly heterogeneous}, and therefore the initial conditions of the mean-field regime for each average community oscillator are random. Hence, communities that would merge in steady state follow distinct trajectories in the transient regime which still depend on the community structure by Lemma \ref{lemma:convergence-S-assortative-KSBM}.

Finally, when considering the time series in the clusterization regime, we know by Lemma \ref{lemma:consistency-gaussian-assortative-KSBM} that Hypothesis \ref{hypothesis:intra-community-dominated} holds, hence the dynamics are approximately independent from the community structure as the intra-community coupling dominates. Thus, the lead accumulated in this regime contributes only to {heterogeneity} with respect to community identification, and therefore must be removed from the time series. 

\subsection{Community Detection}

In this section, we define the \emph{block-clustering metric} for lead matrices, which quantifies the extent to which communities form blocks of homogeneous intra- and distinct inter-community values in the matrix (\emph{block pattern}), and will serve a basis for a community detection algorithm: the structural community estimation algorithm (Alg. \ref{alg:structural-community-estimation}).
We consider a generalization to the full path signatures in Appendix \ref{appendix:sec:generalization-S}.

\subsubsection{Block-Clustering Metric}
In the previous section, we have seen that oscillators in the same community share similar values in the lead matrix, and have distinct values across different communities (for instance, see Lemma~\ref{lemma:L-sin-theta} and ~\ref{lemma:deviation-synchronized-steady-state}).
Our aim is to use these properties to perform community detection. We begin by quantifying the notions of \emph{community homogeneity} and \emph{community discriminativity} for matrices.

{Consider a matrix $B \in \R^{N\times N}$ and a community assignment in the form of a partition $\coprod_{r\in [n]}G_r = [N]$.
We define \emph{community homogeneity} of $B$ as,
$$h(B|G) = \frac{1}{n^2}\sum_{r,s \in [n]}{\Var_{i \in G_r, j\in G_s}(B_{ij})}$$
and \emph{community discriminativity} of $B$ as,
$$d(B|G) = \frac{1}{n^2}\sum_{r,s \in [n]}{(B_{G_rG_s}-B_{G_rG_r})^2 + (B_{G_rG_s}-B_{G_sG_s})^2},$$
where $B_{G_rG_s} = \E[i \in G_r, j\in G_s]{B_{ij}}$.}

We wish to apply community homogeneity and discriminativity to the lead matrix computed from KSBM time series to quantify the extent to which oscillators are clustered. In particular, we wish to detect when homogeneity is \emph{small} and discriminativity is \emph{large}.

\begin{definition}
\label{def:block-clustering}
    (Block Clustering)\\
    For a matrix $B \in \R^{N\times N}$ with community assignment $\coprod_{r\in [n]}G_r = [N]$. If $h(B|G) \neq 0$, we define the \emph{block clustering} of $B$ is,
    $$g(B|G) = \frac{d(B|G)}{h(B|G)}$$
\end{definition}

We note that empirically, $h(B|G) > 0$ due to {heterogeneity}; however one can add a small additive term to obtain a more stable definition if necessary.
If $g(B|G) > 1$, then $d(B|G) > h(B|G)$ which means that the difference in average values between submatrices exceeds the variance within the submatrices themselves. Hence, we say that $B$ is \emph{clustered} when $g(B|G) > 1$. Conversely, if $g(B|G) < 1$, then internal variance dominates, i.e. we cannot effectively distinguish oscillators across communities. Block clustering is {non-negative and scale invariant (Prop.~\ref{prop:clustering})}.\\

Since block clustering is scaling invariant, it is perfectly suited to contrast between lead matrices, regime-split lead matrices and covariance matrices since they live in different ranges of values. In the following section, we consider the maximization of block clustering to perform community detection. However, block clustering increases as the number of communities $n$ increases. Thus, we heuristically employ a normalization $g(-|G)/n$ and study its efficacy in detecting communities in the KSBM.

\subsubsection{Structural Community Estimation Algorithm}
Given any matrix $B \in \R^{N\times N}$, we develop an algorithm to perform community estimation $\hat{G}(B)$ such that the normalized block clustering $g(B|\hat{G}(B))/n$ is maximized. This algorithm begins with one community, iteratively adds communities, uses a method similar to $K$-medoids\cite{kaufman1990} for community assignment, and iterates until the normalized block clustering is maximized. We provide pseudo-code for our algorithm in Alg.\ref{alg:structural-community-estimation}, and summarize the main steps as follows. We use $\hat{G} \coloneqq \hat{G}(B)$ to simplify notation.

\begin{enumerate}
    \item We begin by assuming a single community with constant community assignment $\phi^{(1)}: [N] \to [1]$.
    \item Assume that we have a collection of $k$ communities, $\hat{G}^{(k)}$, with assignment $\phi^{(k)}: [N] \to [k]$, where each community is equipped with a medoid, given by $\xi^{(k)}: [k] \to [N]$.
    \item Let $v_r = (B_{i,r})_{i=1}^N \in \R^N$, and define a distance matrix $D_{ij} = \|v_i - v_j\|_2$.
    \item Choose the most dissimilar pair of nodes contained within the same community
    \[
        (i_*,j_*) = \argmax_{i,j \,: \, \phi^{(k)}(i) = \phi^{(k)}(j)} D_{ij}
    \]
    and assign $i_*$ to be the new medoid of $\phi^{(k)}(i_*)$, and the other to be the medoid of the new community to define $\xi^{(k+1)}: [k+1] \to [N]$.
    \item Define a new community assignment $\phi^{(k+1)}: [N] \to [k+1]$ by proximity to these medoids,
    \[
        \phi^{(k+1)}(i) = \argmin_{r} \|v_i - v_{\xi^{(k+1)}(r)}\|_2.
    \]
    \item If $g(B|\hat{G}^{(k)})/k > g(B|\hat{G}^{(k+1)})/(k+1)$, then return community assignment $\phi^{(k)}$, otherwise, repeat from step 2.
\end{enumerate}

An example of this algorithm is depicted in Figure \ref{fig:algorithm} which is based on the $l_2$ distance matrix $D$.

\begin{figure}[ht!]
\centering
\includegraphics[width=1\linewidth]{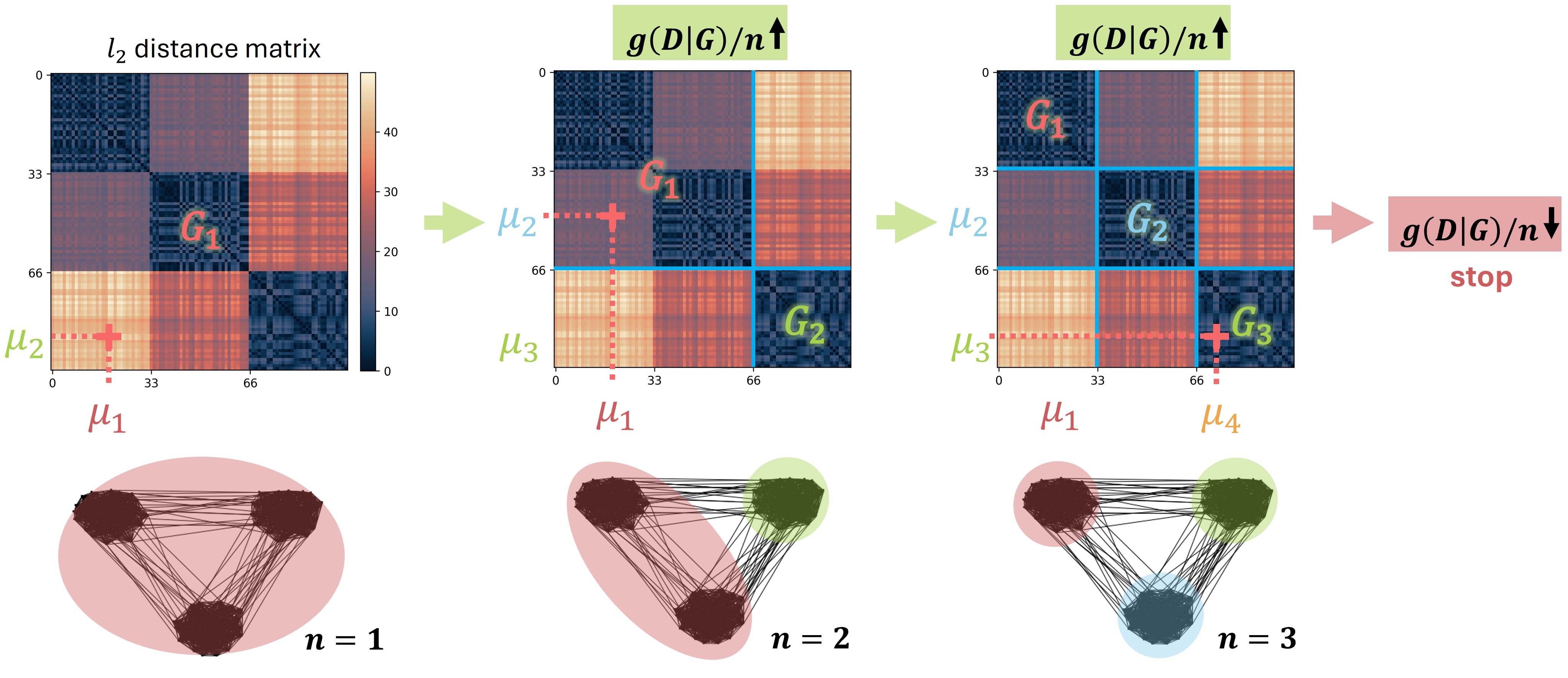}
\caption{\textbf{Clustering algorithm}\\
 Overview of the clustering algorithm (Alg. \ref{alg:structural-community-estimation}) for a lead matrix. Using the most dissimilar pair inside the communities, the algorithm creates new medoids and goes from $n=1,2,3$ communities while increasing $g(D|G)/n$ until it starts decreasing when reaching $n=4$, outputting the communities found at $n=3$.}
\label{fig:algorithm}
\end{figure}

{Our structural community estimation (SCE) algorithm is a variant of the $K$-medoids~\cite{kaufman1990} algorithm with a purity measure~\cite{Tibshirani2001}, the normalized block clustering $g(-|G)/n$, to select the number of communities in the form. In this sense, our algorithm is very close to traditional approaches but uses our additional knowledge of the structure of the lead matrix to empirically determine the number of communities.}

One crucial property for our algorithm to work is that $g(B|G^{(k)})$ increases at each step. Intuitively, this is the case since for every iteration we find the most dissimilar pair of oscillators $(i,j)$ inside of a community $G_r$ and split it into two communities. The new communities thus have lower variances (decreased homogeneity) while also increasing discriminativity (by using the most dissimilar pair as the new medoids).

\section{Data}

\subsection{Numerical Experiments}

We performed all numerical experiments using the time series output of an assortative KSBM with $m = 33$ oscillators per community with different combinations of coupling strength $\kappa$, mean intrinsic frequencies $\omega$, {heterogeneity} in frequency $\sigma$, and number of communities $n=3$ or $n=6$. We measure time $t$ in seconds $s$ and all frequencies in $rad/s$. The input to computing our lead matrices are linearly interpolated time series which we obtain from a discretization of $500$ time steps within time interval $[0,10]s$ with the exception of Noisy and Large KSBM (see below).\\
\\
We consider four different configurations:
\begin{enumerate}
    \item \textbf{Standard KSBM}: $n=3$ communities, $\sigma=0.1$ (low {heterogeneity}) and distinct mean intrinsic frequencies $\mu_r$ evenly spaced points in the range $[\frac{2}{3}, 2 \ rad/s]$, $\kappa=100$ (strong coupling), simulated up to $t = 10s$;
    \item \textbf{Collapsed KSBM}: same configuration as the standard KSBM, but we fix the mean intrinsic frequencies in all communities to be $\mu_r=\frac{2}{3}$;
    \item \textbf{Noisy KSBM}: $n=3$ communities, $\sigma=1$ (high {heterogeneity}), distinct mean intrinsic frequencies $\mu_r$ evenly spaced points in the range $[\frac{1}{3}, 1 \ rad/s]$, $\kappa=10$ (weak coupling), simulated up to $t = 50s$;
    \item \textbf{Large KSBM}: same configuration as the standard KSBM but with $n=6$ communities, mean intrinsic frequencies $\mu_r$  evenly spaced points in the range $[\frac{1}{6}, 1 \ rad/s]$, and simulated dynamics up to $t = 19s$.
\end{enumerate}
We identified splitting times for the regimes visually and checked them against the numerical prediction (see Figure\ref{fig:transition-times} and \ref{fig:dom-id-GKSBM}), specifically the {transition time} from the Gaussian KSBM for each coupling strength and number of clusters.

\subsection{Neural Recordings}
\label{ssec:neural-recordings}

{We applied our SCE algorithm to neural recordings of mice performing a sensory-motor task\cite{Esmaeili2021} in which they need to lick in response to an audio cue if they received a whisker stimulus one second before (Figure~\ref{fig:sensory-motor-task}A).}

{We focused on a single session with neuropixel probes in the wS1 and ALM cortical areas, for a total of 93 units and 568 trials. The data consists of spike times for each area which were binned in $dt=2ms$ steps, and further processed with a normalized exponential decay filter with intrinsic time $\tau = 40ms$ (Figure~\ref{fig:sensory-motor-task}B,D). The time series that we used to compute the lead and covariance matrices were the concatenation of the filtered spike trains of all trials\footnote{Since the filter size is short compared to the time between trial and the original signal is binary, the order of concatenation does not change the resulting lead matrices.}.}

{A trial is labeled \emph{Hit} (\emph{Miss}) if the mouse licks (does not lick) in response to a whisker stimulus and audio cue. A trial is labeled \emph{False Alarm (FA)} (\emph{Correct Rejection (CR)}) if the mouse licks (does not lick)  in response to only an audio cue.}

\section{Results} \label{sec:experiments}

\subsection{KSBM Dynamics}

The dynamics and full/regime-split lead matrices for each configuration of the assortative KSBM are depicted in Figure\ref{fig:KSBM-dynamics-standard},\ref{fig:KSBM-dynamics-collapsed}-\ref{fig:KSBM-dynamics-large}.

\begin{figure}[!ht]
\centering
\includegraphics[width=0.9\linewidth]{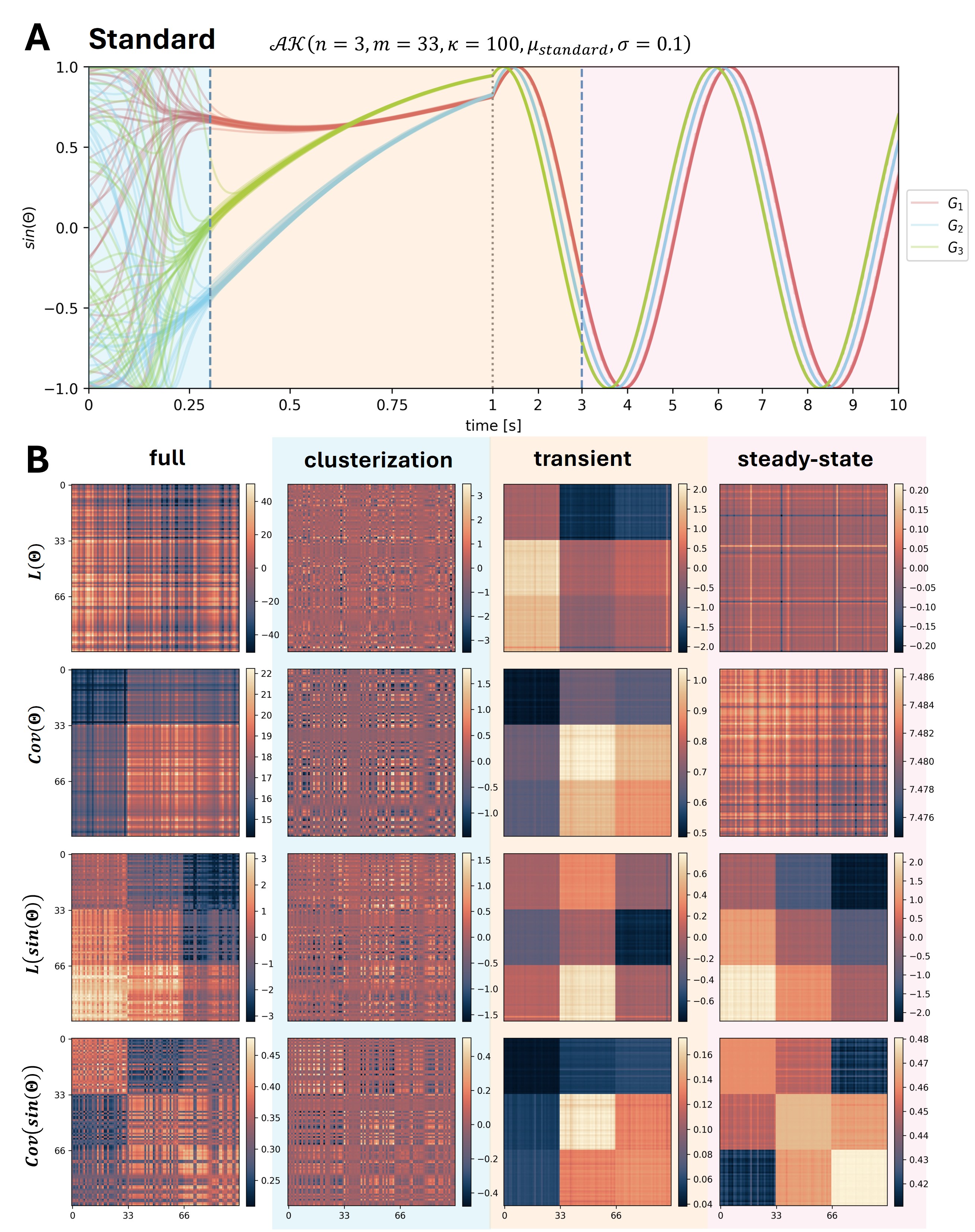}
\caption{\textbf{Standard KSBM time series and lead matrices}\\
(A) Time series output from the standard KSBM $\cAK(n=3, m=33, \kappa=100, \mu=\{\frac{2}{3}, \frac{14}{9}, 2 ~rad/s\}, \sigma=0.1)$ with communities $G_1, G_2, G_3$. (B) Lead and Covariance matrices for the full time series and each time regime for both $\Theta$ and $\sin(\Theta)$. All matrices are clustered in the transient regime, but differ in pattern from the steady state, while clusterization only contributes {heterogeneity}.}
\label{fig:KSBM-dynamics-standard}
\end{figure}

Across all reasonable coupling strengths $\kappa \geq 10$, the KSBMs (see Figure\ref{fig:KSBM-dynamics-standard}A,\ref{fig:KSBM-dynamics-collapsed}A-\ref{fig:KSBM-dynamics-large}A) follow our empirically estimated split into clusterization and mean-field regimes (first blue dashed line) in their time series. We visually determine the time point at which clusterization ends and use it to split the time series from which we compute the lead and covariance matrices. In the next subsection, we shall confirm those splits through {transition times} $t_{\trans}$. For lower coupling strengths, e.g., in the noisy KSBM, the mean-field regime can only be approximated by a mean-field KSBM as the minimal variance in the communities is not vanishingly small (see conditions of Theorem \ref{thm:MF-KSBM}). At high coupling strength $\kappa=100$, we observe the second split (second blue dashed line) which marks the dynamics entering a synchronized steady state.\\
\\
When we split the time series into clusterization and mean-field regimes, we observe that the lead and covariance matrices are {heterogeneous} during clusterization while they exhibit a block pattern in the mean-field regimes (see Figure\ref{fig:KSBM-dynamics-standard}B,\ref{fig:KSBM-dynamics-collapsed}B-\ref{fig:KSBM-dynamics-large}B). If we further split the mean-field regime into transient and steady state, the block pattern sometimes disappears in steady state for $L(\Theta)$ or in all matrices for the Collapsed KSBM (see Figure~\ref{fig:KSBM-dynamics-collapsed}B).
Those disappearing block patterns in steady state can be explained for $L^{SS}(\Theta)$ by Lemma \ref{lemma:S-theta}. In the Collapsed KSBM, by Lemma \ref{lemma:L-sin-theta}, $L^{SS}(\sin(\Theta))$ is proportional to the offset between oscillators which is zero at steady state when all oscillators are synchronized in phase. Transient dynamics do not directly depend on those differences in frequencies (otherwise an anti-diagonal gradient would also appear), but are expressive in both $\Theta$ and $\sin(\Theta)$. The persistent block patterns in the transient regime is independent of the steady state matrices not exhibiting any block pattern, e.g., in the Collapsed KSBM.

\subsection{Gaussian and Mean-Field Models} \label{ssec:gauss_mf}

We compared our time series of our different versions of the assortative KSBM against the mean-field (Theorem \ref{thm:MF-KSBM}) and (dominated) Gaussian (see Theorem \ref{thm:dominated-identical-gaussican-assortative-KSBM}, \ref{thm:dominated-gaussian-assortative-KSBM} \& \ref{thm:gaussian-assortative-KSBM}) KSBM predictions. We first evaluate the variance prediction of the dominated and identical Gaussian KSBM (see Theorem \ref{thm:dominated-identical-gaussican-assortative-KSBM} for definition; Figure~\ref{fig:dom-id-GKSBM}), and then cover how the mean-field KSBM (Figure~\ref{fig:MF-KSBM}) matches up to the KSBM in the mean-field regime. Finally, we show that our Gaussian KSBM (Figure~\ref{fig:Gauss-KSBM}) captures both variance and mean of the oscillators' phases, i.e. the predicted values match the time series closely in all regimes. As a result of the early {high-variance} dynamics the predicted dynamics can enter the mean-field regime at different time points in the time series. These time points, however, are very close to each other.

\subsubsection{Transition Times and Dominated Gaussian KSBM}

\begin{figure}[ht!]
  \centering
  \includegraphics[width=1\linewidth]{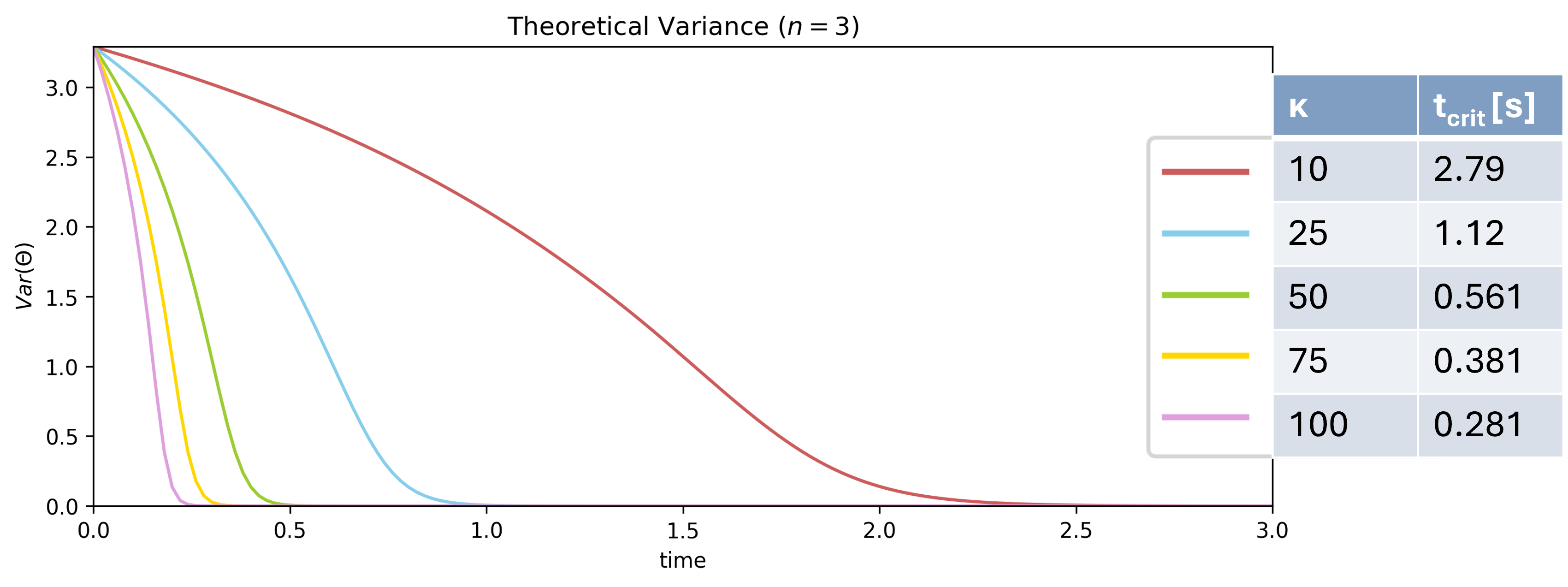}
  \caption{\textbf{Transition times}\\
  Dominated and identical Gaussian KSBM theoretical variance in time for various coupling $\kappa$ and $n=3$ with initial variance $\frac{\pi^2}{3}$, the square of the variance of the uniform distribution on $S^1$.}
  \label{fig:transition-times}
\end{figure}

\begin{figure}[ht!]
  \centering
  \includegraphics[width=1\linewidth]{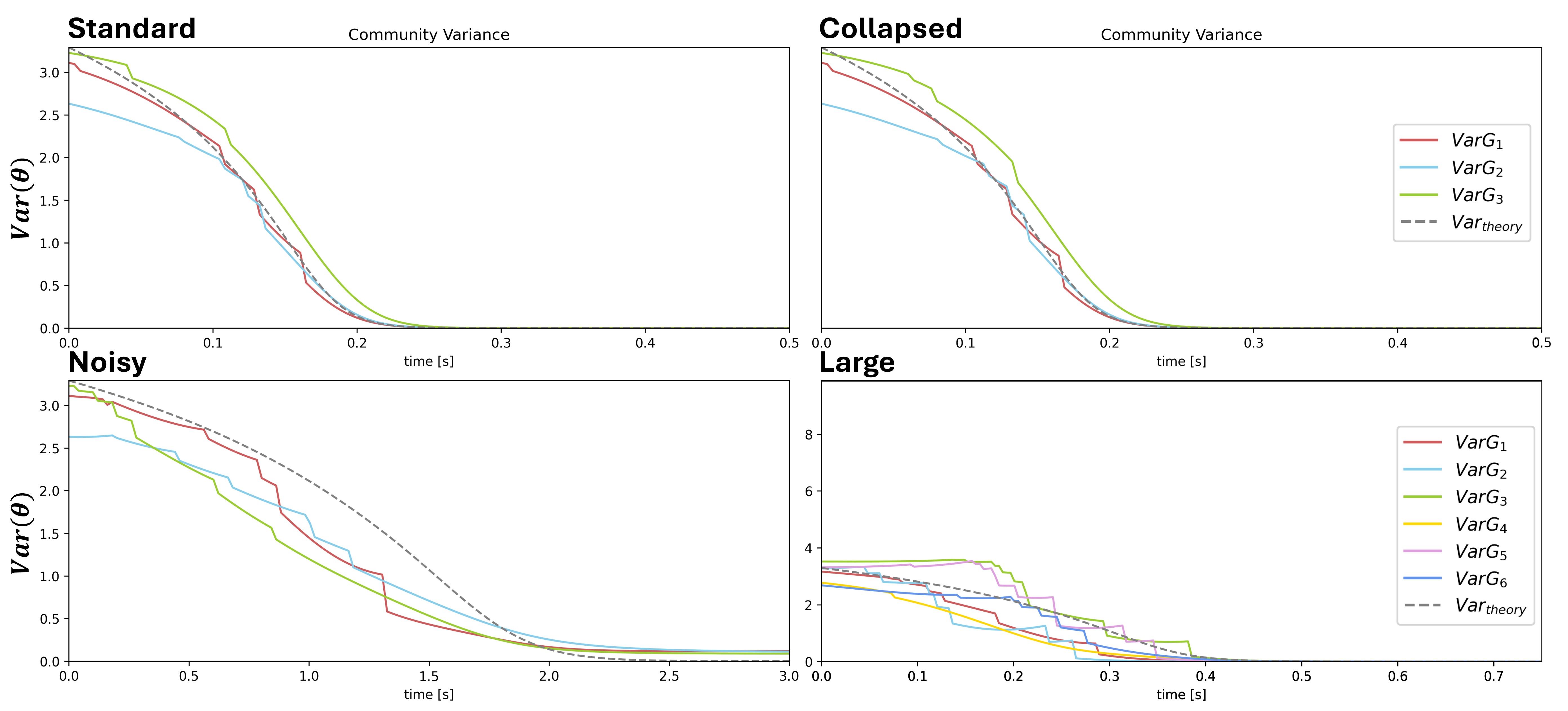}
  \caption{\textbf{Dominated and identical Gaussian KSBM}\\
  Dominated and identical Gaussian KSBM $\Var_{\text{theory}}$ match the collapse of the communities variances $\Var G_\cdot$ in all KSBM configurations: (clockwise) standard, collapsed, noisy, large. Transition times are  $t_{\trans} \approx 0.281, 0.281, 2.79, 0.558s$ respectively.}
  \label{fig:dom-id-GKSBM}
\end{figure}

We computed the dominated (and identical) Gaussian KSBM for different values of $\kappa$, $n=3$ or $6$ with initial variance $\frac{\pi^2}{3}$ (this corresponds to the square of the critical variance at which a Gaussian is approximately uniform). We show their resulting time series and the corresponding {transition times} $t_{\trans}$ in Figure~\ref{fig:transition-times}. We overlay the theoretical variance with the empirical variance of the dynamics of each KSBM considered in Figure~\ref{fig:dom-id-GKSBM}.\\
\\
The collapse estimated by our dominated (and identical) Gaussian KSBM matches up in time with the KSBM dynamics. Hence, our {transition times} (Figure~\ref{fig:transition-times}) agree with the transition point estimated in Figure\ref{fig:dom-id-GKSBM}.

\subsubsection{Mean-Field KSBM}

\begin{figure}[ht!]
  \centering
  \includegraphics[width=1\linewidth]{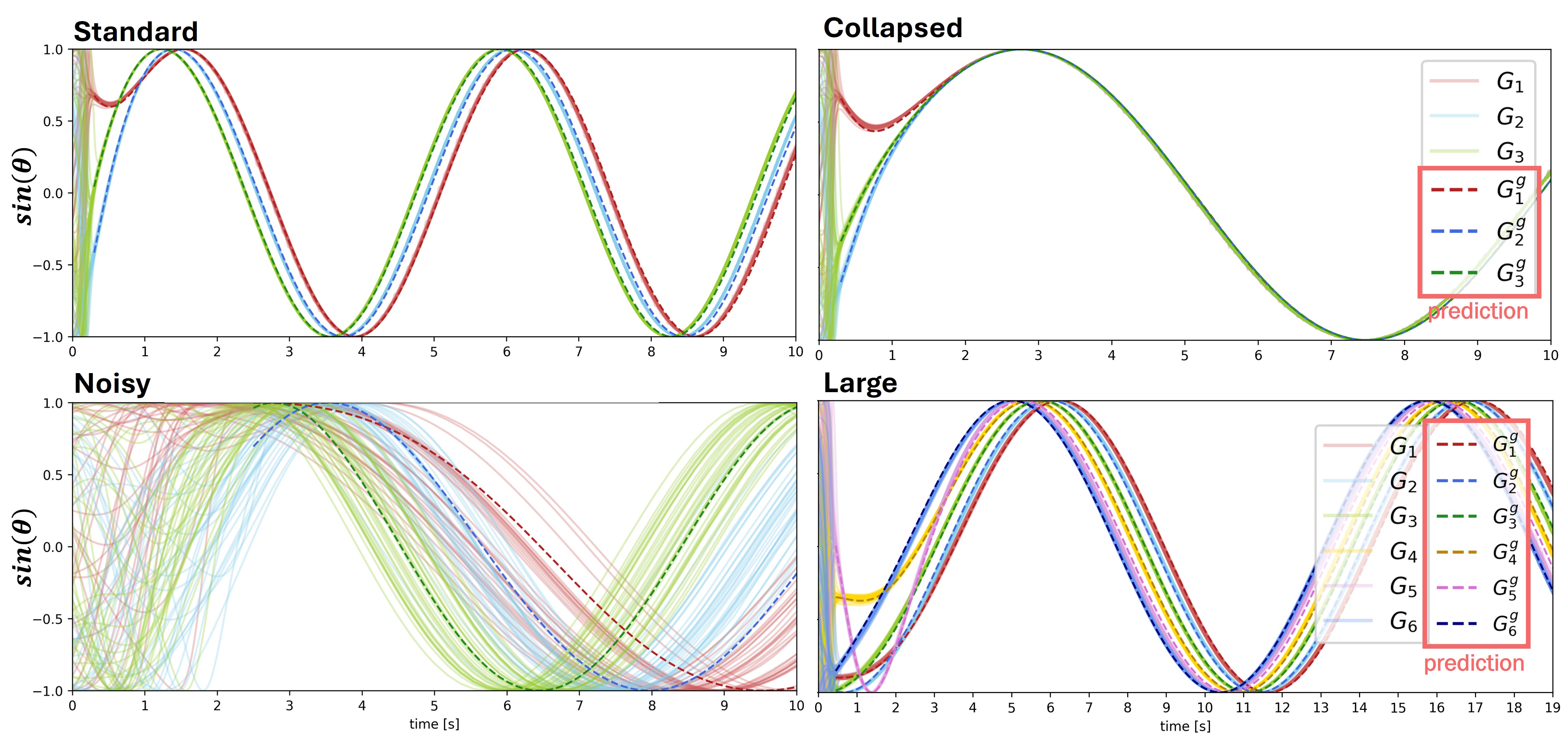}
  \caption{\textbf{Mean-field KSBM}\\
  Mean-Field KSBM (MF) time series plotted against the KSBM dynamics from Figure~\ref{fig:KSBM-dynamics-standard},\ref{fig:KSBM-dynamics-collapsed}-\ref{fig:KSBM-dynamics-large}, started at the empirical {transition times} for each KSBM configurations: standard, collapsed, noisy and large (clockwise).}
  \label{fig:MF-KSBM}
\end{figure}

Recall that the mean-field KSBM needs to be initialized when $\Theta$ is sufficiently clustered by Theorem \ref{thm:MF-KSBM}. Therefore, for the starting point of the mean-field KSBM, we chose the empirical end of clusterization times of the dynamics. As we saw in the previous subsection, these agree with the {transition time} $t_{\trans}$ established in Lemma \ref{lemma:transition-time-assortative-KSBM}.\\
\\
In Figure~\ref{fig:MF-KSBM}, we show the simulated mean-field KSBM of Theorem \ref{thm:MF-KSBM} with the assortative KSBM dynamics. As per Theorem \ref{thm:MF-KSBM}, the KSBM dynamics and mean-field KSBM dynamics agree whenever the communities are sufficiently clustered. In the Noisy KSBM, the mean-field KSBM predictions don't match as closely as the other models. This can be explained by the prediction not accounting for {heterogeneity} and $m$ not being sufficiently large. In particular, the mean-field KSBM is still a good fit up to time $t = 10s$ for weakly clustered $\Theta$ as is the case with the high-{heterogeneity}, weak coupling of the noisy KSBM. This may imply that the $\epsilon \ll 1$ clustered condition of Theorem \ref{thm:MF-KSBM} can be relaxed.

\subsubsection{Gaussian KSBM}

\begin{figure}[ht!]
  \centering
  \includegraphics[width=1\linewidth]{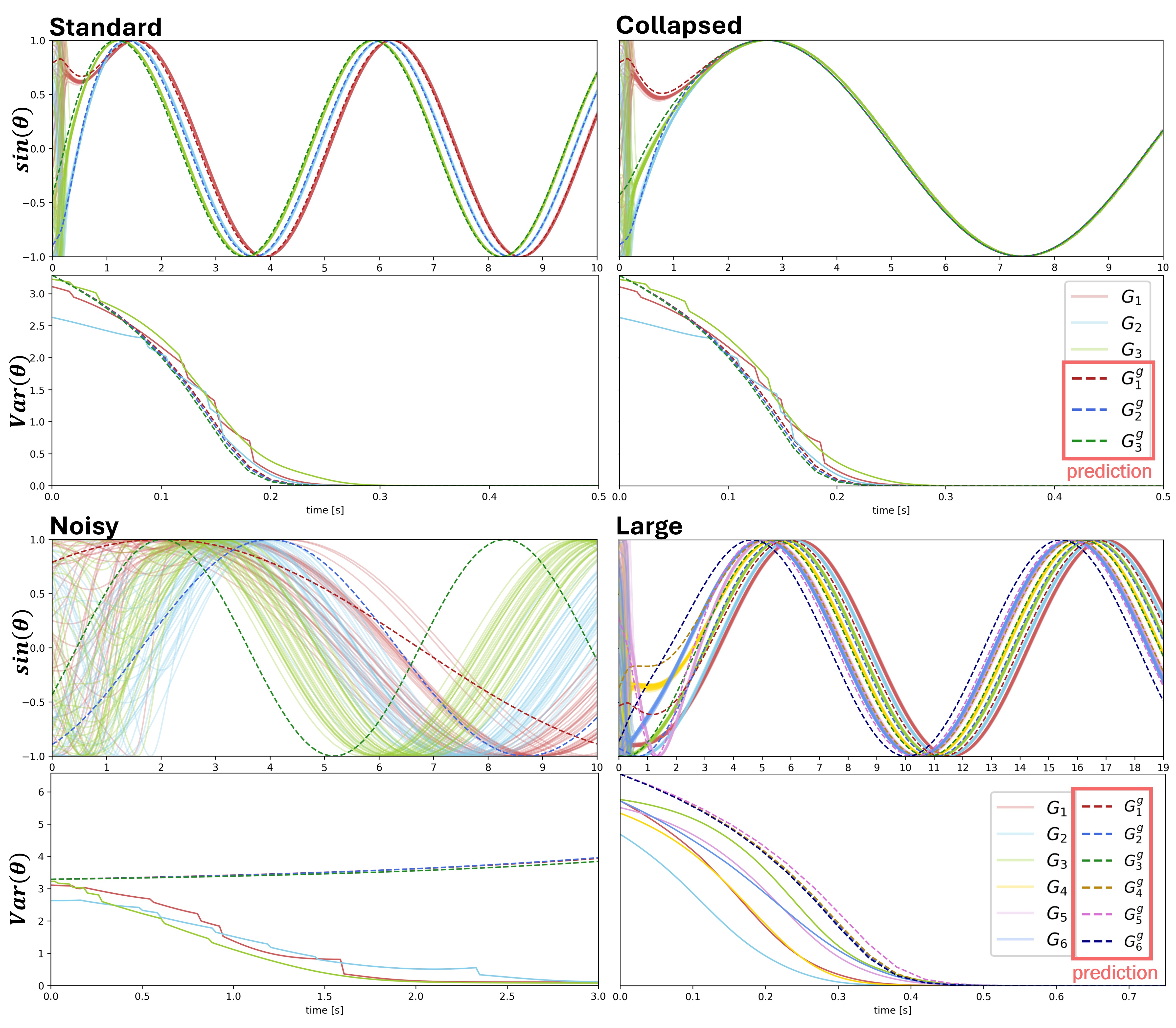}
  \caption{\textbf{Gaussian KSBM}\\
  Gaussian KSBM for each configuration: (clockwise) standard, collapsed, noisy and large. Top: Gaussian KSBM mean oscillators $\theta_{G_\cdot}$ (labelled $G^g_\cdot$, dashed line) against KSBM time series of Figure~\ref{fig:KSBM-dynamics-standard},\ref{fig:KSBM-dynamics-collapsed}-\ref{fig:KSBM-dynamics-large} (labelled $G_\cdot$). Bottom: Variance variables of Gaussian KSBM ($\Var G^g_\cdot$, dashed line) against the true community variances ($\Var G_\cdot$). Notice that the variance for noisy Gaussian KSBM does not vanish.}
  \label{fig:Gauss-KSBM}
\end{figure}

We show the Gaussian KSBM simulations starting from $t=0s$ in Figure~\ref{fig:Gauss-KSBM}. We observe that our bound on the {heterogeneity-driven} $\epsilon$ of Theorem \ref{thm:gaussian-assortative-KSBM} is too high in the noisy KSBM, and thus, while the noisy KSBM is clustered to some extent, the Gaussian KSBM does not capture it and remains in the uniform maximal entropy stage. While this bound allows us to give a sufficient condition for the clusterization occurring, it fails to predict it does not occur.

\subsection{Recovering Community Structure} \label{ssec:exp_community}

We limit ourselves to evaluating our structural community estimation algorithm on regime-split time series. We show the block clustering $g(-|G)$ of the lead and covariance matrices for the regime-split and full time series in Figure~\ref{fig:regime-split-clustering}. We further show the agreement $A(G,G_{est})$ between the true label $G$ and estimated label $G_{est}$ of our algorithm in Figure~\ref{fig:regime-split-agreement}. Recall that the minimal agreement for community estimation on $n$ communities is $\frac{1}{n}$, which corresponds to random assignment. Since our estimation may contain more communities than the true number of communities, the agreement may fall below this threshold. If the agreement is below the threshold, recovery fails completely.

\begin{figure}[ht!]
  \centering
  \includegraphics[width=1\linewidth]{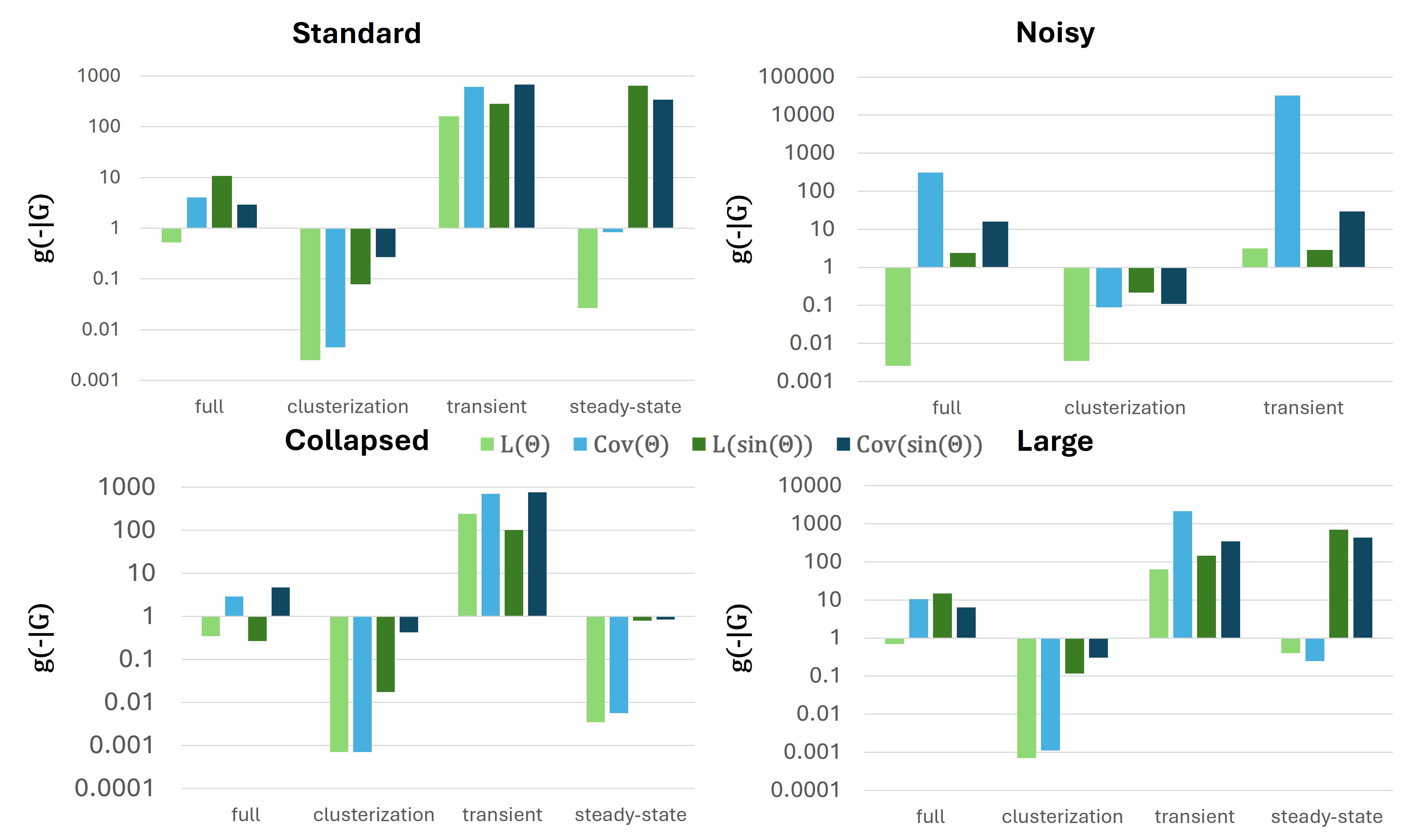}
  \caption{\textbf{Clustering of KSBMs}\\
  Block clustering $g(-|G)$ of the lead and covariance matrices over the full time series and time regime splits for each KSBM. The matrices for $\sin(\Theta)$ are consistently clustered (value larger than $1$) in the transient regime and steady state (save for noisy). As expected, noisy and zero matrices are weakly and non-clustered respectively.}
  \label{fig:regime-split-clustering}
\end{figure}
\begin{figure}[ht!]
  \centering
  \includegraphics[width=1\linewidth]{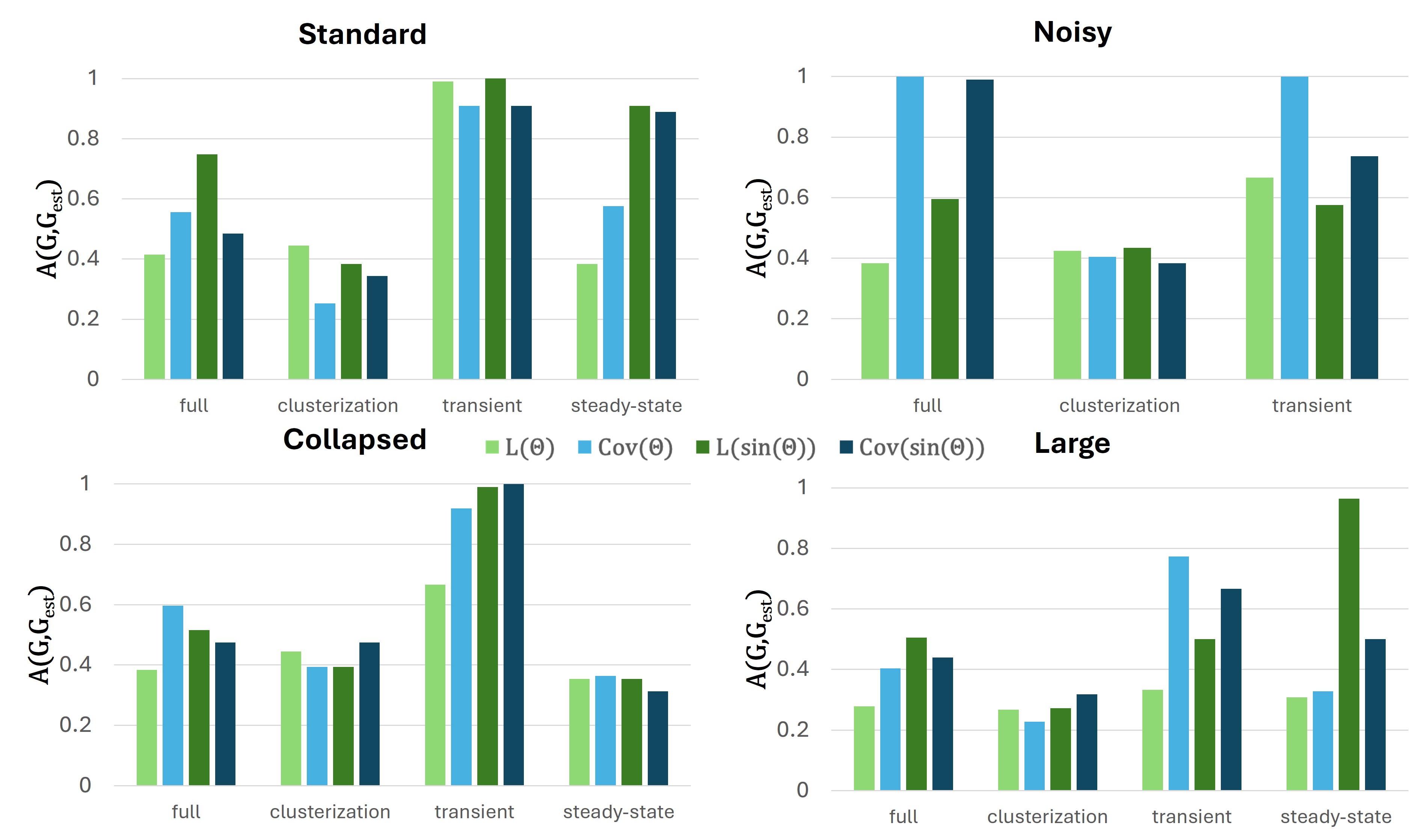}
  \caption{\textbf{Performance of the structural community estimation algorithm in regime-split KSBMs}\\
  Agreement $A(G,G_{est})$ between the true community assignment $G$ and structural community identification algorithm $G_{est}$ over the full dynamic and each regime-split for each KSBM. Exact recovery corresponds to an agreement of $1$ while for random assignment of the true number of communities ($n$) the agreement is $1/n$, but can be lower if the number of communities estimated is larger than $n$. Exact-recovery is possible in all transient regimes for at least one of the matrices, in general $L(\sin(\Theta))$.}
  \label{fig:regime-split-agreement}
\end{figure}

We observe exact community recovery in several cases for all KSBMs (see Figure~\ref{fig:regime-split-agreement}). In particular, the communities are exactly recovered in the transient regime for a subset of lead and covariance matrices. Notably, $L(\sin(\Theta))$ is suited for recovery in the transient regime with reasonable {heterogeneity} $\sigma$ ($n\sigma^2 \ll \kappa$, see Lemma \ref{lemma:consistency-gaussian-assortative-KSBM}), but fails completely if {heterogeneity} is too high. As expected, recovering communities at steady state is impossible whenever communities merge together, as is the case with the collapsed KSBM (see Figure~\ref{fig:KSBM-dynamics-collapsed}). Further, community estimation fails in the clusterization regime since all lead and covariance matrices consist only of {heterogeneity}. We also remark that the block clustering of the matrices is a good indication of whether the algorithm will be able to recover the communities, at least partially. However, high values of block clustering do not guarantee exact recovery. High block-clustering can arise from very clear block patterns in the lead or covariance matrices, where one block corresponds to several communities. Hence, recovery will fail partially to separate such communities.\\
\\
Overall, we see that our analytical regime split of the time series guarantees the feasibility of community recovery in the corresponding lead and covariance matrices. In particular, we observe an increase of performance of the algorithm on the regime split in comparison to the full time series.

\subsection{Hierarchical KSBM}
{We investigate the performance of our SCE algorithm on a two level hierarchical community structure similar to the one studied by Arenas \emph{et al.}~\cite{arenas2006-sync-top-comp-net}. In fact, their Kuramoto model which is coupled by an underlying hierarchical network can be formulated as a submodel of the KSBM. See Appendix~\ref{appendix:sec:hierarchical-ksbm} for details and results.}

\subsection{Stochastic KSBM}
\label{ssec:stochastic-ksbm}
{Before tackling real data - where a block structure might not be present - we investigate an intermediary model, which includes more realistic noise. Specifically, we consider a version of the KSBM with Brownian noise added to the oscillator dynamics. We evaluate the performance of our SCE and traditional clustering algorithm on community estimation in the standard KSBM configuration for different levels of Brownian noise scaling $b$ ranging from low ($b=0.1$) to high ($b=5$). See Appendix~\ref{appendix:sec:stochastic-ksbm} for details and results.}

\subsection{Sensory-Motor Task}
\label{ssec:sensory-motor-task}
{We apply our method to real neural recordings of mice, as described in Section~\ref{ssec:neural-recordings}. In this setting, we expect the neurons to be highly heterogeneous in their activity as those areas are involved in many processes. In particular, the lead and covariance matrices do not display a block structure with respect to cortical areas or types (Figure~\ref{fig:sensory-motor-task}C). Although the recording involves the primary area wS1, all the animal's whiskers except one are clipped such that only this remaining whisker can trigger activity in wS1. This could explain the heterogeneity within area and absence of a separation between most wS1 and ALM neurons as only some of the wS1/ALM neurons are specific to this single whisker/task and the rest are only weakly to non-responding (Figure~\ref{fig:sensory-motor-task}B).}

\begin{figure}[ht!]
  \centering
  \includegraphics[width=1\linewidth]{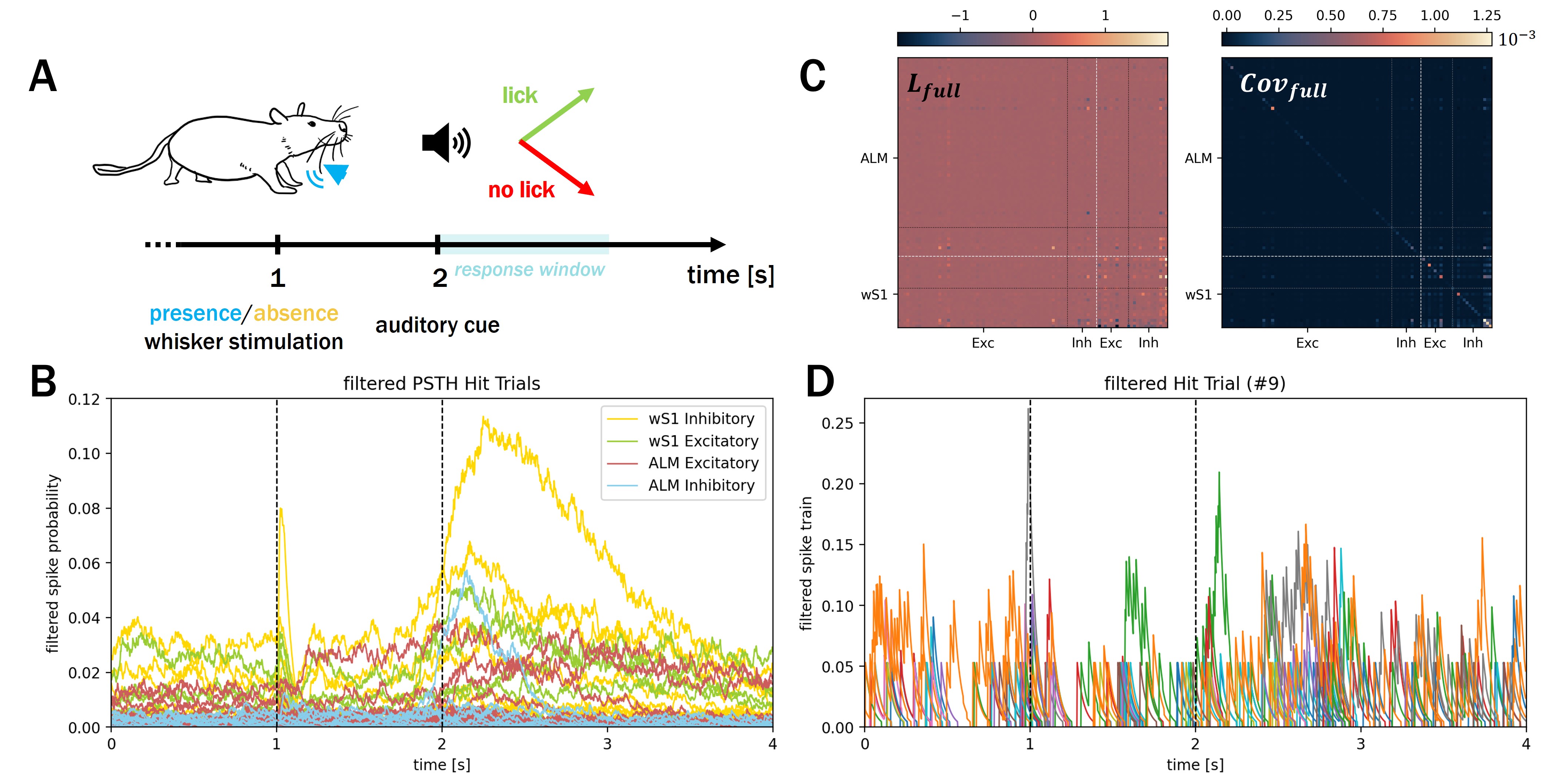}
  \caption{\textbf{Sensory-Motor Task}\\
  (A) Sensory-motor task with whisker stimulation and auditory cue. Trials start at $t = 0s$, whisker stimulus and audio cue occur at $t=1s$ and $t=2s$ respectively with a response-window of one second. (B) Peri-Stimulus Time Histogram (PSTH) for Hit trials of each neuron filtered spike trains labeled by cortical areas and type. (C) Lead and covariance matrices for the full concatenated time series. (D) Example filtered spike trains of a Hit trial.}
  \label{fig:sensory-motor-task}
\end{figure}

{We did not expect our SCE to perform particularly well on identifying cortical areas or neural types since the block-clustering metric is specifically tailored to block structure. Indeed, the resulting clusters (Figure~\ref{fig:sce-clusters}A) are small and contain a mix of different brain areas. The largest cluster includes a large portion of inactive neurons with close to zero values in the lead or covariance matrices. We therefore consider the neurons which are part of this largest cluster to be classified as non-specific to the task by our SCE algorithm. For simplicity, we refer to the non-maximal clusters (and their corresponding neurons) as being identified by the algorithm, i.e., they are classified as being different to non-specific neurons.}

{Nonetheless, the clusters identified by the SCE correspond to neurons, which exhibit a distinct activity pattern compared to the non-specific majority. For the lead matrix, cluster \#2 identifies a neuron, which consistently spikes during whisker stimulation and has sustained activity after the audio cue. Cluster \#4 picks up two neurons, which are very active after the audio cue and then slowly recover to their baseline. Finally, cluster \#3 neurons have a mix of both trends. In particular, the neurons identified correspond to peaks in modulus of the first few eigenvectors of the lead matrix (Figure~\ref{fig:sce-eigenmodulus}) which are related to neurons involved in cyclical components in Cyclicity Analysis\cite{baryshnikov_cyclicity_2016}.}

{For the covariance matrix, only a small cluster of two neurons is identified with a similar trend as in the previously mentioned cluster \#4. If we look at the signals that are classified as non-specific by the SCE algorithm (Figure~\ref{fig:sce-clusters}B), we observe that our algorithm misses signal where the PSTH response to the whisker stimulation is a transient dip in the baseline instead of an increase. Similarly, ALM excitatory neurons with sustained regular activity after whisker stimulation are not identified.}

\begin{figure}[ht!]
  \centering
  \includegraphics[width=1\linewidth]{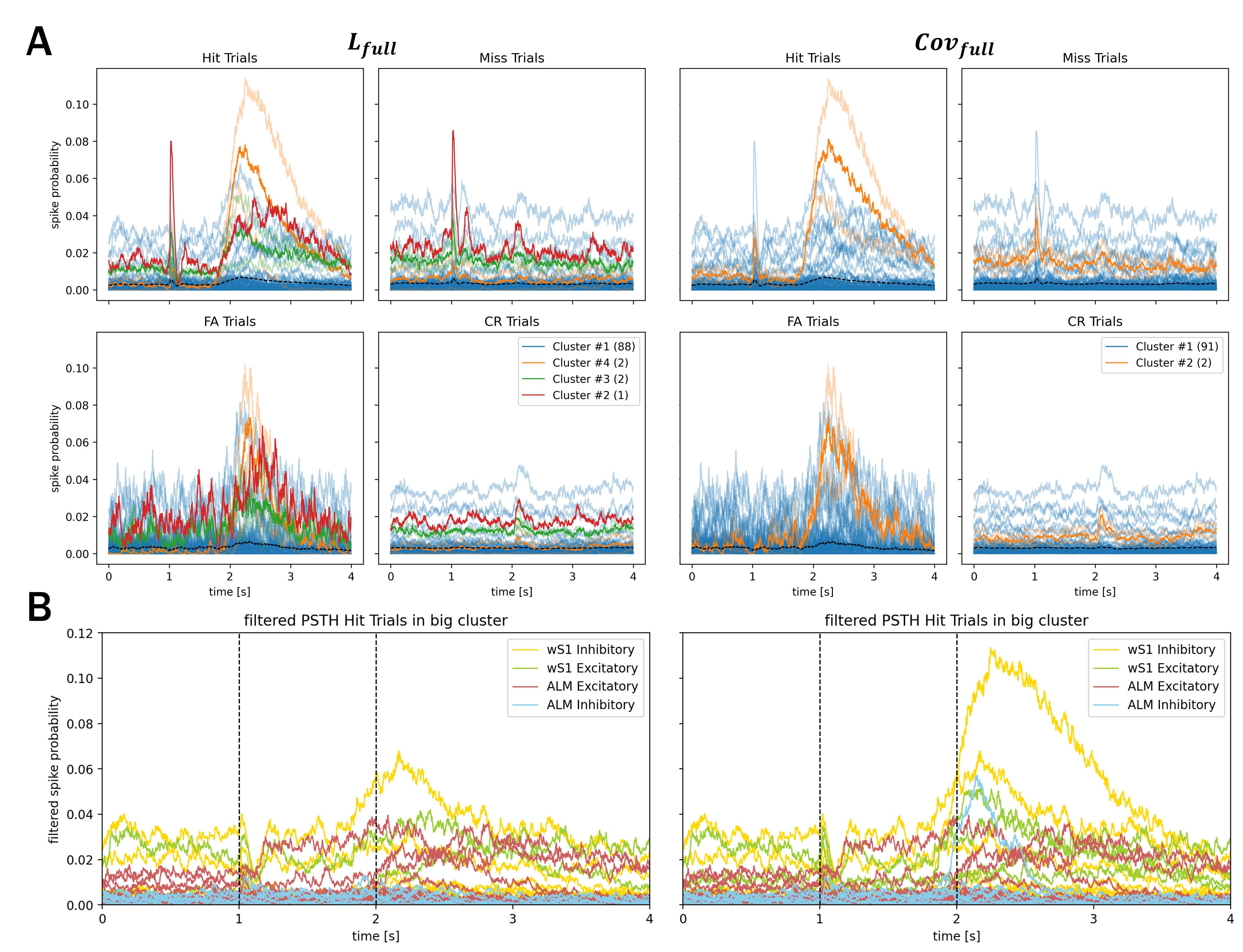}
  \caption{\textbf{SCE Clusters}\\
  (A) PSTH for each trial type labeled with clusters identified by the SCE algorithm on (Left) lead and (Right) covariance matrices, non-transparent line correspond to the average PSTH within cluster and dashed line to the population average PSTH. (B) Filtered PSTH of neurons belonging to the largest (trivial) cluster from the SCE algorithm on (Left) lead and (Right) covariance matrices.}
  \label{fig:sce-clusters}
\end{figure}

{Traditional algorithms such as $K$-means and hierarchical clustering are not able to identify cortical areas or neural types from both matrices (for $K$-means see Figure ~\ref{fig:algo-k3-clusters},\ref{fig:algo-k6-clusters}), but capture trends in the neural activities.}

{For a larger number of clusters, $K=6$, $K$-means recovers most of the neurons with structured activity during the task using the lead matrix, whereas the clusters identified in the covariance matrix are smaller and miss a large number of active excitatory neurons in ALM and wS1. The choice of matrix also leads to different features being used to form the clusters. Typically, the covariance matrix tends to result in small clusters which separate the neurons based on their time-averaged activity (see the Miss and CR PSTH). In contrast, the lead matrix assigns more importance to co-occurring transient increases in activity which results in clusters with more than one neuron. For $K=3$, the results are similar to the SCE algorithm for the lead matrix.}

\section{Discussion}

Community structures are ubiquitous in neuroscience and give rise to community-dependent neural dynamics. 
In this work, we find that separating temporal regimes of dynamics in community-structured networks is crucial to structure inference when using the lead matrices (and path signatures in general), whereas common approaches to path signatures would typically use the full time series.
Different temporal regimes exhibit dynamics, which have distinct dependencies on the underlying community structure.
We first showed this differentiated dependence of the community structure with an analytical study using mean-field and Gaussian approximations of the various regimes (see Section~\ref{sec:KSBM-dynamics}).
We then leveraged these theoretical insights to develop a novel algorithm, \emph{structural community estimation}, to recover communities empirically from both lead and covariance matrices. In particular, we have shown that performance increases when time series are split over the theoretically predicted regimes. {This increase is specifically relevant to lead matrices and (path signatures in general) since traditional applications tend to compute them on entire time series risking erasing transient structure.}
When extending these results to experimental neuroscience settings, our results suggest that connectivity recovery should incorporate time series in non-steady state settings to distinguish between functionally equivalent but structurally distinct regions.
This could be accomplished in practice by global randomized stimulation or inhibition, and in the specific example of the KSBM, through randomized pulses to the oscillators.
Our theoretical results have also shown the ability of path signatures and lead matrices to detect community structure, as their expression in the steady state regime explicitly depends on this structure.
Our mean-field and Gaussian approximations are also relevant to other dynamical community estimation methods. Those lower-dimensional models expressed directly in terms of the communities allow for the analytical study using other methods.

Related investigations on Kuramoto oscillators have not studied different temporal regimes systematically. For example, Tirabassi \emph{et al.} \cite{tirabassi2015} test and compare similarity measures for time series that can successfully infer structural connectivity. Specifically, they study cross-correlation, mutual information, and mutual information of the time series ordinal patterns. The authors apply the three measures to construct functional networks from three different quantities derived from time series output of coupled Kuramoto and Rössler oscillators. The success of their recovery of structural connections depends on the specific quantities derived from the time series, interaction strengths of oscillators in the models, and the choice of threshold for the functional networks. In contrast, our approach does not require the choice of a threshold. Moreover, Tirabassi \emph{et al.} also only consider temporal effects by truncating time series to 0.25 of their full length. Finally, the authors studied different cases of homogeneous network architectures rather than SBMs on only up to 50 oscillators.

Our observations on the temporal regimes are in line with related work in other systems. Das \emph{et al.} \cite{Das2020} showed that recovery of connectivity from time series of recurrent spiking neural networks for an imposed connectivity pattern was highly impacted by the temporal regime in which the network was observed. In particular, the authors showed experimentally that steady state regimes lead to non-vanishing bias in the recovered connectivity due to functionally equivalent connectivity patterns. This effect was worsened when only observing a fraction of the neurons. They demonstrated that perturbations to the network, such as pushing it transiently into another regime led to time series from which an unbiased connectivity pattern could be inferred. This is similar to what we have observed in the case of the collapsed KSBM, where communities can be recovered from the time series in the transient mean-field regime but not at steady state. Similar observations were made for other systems \cite{rubido2014}. Our approach allows for those same conclusions to be drawn from theoretical results in Kuramoto models.

Our analytical work is also related to several recent directions in the theoretical study of Kuramoto models. For example, Nagpal \emph{et al.} \cite{nagpal2024} study synchronization in the continuum limit of random graphs in terms of graphons equipped with sufficient regularity conditions. As suggested in that work, it would be interesting to investigate graphons of lower regularity in order to include the case of KSBMs. In another direction, the stochastic Kuramoto model~\cite{Sonnenschein2013-mh} incorporates temporal randomness in the intrinsic frequency by modeling it as Gaussian white noise. It would be interesting to connect this to work on using the theory of rough paths (where path signatures play a fundamental role) to study the continuum limit of stochastic interacting particle systems~\cite{cass_evolving_2015}. 

Our experimental observations are limited to the four versions of the assortative KSBM that we considered. We in particular restricted ourselves to balanced communities with inter-community couplings that are scaled identically. An interesting direction for future study would be to systematically investigate effects of coupling strengths and different numbers/sizes of communities.
We would also like to extend the Gaussian assortative KSBM and some of our results which are limited to the assortative KSBM to the general KSBM. Such extensions would allow us to properly study the effects of varying inter-community coupling.

{Compared to the work in Arenas \emph{et al.}~\cite{arenas2006-sync-top-comp-net} on the Kuramoto model with underlying hierarchical networks, our findings on intra-community synchronization followed by inter-community synchronization can be interpreted as a hierarchical structure where the inter-community couplings form a second level community. This phenomenon explains the clear block structure in our lead and covariance matrices as well as the correlation matrices Arenas \emph{et al.}~\cite{arenas2006-sync-top-comp-net} used to estimate communities. In Appendix~\ref{appendix:sec:hierarchical-ksbm}, we showed that our SCE algorithm is able to recover most of the community structure. In contrast to Arenas \emph{et al.}~\cite{arenas2006-sync-top-comp-net}, we did not average time series over different initializations; our method works on a single instance of a time series. This is especially relevant since there exist time series where the varying time structure would not allow for such averaging (such as random licking events across different neural recordings).}

{To assess the potential of using our proposed methods for the analysis of biological data, we first showed that the SCE and traditional algorithms based on lead and covariance matrices are robust to medium scale noise directly included in the oscillators of a model that exhibits block structure (see Appendix~\ref{appendix:sec:stochastic-ksbm}). Moreover, recovering community structure works significantly better ($p < 0.05$) on lead matrices compared to covariance matrices computed on the full time series. We observe this for all clustering algorithms that we considered}. It would be interesting to explore effects of missing data, e.g., the robustness of our results when only data from a fraction of oscillators within the communities is available. Identifying thresholds at which exact or partial community recovery is impossible under partial observation would be particularly relevant. 

{In our analysis of neural data, we saw that for a simple sensory-motor task no block structure exists with respect to cortical areas or neural types, due to only a handful of neurons in each cortical area exhibiting structured activity. Nonetheless, our SCE algorithm is still able to pick out some functional clusters, which separate different activity trends without any prior knowledge of the number of clusters. While traditional algorithms such as $K$-means are also viable, the explicit choice of the number of communities is arbitrary and can result in very different clusters. In particular, we observe that the lead matrices (compared to covariance matrices) result in more interesting clusters which focus on different features of neural activity such as transient peaks of activity.}

In our current investigations, we have not studied our algorithm analytically. We have shown intuitively and experimentally why it results in the maximization of the block-clustering or why high values of this metric result in communities being only partially recoverable. We leave the conditions for exact or partial recovery of communities with our algorithm as future work.
Another extension would be to combine community estimates from the different regimes and to create a combined estimate in our algorithm. At present, while communities can be recovered exactly in some regimes, the algorithm does not tell us which of the community estimates is correct among the different matrices and regimes.
Finally, the presented oscillator models can be brought closer to neural dynamics by considering spike-threshold dependent coupling. 
Other models of oscillators which are closer to plausible neural activity could also be considered, such as Rössler oscillators (see \cite{tirabassi2015}) or the FitzHugh-Nagumo model\cite{Nagumo1962} of a neuron's membrane potential. Finally, effects of negative coupling would be interesting to investigate as they correspond to inhibitory connections that are often found in biological systems.

\section*{Code Availability}

All the code used to simulate KSBM and generate the figures is available on \url{https://github.com/arthurion98/KSBM-path-signatures}.

\section*{Acknowledgements}
\label{sec:Acknowledgements}

We are grateful to Léo Lévy for making the code from his Bachelor thesis available to us. We further want to thank Kathryn Hess Bellwald, Walter Senn, Kelly Maggs, Tim Gentner, Johanni Brea, and Mason Porter for helpful comments, discussions and/or feedback over the course of the project. We specifically thank Carl Petersen and Vahid Esmeaili for the neural recordings.  Finally, we thank EPFL IT-SV for their technical support during the project. TJN was supported by the Swiss National Science Foundation grants 200020-207426 and 200021-236436. DL was supported by the Hong Kong Innovation and Technology Commission (InnoHK Project CIMDA).

\bibliographystyle{plain}
\bibliography{literature, kuramoto_darrick}

\appendix
\section{Figures}

\begin{figure}[H]
\centering
\includegraphics[width=0.92\linewidth]{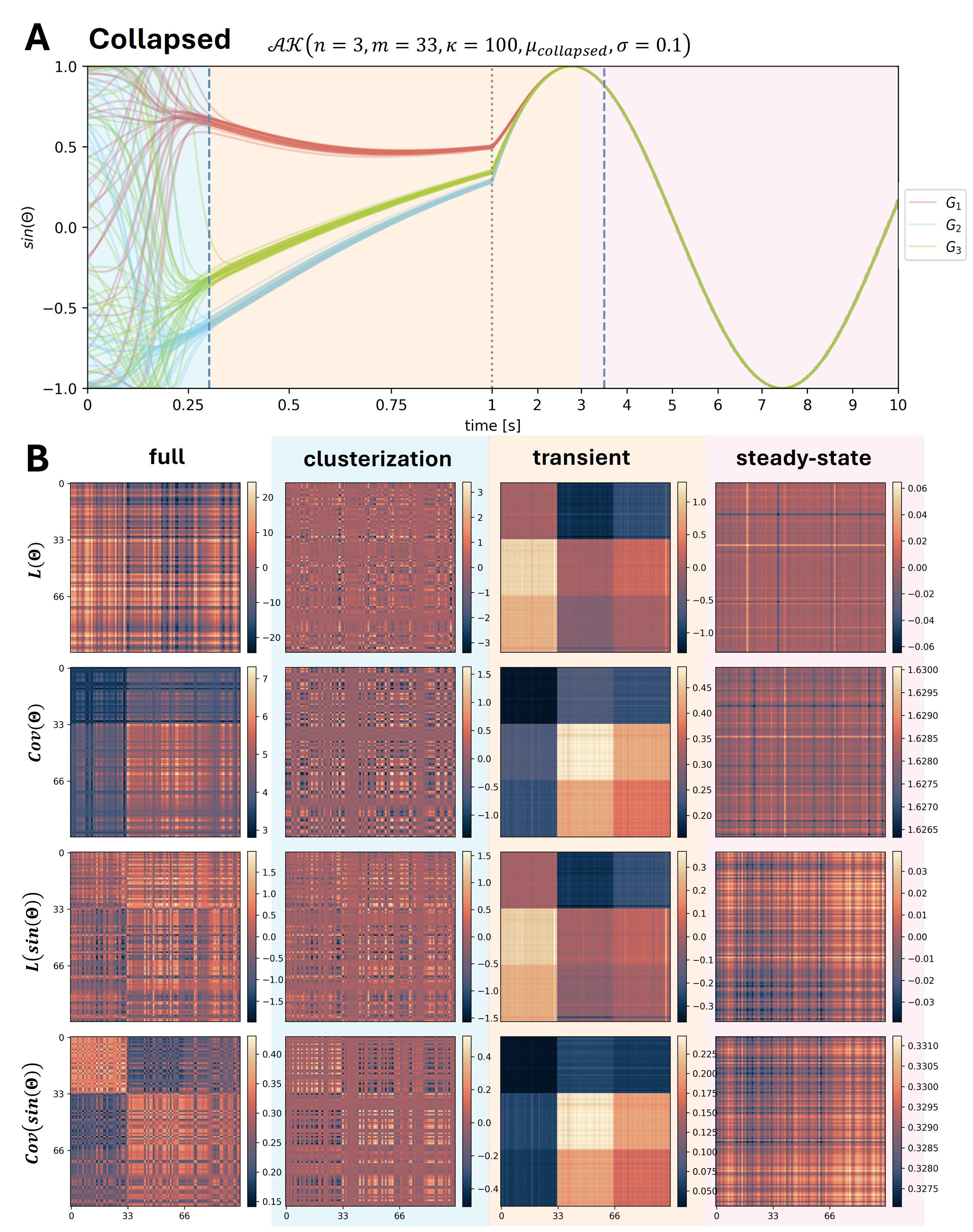}
\caption{\textbf{Collapsed KSBM time series and lead matrices}\\
(A) Time series of the collapsed KSBM $\cAK(n=3, m=33, \kappa=100, \mu=\{\frac{2}{3}, \frac{2}{3}, \frac{2}{3} ~rad/s\}, \sigma=0.1)$ with communities $G_1, G_2, G_3$. (B) Lead and covariance matrices computed from the full time series and each temporal regime for $\Theta$ and $\sin(\Theta)$. Both steady state and clusterization matrices are almost zero everywhere.}
\label{fig:KSBM-dynamics-collapsed}
\end{figure}

\begin{figure}[H]
\centering
\includegraphics[width=0.92\linewidth]{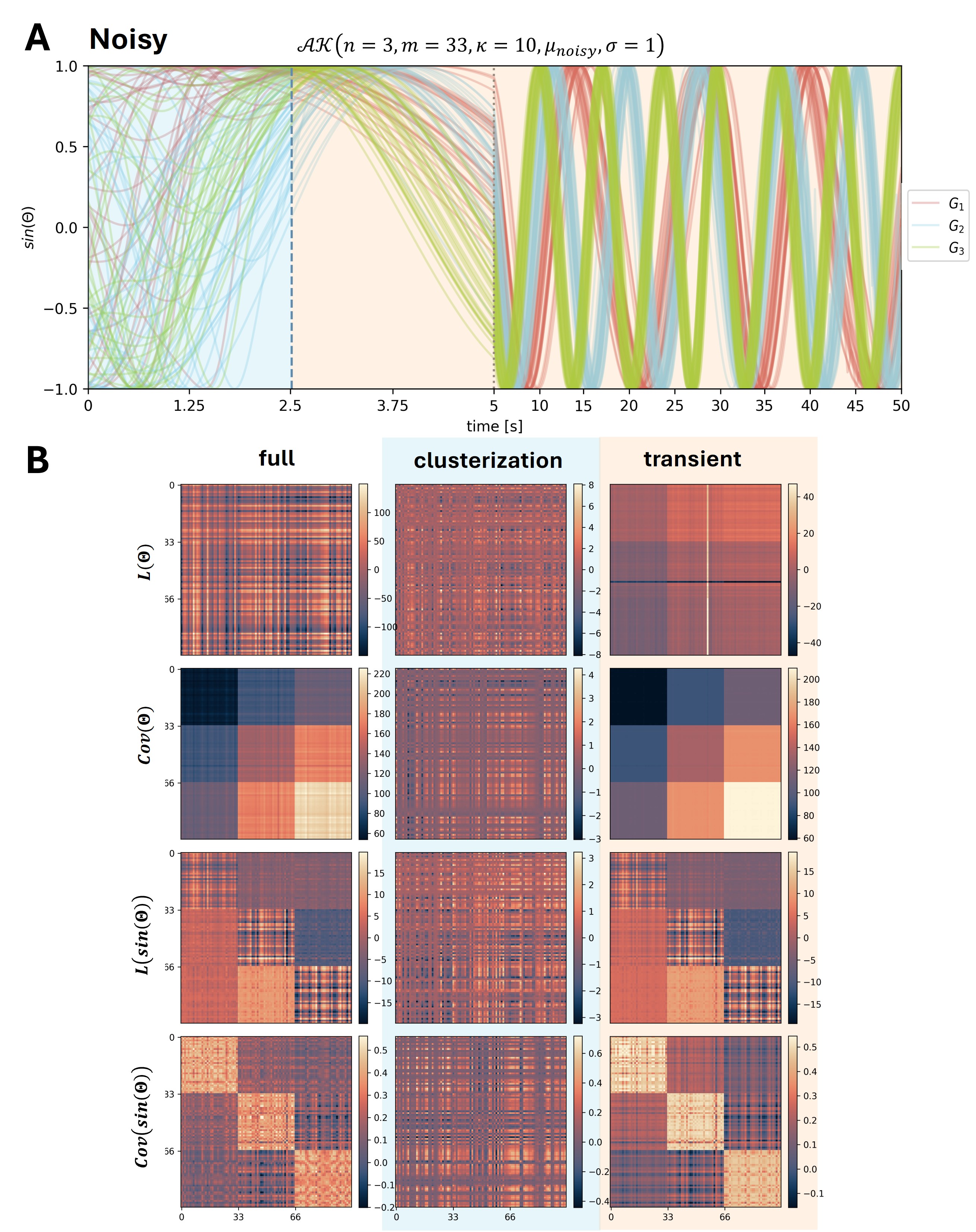}
\caption{\textbf{Noisy KSBM time series and lead matrices}\\
(A) Time series of the Noisy KSBM $\cAK(n=3, m=33, \kappa=10, \mu=\{\frac{1}{3}, \frac{2}{3}, 1 ~rad/s\}, \sigma=1)$ with communities $G_1, G_2, G_3$. (B) Lead and covariance matrices computed from the full time series and each temporal regime for $\Theta$ and $\sin(\Theta)$. The {heterogeneity} and weak coupling prevent the convergence of oscillators to the community average oscillator and in particular prevent the existence of a steady state.}
\label{fig:KSBM-dynamics-noisy}
\end{figure}

\begin{figure}[H]
\centering
\includegraphics[width=0.92\linewidth]{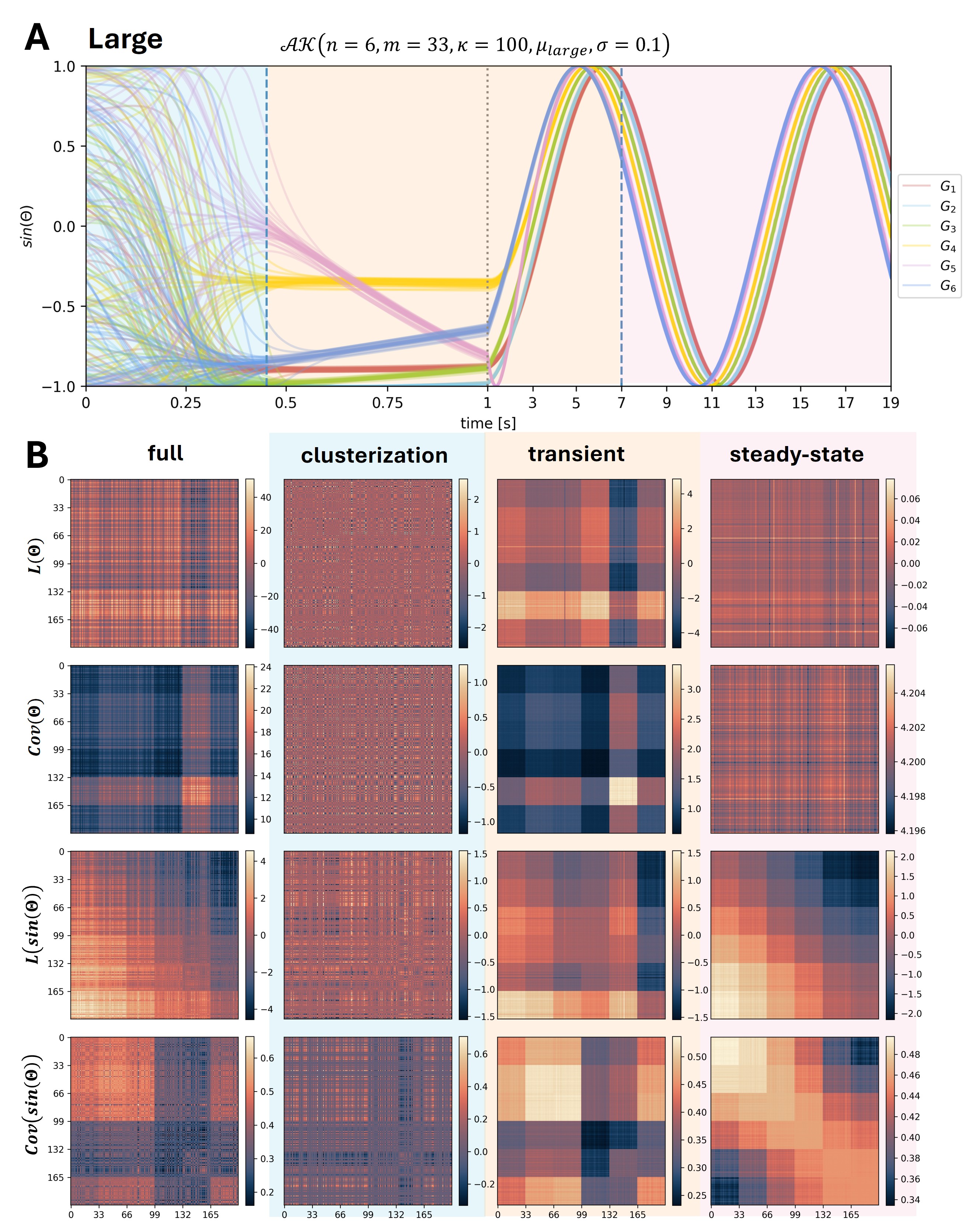}
\caption{\textbf{Large KSBM time series and lead matrices}\\
(A) Time series of the large KSBM $\cAK(n=6, m=33, \kappa=100, \mu=\{\frac{1}{6}, \frac{1}{3}, \frac{1}{2}, \frac{2}{3}, \frac{5}{6}, 1 ~rad/s\}, \sigma=0.1)$ with communities $G_1, G_2, G_3, G_4, G_5, G_6$. (B) Lead and covariance matrices computed from the full time series and each regime for $\Theta$ and $\sin(\Theta)$. Some communities cannot be distinguished in the lead and covariance matrices computed from the full time series in contrast to the transient and steady state regimes.}
\label{fig:KSBM-dynamics-large}
\end{figure}

\begin{figure}[ht!]
  \centering
  \includegraphics[width=0.92\linewidth]{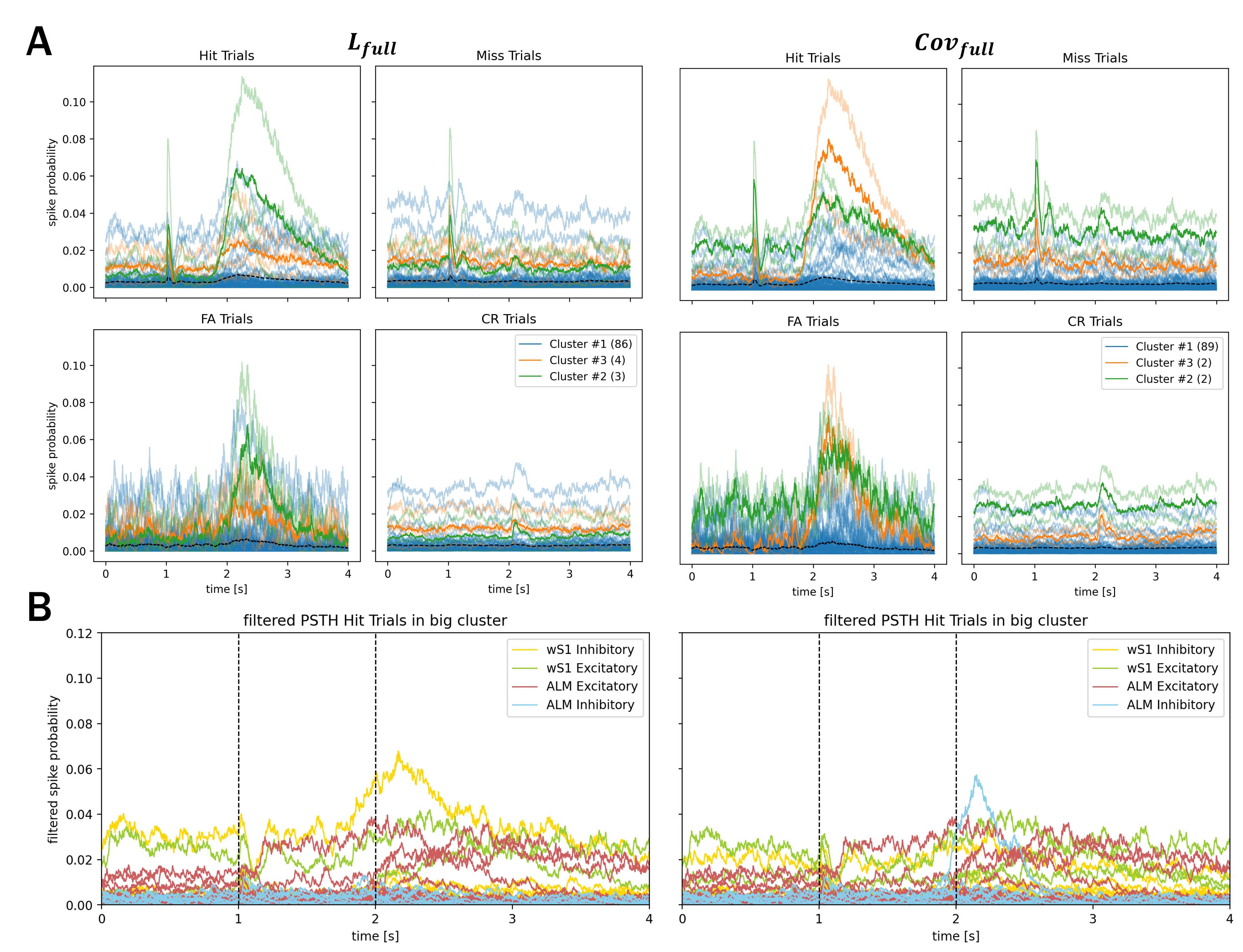}
  \caption{\textbf{$K$-means ($K=3$) Clusters}\\
  (A) PSTH for each trial type labeled with clusters identified by $K$-means with $K=3$ on (Left) lead and (Right) covariance matrices, non-transparent line correspond to the average PSTH within cluster and dashed line to the population average PSTH. (B) Filtered PSTH of neurons belonging to the largest (trivial) cluster from $K$-means on (Left) lead and (Right) covariance matrices.}
  \label{fig:algo-k3-clusters}
\end{figure}

\begin{figure}[ht!]
  \centering
  \includegraphics[width=0.92\linewidth]{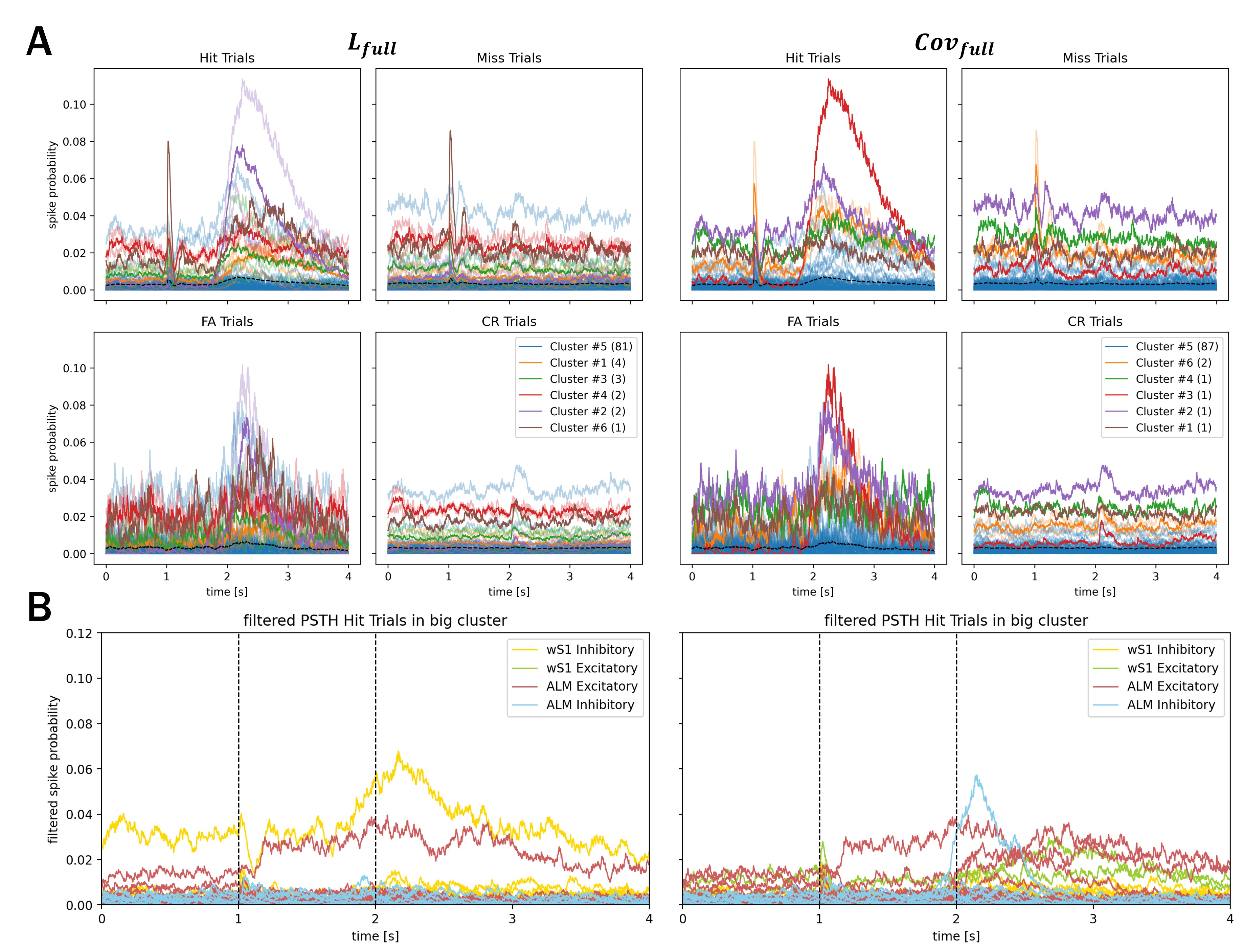}
  \caption{\textbf{$K$-means ($K=6$) Clusters}\\
  (A) PSTH for each trial type labeled with clusters identified by $K$-means with $K=6$ on (Left) lead and (Right) covariance matrices, non-transparent line correspond to the average PSTH within cluster and dashed line to the population average PSTH. (B) Filtered PSTH of neurons belonging to the largest (trivial) cluster from $K$-means on (Left) lead and (Right) covariance matrices.}
  \label{fig:algo-k6-clusters}
\end{figure}

\begin{figure}[ht!]
  \centering
  \includegraphics[width=0.92\linewidth]{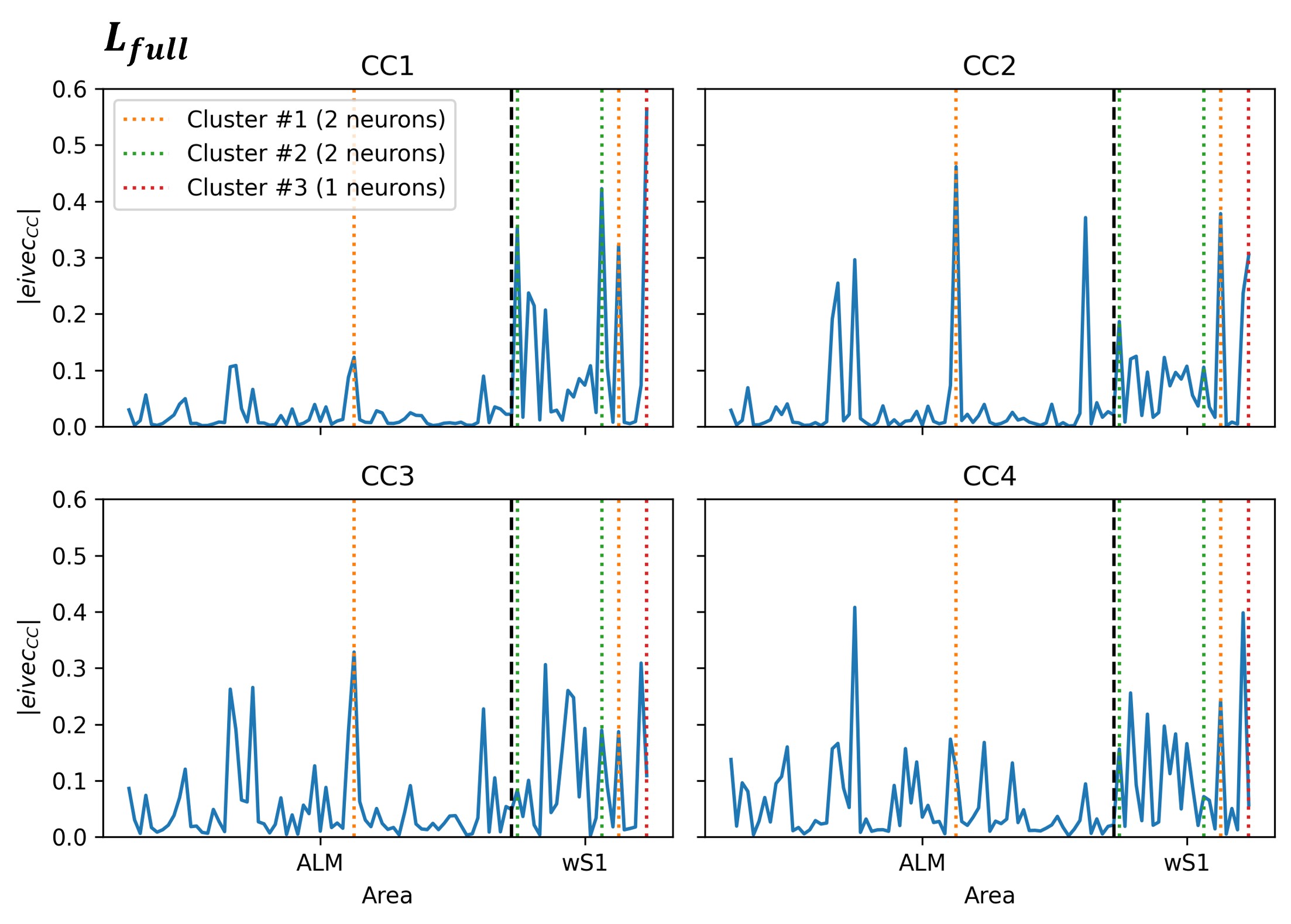}
  \caption{\textbf{SCE clusters and lead matrix eigenvectors modulus}\\
  Modulus of the first four (pairs of) eigenvectors of the lead matrix over the full time series for the neural recordings with neurons identified by the small clusters overlain.}
  \label{fig:sce-eigenmodulus}
\end{figure}

\section{Proofs and Technical Results} \label{apx:proofs}

\subsection{Kuramoto Stochastic Block Model} \label{apxssec:ksbm}

The probability of two oscillators being coupled in the assortative KSBM is given by the following lemma.

\begin{lemma}
\label{lemma:probability-assortative-KSBM}
    In an assortative KSBM, for any $r \neq s \in [n]$,
    $$P_{rs} = \frac{2}{m(n-1)} + O(\frac{1}{m^2})$$
    while $P_{rr} = 1$.
\end{lemma}
\begin{proof}
    First recall that $P_{rs}$ for $r \neq s$ in the assortative KSBM is the probability that a given pair of nodes in $G_r$ and $G_s$ are coupled. Since any node in $G_r$ has a single directed edge to a node in $\bigcup_{s'\neq r}G_{s'}$, it follows that $i \sim j$ if and only if $i$ connects to $j$ ($i \rightarrow j$) or $j$ connects to $i$ ($j \rightarrow i$). Those events are independent, hence, $\p{i \sim j} = \p{i \rightarrow j} + \p{j \rightarrow i} - \p{i \rightarrow j}\p{j \rightarrow i}$.
    Furthermore those two connection probabilities are equal by symmetry in the assortative KSBM. Without loss of generality, $\p{i \rightarrow j} = \frac{1}{m(n-1)}$ since $i$ has a directed edge to a single oscillator chosen uniformly among $m(n-1)$ oscillators. Hence, $P_{rs} = \frac{2}{m(n-1)} - \frac{1}{m^2(n-1)^2}$, $P_{rr} = 1$ simply follows from the fully connected communities. When $m \gg 1$, the term $-\frac{1}{m^2(n-1)^2}$ vanishes.
\end{proof}

\subsection{Mean-Field KSBM} \label{apxssec:mf_ksbm}

This next technical Lemma is very useful as it allows us to go from a statement on $\epsilon$-variance-clustered to a probabilistic statement on $\epsilon$-clustered. The converse result also exists but we will not need it in the following.

\begin{lemma}
\label{lemma:variance-clustered-implies-clustered-whp}
    For any $U \subseteq [N]$, if $(\theta_i)_{i \in U}$ is $\epsilon$-variance-clustered, then it is $\epsilon^{\frac{1}{2} - \nu}$-clustered with high probability for any $\nu > 0$ if $\epsilon \ll 1$.
\end{lemma}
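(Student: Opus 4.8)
The plan is to exploit the Gaussian structure supplied by Hypothesis~\ref{hypothesis:gaussian-phase}, under which the phases $(\theta_i)_{i\in U}$ are i.i.d.\ draws from a common distribution $\gauss{\mu}{V}$ whose variance is exactly the clustered quantity $V = \Var_{i\in U}(\theta_i) \leq \epsilon$. The key difficulty to overcome is that the purely deterministic implication is far too weak: from $\sum_{i\in U}(\theta_i-\bar\theta)^2 \leq |U|\epsilon$ one only gets $\abs{\theta_i-\bar\theta}\leq\sqrt{|U|\epsilon}$, hence a maximal gap of order $\epsilon^{1/2}$ spoiled by a factor $\sqrt{|U|}$. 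The entire content of the probabilistic statement is that Gaussian tails let us replace this prohibitive polynomial factor in $|U|$ by a harmless logarithmic one, at the mild cost of the exponent $\nu$.

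First I would reduce the maximal pairwise gap directly to a pairwise tail estimate, thereby avoiding the empirical mean $\bar\theta$ (whose subtraction would destroy independence). For $i\neq j$ the difference $\theta_i-\theta_j$ is a centered Gaussian, $\theta_i-\theta_j\sim\gauss{0}{2V}$ with $2V\leq 2\epsilon$, so the standard sub-Gaussian tail bound gives $\p{\abs{\theta_i-\theta_j} > s} \leq 2\exp(-s^2/(4\epsilon))$. A union bound over the fewer than $|U|^2$ ordered pairs then yields
\[
    \p{\max_{i,j\in U}\abs{\theta_i-\theta_j} > s} \;\leq\; 2\,|U|^2\exp\!\left(-\frac{s^2}{4\epsilon}\right).
\]

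Next I would calibrate the threshold by setting $s = \epsilon^{1/2-\nu}$, so that $s^2/(4\epsilon) = \tfrac{1}{4}\epsilon^{-2\nu}$ and the right-hand side becomes $2\,|U|^2\exp(-\tfrac14\epsilon^{-2\nu})$. Since $\epsilon^{-2\nu}\to\infty$ as $\epsilon\to 0$ for every fixed $\nu>0$, this probability tends to $0$; note moreover that the double-exponential decay absorbs $|U|$ even when $|U|$ is allowed to grow subexponentially in $\epsilon^{-2\nu}$, which comfortably covers the regimes $\epsilon\sim m^{-(2+\nu)}$ that arise elsewhere in the paper. Consequently $\max_{i,j\in U}\abs{\theta_i-\theta_j}\leq \epsilon^{1/2-\nu}$ with high probability, i.e.\ $(\theta_i)_{i\in U}$ is $\epsilon^{1/2-\nu}$-clustered.

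The main obstacle is conceptual rather than computational: it is precisely the passage from $\ell^2$-type control (small variance) to $\ell^\infty$-type control (small maximal gap) without the $\sqrt{|U|}$ loss, which is exactly what the Gaussian tails purchase. A secondary technical point is the identification of the variance $V$ entering the tail bound with the clustered quantity $\Var_{i\in U}(\theta_i)\leq\epsilon$; under Hypothesis~\ref{hypothesis:gaussian-phase} this variance is the distributional parameter $V_{G_r}$ itself, so no reconciliation between empirical and population variance is needed, and working with the pairwise differences $\theta_i-\theta_j$ keeps the Gaussian computation entirely elementary.
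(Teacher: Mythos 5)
Your argument is internally correct, but it proves a weaker statement than the lemma: you assume from the outset that the phases are i.i.d.\ Gaussian (Hypothesis~\ref{hypothesis:gaussian-phase}), whereas the lemma makes no distributional assumption beyond the variance bound, and the paper needs it in that generality. The paper's own proof is distribution-free: it writes $\max_{i,j\in U}\abs{\theta_i-\theta_j}\leq 2\max_{i\in U}\abs{\theta_i-\avg{\theta_k}_{k\in U}}$ and then applies Chebyshev's inequality, $\p{\abs{\theta_i-\avg{\theta_k}_{k\in U}}\geq a}\leq \epsilon/a^2$, with $a=\tfrac12\epsilon^{1/2-\nu}$, so that the failure probability is $O(\epsilon^{2\nu})$; only second moments are used. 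This matters because the lemma is invoked outside the Gaussian approximation: in Lemma~\ref{lemma:transition-time-assortative-KSBM} it is applied to a realisation of the (non-Gaussian) assortative KSBM, where the only available input is the variance bound of Corollary~\ref{cor:transition-time-bound}. Your sub-Gaussian tail bound $\p{\abs{\theta_i-\theta_j}>s}\leq 2\exp(-s^2/(4\epsilon))$ has no justification in that setting, so your proof cannot serve all the uses the paper makes of the lemma. The fix is to replace the Gaussian tail by Chebyshev applied to the pairwise differences (or to deviations from the mean, as the paper does), accepting polynomial rather than exponential decay of the failure probability.

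That said, your treatment is more careful than the paper's on one point: you take a union bound over the $|U|^2$ pairs and observe that exponential tails absorb this factor even for growing $|U|$. The paper's Chebyshev argument bounds the deviation of a single oscillator and silently passes to the maximum over $i\in U$; with only polynomial tails, that union bound costs a factor $\abs{U}$, so the paper's ``with high probability'' conclusion really requires $\abs{U}\,\epsilon^{2\nu}\to 0$, a restriction it does not state. So your route buys genuine uniformity in $\abs{U}$, but only by importing a hypothesis the lemma does not grant; the distribution-free route covers all the paper's applications, but is quantitatively weaker than it pretends when $\abs{U}$ is large.
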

\begin{proof}
    We first remark that,
    \begin{equation*}
    \begin{split}
        \max_{i,j \in U}{\abs{\theta_i(t) - \theta_j(t)}} & = \max_{i,j \in U}{\abs{(\theta_i(t)-\avg{\theta_k(t)}_{k \in U})-(\avg{\theta_k(t)}_{k \in U}-\theta_j(t))}},\\
        & \leq 2\max_{i,j \in U}{\abs{\theta_i(t)-\avg{\theta_k(t)}_{k \in U}}}.
    \end{split}
    \end{equation*}
    Hence, it suffices to bound the maximal distance between an oscillator and its average in $U$ by $\epsilon/2$ to obtain $\epsilon$-clusterization. Using Chebyshev's inequality, $\p{\abs{\theta_i(t)-\avg{\theta_k(t)}_{k \in U}} \geq a} \leq \frac{\Var(\theta_i(t))}{a^2} = \frac{\epsilon}{a^2}$ and therefore, $\p{\abs{\theta_i(t)-\avg{\theta_k(t)}_{k \in U}} \leq a} \geq 1-\frac{\epsilon}{a^2}$. The expression for the probabilities follows by taking $a = \frac{1}{2}\epsilon^{\frac{1}{2}-\nu}$ for any $\nu > 0$. The result is thus a consequence of the term $O(\epsilon^{2\nu})$ vanishing whenever $\epsilon \ll 1$.
\end{proof}

\begin{remark}
    (Equivalent Statement of the KSBM)\\
    In the following proofs, it will be easier to reformulate the KSBM differential equation (Equation~\\ref{eq:generalized-kuramoto}) so that the sum over communities and dependence on two oscillators being connected is more apparent.
    For $\Theta(t)$ a realisation of a KSBM $\cK$ of $N$ oscillators in $n$ balanced communities, then for any $i \in G_r$ for any $r=1,...,n$, Equation~\\ref{eq:generalized-kuramoto} can be stated equivalently as,
    \begin{equation*}
        \dot{\theta}_i(t) = \omega_i + \sum_{s=1}^n{C_{rs}\sum_{j \in G_s}{\sin(\theta_j(t) - \theta_i(t))\I\{j \sim i\}}},
    \end{equation*}
    where $j \sim i$ denotes that $j$ connects to $i$ (i.e. $A_{ij} = 1$).
\end{remark}

\begin{remark}
    In the following, we will denote the sample average as $\avg{-}$ and, when applicable, $\E[\theta_i(t)]{-}$ denotes the expectation over the distribution of oscillator $\theta_i$ at time $t$. When we consider Gaussian models, then all oscillators within a given community $G_r$ for $r \in [n]$ will be i.i.d. Gaussian at time $t$, hence we introduce the short-hand notation $\E[i\in G_r]{-} := \E[\theta_{i^*}(t)]{-}$ where $i^* \in G_r$ is any oscillator in $G_r$ and we drop $t$ for conciseness. This notation makes the link to the population average $\avg{-}_{i \in G_r}$ explicit which converges by the Law of Large Numbers (LLN) to $\E[i\in G_r]{-}$ whenever oscillators are i.i.d. and $m \gg 1$.
\end{remark}

\begin{lemma}
\label{lemma:expectation-sin-theta-is-0}
    $\avg{\sin(\theta_j(t) - \theta_i(t))}_{i,j \in G_r} = 0$.
\end{lemma}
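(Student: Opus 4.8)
The plan is to exploit the antisymmetry of the sine function, which makes the double sum telescope to zero by a simple relabelling of indices. Writing the sample average explicitly as a sum over ordered pairs,
\[
\avg{\sin(\theta_j(t) - \theta_i(t))}_{i,j \in G_r} = \frac{1}{m^2}\sum_{i \in G_r}\sum_{j \in G_r} \sin(\theta_j(t) - \theta_i(t)),
\]
where $m = |G_r|$. The first step is to observe that each diagonal term vanishes, since $\sin(\theta_i(t) - \theta_i(t)) = \sin(0) = 0$, so only the off-diagonal pairs contribute.

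Next I would pair each ordered pair $(i,j)$ with the transposed pair $(j,i)$. Since $\sin$ is odd, we have $\sin(\theta_i(t) - \theta_j(t)) = -\sin(\theta_j(t) - \theta_i(t))$, so these two contributions cancel. Equivalently, swapping the dummy indices $i \leftrightarrow j$ in the double sum shows that the sum equals its own negative and must therefore be zero; dividing by $m^2$ yields the claim.

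There is no genuine obstacle here: the only point requiring minimal care is that the average is taken over all ordered pairs (including $i = j$), so that the index swap is a bijection of the summation domain onto itself. I would also emphasize that the statement is purely algebraic and holds for \emph{any} configuration of phases $(\theta_i(t))_{i \in G_r}$ at the fixed time $t$, independent of the KSBM dynamics or any Gaussian assumption; it will be used subsequently precisely to annihilate the mean drift arising from the intra-community coupling term.
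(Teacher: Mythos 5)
Your proof is correct and follows essentially the same argument as the paper: the diagonal terms vanish since $\sin(0)=0$, and the off-diagonal ordered pairs $(i,j)$ and $(j,i)$ cancel by the oddness of $\sin$. Your additional remarks (the index-swap bijection and the observation that the identity is purely algebraic, independent of the dynamics) are accurate but not needed beyond what the paper already does.
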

\begin{proof}
    By definition of the sample average
    $$\avg{\sin(\theta_j(t) - \theta_i(t))}_{i,j \in G_r} \propto \sum_{i \in G_r}{\sin(\theta_i(t) - \theta_i(t))}
        + \sum_{i<j \in G_r}{(\sin(\theta_j(t) - \theta_i(t)) + \sin(\theta_i(t) - \theta_j(t)))} = 0.$$
\end{proof}
\begin{corollary}
    The above result also holds for (conditional) expectations on the oscillators when assuming i.i.d. oscillators,
    $$\E[i,j\in G_r]{\sin(\theta_j(t)-\theta_i(t))} = 0 \text { and } \E[i,j\in G_r]{\sin(\theta_j(t)-\theta_i(t))|j\sim i} = 0$$
\end{corollary}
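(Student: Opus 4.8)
The plan is to derive both identities as the distributional (population-level) analogue of Lemma~\ref{lemma:expectation-sin-theta-is-0}: where the sample-average statement cancels each summand $(i,j)$ against $(j,i)$, the expectation statement cancels the law of $\theta_j(t) - \theta_i(t)$ against that of $\theta_i(t) - \theta_j(t)$. Concretely, I would invoke only that under Hypothesis~\ref{hypothesis:gaussian-phase} the phases $\theta_i(t)$ for $i \in G_r$ are i.i.d.---hence exchangeable---together with the oddness of $\sin$.

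For the unconditional expectation, set $D := \theta_j(t) - \theta_i(t)$. Since $\theta_i(t)$ and $\theta_j(t)$ are i.i.d., $D$ and $-D$ have the same law; under the Gaussian hypothesis this is the explicit statement $D \sim \gauss{0}{2V_{G_r}(t)}$, a distribution symmetric about the origin. Integrating the odd function $\sin$ against a symmetric law gives $\E[i,j\in G_r]{\sin(D)} = 0$. Equivalently, relabelling $i \leftrightarrow j$ leaves the joint law invariant by exchangeability, so that $\E[i,j\in G_r]{\sin(\theta_j(t) - \theta_i(t))} = \E[i,j\in G_r]{\sin(\theta_i(t) - \theta_j(t))} = -\E[i,j\in G_r]{\sin(\theta_j(t) - \theta_i(t))}$, which forces the value to be $0$.

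For the conditional expectation I would condition on the graph randomness $A$ and apply the tower property. The event $\{j \sim i\} = \{A_{ij} = 1\}$ is $\sigma(A)$-measurable, so $\E[i,j\in G_r]{\sin(D) \mid j \sim i} = \E{\E[i,j\in G_r]{\sin(D) \mid A} \mid j \sim i}$. It therefore suffices to show the inner conditional expectation vanishes for (almost) every realisation of $A$, which again follows from the symmetry-plus-oddness argument of the previous paragraph applied conditionally on $A$.

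The one point requiring care---and the main obstacle---is justifying that the phase difference stays symmetric once we condition on the adjacency structure. This is precisely where the Gaussian hypothesis does the work: it posits the intra-community phases to be i.i.d.\ Gaussian in a manner taken to be independent of the realised edges, so that $(\theta_i(t), \theta_j(t))$ remains exchangeable under conditioning on $\{j \sim i\}$. With that independence granted, $\E[i,j\in G_r]{\sin(D) \mid A} = 0$ almost surely, and the tower property closes the conditional case.
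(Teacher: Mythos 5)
Your unconditional identity is fine, and your route is in fact more direct than the paper's: you obtain $\E[i,j\in G_r]{\sin(\theta_j(t)-\theta_i(t))}=0$ straight from exchangeability of the i.i.d.\ phases plus oddness of $\sin$ (equivalently, symmetry of the law of $\theta_j(t)-\theta_i(t)$ about $0$), whereas the paper passes through the law of large numbers: the sample averages of Lemma~\ref{lemma:expectation-sin-theta-is-0} vanish identically for every $m$, so their limit, the expectation, must be zero. Either argument is valid for that half of the statement.

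The conditional identity is where your proposal has a genuine gap. Your tower-property argument needs $\E[i,j\in G_r]{\sin(\theta_j(t)-\theta_i(t))\mid A}=0$ almost surely, i.e.\ that $(\theta_i(t),\theta_j(t))$ remains exchangeable after conditioning on the realised adjacency matrix. Hypothesis~\ref{hypothesis:gaussian-phase} does not give you this: it specifies only the within-community law of the phases at time $t$ and says nothing about their joint law with $A$. In the model being approximated, the claim is in fact false: the phases solve a differential equation whose right-hand side depends on $A$, so fixing a realisation of $A$ breaks the symmetry between nodes (connected oscillators are more tightly synchronized than unconnected ones --- that is the entire mechanism of the model). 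You correctly identify this as the main obstacle, but you then resolve it by positing the independence, which is precisely the step a proof would have to supply. The paper's own argument needs no assumption on the joint law at all, because the cancellation is pathwise: since $i\sim j$ implies $j\sim i$, the ordered connected pairs come in mirror pairs $(i,j)$ and $(j,i)$ whose contributions $\sin(\theta_j(t)-\theta_i(t))+\sin(\theta_i(t)-\theta_j(t))$ cancel exactly for every realisation of the phases and of the graph; hence the conditional sample averages are identically zero, and the conditional expectation, as their large-$m$ limit, is zero. If you prefer to stay at the distributional level, the correct fix is not independence but swap-invariance of the joint law of the triple $(\theta_i(t),\theta_j(t),\I\{j\sim i\})$, which is the distributional shadow of the paper's pathwise cancellation: it gives $\E{\sin(\theta_j(t)-\theta_i(t))\I\{j\sim i\}} = \E{\sin(\theta_i(t)-\theta_j(t))\I\{i\sim j\}} = -\E{\sin(\theta_j(t)-\theta_i(t))\I\{j\sim i\}}$, so the product expectation vanishes, and dividing by $\p{j\sim i}>0$ concludes.
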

\begin{proof}
    By the Law of Large Numbers (LLN), the expectation $\E[i,j\in G_r]{\sin(\theta_j(t)-\theta_i(t))}$ is the limit when $m \rightarrow \infty$ of the sequence of averages $\avg{\sin(\theta^{(m)}_j(t) - \theta^{(m)}_i(t))}_{i,j \in G_r} = 0$, thus the expectation is necessarily zero. We remark that the conditional average on $i\sim j$ (i.e. where we only average over pairs of connected oscillators) is also zero since $i\sim j$ implies $j\sim i$,
    \begin{equation*}
        \avg{\sin(\theta_j(t) - \theta_i(t))|i\sim j}_{i,j \in G_r} \propto  \sum_{i<j \in G_r, i\sim j}{(\sin(\theta_j(t) - \theta_i(t)) + \sin(\theta_i(t) - \theta_j(t)))} = 0.
    \end{equation*}
    Thus, the conditional expectation will also be zero as the limit $m \rightarrow \infty$ of zero sequences.
\end{proof}

\begin{lemma}
\label{lemma:clustered-convergence-sine}
    If $\Theta(t)$ is $\epsilon$-clustered with $\epsilon \ll 1$, then
    $$\avg{\sin(\theta_j(t)-\theta_i(t))}_{i \in G_r, j \in G_s} = \sin(\theta_{G_s}(t) - \theta_{G_r}(t)) + O(\epsilon).$$
\end{lemma}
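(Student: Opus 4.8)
The plan is to exploit the $1$-Lipschitz property of the sine function together with the $\epsilon$-clustered hypothesis, which confines every oscillator in a community to a small arc around its community mean. First I would record the elementary consequence of clustering: since $\Theta(t)$ is $\epsilon$-clustered, the diameter $\max_{i,k\in G_r}\abs{\theta_i(t)-\theta_k(t)}$ is at most $\epsilon$ for each community. Because the mean $\theta_{G_r}(t) = \avg{\theta_i(t)}_{i\in G_r}$ is a convex combination of phases that all lie within a common arc of width $\le\epsilon\ll 1$ (so the circular mean is unambiguous and lifts to $\R$), it lies between the minimal and maximal phase, yielding $\abs{\theta_i(t)-\theta_{G_r}(t)}\le\epsilon$ for all $i\in G_r$, and symmetrically $\abs{\theta_j(t)-\theta_{G_s}(t)}\le\epsilon$ for all $j\in G_s$.

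Next I would write each pairwise difference as a perturbation of the mean difference. Setting $\Delta:=\theta_{G_s}(t)-\theta_{G_r}(t)$ and $\xi_{ij}:=(\theta_j(t)-\theta_{G_s}(t))-(\theta_i(t)-\theta_{G_r}(t))$, we have $\theta_j(t)-\theta_i(t)=\Delta+\xi_{ij}$ with $\abs{\xi_{ij}}\le 2\epsilon$ by the triangle inequality. Since $\sin$ is $1$-Lipschitz, this gives the pointwise estimate $\abs{\sin(\theta_j(t)-\theta_i(t))-\sin(\Delta)}\le\abs{\xi_{ij}}\le 2\epsilon$, that is, $\sin(\theta_j(t)-\theta_i(t))=\sin(\Delta)+O(\epsilon)$ uniformly over all admissible pairs $(i,j)$.

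Finally I would average over $i\in G_r$ and $j\in G_s$. Since $\sin(\Delta)$ does not depend on $(i,j)$ and the remainder is uniformly $O(\epsilon)$, the sample average satisfies
$$\avg{\sin(\theta_j(t)-\theta_i(t))}_{i\in G_r,\,j\in G_s}=\sin(\theta_{G_s}(t)-\theta_{G_r}(t))+O(\epsilon),$$
which is the claim.

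I do not anticipate a serious obstacle; the argument is essentially a first-order estimate. The only point demanding care is the circular nature of the phases: a priori, differences and means on $S^1=\R\bmod 2\pi$ are ambiguous, but the $\epsilon$-clustered hypothesis with $\epsilon\ll 1$ restricts each community to an arc of width $\le\epsilon$, so all phases within a community lift consistently to $\R$ and $\theta_{G_r}$ is well defined; moreover $\sin(\theta_j-\theta_i)$ is unambiguous by $2\pi$-periodicity. A secondary observation is that the first-order term in fact cancels upon averaging—since $\avg{\theta_i-\theta_{G_r}}_{i\in G_r}=0$ and likewise for $G_s$, one has $\avg{\xi_{ij}}=0$, which a Taylor expansion of $\sin(\Delta+\xi_{ij})$ would upgrade to an $O(\epsilon^2)$ error—but the stated $O(\epsilon)$ bound follows immediately from the Lipschitz estimate without this refinement.
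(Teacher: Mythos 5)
Your proof is correct and takes essentially the same approach as the paper: both decompose $\theta_j(t)-\theta_i(t)$ into the community-mean difference $\theta_{G_s}(t)-\theta_{G_r}(t)$ plus a perturbation bounded by $2\epsilon$ (using that each community mean lies within its $\epsilon$-diameter cluster), and then show this perturbs the sine by only $O(\epsilon)$. The sole cosmetic difference is that you invoke the $1$-Lipschitz property of $\sin$ where the paper uses the angle-addition formula with $\sin(\epsilon')=O(\epsilon')$ and $\cos(\epsilon')=1+O(\epsilon'^2)$; your extra remarks on lifting phases from $S^1$ to $\R$ and on the $O(\epsilon^2)$ refinement after averaging are sound additions but not needed for the stated bound.
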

\begin{proof}
    For simplicity we define $\epsilon_{r,i} := \theta_i(t) - \theta_{G_r}(t)$. We can rewrite it as,
    $$\sin(\theta_j(t)-\theta_i(t)) = \sin(\theta_{G_s}(t) - \theta_{G_r}(t))\cos(\epsilon_{s,j} - \epsilon_{r,i})
        + \cos(\theta_{G_s}(t) - \theta_{G_r}(t))\sin(\epsilon_{s,j} - \epsilon_{r,i}).$$
    We now observe that $\abs{\epsilon_{r,i}} \leq \epsilon$ by $\epsilon$-clusterization, so $\abs{\epsilon_{s,j} - \epsilon_{r,i}} \leq 2\epsilon \ll 1$. Using, $\sin(\epsilon') =  O(\epsilon')$ and $\cos(\epsilon') = 1 + O(\epsilon'^2)$ for $\epsilon' \ll 1$, we get, $\sin(\theta_j(t)-\theta_i(t)) = \sin(\theta_{G_s}(t) - \theta_{G_r}(t)) + O(\epsilon)$. 
\end{proof}

\begin{proof}
\label{proof:thm:MF-KSBM}
    \textbf{Theorem \ref{thm:MF-KSBM}}\\
    Applying Theorem \ref{thm:mean_field_kuramoto}, we can replace 
    one realisation $\Theta^{(m)}$ of the random KSBM $\cK_m$ with that of the Kuramoto model $\hTheta^{(m)}$ defined in Equation~\ref{eq:mean_field_deterministic} for 
    $m \gg 1$. We obtain,
    \begin{equation*}
        \dot{\theta}^{(m)}_i(t) \sim \omega_i
        + \sum_{s=1}^n{C_{rs}P_{rs}\sum_{j \in G_s}{\sin(\htheta^{(m)}_j(t) - \htheta^{(m)}_i(t))}}.
    \end{equation*}

    We now compute the average within community $r \in [n]$:
    
    \begin{equation*}
    \begin{split}
        \dot{\theta}^{(m)}_{G_r}(t) & = \avg{\dot{\theta}_i}_{i \in G_r} \sim \mu_r
        + \sum_{s=1}^n{C_{rs}P_{rs}\sum_{j \in G_s}{\avg{\sin(\htheta^{(m)}_j(t) - \htheta^{(m)}_i(t))}_{i \in G_r}}},\\
        & = \mu_r
        + m\sum_{s=1}^n{C_{rs}P_{rs}\avg{\sin(\htheta^{(m)}_j(t) - \htheta^{(m)}_i(t))}_{i \in G_r, j\in G_s}},
    \end{split}
    \end{equation*}
    where we use the Law of Large Numbers (LLN) to obtain $\avg{\omega_i}_{i\in G_r} \xrightarrow{LLN} \mu_r$ since $\omega_i$ are independent. Applying Lemma \ref{lemma:clustered-convergence-sine} and $\epsilon$-clusterization with $\epsilon \ll 1$ and $\bar{C} = O(1/N)$, we obtain
    $$\dot{\theta}^{(m)}_{G_r}(t) \sim \mu_r + m\sum_{s=1}^n{C_{rs}P_{rs}\sin(\htheta^{(m)}_{G_s}(t) - \htheta^{(m)}_{G_r}(t))} + O(\epsilon).$$
\end{proof}

We characterize the notion of synchronization of all oscillators as a steady state and now present results concerning the deviations between oscillators within a community. These results will help us understand the effect of intrinsic frequency {heterogeneity} on the KSBM dynamics.

\begin{lemma}
\label{lemma:synchronized-steady state}
    (Synchronized steady state)\\
    Let $\Theta$ be a realisation of a KSBM $\cK$ and $\Theta(t)$ in synchronized steady state by time $T \geq 0$ then,
    $$\theta_{G_r}(t) = \omega(P,C)(t-T) + \Delta\theta_{G_r} \ \forall r\in [n],$$
    where $\omega(P,C) \in \R$ is some constant and $\Delta\theta_{G_r} = \theta_{G_r}(T)$. Furthermore, if the KSBM is symmetric, i.e. $P_{rs}C_{rs} = P_{sr}C_{sr} \ \forall r,s \in [n]$, then $\omega(P,C) = \omega = \avg{\omega_{G_r}}_{r\in [n]}$.
\end{lemma}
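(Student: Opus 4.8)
The plan is to prove the structural form of the synchronized trajectory first, and then identify the common rotation speed under the symmetry hypothesis.

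\textbf{Part 1 (form of the community means).} By definition of synchronized steady state, $\dot\theta_i(t) = \dot\theta_j(t)$ for all $i,j\in[N]$ and all $t\geq T$; write $\Omega(t)$ for this common value. The pairwise offsets $\theta_i(t)-\theta_j(t)$ then have vanishing derivative for $t\geq T$, so every offset is constant (this is exactly the constancy invoked in Lemma~\ref{lemma:L-sin-theta}). Substituting this back into the KSBM equation in its community form (the Equivalent Statement remark), each coupling term $\sin(\theta_j(t)-\theta_i(t))$ is constant in $t$, and $\omega_i$ is constant, so every $\dot\theta_i(t)$ is constant on $[T,\infty)$ and hence $\Omega(t)\equiv\Omega$ is a genuine constant. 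Integrating from $T$ gives $\theta_i(t)=\Omega(t-T)+\theta_i(T)$, and averaging over $i\in G_r$ yields $\theta_{G_r}(t)=\Omega(t-T)+\theta_{G_r}(T)$. Setting $\omega(P,C):=\Omega$ and $\Delta\theta_{G_r}:=\theta_{G_r}(T)$ proves the first claim.

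\textbf{Part 2 (value of $\Omega$ under symmetry).} The idea is to sum the community-averaged dynamics over all communities and exploit the oddness of $\sin$. In the mean-field (clustered) steady state, Theorem~\ref{thm:MF-KSBM} gives, for each $r$,
\[
\Omega = \dot\theta_{G_r}(t) = \mu_r + m\sum_{s=1}^n P_{rs}C_{rs}\sin(\theta_{G_s}(t)-\theta_{G_r}(t)) + O(\epsilon).
\]
Summing over $r\in[n]$ and dividing by $n$,
\[
\Omega = \avg{\mu_r}_r + \frac{m}{n}\sum_{r,s} P_{rs}C_{rs}\sin(\theta_{G_s}-\theta_{G_r}) + O(\epsilon).
\]
Relabelling $r\leftrightarrow s$ in the double sum and using $\sin(\theta_{G_r}-\theta_{G_s})=-\sin(\theta_{G_s}-\theta_{G_r})$ together with the hypothesis $P_{rs}C_{rs}=P_{sr}C_{sr}$ shows the double sum equals its own negative, hence vanishes. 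Therefore $\Omega=\avg{\mu_r}_r+O(\epsilon)$, and since $\avg{\omega_{G_r}}_r=\tfrac1N\sum_i\omega_i\to\avg{\mu_r}_r$ by the Law of Large Numbers, we conclude $\omega(P,C)=\omega=\avg{\omega_{G_r}}_r$.

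\textbf{Main obstacle.} The delicate point is that the SBM graph is directed, so at the level of a single realisation the total coupling sum $\sum_{i,j}\oC_{ij}\sin(\theta_j-\theta_i)=\tfrac12\sum_{i,j}(\oC_{ij}-\oC_{ji})\sin(\theta_j-\theta_i)$ does \emph{not} vanish exactly; cancellation would require $\oC$ (equivalently $A$) to be symmetric, not merely $P$. This is precisely why the hypothesis is stated on the effective couplings $P_{rs}C_{rs}$ rather than on $A$: it is the mean-field effective coupling $m\,P_{rs}C_{rs}$ entering Theorem~\ref{thm:MF-KSBM} that is symmetric, and the antisymmetry argument is clean only at that level. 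I would therefore route Part 2 through the mean-field community dynamics as above; the realisation-level identity $\Omega=\avg{\omega_{G_r}}_r$ then holds exactly when $A$ is undirected and up to the $O(\epsilon)$ mean-field error otherwise.
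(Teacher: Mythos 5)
Your Part~1 is essentially the paper's proof, carried out more carefully: the paper's two-line argument simply writes $\dot\theta_i = \omega(P,C)$ for a constant and integrates, whereas you actually justify why the common instantaneous frequency is constant \emph{in time} (constant offsets $\Rightarrow$ constant coupling terms $\Rightarrow$ constant derivatives). That step is genuinely needed, since synchronization only says the derivatives agree across oscillators at each instant, not across instants, so this is a welcome tightening rather than a detour. Part~2 is where you diverge: the paper's proof stops after the integration and never addresses the ``Furthermore'' claim at all, so your argument fills a gap rather than paralleling anything in the paper. Your route---summing Theorem~\ref{thm:MF-KSBM} over $r$, then using $P_{rs}C_{rs}=P_{sr}C_{sr}$ and the oddness of $\sin$ to annihilate the double sum---is sound, but note what it costs: it imports the hypotheses of Theorem~\ref{thm:MF-KSBM} ($\epsilon$-clustering, $\bar{C}=O(1/N)$, $m\gg 1$), which are not assumed in the lemma, and it yields $\omega(P,C)=\avg{\mu_r}_{r}+O(\epsilon)$ in the large-$m$ limit rather than the exact equality the lemma asserts. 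Your ``main obstacle'' paragraph correctly identifies why this weakening is unavoidable: summing the exact realisation-level equations over $i$ gives $N\Omega=\sum_i\omega_i+\sum_{i,j}\oC_{ij}\sin(\theta_j-\theta_i)$, and the coupling sum vanishes only when $\oC$ itself is symmetric, which symmetry of the \emph{effective} couplings $P_{rs}C_{rs}$ does not guarantee for a directed SBM realisation. So your analysis in fact exposes that the lemma's second claim is stated more strongly than it can hold pointwise on realisations; the paper neither proves that claim nor flags this subtlety, and your treatment is the more honest of the two.
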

\begin{proof}
    In synchronized steady state all frequencies are equal, i.e., $\dot{\theta}_i = \omega(P, C)$ for all $i \in [N]$, thus,
    $$\theta_{G_r}(t) = \int_T^t{\omega(P,C) \,dt} = \omega(P,C)(t-T) + \theta_{G_r}(T).$$
\end{proof}

\begin{lemma}
\label{lemma:deviation-synchronized-steady-state}
    (Deviation at Synchronized steady state)\\
     Let $\Theta^{(m)}$ be a realisation of the KSBM $\cK_m$ under the same conditions as in Theorem \ref{thm:MF-KSBM} at synchronized steady state. Then for $i \in G_r$ and $r\in [n]$
    $$\theta^{(m)}_i(t)-\theta^{(m)}_{\bullet}(t) = \arcsin\left(\frac{1}{m}\frac{\omega_i-\omega_{\bullet}}{\sum_{s=1}^nC_{rs}P_{rs}\cos(\htheta^{(m)}_{G_s}(t)-\htheta^{(m)}_{G_r}(t))}\right) + O(\epsilon^2),$$
    where $\bullet$ denotes either any oscillator $j$ in the same community $G_r$ or the community average oscillator of $G_r$, and $\hTheta^{(m)}$ is a realisation of the mean-field KSBM from Theorem \ref{thm:MF-KSBM}.
\end{lemma}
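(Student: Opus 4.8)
The plan is to start from the algebraic characterization of synchronized steady state and expand the coupling term around the community average. By Lemma~\ref{lemma:synchronized-steady state}, at synchronized steady state $\dot\theta_i = \omega(P,C)$ for every $i$, so the equivalent form of the KSBM equation becomes the time-independent relation $\omega(P,C) = \omega_i + \sum_{s}C_{rs}\sum_{k\in G_s}\sin(\theta_k-\theta_i)\I\{k\sim i\}$ for each $i\in G_r$. Using Theorem~\ref{thm:mean_field_kuramoto} I would replace the random realisation $\Theta^{(m)}$ by the deterministic mean-field model $\hTheta^{(m)}$ (valid a.s.\ for $m\gg 1$), turning the random adjacency sums into their mean-field counterparts $\sum_s C_{rs}P_{rs}\sum_{k\in G_s}\sin(\htheta^{(m)}_k - \htheta^{(m)}_i)$; the pointwise discrepancies $\theta^{(m)}_i - \htheta^{(m)}_i$ vanish a.s.\ and are absorbed into the error.

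Next I would write $\htheta^{(m)}_i = \htheta^{(m)}_{G_r} + \hat\delta_i$ with $\hat\delta_i = O(\epsilon)$ by $\epsilon$-clustering, and split the coupling sum into the intra-community part $s=r$ and the inter-community part $s\neq r$. For $s=r$, I would expand $\sin(\hat\delta_k - \hat\delta_i)$ by the angle-subtraction formula and exploit $\sum_{k\in G_r}\hat\delta_k = 0$ (definition of the community average) together with $\sum_{k\in G_r}\cos\hat\delta_k = m(1+O(\epsilon^2))$; this collapses the intra sum to $-m P_{rr}C_{rr}\sin\hat\delta_i + O(m\epsilon^3)$, crucially keeping the full $\sin\hat\delta_i$ rather than its linearization. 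For $s\neq r$, $\epsilon$-clustering (cf.\ Lemma~\ref{lemma:clustered-convergence-sine}) lets me replace $\htheta^{(m)}_k$ by $\htheta^{(m)}_{G_s}$ and then expand $\sin((\htheta^{(m)}_{G_s}-\htheta^{(m)}_{G_r}) - \hat\delta_i) - \sin(\htheta^{(m)}_{G_s}-\htheta^{(m)}_{G_r}) = -\cos(\htheta^{(m)}_{G_s}-\htheta^{(m)}_{G_r})\sin\hat\delta_i + O(\epsilon^2)$; since $\bar{C} = O(1/N)$ forces $mC_{rs}P_{rs} = O(1)$ for $s\neq r$, these errors are genuinely $O(\epsilon^2)$.

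I would then subtract the community mean-field identity of Theorem~\ref{thm:MF-KSBM}, evaluated at steady state (where the $s=r$ term drops since $\sin 0 = 0$), to cancel $\omega(P,C)$ and to replace $\omega_i$ by $\omega_i - \omega_{G_r}$ via $\mu_r = \omega_{G_r} + O(1/\sqrt m)$ from the law of large numbers. Recognizing that the intra coefficient $P_{rr}C_{rr}$ equals the $s=r$ summand $C_{rr}P_{rr}\cos(\htheta^{(m)}_{G_r}-\htheta^{(m)}_{G_r})$, the intra and inter contributions combine into the single sum $\sum_s C_{rs}P_{rs}\cos(\htheta^{(m)}_{G_s}-\htheta^{(m)}_{G_r})$, yielding $m\,\sin\hat\delta_i\sum_s C_{rs}P_{rs}\cos(\htheta^{(m)}_{G_s}-\htheta^{(m)}_{G_r}) = (\omega_i - \omega_{G_r}) + O(\epsilon^2)$. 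Solving for $\sin\hat\delta_i$ and applying $\arcsin$ (smooth away from $\pm 1$, so the $O(\epsilon^2)$ error is preserved) gives the claim for $\bullet = G_r$; the case $\bullet = j$ follows by differencing the expansions for $\hat\delta_i$ and $\hat\delta_j$ and noting that to the stated order $\arcsin$ is additive, so $\theta_i - \theta_j = \arcsin\!\big(\tfrac1m (\omega_i-\omega_j)/\!\sum_s C_{rs}P_{rs}\cos(\htheta^{(m)}_{G_s}-\htheta^{(m)}_{G_r})\big) + O(\epsilon^2)$.

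The main obstacle is the bookkeeping of the intra-community term: I must verify that retaining the exact $\sin\hat\delta_i$ (not its linear approximation) is precisely what produces the $\cos(\htheta^{(m)}_{G_r}-\htheta^{(m)}_{G_r})=1$ entry of the denominator sum, and that the third-order remainder $O(m\epsilon^3)$ accumulated over the $m$ intra terms is dominated by the advertised $O(\epsilon^2)$ under the scaling $mP_{rr}C_{rr}=O(1)$. The inter-community errors are comparatively routine once $\bar{C} = O(1/N)$ is invoked.
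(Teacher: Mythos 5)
Your scaffolding---replacing the random realisation by the deterministic model via Theorem~\ref{thm:mean_field_kuramoto}, splitting the coupling into intra-/inter-community parts, expanding under $\epsilon$-clustering, and solving for $\sin\hat{\delta}_i$ before applying $\arcsin$---is essentially the paper's, and your intra-community bookkeeping (keeping the exact $\sin\hat{\delta}_i$, using $\sum_{k}\hat{\delta}_k=0$) is sound. The genuine gap is in the step where you cancel $\omega(P,C)$: you subtract the community identity of Theorem~\ref{thm:MF-KSBM}, but that identity only holds up to $O(\epsilon)$ (the error inherited from Lemma~\ref{lemma:clustered-convergence-sine}), while the quantity you are solving for, $m\sin\hat{\delta}_i\sum_{s}C_{rs}P_{rs}\cos(\htheta^{(m)}_{G_s}-\htheta^{(m)}_{G_r})$, is itself only $O(\epsilon)$ under the scaling $mC_{rs}P_{rs}=O(1)$. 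After the subtraction your equation reads $0=(\omega_i-\mu_r)-O(\epsilon)\text{-signal}+O(\epsilon)\text{-error}$, so it determines $\sin\hat{\delta}_i$ only up to an $O(\epsilon)$ additive error, i.e.\ $O(1)$ relative error---not the claimed $O(\epsilon^2)$. Compounding this, your replacement $\mu_r=\omega_{G_r}+O(1/\sqrt{m})$ imports a law-of-large-numbers error that is not controlled by $\epsilon$ in any regime assumed by the lemma; note the statement is phrased with the empirical average $\omega_{G_r}$ (via $\omega_{\bullet}$), so passing through $\mu_r$ is both unnecessary and lossy.

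The fix is to never use Theorem~\ref{thm:MF-KSBM}'s identity quantitatively. The paper avoids it by differencing the steady-state equations of \emph{two individual oscillators} $i,j\in G_r$: then $\omega(P,C)$, the intrinsic-frequency means, and every community-level coupling term cancel exactly; the identity $\sin(\htheta^{(m)}_k-\htheta^{(m)}_i)-\sin(\htheta^{(m)}_k-\htheta^{(m)}_j)=-\cos(\htheta^{(m)}_k-\htheta^{(m)}_i)\sin(\htheta^{(m)}_i-\htheta^{(m)}_j)+O(\epsilon^2)$ leaves only $O(\epsilon^2)$ remainders, giving the $\bullet=j$ case directly, and the $\bullet=G_r$ case follows by averaging over $j$ (your own observation $\sum_j\hat{\delta}_j=0$, hence $\avg{\sin(\htheta^{(m)}_i-\htheta^{(m)}_j)}_{j\in G_r}=\sin\hat{\delta}_i+O(\epsilon^3)$, makes that average clean). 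Alternatively you can stay inside your derivation: average your individual-oscillator steady-state expansion over $i\in G_r$---the $\sin\hat{\delta}_i$ terms average to $O(\epsilon^3)$---to produce a community identity with $O(\epsilon^2)$ error in which $\omega_{G_r}$ (not $\mu_r$) appears naturally, and subtract \emph{that} instead of the theorem's statement; this restores your argument with the correct error budget.
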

\begin{proof}
    Recall that the conditions of Theorem \ref{thm:MF-KSBM} are that $m \gg 1$, $\Theta^{(m)}$ is $\epsilon$-clustered with $\epsilon \ll 1$ and $\avg{C_{rs}P_{rs}}_{r\neq s\in [n]}=O(1/N)$. By assuming that the system is at (synchronized) steady state, we also know that for any pair of oscillators $i,j \in [N]$ we have $\dot\theta_i^{(m)} = \dot\theta_j^{(m)}$. It follows that this also holds with average oscillators $\theta_{G_r}^{(m)}$ for any $r \in [n]$. We now drop the dependence on $t$ since all deviations are constant (since the oscillators are synchronized, i.e. derivatives are equal).
    We first consider the case when $i,j \in G_r$ for some $r\in [n]$. Our first step is to replace the 
    realisation $\Theta^{(m)}$ of the random KSBM with the deterministic model $\hTheta^{(m)}$ of Equation~\ref{eq:mean_field_deterministic} using Theorem \ref{thm:mean_field_kuramoto},
    $$\dot\theta_i^{(m)} - \dot\theta_j^{(m)} \sim \omega_i-\omega_j + \sum_{s=1}^n{C_{rs}P_{rs}}\sum_{k\in G_s}(sin(\htheta_k^{(m)}-\htheta_i^{(m)}) - sin(\htheta_k^{(m)}-\htheta_j^{(m)})).$$
    Notice that since both models converge when $m \gg 1$, then $\hTheta^{(m)}$ is also $\epsilon$-clustered with $\epsilon \ll 1$. Using this we can rework the difference of sines as,
    $$sin(\htheta_k^{(m)}-\htheta_i^{(m)}) - sin(\htheta_k^{(m)}-\htheta_j^{(m)}) =  - \cos(\htheta_k^{(m)}-\htheta_i^{(m)})sin(\htheta_i^{(m)}-\htheta_j^{(m)}),$$
    by using $\cos(\htheta_i^{(m)}-\htheta_j^{(m)}) = 1 + O(\epsilon^2)$ since $i$ and $j$ are in the same community which is $\epsilon$-clustered. We need to keep terms of order $\epsilon$ since that is the range of $\htheta_i^{(m)} - \htheta_j^{(m)}$. Using a similar argument, we can see that $\cos(\htheta_k^{(m)}-\htheta_i^{(m)}) = 1 + O(\epsilon^2)$ for any $k \in G_r$. Thus the difference $\dot\theta_i^{(m)} - \dot\theta_j^{(m)}$, up to an error $O(\epsilon^2)$, can now be expressed as
    $$\dot\theta_i^{(m)} - \dot\theta_j^{(m)} \sim \omega_i-\omega_j - sin(\htheta_i^{(m)}-\htheta_j^{(m)})(mC_{rr}P_{rr} - \sum_{s\neq r}{C_{rs}P_{rs}}\sum_{k\in G_s}\cos(\htheta_k^{(m)}-\htheta_i^{(m)}).$$
    Similarly, we can re-express the cosine as
    \begin{equation*}
    \begin{split}
        \cos(\htheta_k^{(m)}-\htheta_i^{(m)}) & \sim \cos(\htheta_k^{(m)}-\htheta_{G_r}^{(m)})-sin(\htheta_k^{(m)}-\htheta_{G_r}^{(m)})sin(\htheta_{G_r}^{(m)}-\htheta_i^{(m)}),\\
        & = \cos(\htheta_{G_s}^{(m)}-\htheta_{G_r}^{(m)})-sin(\htheta_{G_s}^{(m)}-\htheta_{G_r}^{(m)})sin(\htheta_k^{(m)}-\htheta_{G_s}^{(m)}) + O(\epsilon),\\
        & = \cos(\htheta_{G_s}^{(m)}-\htheta_{G_r}^{(m)}) + O(\epsilon).
    \end{split}
    \end{equation*}
    Notice that the $O(\epsilon)$ error originating from the sine terms becomes $O(\epsilon^2)$ in $\dot\theta_i^{(m)} - \dot\theta_j^{(m)}$, since the cosine is multiplied by $sin(\htheta_i^{(m)}-\htheta_j^{(m)}) = O(\epsilon)$, and hence is negligible. The condition $\avg{C_{rs}P_{rs}}_{r\neq s\in [n]}=O(1/N)$ ensures that the error is controlled through the summations, and thus we obtain,
    $$\dot\theta_i^{(m)} - \dot\theta_j^{(m)} \sim \omega_i-\omega_j - msin(\htheta_i^{(m)}-\htheta_j^{(m)})\sum_{s=1}^n{C_{rs}P_{rs}\cos(\htheta_{G_s}^{(m)}-\htheta_{G_r}^{(m)})} + O(\epsilon^2).$$
    At steady state, this difference is equal to zero and we can therefore rewrite the deviation $\htheta_i^{(m)} - \htheta_j^{(m)}$ as,
    $$\htheta^{(m)}_i-\htheta^{(m)}_{\bullet} = \arcsin\left(\frac{1}{m}\frac{\omega_i-\omega_{\bullet}}{\sum_{s=1}^nC_{rs}P_{rs}\cos(\htheta^{(m)}_{G_s}-\htheta^{(m)}_{G_r})}\right) + O(\epsilon^2).$$
    The claim follows from the convergence of $\theta^{(m)}$ to $\htheta^{(m)}$ in the limit $m \gg 1$ by Theorem \ref{thm:mean_field_kuramoto}. We also know that $\htheta_{G_r}^{(m)}$ for $r \in [n]$ converges to the mean-field model by Theorem \ref{thm:MF-KSBM}. The proof for the deviation $\theta^{(m)}_i(t)-\theta^{(m)}_{G_r}(t)$ is analogous.
\end{proof}

\begin{corollary}
    \label{cor:deviation-synchronized-steady-state-assortative-KSBM}
    (Deviation at Synchronized steady state for Assortative KSBM)\\
    Let $\Theta^{A,(m)}$ be a realisation of an assortative KSBM $\cAK_m$ at steady state. If $n\sigma \ll \kappa$ and $m \gg 1$, then the following holds for $i \in G_r$ and $r\in [n]$:
    $$\theta^{A,(m)}_i(t)-\theta^{A,(m)}_{\bullet}(t) = \arcsin\left(\frac{n}{\kappa}(\omega_i-\omega_{\bullet})\right) + O(\epsilon^2),$$
    where $\bullet$ denotes either any oscillator $j$ in the same community $G_r$ or the community average oscillator.
\end{corollary}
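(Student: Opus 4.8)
The plan is to specialize the general deviation formula of Lemma~\ref{lemma:deviation-synchronized-steady-state} to the assortative KSBM by substituting the explicit coupling strengths and edge probabilities, and then taking the $m \to \infty$ limit. First I would verify that the hypotheses of Lemma~\ref{lemma:deviation-synchronized-steady-state} are met. The condition $m \gg 1$ is assumed directly. The clusterization requirement $\epsilon \ll 1$ follows from Lemma~\ref{lemma:dominated-gaussian-assortative-KSBM-is-clustered}, which gives a clusterization factor of order $\sqrt{\pi/2}\,(n\sigma/\kappa)$; since $n\sigma \ll \kappa$ by hypothesis, we have $\epsilon \ll 1$. Finally, the condition $\bar{C} = \avg{C_{rs}P_{rs}}_{r \neq s} = O(1/N)$ holds because the uniform coupling of the assortative KSBM gives $C_{rs} = \kappa/N$ for every pair $(r,s)$, while Lemma~\ref{lemma:probability-assortative-KSBM} gives $P_{rs} = O(1/m)$ for $r \neq s$, so that $C_{rs}P_{rs} = O(1/(Nm)) = O(1/N)$.

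Next I would evaluate the denominator inside the $\arcsin$ of Lemma~\ref{lemma:deviation-synchronized-steady-state}, namely $m\sum_{s=1}^n C_{rs}P_{rs}\cos(\htheta^{(m)}_{G_s} - \htheta^{(m)}_{G_r})$. Using $C_{rs} = \kappa/N = \kappa/(mn)$ for all $r,s$ together with $P_{rr} = 1$ and $P_{rs} = \frac{2}{m(n-1)} + O(1/m^2)$ from Lemma~\ref{lemma:probability-assortative-KSBM}, the diagonal term $s = r$ contributes $m \cdot \frac{\kappa}{mn} \cdot 1 \cdot \cos(0) = \kappa/n$, whereas each off-diagonal term $s \neq r$ contributes $m \cdot \frac{\kappa}{mn} \cdot O(1/m) \cdot \cos(\cdot) = O(1/m)$, which vanishes as $m \to \infty$ since the cosine factors are bounded by $1$. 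Hence the denominator converges to $\kappa/n$, and the argument of the $\arcsin$ becomes $\frac{n}{\kappa}(\omega_i - \omega_\bullet)$ up to a relative correction of order $1/m$ that is absorbed in the $m \gg 1$ limit.

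Substituting back then yields $\theta^{A,(m)}_i(t) - \theta^{A,(m)}_\bullet(t) = \arcsin\!\bigl(\frac{n}{\kappa}(\omega_i - \omega_\bullet)\bigr) + O(\epsilon^2)$, as claimed, where the argument I made above applies identically whether $\bullet$ denotes another oscillator in $G_r$ or the community average, since the parent lemma treats both cases. For consistency I would also observe that $n\sigma \ll \kappa$ forces $\frac{n}{\kappa}(\omega_i - \omega_\bullet) = O(n\sigma/\kappa) \ll 1$, so the $\arcsin$ is well-defined and the resulting deviation is small, matching the $\epsilon$-clustered picture.

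The step I expect to require the most care is controlling the off-diagonal ($s \neq r$) contributions: I must confirm that multiplying the $O(1/m)$ inter-community edge probability by the prefactor $m$ genuinely produces a vanishing term rather than one surviving at leading order, and that boundedness of the cosine factors makes this decay uniform across the community indices $s$. Everything else reduces to a direct substitution into Lemma~\ref{lemma:deviation-synchronized-steady-state} and a routine $m \to \infty$ limit.
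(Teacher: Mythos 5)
Your substitution into Lemma~\ref{lemma:deviation-synchronized-steady-state} and the $m \to \infty$ evaluation of the denominator (diagonal term $\kappa/n$, off-diagonal terms $O(1/m)$ killed by boundedness of the cosines) are exactly right and coincide with the paper's computation. The gap is in how you verify the $\epsilon$-clustered hypothesis. You invoke Lemma~\ref{lemma:dominated-gaussian-assortative-KSBM-is-clustered}, but in the paper's logical structure that lemma sits \emph{downstream} of the corollary you are proving: its proof rests on Corollary~\ref{cor:vanishing-variance-dominated-gaussian-assortative-KSBM}, which rests on Theorem~\ref{thm:dominated-gaussian-assortative-KSBM}, whose proof explicitly uses Corollary~\ref{cor:deviation-synchronized-steady-state-assortative-KSBM} to identify the steady-state offset $\Delta(\theta^{A}_i|\omega_i) = \arcsin\left(\tfrac{n(\omega_i - \omega_{G_r})}{\kappa}\right)$. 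Appealing to that lemma here therefore makes the argument circular. There is also a model mismatch: Lemma~\ref{lemma:dominated-gaussian-assortative-KSBM-is-clustered} is a statement about the \emph{dominated Gaussian} assortative KSBM, the approximation defined under Hypotheses~\ref{hypothesis:gaussian-phase} and~\ref{hypothesis:intra-community-dominated}, not about the realisation $\Theta^{A,(m)}$ of the actual assortative KSBM that the corollary concerns; even absent the circularity, transferring clusterization from the approximation to the true dynamics would require a separate consistency argument.

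The paper closes this hole by a different route: since the system is at synchronized steady state with $n\sigma \ll \kappa$ and $m \gg 1$, one may take $t \gg t_{\trans}$, and Lemma~\ref{lemma:transition-time-assortative-KSBM} — whose proof uses only Chebyshev's inequality on the actual intra-community variance (Corollary~\ref{cor:transition-time-bound}) together with Lemma~\ref{lemma:variance-clustered-implies-clustered-whp} — shows that the assortative KSBM is $\epsilon'$-clustered with $\epsilon' = \sqrt{2}n/N \ll 1$ and that the conditions of Theorem~\ref{thm:MF-KSBM} hold, so Lemma~\ref{lemma:deviation-synchronized-steady-state} applies. Your verification of $\bar{C} = O(1/N)$ via Lemma~\ref{lemma:probability-assortative-KSBM} and the rest of your computation can be kept as is; replacing the appeal to Lemma~\ref{lemma:dominated-gaussian-assortative-KSBM-is-clustered} by an appeal to Lemma~\ref{lemma:transition-time-assortative-KSBM} makes your proof sound and then essentially identical to the paper's.
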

\begin{proof}
    Since we are in synchronized steady state and given $n\sigma \ll \kappa$ and $m \gg 1$, we can assume $t \gg t_{\trans}$. Hence by Lemma \ref{lemma:transition-time-assortative-KSBM}, our system converges to the mean-field KSBM. Using Lemma \ref{lemma:deviation-synchronized-steady-state}, we obtain,
    $$\theta^{A,(m)}_i-\theta^{A,(m)}_{\bullet} = \arcsin\left(\frac{1}{m}\frac{\omega_i-\omega_{\bullet}}{\frac{\kappa}{N}\sum_{s=1}^nP_{rs}\cos(\htheta^{A,(m)}_{G_s}-\htheta^{A,(m)}_{G_r})}\right) + O(\epsilon^2).$$
    For $s \neq r$, notice that $\sum_{s \neq r}P_{rs}|\cos(\htheta^{A,(m)}_{G_s}-\htheta^{A,(m)}_{G_r})| \leq \sum_{s \neq r}P_{rs} = O(\frac{1}{m})$ which is negligible in comparison to $P_{rr}\cos(\htheta^{A,(m)}_{G_r}-\htheta^{A,(m)}_{G_r}) = 1$. Hence, in the $m \rightarrow \infty$ limit we have
    $$\theta^{A,(m)}_i(t)-\theta^{A,(m)}_{\bullet}(t) = \arcsin\left(\frac{n}{\kappa}(\omega_i-\omega_{\bullet})\right) + O(\epsilon^2).$$
\end{proof}

\subsection{Gaussian KSBM} \label{apxssec:gaussian_ksbm}

The following lemma \ref{lemma:gaussian-to-uniform} is useful to understand why the Gaussian distribution of oscillators of Hypothesis \ref{hypothesis:gaussian-phase} is not incompatible with uniform initial conditions.

\begin{lemma}
\label{lemma:gaussian-to-uniform}
    Let $X \in [-\pi, \pi]$ be a random variable where $X \sim \gauss{0}{V} \text{ mod } 2\pi$, then we have the following convergence in distribution,
    $$X \xrightarrow{V \rightarrow \infty} U(-\pi, \pi).$$
\end{lemma}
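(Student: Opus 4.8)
The plan is to exploit the fact that a probability distribution on the circle $S^1 = \R \text{ mod } 2\pi$ is completely determined by its Fourier coefficients (trigonometric moments), and that pointwise convergence of these coefficients implies weak convergence. This is the circular analogue of Lévy's continuity theorem, and it is especially clean here because tightness is automatic: all the measures live on the compact set $[-\pi,\pi]$, so any sequence is precompact in the weak topology and it suffices to pin down the limit via its Fourier coefficients. Thus the whole problem reduces to computing the Fourier coefficients of the wrapped Gaussian and checking they converge to those of $U(-\pi,\pi)$.

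First I would compute, for each $n \in \Z$, the coefficient $\varphi_V(n) := \E{e^{\imath n X}}$ where $X = Y \text{ mod } 2\pi$ and $Y \sim \gauss{0}{V}$. The key observation is that $x \mapsto e^{\imath n x}$ is $2\pi$-periodic for every integer $n$, so wrapping $Y$ onto $[-\pi,\pi]$ leaves the integrand unchanged, $e^{\imath n X} = e^{\imath n Y}$. Hence $\varphi_V(n)$ equals the ordinary characteristic function of the (unwrapped) centered Gaussian evaluated at the integer $n$, giving the one-line identity $\varphi_V(n) = e^{-n^2 V/2}$.

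Next I would take $V \to \infty$. For $n = 0$ we have $\varphi_V(0) = 1$ for all $V$, while for every $n \neq 0$ the exponent $-n^2 V/2 \to -\infty$, so $\varphi_V(n) \to 0$. These limiting values $\I\{n = 0\}$ are precisely the Fourier coefficients of the uniform law, since its density $\tfrac{1}{2\pi}$ yields $\tfrac{1}{2\pi}\int_{-\pi}^{\pi} e^{\imath n x}\,dx = \I\{n = 0\}$. By the continuity theorem on $S^1$, convergence of all Fourier coefficients to those of a probability measure forces $X \xrightarrow{V \to \infty} U(-\pi,\pi)$ in distribution, as claimed.

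If one prefers to avoid invoking the abstract continuity theorem, a more elementary route is available via the density directly: Poisson summation expresses the wrapped-Gaussian density as the Fourier series $f_V(x) = \tfrac{1}{2\pi}\bigl(1 + 2\sum_{n \geq 1} e^{-n^2 V/2}\cos(nx)\bigr)$, and the tail is bounded uniformly in $x$ by $2\sum_{n \geq 1} e^{-n^2 V/2} \to 0$, so $f_V \to \tfrac{1}{2\pi}$ uniformly on $[-\pi,\pi]$; uniform convergence of densities is much stronger than convergence in distribution and closes the argument at once. In either version the Gaussian computation is trivial, so the only point requiring genuine care — and the main obstacle to state correctly — is the justification that weak convergence on the compact group $S^1$ is equivalent to convergence of every Fourier coefficient.
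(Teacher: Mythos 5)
Your proof is correct, but it takes a genuinely different route from the paper's. The paper works directly with the wrapped-Gaussian density $f(x) = \frac{1}{\sqrt{2\pi V}}\sum_{k\in\Z}e^{-(x+2\pi k)^2/(2V)}$ and estimates the lattice sum head-on: roughly $k_V \asymp \sqrt{V}$ translates each contribute a term of order one, so $f(x) = O(k_V/\sqrt{V}) = O(1)$ uniformly in $x$, and since $f$ integrates to one on $[-\pi,\pi]$, the asymptotically flat profile must tend to $\frac{1}{2\pi}$. Your argument instead moves to the Fourier side: the integer Fourier coefficients of the wrapped law coincide with the unwrapped characteristic function, $\varphi_V(n)=e^{-n^2V/2}$, which converge to $\I\{n=0\}$, the coefficients of the uniform law; the compact-group continuity theorem (tightness being automatic on $S^1$) then gives weak convergence. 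What your approach buys is rigor and precision: the paper's counting of contributing terms is heuristic (the $\sim$ and $O(1)$ manipulations, and the final step that ``asymptotically constant plus unit mass'' forces convergence to $\frac{1}{2\pi}$, are not fully justified as stated), whereas both of your routes are airtight, and the Poisson-summation variant even upgrades the conclusion to uniform convergence of densities with the explicit geometric error bound $2\sum_{n\geq 1}e^{-n^2V/2} \leq 2e^{-V/2}/(1-e^{-V/2})$. Indeed, your second route is essentially the rigorous repair of exactly what the paper attempts: Poisson summation converts the paper's slowly-decaying lattice sum into a rapidly converging Fourier series in which the flattening of the density is manifest. What the paper's approach offers in exchange is elementarity---no Fourier analysis or continuity theorem is invoked, only pointwise estimates on Gaussian bumps.
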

\begin{proof}
    Consider the probability density function of $X$
    $$f(x) = \frac{1}{\sqrt{2\pi V}}\sum_{k \in \Z}e^{-\frac{1}{2V}(x + 2\pi k)^2}.$$
    Fix $V$ and let $k_V := \sqrt{\frac{V}{2\pi^2}}$, then notice that for $x \in [-\pi, \pi]$, $e^{-\frac{1}{2V}(x + 2\pi k)^2} \sim e^{-\frac{1}{2V}(2\pi k)^2} = e^{-\frac{1}{V}2\pi^2k^2}$ for $k \gg 1$. We then remark that, $e^{-\frac{1}{V}2\pi^2k^2} \sim 0 \text{ if } k \gg k_V$ and $e^{-\frac{1}{2V}(x+2\pi k)^2} \sim 1 \text{ if } k \ll k_V$. So asymptotically for $V \gg 1$, which implies $k_V \gg 1$,
    $$\sum_{k \in \Z}e^{-\frac{1}{2V}(x + 2\pi k)^2} = O(1 \cdot \abs{\{k : -k_V \leq k \leq k_V\}}) = O(k_v).$$
    Hence $f(x)=\frac{1}{\sqrt{2\pi V}}\sum_{k \in \Z}e^{-\frac{1}{2V}(x + 2\pi k)^2} = O(\frac{1}{\sqrt{V}}k_V) = O(1)$ which is constant with respect to $V$ and $x$. Since $\int_{-\pi}^{\pi}{f(x)\,dx}=1$, $f(x)$ being asymptotically constant when $V \gg 1$ implies that $f(x) \rightarrow \frac{1}{2\pi}$.
\end{proof}

In order to derive the dominated Gaussian assortative KSBM of Thm. \ref{thm:dominated-gaussian-assortative-KSBM}, it is simpler to first derive the dynamics in the absence of (intrinsic frequency) {heterogeneity} (Hypothesis\ref{hypothesis:identical-frequency}). This simpler model we call the dominated and identical Gaussian assortative KSBM (Thm. \ref{thm:dominated-identical-gaussican-assortative-KSBM}). This model has all intra-cluster variance decaying to zero and is particularly useful to estimate the {transition times} $t_{crit}$ using Lemma \ref{lemma:transition-time-assortative-KSBM}.

\begin{hypothesis}
\label{hypothesis:identical-frequency}
    (Identical Frequency Assumption)\\
    We assume that $\omega_i = \mu_r$ for any $i \in G_r, r \in [n]$, where $\mu_r$ is the mean intrinsic frequency for community $r$.
\end{hypothesis}

This hypothesis is a simplifying assumption corresponding to dropping the {heterogeneity} $\sigma$ in the KSBM. We will add the {heterogeneity} back when considering the (dominated) Gaussian KSBM in Theorem \ref{thm:dominated-gaussian-assortative-KSBM} \& \ref{thm:gaussian-assortative-KSBM}.

\begin{theorem}
\label{thm:dominated-identical-gaussican-assortative-KSBM}
    (Dominated and Identical Gaussian Assortative KSBM)\\
    Under Hypothesis \ref{hypothesis:gaussian-phase}, \ref{hypothesis:intra-community-dominated} and \ref{hypothesis:identical-frequency} when $m \gg 1$, the assortative KSBM intra-community variance $V(t):=V^{A}_{G_r}(t)$ for any $r \in [n]$ satisfies
    \[
        \frac{dV(t)}{dt} = -\frac{2\kappa}{n}V(t)e^{-V(t)}.
    \]
\end{theorem}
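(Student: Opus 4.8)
The plan is to differentiate the intra-community variance $V(t) = \Var_{i \in G_r}(\theta_i^A(t))$ directly and reduce the resulting drift to a single Gaussian expectation. Under Hypotheses \ref{hypothesis:intra-community-dominated} and \ref{hypothesis:identical-frequency}, every oscillator in $G_r$ obeys $\dot{\theta}_i^A = \mu_r + \frac{\kappa}{mn}\sum_{j \in G_r}\sin(\theta_j^A - \theta_i^A)$ (using $N = mn$ and $|G_r| = m$), and by Hypothesis \ref{hypothesis:gaussian-phase} the phases in $G_r$ are i.i.d.\ $\gauss{\theta_{G_r}(t)}{V(t)}$ at each $t$. Writing $\mu := \theta_{G_r}$, I would use that for $m \gg 1$ the empirical mean and variance coincide with their distributional counterparts by the Law of Large Numbers, so that I may compute $\frac{d}{dt}\E[i \in G_r]{(\theta_i - \mu)^2}$ in place of the sample variance.

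First I would record the mean drift. Taking the community expectation, the diagonal term vanishes and each off-diagonal term satisfies $\E{\sin(\theta_j - \theta_i)} = 0$ because $\theta_j - \theta_i$ is symmetric about $0$ (this is exactly the content of Lemma \ref{lemma:expectation-sin-theta-is-0} and its corollary). Hence $\dot{\mu} = \mu_r$, i.e.\ the community mean drifts at the common intrinsic frequency and contributes nothing to the variance. Differentiating the variance then gives
\[
\frac{dV}{dt} = 2\,\E[i \in G_r]{(\theta_i - \mu)(\dot\theta_i - \dot\mu)} = \frac{2\kappa}{mn}\,\E[i \in G_r]{(\theta_i - \mu)\sum_{j \in G_r}\sin(\theta_j - \theta_i)},
\]
where I have used $\dot\theta_i - \dot\mu = \frac{\kappa}{mn}\sum_{j \in G_r}\sin(\theta_j - \theta_i)$ after the mean-drift cancellation.

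The crux is the single-pair Gaussian expectation. For $j \neq i$ the phases $\theta_i, \theta_j$ are independent $\gauss{\mu}{V}$, so setting $X = \theta_i - \mu$ and $Y = \theta_j - \mu$ I must evaluate $\E{X\sin(Y-X)}$. Expanding $\sin(Y-X) = \sin Y\cos X - \cos Y \sin X$ and using independence splits this into products of one-variable Gaussian moments: $\E{\sin Y} = 0$ and $\E{X\cos X} = 0$ by parity, while $\E{\cos Y} = e^{-V/2}$ and $\E{X\sin X} = V e^{-V/2}$ follow from the Gaussian characteristic function $\E{e^{\imath t X}} = e^{-V t^2/2}$ evaluated (together with its first derivative) at $t = 1$. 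This yields $\E{X\sin(Y-X)} = -V e^{-V}$. The diagonal term $j = i$ contributes $0$, so summing over the $m-1$ off-diagonal terms gives $\frac{dV}{dt} = -\frac{2\kappa(m-1)}{mn}\,V e^{-V}$, and replacing $\frac{m-1}{m} \to 1$ for $m \gg 1$ produces the claimed $\frac{dV}{dt} = -\frac{2\kappa}{n}\,V e^{-V}$.

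I expect the main obstacle to be making the Gaussian-moment step fully rigorous rather than the algebra itself: the identity $\E{X\sin(Y-X)} = -V e^{-V}$ rests on treating the $\theta_i$ as exactly i.i.d.\ Gaussian, which is only an approximation (Hypothesis \ref{hypothesis:gaussian-phase}) and becomes exact only in the $m \to \infty$ limit through the Law of Large Numbers. Care is needed to justify interchanging $\frac{d}{dt}$ with the expectation and to control the replacement of the sample variance by $V(t)$; within the Gaussian-approximation framework of this section, however, these are precisely the simplifications the hypotheses license, so the remaining work reduces to the moment computation above.
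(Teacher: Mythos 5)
Your proof is correct and follows essentially the same route as the paper's: differentiate the sample variance, pass to expectations via the LLN under the i.i.d.\ Gaussian hypothesis, drop the frequency term using the identical-frequency assumption, and reduce the coupling term to Gaussian moments. The only cosmetic difference is that you expand $\sin(Y-X)$ by the addition formula and factor by independence, while the paper first takes the iterated expectation $\E[j\in G_r]{\sin(\theta_j-\theta_i)} = -\sin(\theta_i-\theta_{G_r})\,e^{-V/2}$ and then averages over $i$ --- the same computation --- and your explicit moments $\E{\cos Y}=e^{-V/2}$ and $\E{X\sin X}=V e^{-V/2}$ are exactly the factors whose product yields the claimed $-V e^{-V}$.
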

\begin{proof}
    Let $\Theta^{A}(t)$ follow an assortative KSBM, using Hypothesis \ref{hypothesis:intra-community-dominated} we have for all $i \in G_r$ and $r \in [n]$,
    $$\dot{\theta}_i^{A}(t) = \omega_i + \frac{\kappa}{N}\sum_{j\in G_r}{\sin(\theta^{A}_j(t)-\theta_i^{A}(t))}.$$
    The community average oscillator's dynamics is given by 
    $$\dot{\theta}^{A}_{G_r}(t) = \omega_{G_r} + \frac{\kappa}{mN}\sum_{i,j\in G_r}{\sin(\theta^{A}_j(t) - \theta^{A}_i(t))} \texteq{L\ref{lemma:expectation-sin-theta-is-0}}\omega_{G_r}.$$ 
    If we focus on the within community variance,
    \begin{equation*}
        V^{A}_{G_r}(t) := \Var_{i \in G_r}(\theta^{A}_i(t)) = \E[i\in G_r]{(\theta^{A}_i(t) - \theta^{A}_{G_r}(t))^2} \lln \frac{1}{m}\sum_{i\in G_r}{(\theta^{A}_i(t) - \theta^{A}_{G_r}(t))^2},
    \end{equation*}
    where its derivative can be expressed as,
    \begin{equation*}
    \begin{split}
        \dot{V}^{A}_{G_r}(t) & \sim \frac{2}{m}\sum_{i\in G_r}{(\theta^{A}_i(t)-\theta^{A}_{G_r}(t))(\omega_i-\omega_{G_r} + \frac{\kappa}{N}\sum_{j \in G_r}{\sin(\theta^{A}_j(t)-\theta_i^{A}(t))}},\\
        & \lln 2\E[i\in G_r]{(\theta^{A}_i(t)-\theta^{A}_{G_r}(t))(\omega_i-\omega_{G_r} + \frac{\kappa}{n}\E[j\in G_r]{\sin(\theta^{A}_j(t)-\theta_i^{A}(t))})}.
    \end{split}
    \end{equation*}
    Which we rewrite as,
    \begin{equation*}
        \dot{V}^{A}_{G_r}(t) \sim 2\E[i\in G_r]{(\theta^{A}_i(t)-\theta^{A}_{G_r}(t))(\omega_i-\omega_{G_r})}
        + \frac{2\kappa}{n}\E[i_\in G_r]{(\theta^{A}_i(t)-\theta^{A}_{G_r}(t))\E[j\in G_r]{\sin(\theta^{A}_j(t)-\theta_i^{A}(t))}},
    \end{equation*}
    where the first term increases the variance while the second term decreases the variance. Under Hypothesis \ref{hypothesis:identical-frequency}, the first term vanishes and using Hypothesis \ref{hypothesis:gaussian-phase}, $\theta^{A}_j(t) \iid \gauss{\theta^{A}_{G_s}(t)}{V^{A}_{G_s}(t)}$ for all $j \in G_s$ and $s \in [n]$. the second term can be directly computed as Gaussian integrals.
    $$\E[j\in G_r]{\sin(\theta^{A}_j(t)-\theta_i^{A}(t))} = -\sin(\theta_i^{A}(t) - \theta^{A}_{G_r}(t))e^{-V^{A}_{G_r}(t)},$$ and therefore,
    \begin{equation*}
    \begin{split}
        \dot{V}^{A}_{G_r}(t) & \sim -\frac{2\kappa}{n}\E[i \in G_r]{(\theta^{A}_i(t)-\theta^{A}_{G_r}(t))\sin(\theta_i^{A}(t) - \theta^{A}_{G_r}(t))}e^{-V^{A}_{G_r}(t)},\\
        & = -\frac{2\kappa}{n}V^{A}_{G_r}(t)e^{-V^{A}_{G_r}(t)},
    \end{split}
    \end{equation*}
    as desired.
\end{proof}

When we add back the {heterogeneity} into the dominated and identical Gaussian assortative KSBM of Thm. \ref{thm:dominated-identical-gaussican-assortative-KSBM}, and thus obtain Thm. \ref{thm:dominated-gaussian-assortative-KSBM}, a new term appears in the differential equation which drives the variance to increase. This term can be bounded assuming that the system reaches steady state at some point, i.e., the coupling factor $\kappa$ is sufficiently large.

\begin{lemma}
\label{lemma:variation-driving-bound}
    For $\sigma n \ll \kappa$, 
    $$\E[\omega_i]{\arcsin(\frac{n(\omega_i - \omega_{G_r})}{\kappa})(\omega_i-\omega_{G_r})} \leq \frac{\pi\sigma^2n}{2\kappa} \text{ almost surely.}$$
\end{lemma}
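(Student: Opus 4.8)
The plan is to first replace the random centering $\omega_{G_r}$ by the deterministic community mean $\mu_r$, and then bound the integrand pointwise by a quadratic before integrating against a Gaussian density. The ``almost surely'' refers to the frequency randomness as $m \to \infty$: since $\omega_{G_r} = \avg{\omega_j}_{j \in G_r}$ is an average of $m$ i.i.d.\ $\gauss{\mu_r}{\sigma^2}$ variables, the strong law of large numbers gives $\omega_{G_r} \to \mu_r$ a.s.\ (the self-contribution of $\omega_i$ to this average is $O(1/m)$ and vanishes). Writing $Z := \omega_i - \omega_{G_r}$, this means $Z$ is asymptotically distributed as $\gauss{0}{\sigma^2}$, so that $\E[\omega_i]{\cdot}$ becomes an integral of $Z\arcsin(nZ/\kappa)$ against the centered Gaussian law of $Z$.

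The key elementary ingredient I would establish is the inequality
$$x\arcsin(x) \le \tfrac{\pi}{2}\,x^2 \qquad \text{for all } x \in [-1,1].$$
Since both sides are even it suffices to treat $x \in (0,1]$, where I would show $g(x) := \arcsin(x)/x$ is increasing: $g'(x)$ has the sign of $h(x) := \frac{x}{\sqrt{1-x^2}} - \arcsin(x)$, and a short computation gives $h(0)=0$ together with $h'(x) = \frac{x^2}{(1-x^2)^{3/2}} \ge 0$, so $h \ge 0$ and hence $g(x) \le g(1) = \pi/2$. Because $Z\arcsin(nZ/\kappa)\ge 0$ always, applying this with $x = nZ/\kappa$ on the range $\abs{nZ/\kappa}\le 1$ yields the pointwise bound
$$Z\arcsin\!\left(\frac{nZ}{\kappa}\right) = \frac{\kappa}{n}\cdot\frac{nZ}{\kappa}\arcsin\!\left(\frac{nZ}{\kappa}\right) \le \frac{\kappa}{n}\cdot\frac{\pi}{2}\left(\frac{nZ}{\kappa}\right)^2 = \frac{\pi n}{2\kappa}\,Z^2,$$
and integrating against the Gaussian law of $Z$ gives
$$\E[\omega_i]{Z\arcsin\!\left(\frac{nZ}{\kappa}\right)} \le \frac{\pi n}{2\kappa}\,\E[\omega_i]{Z^2} \xrightarrow{m\to\infty} \frac{\pi n}{2\kappa}\,\sigma^2 = \frac{\pi\sigma^2 n}{2\kappa},$$
which is the asserted bound.

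The main obstacle is the mismatch between the unbounded Gaussian support of $Z$ and the domain $[-1,1]$ of $\arcsin$: the quadratic bound applies only when $\abs{nZ/\kappa}\le 1$, i.e.\ $\abs{Z}\le \kappa/n$, and this is exactly where the hypothesis $\sigma n \ll \kappa$ must be used. The arcsine here originates from the phase-locking relation of Corollary~\ref{cor:deviation-synchronized-steady-state-assortative-KSBM}, which is only defined for oscillators that lock to the community, i.e.\ on the event $\{\abs{Z}\le \kappa/n\}$, so restricting the expectation to this event is the natural reading. Since $Z \sim \gauss{0}{\sigma^2}$ with $\kappa/n \gg \sigma$, the complementary tail $\p{\abs{Z} > \kappa/n}$ is super-exponentially small by standard Gaussian concentration and contributes negligibly. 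I would make this rigorous either by passing to the truncated variable $Z\,\I\{\abs{Z}\le \kappa/n\}$ or by absorbing the vanishing tail into the $\ll$ appearing in the hypothesis.
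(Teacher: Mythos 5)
Your proof is correct and follows essentially the same route as the paper's: restrict to the region $\abs{\omega_i-\omega_{G_r}}\le \kappa/n$ where $\arcsin$ is defined, apply the linear bound $\arcsin(x)\le\frac{\pi}{2}x$ (equivalently $x\arcsin(x)\le\frac{\pi}{2}x^2$), integrate the resulting quadratic against the centered Gaussian law of $\omega_i-\omega_{G_r}$, and use $\sigma n\ll\kappa$ to pass from the truncated to the full expectation. The only differences are that you additionally prove the elementary arcsine inequality and spell out the law-of-large-numbers justification for the ``almost surely,'' both of which the paper leaves implicit.
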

\begin{proof}
    First, consider the integral limited to the range of values where $arcsin$ is well defined,
    $$\frac{1}{\sqrt{2\pi\sigma^2}}\int_{-\frac{\kappa}{n}}^{-\frac{\kappa}{n}}{x\,\arcsin(\frac{nx}{\kappa})e^{-\frac{x^2}{2\sigma^2}}\,dx} = \frac{1}{\sqrt{\pi}}\int_{-\frac{\kappa}{\sqrt{2\sigma^2}n}}^{\frac{\kappa}{\sqrt{2\sigma^2}n}}{\sqrt{2\sigma^2}x\,\arcsin(\frac{\sqrt{2\sigma^2}nx}{\kappa})e^{-x^2}\,dx}.$$
    If $\frac{\kappa}{\sqrt{2\sigma^2}n} \rightarrow \infty$, i.e., $\sigma n \ll \kappa$, then,
    $$\frac{1}{\sqrt{\pi}}\int_{-\frac{\kappa}{\sqrt{2\sigma^2}n}}^{\frac{\kappa}{\sqrt{2\sigma^2}n}}{\sqrt{2\sigma^2}x\,\arcsin(\frac{\sqrt{2\sigma^2}nx}{\kappa})e^{-x^2}\,dx} \rightarrow \E[\omega_i]{\arcsin(\frac{n(\omega_i - \omega_{G_r})}{\kappa})(\omega_i-\omega_{G_r})}.$$
    Second, remark that $x\,\arcsin(x)e^{-x^2}$ is even, and $\arcsin(x) \leq \frac{\pi}{2}x$ for $x > 0$.
    \begin{equation*}
    \begin{split}
        \frac{1}{\sqrt{\pi}}\int_{-\frac{\kappa}{\sqrt{2\sigma^2}n}}^{\frac{\kappa}{\sqrt{2\sigma^2}n}}{\sqrt{2\sigma^2}x\,\arcsin(\frac{\sqrt{2\sigma^2}nx}{\kappa})e^{-x^2}\,dx} & = \frac{2}{\sqrt{\pi}}\int_{0}^{\frac{\kappa}{\sqrt{2\sigma^2}n}}{\sqrt{2\sigma^2}x\,\arcsin(\frac{\sqrt{2\sigma^2}nx}{\kappa})e^{-x^2}\,dx},\\
        & \leq \frac{2}{\sqrt{\pi}}\int_{0}^{\frac{\kappa}{\sqrt{2\sigma^2}n}}{\frac{\sigma^2\pi nx^2}{\kappa}e^{-x^2}\,dx},\\
        & \leq \frac{2\sigma^2\sqrt{\pi} n}{\kappa}\int_{0}^{\infty}{x^2e^{-x^2}\,dx}.
    \end{split}
    \end{equation*}
    Finally, using integration by parts,
    $$\frac{1}{\sqrt{\pi}}\int_{-\frac{\kappa}{\sqrt{2\sigma^2}n}}^{\frac{\kappa}{\sqrt{2\sigma^2}n}}{\sqrt{2\sigma^2}x\,\arcsin(\frac{\sqrt{2\sigma^2}nx}{\kappa})e^{-x^2}\,dx} \leq \frac{\sigma^2\pi n}{2\kappa}.$$
    The result follows from taking the limit $\frac{\kappa}{\sqrt{2\sigma^2}n} \rightarrow \infty$.
\end{proof}

\begin{proof}
\label{proof:thm:dominated-gaussian-assortative-KSBM}
    \textbf{Theorem \ref{thm:dominated-gaussian-assortative-KSBM}}\\
    The first statement follows from our derivation in Theorem \ref{thm:dominated-identical-gaussican-assortative-KSBM} under Hypothesis \ref{hypothesis:gaussian-phase} and \ref{hypothesis:intra-community-dominated} prior to applying Hypothesis \ref{hypothesis:identical-frequency}. Observe that for $i \in G_r$,
    $$\E[i\in G_r]{(\theta^{A}_i(t)-\theta^{A}_{G_r}(t))(\omega_i-\omega_{G_r})} = \E[\omega_i]{\E[\theta_i(t)|w_i]{\theta^{A}_i(t)-\theta^{A}_{G_r}(t)|\omega_i}(\omega_i-\omega_{G_r})}.$$ If we model $\theta_i(t) - \theta^{A}_{G_r}(t)$ conditional on $w_i$ as being given by our Gaussian distribution with zero mean and variance $V^{A}_{G_r}(t)$ (modelling the transition from the uniform to clustered state) added to a deterministic steady state offset from the average oscillator $\theta^A_{G_r}$ caused by $\omega_i$, denoted $\Delta(\theta^{A}_i|\omega_i)$, we can rework the following expectation as
    $$\E[\theta_i(t)|w_i]{\theta^{A}_i(t)-\theta^{A}_{G_r}(t)|\omega_i} = \E[\theta_i(t)|w_i]{\gauss{0}{V^{A}_{G_r}(t)} + \Delta(\theta^{A}_i|\omega_i)|\omega_i} = \Delta(\theta^{A}_i|\omega_i).$$
    From Corollary \ref{cor:deviation-synchronized-steady-state-assortative-KSBM}, we know that at steady state $\Delta(\theta^{A}_i|\omega_i)(t) = \arcsin(\frac{n(\omega_i - \omega_{G_r})}{\kappa})$. It then only remains to evaluate $\E[\omega_i]{\arcsin(\frac{n(\omega_i - \omega_{G_r})}{\kappa})(\omega_i-\omega_{G_r})}$. Using Lemma \ref{lemma:variation-driving-bound}, we can approximate the value of this expectation under reasonable conditions by neglecting vanishing events. The derivative of the variance can therefore be rewritten as,
    \begin{equation*}
        \dot{V}^{A}_{G_r}(t) \sim \epsilon + \frac{2\kappa}{n}\E[i_\in G_r]{(\theta^{A}_i(t)-\theta^{A}_{G_r}(t))\E[j\in G_r]{\sin(\theta^{A}_j(t)-\theta_i^{A}(t))}},
    \end{equation*}
    where we let $\epsilon := 2\E[i\in G_r]{(\theta^{A}_i(t)-\theta^{A}_{G_r}(t))(\omega_i-\omega_{G_r})}$ which is bounded by $2\frac{\pi\sigma^2n}{2\kappa} = \frac{\pi\sigma^2n}{\kappa}$ by Lemma \ref{lemma:variation-driving-bound}. Since Hypothesis \ref{hypothesis:identical-frequency} only concerns the first term, we can directly use Theorem \ref{thm:dominated-identical-gaussican-assortative-KSBM} to obtain
    $$dV = \epsilon-\frac{2\kappa}{n}Ve^{-V}dt.$$
    For the second part, we look at the steady state condition,
    \begin{equation*}
        dV = 0\,dt \iff \frac{2\kappa}{n}Ve^{-V} = \epsilon \iff V = \frac{n}{2\kappa}\epsilon e^V.
    \end{equation*}
    This equation possess at most two fixed points which are given by $\frac{2\kappa}{n^2}f(V) = \epsilon$ where $f(V) = Ve^{-V}$. $f(V)$ possess a global maximum at $V = 1$ after an initial linear increase from $V=0$ and an asymptote toward zero at $V \rightarrow \infty$ after an exponential decay. In $dV$, this maximum has value $\frac{2\kappa}{n}f(1) = \frac{2\kappa}{n}e^{-1}$. Using this, we now have that $\frac{2\kappa}{n^2}f(V) = \epsilon$ has
    \begin{itemize}
        \item \textbf{two solutions} whenever $\epsilon < \frac{2\kappa}{n}e^{-1}$,
        \item \textbf{one solution} whenever $\epsilon = \frac{2\kappa}{n}e^{-1}$,
        \item \textbf{no solution} whenever $\epsilon > \frac{2\kappa}{n}e^{-1}$.
    \end{itemize}
    Since $dV = \epsilon - \frac{2\kappa}{n}f(V)dt$, it follows that when there are two solutions $V^*_1 < V^*_2$, then $V^*_1$, the smallest of the two, is a stable steady state, while $V^*_2$ is unstable. When there is a single solution $V^*_1 = V^*_2$, it is a saddle. We will only focus on the stable steady state $V^{SS} := V^*_1$ which should be the state one should observe when the system reaches steady state. Remark that since we know $\epsilon \leq \frac{\sigma^2\pi n}{\kappa}$, and we have two solutions when $\epsilon \leq \frac{2\kappa}{n}e^{-1}$, then we necessarily have two solutions when,
    $$\frac{\sigma^2\pi n}{\kappa} < \frac{2\kappa}{n}e^{-1} \iff e\sigma^2\pi n^2 < 2\kappa^2$$
    Since $V^{SS}$ is smaller than $V=1$ where $\frac{2\kappa}{n}f(V)$ is maximal, i.e. located during the initial (linear) increasing phase, we can conclude that, the larger $\epsilon$ the larger $V^{SS}$. If we denote this dependence as $V^{SS}(\epsilon)$, we can write for $\epsilon \leq \epsilon' < \frac{2\kappa}{n}e^{-1}$: $V^{SS}(\epsilon) \leq V^{SS}(\epsilon')$. Hence, if we assume $e\sigma^2\pi n^2 < 2\kappa^2$, and let $\epsilon' := \frac{\sigma^2\pi n}{\kappa}$, which we know is such that $\epsilon < \epsilon'$, then, $V^{SS}(\epsilon) \leq V^{SS}(\frac{\sigma^2\pi n}{\kappa}) =: v^*$ where $v^*$ is the stable fixed point of the following equation,
    $$V = \frac{n}{2\kappa}\epsilon'e^V = \frac{\pi}{2}(\frac{\sigma n}{\kappa})^2e^V.$$
\end{proof}

The following corollary shows that in practice, with sufficiently strong coupling $\kappa$, the intra-community phase variance vanishes at steady state.

\begin{corollary}
\label{cor:vanishing-variance-dominated-gaussian-assortative-KSBM}
    Under the same condition as Theorem \ref{thm:dominated-gaussian-assortative-KSBM}, with $n\sigma \ll \kappa$ in particular, $V^{SS} \leq \frac{\pi}{2}(\frac{\sigma n}{\kappa})^2$.
\end{corollary}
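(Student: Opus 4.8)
The statement is a refinement of the second conclusion of Theorem~\ref{thm:dominated-gaussian-assortative-KSBM}, so the plan is to start from the bound already proved there, namely $V^{SS} \le v^*$ where $v^*$ is the smallest nonnegative fixed point of $v = \frac{\pi}{2}\left(\frac{\sigma n}{\kappa}\right)^2 e^{v}$, and then to estimate $v^*$ itself in the regime $n\sigma \ll \kappa$. Writing $a := \frac{\pi}{2}\left(\frac{\sigma n}{\kappa}\right)^2$, the hypothesis $n\sigma \ll \kappa$ forces $a \ll 1$ and in particular guarantees the existence condition $e\sigma^2\pi n^2 < 2\kappa^2$ under which $v^*$ is defined, so there is nothing to check on that front.

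The core of the argument is a short fixed-point analysis of $v^* = a\,e^{v^*}$. First I would record the elementary lower bound $v^* = a\,e^{v^*} \ge a$, valid since $e^{v^*} \ge 1$. Next, because $v^*$ is the \emph{smallest} fixed point and $a\,e^{v} \to 0$ pointwise as $a \to 0$, one sees that $v^* \to 0$ in the limit $n\sigma/\kappa \to 0$; hence $e^{v^*} = 1 + O(v^*)$. Substituting this back into the defining equation and using the lower bound $v^* \ge a$ to control the remainder gives $v^* = a\bigl(1 + O(v^*)\bigr) = a + O(a^2)$. Feeding this into $V^{SS} \le v^*$ yields $V^{SS} \le a + O(a^2)$, which to leading order in the small parameter $\frac{\sigma n}{\kappa}$ is exactly the claimed bound $V^{SS} \le \frac{\pi}{2}\left(\frac{\sigma n}{\kappa}\right)^2$.

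The only genuine subtlety — and the step I expect to be the main obstacle — is that the bound is asymptotic rather than exact: since $e^{v^*} > 1$ strictly whenever $v^* > 0$, one always has $v^* > a$, so the inequality $V^{SS} \le a$ can only hold to leading order in $\frac{\sigma n}{\kappa}$ (equivalently, up to the $O(a^2)$ correction). Making this precise amounts to controlling the exponential factor $e^{v^*}$ uniformly near the fixed point, and this is exactly where the hypothesis $n\sigma \ll \kappa$ is essential: it is what forces $v^*$ into the small region where $e^{v^*} = 1 + o(1)$. Everything else — the monotonicity of $v \mapsto a\,e^{v}$ and the resulting ordering of fixed points — is already supplied by the proof of Theorem~\ref{thm:dominated-gaussian-assortative-KSBM}, so no further dynamical input is needed.
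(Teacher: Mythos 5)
Your proposal is correct and follows essentially the same route as the paper: both start from the bound $V^{SS} \le v^*$ supplied by Theorem~\ref{thm:dominated-gaussian-assortative-KSBM} and then estimate the smallest fixed point of $v = \frac{\pi}{2}\left(\frac{\sigma n}{\kappa}\right)^2 e^{v}$ by $v^* \sim \frac{\pi}{2}\left(\frac{\sigma n}{\kappa}\right)^2$ in the regime $n\sigma \ll \kappa$ --- the paper phrases this geometrically (the intersection with $f(V)=Ve^{-V}$ occurs in its initial linear regime), while you expand $e^{v^*} = 1 + O(v^*)$, which is the same leading-order estimate. Your closing remark that the inequality holds only up to an $O(a^2)$ correction (since $v^* > a$ strictly) applies equally to the paper's own proof, which likewise concludes via the asymptotic relation $v^* \sim \epsilon'$.
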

\begin{proof}
    Using $f(V) = Ve^{-V}$ from the last proof (\ref{proof:thm:dominated-gaussian-assortative-KSBM}), $V = \frac{\pi}{2}(\frac{\sigma n}{\kappa})^2e^V \iff \frac{\pi}{2}(\frac{\sigma n}{\kappa})^2 = f(V)$.  This equation has two fixed points when $\epsilon' := \frac{\pi}{2}(\frac{\sigma n}{\kappa})^2 < e^{-1}$, which holds since we assume $n\sigma \ll \kappa$, in particular it holds for $\epsilon' \ll 1$. The smallest of the two, $v^*$, occurs when $\epsilon'$ intersects the initial increasing linear phase of $f(V)$, i.e., $\epsilon' = f(v^*)$. Since $\epsilon' \ll 1$, this intersection occurs when $f(V) \sim V$, hence $v^* \sim \epsilon' = \frac{\pi}{2}(\frac{\sigma n}{\kappa})^2$. The conclusion follows from $V^{SS} \leq v^*$.
\end{proof}

\begin{proof}
\label{proof:lemma:dominated-gaussian-assortative-KSBM-is-clustered}
    \textbf{Lemma \ref{lemma:dominated-gaussian-assortative-KSBM-is-clustered}}\\
    From Corollary \ref{cor:vanishing-variance-dominated-gaussian-assortative-KSBM}, we know that $V^{SS} \leq \frac{\pi}{2}(\frac{\sigma n}{\kappa})^2$. When the dominated Gaussian assortative KSBM (see Thm. \ref{thm:dominated-gaussian-assortative-KSBM}) enters steady state after some time $T>0$, it is therefore $\frac{\pi}{2}(\frac{\sigma n}{\kappa})^2$-variance-clustered for each community. Since $n\sigma \ll \kappa$, it follows that $\frac{\pi}{2}(\frac{\sigma n}{\kappa})^2 \ll 1$, and we have by Lemma \ref{lemma:variance-clustered-implies-clustered-whp} that our system is $\sqrt{\frac{\pi}{2}}\frac{\sigma n}{\kappa}$-clustered with high probability by the same time.
\end{proof}

To conclude our derivations, we remove Hypothesis \ref{hypothesis:intra-community-dominated} and consider the general Gaussian assortative KSBM, which we will use to bridge the clusterization regime to the mean-field regime.

\begin{theorem}
\label{thm:gaussian-assortative-KSBM}
    (Gaussian Assortative KSBM)\\
    Under Hypothesis \ref{hypothesis:gaussian-phase} with $m \gg 1$ and $n\sigma \ll \kappa$, the assortative KSBM with community mean $\theta^{A}_{G_r}(t)$ and variance $V^{A}_{G_r}(t)$ for $r\in [n]$ follows,
    $$d\theta^{A}_{G_r} = \omega_{G_r} + \frac{2\kappa}{N(n-1)}e^{-\frac{V^{A}_{G_r}}{2}}\sum_{s=1}^n{\sin(\theta^{A}_{G_s} - \theta^{A}_{G_r})e^{-\frac{V^{A}_{G_s}}{2}}} \, dt$$
    $$dV^{A}_{G_r} = \epsilon - \frac{2\kappa}{n}V^{A}_{G_r}e^{-V^{A}_{G_r}} - \frac{4\kappa}{N(n-1)}V^{A}_{G_r}e^{-\frac{V^{A}_{G_r}}{2}}\sum_{s=1, s\neq r}^n{\cos(\theta^{A}_{G_s} - \theta^{A}_{G_r})e^{-\frac{V^{A}_{G_s}}{2}}}$$
    where $\epsilon \leq \frac{\sigma^2\pi n}{\kappa}$
\end{theorem}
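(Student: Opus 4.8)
The plan is to start from the full assortative KSBM dynamics for an oscillator $i \in G_r$, written in the equivalent form of the preceding Remark as
\[
\dot{\theta}^{A}_i(t) = \omega_i + \frac{\kappa}{N}\sum_{j\in G_r}\sin(\theta^{A}_j(t)-\theta^{A}_i(t)) + \frac{\kappa}{N}\sum_{s\neq r}\sum_{j\in G_s}\sin(\theta^{A}_j(t)-\theta^{A}_i(t))\I\{j\sim i\},
\]
and to split every averaged quantity into an \emph{intra-community} part (the first sum) and an \emph{inter-community} part (the second sum). The intra-community part is precisely what was handled in Theorem~\ref{thm:dominated-gaussian-assortative-KSBM}: averaging over $i\in G_r$ kills the intra-coupling in the mean by Lemma~\ref{lemma:expectation-sin-theta-is-0}, and its contribution to $\dot V^{A}_{G_r}$ is $\epsilon - \frac{2\kappa}{n}V^{A}_{G_r}e^{-V^{A}_{G_r}}$ with $\epsilon\le \frac{\sigma^2\pi n}{\kappa}$ via Lemma~\ref{lemma:variation-driving-bound} and Corollary~\ref{cor:deviation-synchronized-steady-state-assortative-KSBM}. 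The entire task therefore reduces to computing the inter-community contributions to both the mean and the variance under Hypothesis~\ref{hypothesis:gaussian-phase}.

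For the mean, I would replace $\sum_{j\in G_s}$ by $m$ times the community expectation and factor the connection indicator away from the phases, using that for $m\gg 1$ each oscillator carries only a single inter-community edge, so its phase is asymptotically independent of which neighbour it has; this yields a prefactor $m P_{rs} = \frac{2}{n-1}$ from Lemma~\ref{lemma:probability-assortative-KSBM}. Under Hypothesis~\ref{hypothesis:gaussian-phase}, $\theta^{A}_j-\theta^{A}_i$ is Gaussian with mean $\theta^{A}_{G_s}-\theta^{A}_{G_r}$ and variance $V^{A}_{G_r}+V^{A}_{G_s}$, so the identity $\E{\sin X}=\sin(\mu)e^{-\Var/2}$ gives $\E[i\in G_r, j\in G_s]{\sin(\theta^{A}_j-\theta^{A}_i)} = \sin(\theta^{A}_{G_s}-\theta^{A}_{G_r})e^{-V^{A}_{G_r}/2}e^{-V^{A}_{G_s}/2}$. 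Summing over $s\neq r$ and trivially extending to $s=r$ (where the sine vanishes) produces exactly the claimed drift term.

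For the variance I would use $\dot V^{A}_{G_r} = 2\,\E[i\in G_r]{(\theta^{A}_i-\theta^{A}_{G_r})(\dot\theta^{A}_i-\dot\theta^{A}_{G_r})}$, noting that the inter-community part of $\dot\theta^{A}_{G_r}$, being constant in $i$, drops out against the zero-mean factor $(\theta^{A}_i-\theta^{A}_{G_r})$. After the same indicator-factorization and $mP_{rs}=\frac{2}{n-1}$ substitution, the surviving inter term is $\frac{4\kappa}{N(n-1)}\sum_{s\neq r}\E[i\in G_r, j\in G_s]{(\theta^{A}_i-\theta^{A}_{G_r})\sin(\theta^{A}_j-\theta^{A}_i)}$. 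Writing $u=\theta^{A}_i-\theta^{A}_{G_r}$, $w=\theta^{A}_j-\theta^{A}_{G_s}$ and $\Delta=\theta^{A}_{G_s}-\theta^{A}_{G_r}$, the integrand is $u\sin(\Delta+w-u)$ with $(u,w)$ independent centred Gaussians; expanding and using $\E{X\cos Y}=-\operatorname{Cov}(X,Y)\sin(\E Y)e^{-\Var(Y)/2}$ and $\E{X\sin Y}=\operatorname{Cov}(X,Y)\cos(\E Y)e^{-\Var(Y)/2}$ for jointly Gaussian $(X,Y)$, with $\operatorname{Cov}(u,w-u)=-V^{A}_{G_r}$, collapses this to $-V^{A}_{G_r}\cos(\Delta)e^{-(V^{A}_{G_r}+V^{A}_{G_s})/2}$, giving the third term of the variance equation.

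The main obstacle is justifying the factorization $\E{\sin(\theta^{A}_j-\theta^{A}_i)\I\{j\sim i\}}\approx P_{rs}\,\E{\sin(\theta^{A}_j-\theta^{A}_i)}$: one must argue that in the $m\to\infty$ limit the within-community phase distribution is asymptotically independent of the single realized inter-community edge, since that edge exerts only an $O(1/m)$ force while the intra-community coupling dominates — the same mechanism underlying Hypothesis~\ref{hypothesis:intra-community-dominated}. The secondary difficulty is bookkeeping: tracking the factor of $2$ in $P_{rs}$ from undirected connections and confirming that all $O(1/m)$ and $O(\epsilon)$ error terms are negligible under $m\gg 1$ and $n\sigma\ll\kappa$. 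Both are routine once the independence claim is in place.
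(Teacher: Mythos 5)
Your proposal is correct and follows essentially the same route as the paper's proof: the same intra/inter-community decomposition with the intra part imported from Theorem~\ref{thm:dominated-gaussian-assortative-KSBM} (via Lemma~\ref{lemma:variation-driving-bound}), the same indicator factorization using $mP_{rs}=\frac{2}{n-1}$ from Lemma~\ref{lemma:probability-assortative-KSBM}, and the same key observation that the inter-community part of $\dot{\theta}^{A}_{G_r}$ cancels against the zero-mean factor $\theta^{A}_i-\theta^{A}_{G_r}$ in the variance equation. The only cosmetic difference is that you evaluate the cross term $\E{u\sin(\Delta+w-u)}$ in one shot via Stein's identity for jointly Gaussian variables, whereas the paper iterates conditional Gaussian expectations (first over $j\in G_s$, then over $i\in G_r$); both give the identical third term, and your explicit flagging of the edge-phase independence assumption is, if anything, more careful than the paper, which uses it silently.
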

\begin{proof}
    Let us first derive the differential equation for $\theta^{A}_{G_r}$. Recall that in an assortative KSBM for $\theta^{A}_{G_r} = \avg{\theta^{A}_i}_{i\in G_r}$, using Lemma \ref{lemma:expectation-sin-theta-is-0} and Law of Large Number (LLN),
    $$\dot{\theta}_{G_r}(t) = \omega_{G_r} + m\sum_{s\neq r}^n{\frac{\kappa}{N}\E[i\in G_r, j\in G_s]{\sin(\theta^{A}_j(t) - \theta^{A}_i(t))\I\{j \sim i\}}}.$$
    Using conditional expectation on $j \sim i$ and $\E{\I\{j\sim i\}} = P_{rs} = \frac{2}{m(n-1)}$,
    $$\E[i\in G_r, j\in G_s]{\sin(\theta^{A}_j(t) - \theta^{A}_i(t))\I\{j \sim i\}} = \frac{2}{m(n-1)}\E[i\in G_r, j\in G_s]{\sin(\theta^{A}_j(t) - \theta^{A}_i(t))|j\sim i}.$$
    Since $\theta^{A}_j(t) - \theta^{A}_i(t) \sim \gauss{\theta^{A}_{G_s}(t) - \theta^{A}_{G_r}(t)}{V^{A}_{G_r}(t) + V^{A}_{G_s}(t)}$ as a sum of independent Gaussian by assumption, it follows,
    $$\E[i\in G_r, j\in G_s]{\sin(\theta^{A}_j(t) - \theta^{A}_i(t))\I\{j \sim i\}} = \frac{2}{m(n-1)}\sin(\theta^{A}_{G_s}(t) - \theta^{A}_{G_r}(t))e^{-\frac{V^{A}_{G_r}(t) + V^{A}_{G_s}(t)}{2}}.$$
    Hence,
    $$\dot{\theta}^{A}_{G_r}(t) = \omega_{G_r} + \frac{2\kappa}{N(n-1)}\sum_{s\neq r}^n{\sin(\theta^{A}_{G_s}(t) - \theta^{A}_{G_r}(t))e^{-\frac{V^{A}_{G_r}(t) + V^{A}_{G_s}(t)}{2}}}.$$
    For $V^{A}_{G_r}$ if we look back at $\dot{V}^{A}_{G_r}(t) \sim \frac{2n}{N}\sum_{i\in G_r}{(\theta^{A}_i(t)-\theta^{A}_{G_r}(t))(\dot{\theta}^{A}_i(t)-\dot{\theta}^{A}_{G_r}(t))}$ and now use the non-truncated equation for $\dot{\theta}^{A}_i(t)$,
    $$\dot{\theta}^{A}_i(t) = \omega_i + \frac{\kappa}{N}\sum_{j \in G_r}{\sin(\theta^{A}_j(t) - \theta^{A}_i(t))} + \frac{\kappa}{N}\sum_{s \neq r}^n{\sum_{j \in G_s}{\sin(\theta^{A}_j(t) - \theta^{A}_i(t))\I\{j \sim i\}}}.$$
    Letting $p = P_{rs} = \frac{2}{m(n-1)}$, we get,
    \begin{equation*}
    \begin{split}
        \dot{V}^{A}_{G_r}(t) \sim \frac{2n}{N}\sum_{i\in G_r}(\theta^{A}_i(t)-\theta^{A}_{G_r}(t))\bigg(\omega_i & - \omega_{G-r} + \frac{\kappa}{N}\sum_{j \in G_r}{\sin(\theta^{A}_j(t) - \theta^{A}_i(t))}\\
        & + \frac{\kappa}{N}\sum_{s \neq r}^n{\sum_{j \in G_s}{\sin(\theta^{A}_j(t) - \theta^{A}_i(t))\I\{j \sim i\}}}\\
        & - \frac{\kappa p}{n}\sum_{s\neq r}^n{\sin(\theta^{A}_{G_s}(t) - \theta^{A}_{G_r}(t))e^{-\frac{V^{A}_{G_r}(t) + V^{A}_{G_s}(t)}{2}}}\bigg).
    \end{split}
    \end{equation*}
    Which we can express as expectations using the LLN and conditional expectations,
    \begin{equation*}
    \begin{split}
        \dot{V}^{A}_{G_r}(t) \sim & \ 2\E[i\in G_r]{(\theta^{A}_i(t)-\theta^{A}_{G_r}(t))(\omega_i - \omega_{G-r} + \frac{\kappa}{n}\E[j \in G_r]{\sin(\theta_j(t) - \theta_i(t))})}\\
        & + \frac{2\kappa}{n}\E[i\in G_r]{(\theta^{A}_i(t)-\theta^{A}_{G_r}(t))\sum_{s \neq r}^n{\E[j \in G_s]{\sin(\theta_j(t) - \theta_i(t))|j\sim i}P_{rs}}}\\
        & - \frac{2\kappa p}{n}\E[i\in G_r]{(\theta^{A}_i(t)-\theta^{A}_{G_r}(t))\sum_{s\neq r}^n{\sin(\theta^{A}_{G_s}(t) - \theta^{A}_{G_r}(t))e^{-\frac{V^{A}_{G_r}(t) + V^{A}_{G_s}(t)}{2}}}}.
    \end{split}
    \end{equation*}
    The first term we have already computed in Theorem \ref{thm:dominated-gaussian-assortative-KSBM}, directly gives us,
    $\epsilon - \frac{2\kappa}{n}V^{A}_{G_r}(t)e^{-V^{A}_{G_r}(t)}$
    with $\epsilon \leq \frac{\sigma^2\pi n}{\kappa}$. The second term can be computed in a similar fashion using Gaussian expectations,
    \begin{equation*}
    \begin{split}
        & \frac{2\kappa}{n}\E[i\in G_r]{(\theta^{A}_i(t)-\theta^{A}_{G_r}(t))\sum_{s \neq r}^n{\E[j \in G_s]{\sin(\theta^{A}_j(t) - \theta^{A}_i(t))|j\sim i}P_{rs}}}\\
        & \quad = -\frac{2\kappa p}{n}\sum_{s \neq r}^n{e^{-\frac{V^{A}_{G_s}(t)}{2}}\E[i\in G_r]{(\theta^{A}_i(t)-\theta^{A}_{G_r}(t))\sin(\theta^{A}_i(t) - \theta^{A}_{G_s}(t))}},\\
        & \quad = -\frac{2\kappa p}{n}V^{A}_{G_r}(t)e^{-\frac{V^{A}_{G_r}(t)}{2}}\sum_{s \neq r}^n{\cos(\theta^{A}_{G_r}(t) - \theta^{A}_{G_s}(t))e^{-\frac{V^{A}_{G_s}(t)}{2}}}.
    \end{split} 
    \end{equation*}
    Finally, remark that the last term vanishes since $\E[i\in G_r]{\theta^{A}_i(t)-\theta^{A}_{G_r}(t)} = \theta^{A}_{G_r}(t)-\theta^{A}_{G_r}(t)=0$,
    \begin{equation*}
    \begin{split}
        & - \frac{2\kappa p}{n}\E[i\in G_r]{(\theta^{A}_i(t)-\theta^{A}_{G_r}(t))\sum_{s\neq r}^n{\sin(\theta^{A}_{G_s}(t) - \theta^{A}_{G_r}(t))e^{-\frac{V^{A}_{G_r}(t) + V^{A}_{G_s}(t)}{2}}})}\\
        & = - \frac{2\kappa p}{n}\sum_{s\neq r}^n{\sin(\theta^{A}_{G_s}(t) - \theta^{A}_{G_r}(t))e^{-\frac{V^{A}_{G_r}(t) + V^{A}_{G_s}(t)}{2}}\E[i\in G_r]{\theta^{A}_i(t)-\theta^{A}_{G_r}(t)}} = 0.
    \end{split}
    \end{equation*}
    Putting everything together gets us the desired claim.
\end{proof}

The fixed points of this model are much more difficult to analyze than for the dominated model (Theorem \ref{thm:dominated-gaussian-assortative-KSBM}). One can see that we have added another term to the clustering dynamic,
$$- \frac{4\kappa}{N(n-1)}V^{A}_{G_r}e^{-\frac{V^{A}_{G_r}}{2}}\sum_{s=1, s\neq r}^n{\cos(\theta^{A}_{G_s} - \theta^{A}_{G_r})e^{-\frac{V^{A}_{G_s}}{2}}}$$
where $b_{rs} := \cos(\theta^{A}_{G_s} - \theta^{A}_{G_r})$ controls whether $G_s$ aids ($b_{rs} > 0$) or opposes ($b_{rs} < 0$) clustering of $G_r$.

One can see this phenomenon as arising from the $\sin(\theta^{A}_j - \theta^{A}_i)$ coupling, which is approximately linear for difference less than $\frac{\pi}{2}$. Indeed, if we have two communities $G_r, G_s$ close to each other (i.e. $\abs{\theta^{A}_{G_s} - \theta^{A}_{G_r}} < \frac{\pi}{2}$), $\theta^{A}_{G_s}$ acts linearly on the oscillators in $G_r$ in the differential equation, resulting in faster synchronization. Furthermore this contribution of other communities is sensibly stronger the more clustered they are, due to the factor $e^{-\frac{V^{A}_{G_s}}{2}}$. It is also important to note that their contribution scale with $p = \frac{2}{m(n-1)}$ and thus in the large $m$ limit, we get back to our dominated Gaussian assortative KSBM since intra-community coupling scales as $O(m)$ and inter-community coupling as $O(1)$ and are thus negligible during clusterization.

\subsection{Transition Time}

\begin{proof}
\label{proof:lemma:gaussian-transition-time-bound}
    \textbf{Lemma \ref{lemma:gaussian-transition-time-bound}}\\
    Let $r\in [n]$. When the communities are not synchronized $C_{inter}(r) = O(\frac{\kappa}{N}m(n-1)\frac{2}{m(n-1)}) = O(\frac{\kappa}{N})$ by bounding all sine terms by $1$, i.e. we are counting the expected number of non-zero terms.\\
    If $C_{intra}(r)$ matches $C_{inter}(r)$, then $C_{intra}(r) = O(\frac{\kappa}{N})$. That is $\sum_{j \in G_r}\sin(\abs{\theta_j(t) - \theta_i(t)}) = O(1)$. Under a Gaussian assortative KSBM, $\theta_i(t) -\theta_j(t) \sim \gauss{0}{2V(t)}$. Hence without loss of generality, we need $\sin(\abs{\theta_j(t) - \theta_i(t)}) = O(\frac{1}{m})$. Since for large $m \gg 1$ the above is small, then $\theta_j(t) - \theta_i(t) \ll 1$ also, and the condition becomes $\abs{\theta_j(t) - \theta_i(t)} = O(\frac{n}{N})$. Using Chebyshev's inequality,
    $$\p{\abs{\theta_j(t) - \theta_i(t)} \geq \frac{1}{m}} \leq 2V(t)m^2.$$
    To conclude, it suffices to remark that letting $V(t) = (\frac{1}{m})^{2 + \nu}$ bounds the probability by zero when $m \rightarrow \infty$.
\end{proof}

\begin{corollary}
\label{cor:transition-time-bound}
    Under the same condition as Lemma \ref{lemma:gaussian-transition-time-bound}, the same result holds for an assortative KSBM with intra-community variance $V(t)$.
\end{corollary}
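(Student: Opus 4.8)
The plan is to inspect the proof of Lemma~\ref{lemma:gaussian-transition-time-bound} and pinpoint where Hypothesis~\ref{hypothesis:gaussian-phase} was genuinely used, then argue that this single appearance can be replaced by a distribution-free second-moment estimate. My expectation is that the Gaussian assumption enters only to feed Chebyshev's inequality, and since Chebyshev needs only the mean and variance of $\theta_i(t)-\theta_j(t)$, the whole argument survives verbatim once $V(t)$ is read as the intra-community phase variance $V^A_{G_r}(t)$ of a general assortative KSBM.

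First I would reproduce the purely structural half of the original argument, which never touched the phase distribution. For each $r\in[n]$, as long as the communities are not yet synchronized, bounding every $\sin(\abs{\theta_j-\theta_i})$ by $1$ and counting the expected number of nonzero cross-edges (which is $m(n-1)P_{rs}=O(1)$ by Lemma~\ref{lemma:probability-assortative-KSBM}) gives $C_{\inter}(r,t)=O(\kappa/N)$. Imposing the matching condition $C_{\intra}(r,t_{\trans})=O(\kappa/N)$ with $C_{rr}=\kappa/N$ and fully connected communities forces $\sum_{j\in G_r}\sin(\abs{\theta_j-\theta_i})=O(1)$, hence a typical term is $O(1/m)$ and, in the small-angle regime, $\abs{\theta_j(t)-\theta_i(t)}=O(1/m)$ for $i,j\in G_r$. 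All of this is distribution-agnostic and carries over unchanged.

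Next I would replace the one Gaussian step, namely the claim $\theta_i(t)-\theta_j(t)\sim\gauss{0}{2V(t)}$, by the elementary observation that for $i,j\in G_r$ both phases share the community mean $\theta_{G_r}(t)$, so $\E{\theta_i(t)-\theta_j(t)}=0$, while $\Var(\theta_i(t)-\theta_j(t))=2V(t)-2\operatorname{Cov}(\theta_i,\theta_j)=O(V(t))$, with equality $2V(t)$ in the uncorrelated $m\to\infty$ limit. Chebyshev's inequality then reproduces $\p{\abs{\theta_j(t)-\theta_i(t)}\geq 1/m}\leq 2V(t)m^2$, and choosing $V(t^*)\leq(1/m)^{2+\nu}$ drives the right-hand side to $2m^{-\nu}\to 0$, yielding $t_{\trans}=O(t^*)$ almost surely exactly as before.

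The only real obstacle is the second-moment control in the absence of Gaussianity: I must confirm that dropping Hypothesis~\ref{hypothesis:gaussian-phase} still leaves $\E{\theta_i-\theta_j}$ exactly zero and $\Var(\theta_i-\theta_j)=O(V(t))$. The mean vanishes because both oscillators are centered at the same community average, and the variance of a difference is always controlled by the marginal variances regardless of the within-community correlation structure; this is precisely why the Gaussian hypothesis was inessential here, and it is the step I would write out most carefully.
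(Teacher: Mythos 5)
Your proposal is correct and takes essentially the same route as the paper: the paper's entire proof of Corollary~\ref{cor:transition-time-bound} is the remark that $\Var(\theta_i(t)-\theta_j(t)) \leq 2V(t)$, after which Chebyshev's inequality yields $\p{\abs{\theta_j(t) - \theta_i(t)} \geq \frac{1}{m}} \leq 2V(t)m^2$ and the conclusion follows exactly as in Lemma~\ref{lemma:gaussian-transition-time-bound}, which is precisely your observation that the Gaussian hypothesis enters only through the second moment fed to Chebyshev. If anything, your Cauchy--Schwarz treatment of the covariance term (giving $\Var(\theta_i-\theta_j)=O(V(t))$ without assuming nonnegative correlation) is slightly more careful than the paper's stated constant $2V(t)$, but the constant is immaterial to the limit.
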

\begin{proof}
    It suffices to remark that $\Var(\theta_i(t) -\theta_j(t)) \leq 2V(t)$. We can then bound by Chebyshev's inequality,
    $$\p{\abs{\theta_j(t) - \theta_i(t)} \geq \frac{1}{m}} \leq \Var(\theta_j(t) - \theta_i(t))m^2 \leq 2V(t)m^2.$$
\end{proof}

\begin{proof}
\label{proof:lemma:consistency-gaussian-assortative-KSBM}
    \textbf{Lemma \ref{lemma:consistency-gaussian-assortative-KSBM}}\\
    From Lemma \ref{lemma:gaussian-transition-time-bound}, we know that $t \gg t^*$, where $t^*$ is such that the intra-community variance $V(t^*) \leq 2(\frac{1}{m})^{2+\nu}$ for some $\nu > 0$. Since $m \rightarrow \infty$, then $N \rightarrow \infty$ and thus $V(t^*)$ is arbitrarily small.\\
    We need to show that $\dot{\Theta}^{MF}(t) = \dot{\Theta}^{A}(t)$. Notice that since $V^{A}_{G_r}(t^*) \rightarrow 0$ for any $r\in [n]$, then in Theorem \ref{thm:gaussian-assortative-KSBM}, $\dot{\theta}^{A}_{G_r} = \omega_{G_r} + \frac{2\kappa}{N(n-1)}\sum_{s=1}^n{\sin(\theta^{A}_{G_s} - \theta^{A}_{G_r}(t))}$ and $\dot{V}^{A}_{G_r} = \epsilon$ which is exactly $\dot{\Theta}^{MF}(t)$ up to increasing variance $V^{A}_{G_r}$, but since $\sigma^2 \ll \kappa$, then $\epsilon \rightarrow 0$ which allows us to conclude the first claim.\\
    \\
    For the second claim, since $t \ll t_{\trans}$, we know that the clusterization has not taken place, and thus the intra-community variance $V(t)$ is large. Furthermore taking into account $N \rightarrow \infty$, Theorem \ref{thm:gaussian-assortative-KSBM} becomes, $d\theta^{A}_{G_r} = \omega_{G_r}$ and $dV^{A}_{G_r} = \epsilon - \frac{2\kappa}{n}V^{A}_{G_r}e^{-V^{A}_{G_r}}$ which is exactly the formulation of Theorem \ref{thm:dominated-gaussian-assortative-KSBM}.
\end{proof}

\begin{lemma}
\label{lemma:transition-time-assortative-KSBM}
    (Transition Time in Assortative KSBM)\\
    Suppose that $m \rightarrow \infty$ and $n\sigma \ll \kappa$ for an assortative KSBM $\Theta^{A}$ and $t \gg t_{\trans}$, then, $\dot{\Theta}^{A}(t) = \dot{\Theta}^{MF}(t)$ where $\Theta^{MF}$ is the Mean-Field KSBM of Theorem \ref{thm:MF-KSBM}.
\end{lemma}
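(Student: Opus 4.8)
The plan is to reduce the statement to the hypotheses of Theorem \ref{thm:MF-KSBM}, whose conclusion is exactly $\dot{\Theta}^{MF}$. Since the Mean-Field KSBM requires only that the realisation be $\epsilon$-clustered with $\epsilon \ll 1$ and that $\bar{C} = O(1/N)$ (with $m \gg 1$, which is assumed), the whole argument amounts to verifying these two conditions for the assortative KSBM whenever $t \gg t_{\trans}$, and then reading off the conclusion. This is the non-Gaussian counterpart of the first bullet of Lemma \ref{lemma:consistency-gaussian-assortative-KSBM}; the new ingredient is to obtain clustering without invoking the Gaussian Hypothesis \ref{hypothesis:gaussian-phase}.

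First I would establish clustering. By Corollary \ref{cor:transition-time-bound}, the transition-time bound of Lemma \ref{lemma:gaussian-transition-time-bound} applies verbatim to a (non-Gaussian) assortative KSBM with intra-community variance $V(t)$: namely $t_{\trans} = O(t^*)$ almost surely, where $t^*$ is the earliest time with $V(t^*) \leq (1/m)^{2+\nu}$ for some $\nu > 0$. Hence for $t \gg t_{\trans}$ each community is $(1/m)^{2+\nu}$-variance-clustered. Applying Lemma \ref{lemma:variance-clustered-implies-clustered-whp} with $\epsilon_0 = (1/m)^{2+\nu}$ then upgrades this to $\epsilon$-clustering with high probability, where $\epsilon = \epsilon_0^{1/2-\nu'} = (1/m)^{(2+\nu)(1/2-\nu')} \to 0$ as $m \to \infty$; in particular $\epsilon \ll 1$.

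Next I would verify the coupling condition. By Lemma \ref{lemma:probability-assortative-KSBM}, $P_{rs} = \tfrac{2}{m(n-1)} + O(1/m^2)$ for $r \neq s$, while the coupling is uniform with $C_{rs} = \kappa/N$; thus $\bar{C} = \avg{C_{rs}P_{rs}}_{r\neq s} = O(\tfrac{\kappa}{N}\cdot\tfrac{1}{m}) = O(1/N)$ since $\kappa$ is fixed and $m \to \infty$. With $\epsilon \ll 1$, $\bar{C} = O(1/N)$ and $m \gg 1$ all in hand, Theorem \ref{thm:MF-KSBM} yields $\dot{\theta}^{A}_{G_r}(t) = \mu_r + m\sum_{s} P_{rs}C_{rs}\sin(\htheta_{G_s}(t) - \htheta_{G_r}(t)) + O(\epsilon)$, which is precisely $\dot{\Theta}^{MF}$ up to the $O(\epsilon)$ term; letting $m \to \infty$ drives $\epsilon \to 0$ and gives $\dot{\Theta}^{A}(t) = \dot{\Theta}^{MF}(t)$ almost surely.

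The main obstacle I anticipate is justifying that $V(t)$ stays of order at most $(1/m)^{2+\nu}$ for \emph{all} $t \gg t_{\trans}$, rather than merely at the single time $t^*$ furnished by the transition-time bound. This is the stability of the clustered state: once a community has synchronised, the intra-community restoring force (of order $\kappa/n$ after the $1/N$ normalisation) dominates the inter-community perturbations (which enter at order $1/m$ through $P_{rs}$), so the variance cannot re-inflate. I would make this precise by appealing to the stable steady state $V^{SS} \leq \tfrac{\pi}{2}(\tfrac{\sigma n}{\kappa})^2$ of the variance dynamics (Corollary \ref{cor:vanishing-variance-dominated-gaussian-assortative-KSBM}) together with $n\sigma \ll \kappa$, which ensures the clustered variance remains uniformly small; the Chebyshev step underlying Corollary \ref{cor:transition-time-bound} then transfers this bound to the general assortative KSBM without the Gaussian assumption.
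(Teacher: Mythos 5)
Your proposal follows essentially the same route as the paper's proof: invoke Corollary \ref{cor:transition-time-bound} to get variance-clustering past $t_{\trans}$, upgrade to $\epsilon$-clustering with high probability via Lemma \ref{lemma:variance-clustered-implies-clustered-whp}, verify $\bar{C}=O(1/N)$ from $C_{rs}=\kappa/N$ and $P_{rs}=O(1/m)$ (the paper notes the sharper $O(1/N^2)$), and conclude by Theorem \ref{thm:MF-KSBM}. Your closing discussion of why the variance stays small for \emph{all} $t \gg t_{\trans}$ (via the stable steady state of Corollary \ref{cor:vanishing-variance-dominated-gaussian-assortative-KSBM}) is a point the paper's own proof leaves implicit, so it is a welcome addition rather than a deviation.
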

\begin{proof}
    We want to show that when $t \gg t_{\trans}$ then $\Theta^{A}(t)$ is $\epsilon'$-clustered with $\epsilon' \ll 1$ and $\epsilon' \ll \frac{N(n-1)}{\kappa}$. From Lemma \ref{cor:transition-time-bound}, we know that $t \gg t^*$, where $t^*$ is such that $V(t^*) \leq 2(\frac{1}{m})^{2+\nu}$ for some $\nu > 0$. Since $m \rightarrow \infty$, then $N \rightarrow \infty$ and thus $V(t^*)$ is arbitrarily small. This tell us by Lemma \ref{lemma:variance-clustered-implies-clustered-whp} that $\Theta^{MF}(t^*)$ is $\epsilon' := \frac{\sqrt{2}n}{N}$-clustered with probability tending to $1$ as $N \rightarrow \infty$. It immediately follows that $\epsilon' \ll 1$. From the definition $C_{rs} = \kappa/N$, it is straightforward to check that $\bar{C} = O(1/N^2)$. Thus the conditions for Theorem \ref{thm:MF-KSBM} to hold are verified and therefore $\dot{\Theta}^{A}(t) = \dot{\Theta}^{MF}(t)$.
\end{proof}

\subsection{Regime-Split Lead Matrices}

\begin{proof}
\label{proof:lemma:convergence-S-assortative-KSBM}
    \textbf{Lemma \ref{lemma:convergence-S-assortative-KSBM}}\\
    Using Lemma \ref{lemma:transition-time-assortative-KSBM}, we know that $\Theta$ is $\epsilon' := \frac{\sqrt{2}n}{N}$-clustered by time $t$ almost surely asymptotically, where $\epsilon' \rightarrow 0$. Thus $\theta_i \longrightarrow \theta_{G_r}$ for $i \in G_r, r\in [n]$ when $m \rightarrow \infty$. Since the clustering holds by time $t$, it also holds that $\dot{\theta}_i \longrightarrow \dot{\theta}_{G_r}$.\\
    By continuity of $\dot{f}$, for any $i \in G_r, r\in [n]$ and $t' \in [t, T]$,
    $$\dot{\gamma}_i(t') = \dot{f}(\theta_i(t'))\dot{\theta}_i(t') \longrightarrow \dot{f}(\theta_{G_r}(t'))\dot{\theta}_{G_r}(t') = \dot{\bar{\gamma}}_{G_r}(t').$$
    Hence,
    $$S_{i_1...i_m}(\gamma)(T) = \int_{\Delta^m(\overrightarrow{t})_0^T}{\prod_{j\in[m]}\dot{\gamma}_{i_j}(t_j)\,dt_j} \longrightarrow \int_{\Delta^m(\overrightarrow{t})_0^T}{\prod_{j\in[m]}\dot{\bar{\gamma}}_{G_{r_j}}(t_j)\,dt_j} = S_{G_{r_1}...G_{r_m}}(\bar{\gamma})(T).$$
\end{proof}

\begin{lemma}
\label{lemma:expectation-variance-lead-matrix}
    (Expectation and Variance of Lead Matrix for KSBM)\footnote{The expectation and variance are understood here in the sense of sample-average and sample-variance.}\\
    Suppose $\Theta$ a realisation of a KSBM $\cK$. If $\Theta$ is $\epsilon$-clustered and in synchronized steady state by time $t$, such that $\epsilon \ll 1$, $\avg{C_{rs}P_{rs}}_{r\neq s\in [n]}=O(\frac{1}{N})$ and $m \gg 1$, then for any $r, s \in [n]$ we have,
    \begin{equation*}
    \begin{split}
        \E[i\in G_r, j\in G_s]{L^{SS}_{ij}(\sin(\Theta)} & = L^{SS}_{G_rG_s}(\sin(\bar{\Theta})) + O(\epsilon\omega T),\\
        \Var_{i\in G_r, j\in G_s}(L^{SS}_{ij}(\sin(\Theta))) & = O((\epsilon\omega T)^2).
    \end{split}
    \end{equation*}
\end{lemma}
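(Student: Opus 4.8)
The plan is to reduce both sides to the closed-form steady-state expression of Lemma~\ref{lemma:L-sin-theta} and then Taylor-expand in the small intra-community phase deviations that $\epsilon$-clustering controls. First I would note that, by Lemma~\ref{lemma:synchronized-steady state}, at synchronized steady state the individual oscillators and the community means $\theta_{G_r}$ all rotate at a common frequency $\omega$; hence Lemma~\ref{lemma:L-sin-theta} applies verbatim both to $\Theta$ and to the community-average path $\bar{\Theta} = (\theta_{G_1},\dots,\theta_{G_n})$, giving
\[
L^{SS}_{ij}(\sin(\Theta)) = \frac{\sin(\Delta\theta_{ij})}{2}\bigl(\omega T + \sin(\omega T)\bigr), \qquad L^{SS}_{G_rG_s}(\sin(\bar{\Theta})) = \frac{\sin(\Delta\theta_{G_rG_s})}{2}\bigl(\omega T + \sin(\omega T)\bigr),
\]
where $\Delta\theta_{G_rG_s} := \theta_{G_r}(t) - \theta_{G_s}(t)$. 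Both expressions share the prefactor $\tfrac12(\omega T + \sin(\omega T)) = O(\omega T)$, so the entire estimate reduces to comparing $\sin(\Delta\theta_{ij})$ with $\sin(\Delta\theta_{G_rG_s})$ for $i \in G_r$, $j \in G_s$.

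Next I would introduce the deviations $\delta_i := \theta_i(t) - \theta_{G_r}(t)$ for $i \in G_r$, which satisfy $\abs{\delta_i} \le \epsilon$ by $\epsilon$-clustering, so that $\Delta\theta_{ij} = \Delta\theta_{G_rG_s} + (\delta_i - \delta_j)$ with $\abs{\delta_i - \delta_j} \le 2\epsilon \ll 1$. Expanding with the angle-addition formula together with $\cos(\delta_i-\delta_j) = 1 + O(\epsilon^2)$ and $\sin(\delta_i - \delta_j) = (\delta_i - \delta_j) + O(\epsilon^3)$ yields
\[
\sin(\Delta\theta_{ij}) = \sin(\Delta\theta_{G_rG_s}) + \cos(\Delta\theta_{G_rG_s})(\delta_i - \delta_j) + O(\epsilon^2).
\]
Multiplying by the $O(\omega T)$ prefactor gives, termwise, $L^{SS}_{ij}(\sin(\Theta)) = L^{SS}_{G_rG_s}(\sin(\bar{\Theta})) + \tfrac12(\omega T + \sin(\omega T))\cos(\Delta\theta_{G_rG_s})(\delta_i - \delta_j) + O(\epsilon^2 \omega T)$.

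For the expectation claim I would take the sample average over $i \in G_r$, $j \in G_s$ and use the crucial identity $\avg{\delta_i}_{i \in G_r} = \avg{\theta_i}_{i\in G_r} - \theta_{G_r} = 0$, which holds by the very definition of the community mean; hence $\avg{\delta_i - \delta_j} = 0$ and the first-order term vanishes exactly, leaving $\E[i\in G_r, j\in G_s]{L^{SS}_{ij}(\sin(\Theta))} = L^{SS}_{G_rG_s}(\sin(\bar{\Theta})) + O(\epsilon^2\omega T)$, which in particular is $L^{SS}_{G_rG_s}(\sin(\bar{\Theta})) + O(\epsilon\omega T)$ since $\epsilon \ll 1$. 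For the variance claim, the constant community value $L^{SS}_{G_rG_s}(\sin(\bar{\Theta}))$ contributes nothing to $\Var_{i\in G_r, j\in G_s}$, so the variance is that of the fluctuation $\tfrac12(\omega T + \sin(\omega T))\cos(\Delta\theta_{G_rG_s})(\delta_i - \delta_j) + O(\epsilon^2\omega T)$; bounding the sample variance by the second moment and using $\Var_{i,j}(\delta_i - \delta_j) = O(\epsilon^2)$ with the $O((\omega T)^2)$ squared prefactor gives $\Var_{i\in G_r, j\in G_s}(L^{SS}_{ij}(\sin(\Theta))) = O((\epsilon\omega T)^2)$.

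The main obstacle is the careful, uniform bookkeeping of the error terms in the presence of the growing prefactor $\omega T$, which amplifies the $O(\epsilon)$ deviations: one must check that the Taylor remainders are uniform across all pairs $(i,j)$ (guaranteed because $\epsilon$-clustering bounds every $\abs{\delta_i}$ simultaneously) and that the prefactor satisfies $\abs{\omega T + \sin(\omega T)} = O(\omega T)$. A secondary subtlety is justifying that Lemma~\ref{lemma:L-sin-theta} may legitimately be applied to $\bar{\Theta}$, which follows from Lemma~\ref{lemma:synchronized-steady state} since each $\theta_{G_r}$ is then a genuine constant-frequency sinusoid. If sharper constants were desired, Lemma~\ref{lemma:deviation-synchronized-steady-state} could replace the crude bound $\abs{\delta_i}\le\epsilon$ by the explicit $\arcsin$ expression for $\delta_i$, but this is not needed for the stated orders.
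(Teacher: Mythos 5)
Your proposal is correct and takes essentially the same route as the paper's proof: both reduce to the closed-form steady-state expression $L^{SS}_{ij}(\sin(\Theta)) = \frac{\sin(\Delta\theta_{ij})}{2}(\omega T + \sin(\omega T))$ from Lemma~\ref{lemma:L-sin-theta}/\ref{lemma:S-sin-theta}, use $\epsilon$-clustering to replace $\sin(\Delta\theta_{ij})$ by $\sin(\Delta\theta_{G_rG_s}) + O(\epsilon)$ uniformly over pairs, and then read off the expectation bound from the per-pair error $O(\epsilon\omega T)$ and the variance bound from its square. Your one addition --- retaining the first-order term and exploiting $\avg{\delta_i}_{i\in G_r}=0$ to sharpen the expectation error to $O(\epsilon^2\omega T)$ --- is a refinement within the same argument rather than a different approach, since the paper only claims the cruder $O(\epsilon\omega T)$.
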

\begin{proof}
    By Lemma \ref{lemma:S-sin-theta}, we know for $i \in G_r, j\in G_s$ and $r,s \in [n]$,
    $$L^{SS}_{ij}(\sin(\Theta))(T) = \frac{\sin(\Delta\theta_{ij})}{2}(\omega T + \sin(\omega T)).$$
    Using, Theorem \ref{thm:MF-KSBM}, it directly follows that
    \begin{equation*}
    \begin{split}
        L^{SS}_{ij}(\sin(\Theta))(T) & = \frac{(\sin(\Delta\theta_{G_rG_s})+O(2\epsilon))}{2}(\omega T + \sin(\omega T)),\\
        & = \frac{\sin(\Delta\theta_{G_rG_s})}{2}(\omega T + \sin(\omega T)) + O(\epsilon\omega T),\\
        & = L^{SS}_{G_rG_s}(\sin(\bar{\Theta}))(T) + O(\epsilon\omega T).
    \end{split}
    \end{equation*}
    We now consider the expectation,
    $$\E[i\in G_r, j\in G_s]{L^{SS}_{ij}(\sin(\Theta))(T)} = L^{SS}_{G_rG_s}(\sin(\bar{\Theta}))(T) + O(\epsilon\omega T),$$
    and variance,
    $$\Var_{i\in G_r, j\in G_s}(L^{SS}_{ij}(\sin(\Theta))(T)) = \E[i\in G_r, j\in G_s]{(\sin(\Theta))(T) - L^{SS}_{G_rG_s}(\sin(\bar{\Theta}))(T))^2} = O((\epsilon\omega T)^2).$$
\end{proof}

\begin{proof}
\label{proof:lemma:L-sin-theta}
    \textbf{Lemma \ref{lemma:L-sin-theta}}\\
    See Lemma \ref{lemma:S-sin-theta}.
\end{proof}

\begin{lemma}
\label{lemma:frequency-noise-lead-matrix}
    (Frequency {Heterogeneity} in Lead Matrix)\\
    Let $\Theta$ a realisation of a KSBM $\cK$ and $\bar{\Theta} := (\theta_{G_1},...,\theta_{G_n})$, suppose $\Theta$ is in synchronized steady state by time $t$, then if intrinsic frequency {heterogeneity} $\sigma$ is such that for all $r\in [n]$,
    $$\frac{\sigma}{mC_{rr}P_{rr}} \ll 1 \text{ and } C_{rr}P_{rr} \gg \sum_{s \neq r}{C_{rs}P_{rs}}.$$
    Then for any $i \in G_r, j\in G_s$ and $r,s\in[n]$ and $T$ large enough such that $\omega T \gg 1$,
    $$\abs{L^{SS}_{ij}(\sin(\Theta))(T)-L^{SS}_{G_rG_s}(\sin(\bar{\Theta}))(T)} = O(\frac{\sigma}{m}(\frac{1}{C_{rr}P_{rr}} + \frac{1}{C_{ss}P_{ss}})\omega T).$$
\end{lemma}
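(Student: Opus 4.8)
The plan is to reduce the statement to the explicit steady-state formula of Lemma~\ref{lemma:L-sin-theta} and then control the difference of sine terms through the within-community phase deviations. By Lemma~\ref{lemma:L-sin-theta}, both entries share the common factor $\tfrac12(\omega T + \sin(\omega T))$, so that
\[
L^{SS}_{ij}(\sin(\Theta))(T) - L^{SS}_{G_rG_s}(\sin(\bar{\Theta}))(T) = \frac{\omega T + \sin(\omega T)}{2}\left(\sin(\Delta\theta_{ij}) - \sin(\Delta\theta_{G_rG_s})\right),
\]
where $\Delta\theta_{ij} = \theta_i - \theta_j$ and $\Delta\theta_{G_rG_s} = \theta_{G_r} - \theta_{G_s}$. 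Since $\omega T \gg 1$ and $\abs{\sin(\omega T)} \le 1$, the prefactor is $O(\omega T)$ immediately, so the problem reduces to bounding the difference of the two sines.

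Next I would write $\theta_i = \theta_{G_r} + \delta_i$ and $\theta_j = \theta_{G_s} + \delta_j$ with $\delta_i := \theta_i - \theta_{G_r}$ the within-community deviation, so that $\Delta\theta_{ij} = \Delta\theta_{G_rG_s} + (\delta_i - \delta_j)$. As $\sin$ is $1$-Lipschitz, the mean value theorem gives
\[
\abs{\sin(\Delta\theta_{ij}) - \sin(\Delta\theta_{G_rG_s})} \le \abs{\delta_i - \delta_j} \le \abs{\delta_i} + \abs{\delta_j},
\]
so it remains to bound each deviation. By Lemma~\ref{lemma:deviation-synchronized-steady-state} at synchronized steady state,
\[
\delta_i = \arcsin\!\left(\frac{1}{m}\frac{\omega_i - \omega_{G_r}}{\sum_{s'=1}^n C_{rs'}P_{rs'}\cos(\theta_{G_{s'}} - \theta_{G_r})}\right) + O(\epsilon^2).
\]

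The key estimate is the denominator. Isolating the diagonal term, for which $\cos(\theta_{G_r}-\theta_{G_r}) = 1$, and bounding the remaining terms in absolute value by $\sum_{s'\ne r} C_{rs'}P_{rs'}$ via $\abs{\cos}\le 1$, the hypothesis $C_{rr}P_{rr} \gg \sum_{s'\ne r}C_{rs'}P_{rs'}$ shows the denominator equals $C_{rr}P_{rr}$ to leading order. Combining this with the high-probability Gaussian bound $\abs{\omega_i - \omega_{G_r}} = O(\sigma)$ (using $\omega_{G_r}\to\mu_r$ by the law of large numbers) and the hypothesis $\tfrac{\sigma}{m C_{rr}P_{rr}} \ll 1$ (so the arcsin argument is small and $\arcsin(x)=x(1+O(x^2))$) yields $\abs{\delta_i} = O\!\left(\tfrac{\sigma}{m\,C_{rr}P_{rr}}\right)$, and analogously $\abs{\delta_j} = O\!\left(\tfrac{\sigma}{m\,C_{ss}P_{ss}}\right)$. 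Assembling the prefactor bound with these two deviation bounds produces the claimed estimate.

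The main obstacle is the denominator estimate: one must confirm that the inter-community contributions $\sum_{s'\ne r}C_{rs'}P_{rs'}\cos(\theta_{G_{s'}}-\theta_{G_r})$ are genuinely negligible relative to $C_{rr}P_{rr}$, which is precisely what the second hypothesis guarantees, and that the $O(\epsilon^2)$ correction from Lemma~\ref{lemma:deviation-synchronized-steady-state} is subdominant to the linear-in-$\delta$ term retained here (both $\delta_i$ and $\epsilon$ are small, and $\delta_i=O(\epsilon)$, so $\epsilon^2$ is lower order). The concentration claim $\abs{\omega_i - \omega_{G_r}} = O(\sigma)$ should also be stated carefully, as it holds with high probability over the intrinsic frequencies rather than surely.
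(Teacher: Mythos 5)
Your proof is correct and follows essentially the same route as the paper: the paper likewise bounds the within-community deviations via Lemma~\ref{lemma:deviation-synchronized-steady-state}, uses the two stated hypotheses to reduce the arcsin expression to $O\!\left(\frac{\sigma}{mC_{rr}P_{rr}}\right)$-clustering, and then transfers this to the lead matrix through the explicit steady-state formula. The only cosmetic difference is that the paper delegates the final step (the sine Lipschitz estimate multiplied by the $O(\omega T)$ prefactor) to Lemma~\ref{lemma:expectation-variance-lead-matrix}, whereas you inline that argument directly from Lemma~\ref{lemma:L-sin-theta}.
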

\begin{proof}
    It suffices to remark that by Lemma \ref{lemma:deviation-synchronized-steady-state}, we have for $i \in G_r, r\in [n]$,
    $\abs{\theta_i - \theta_{G_r}} \leq \arcsin(\frac{O(\sigma)}{m\sum_sC_{rs}P_{rs}}) =: \epsilon'$. Hence $\Theta$ is $\epsilon'$-clustered, and given $C_{rr}P_{rr} \gg \sum_{s\neq r}{C_{rs}P_{rs}}$ and $\sigma \ll mC_{rr}P_{rr}$, we have,
    $$\epsilon' = O(\frac{\sigma}{mC_{rr}P_{rr}}),$$
    by approximate linearity of $arcsin$. We can then apply Lemma \ref{lemma:expectation-variance-lead-matrix} to conclude by remarking that we can then take $\epsilon = \frac{\sigma}{m}(\frac{1}{C_{rr}P_{rr}} + \frac{1}{C_{ss}P_{ss}})$ when dealing with communities $G_r$ and $G_s$.
\end{proof}

\subsection{Block-Clustering Metric}

We give some properties about community homogeneity and discriminativity as measures of values in a matrix being shared within a community and distinct across communities.

\begin{proposition}
\label{prop:homogeneousness}
     Let $B \in \R^{N\times N}$, $\lambda \in R$ and community assignment $(G_r)_{r\in[n]}$, then,
     \begin{itemize}
         \item $h(B|G) \geq 0$ and $h(B|G) = 0 \iff \exists C \in \R^{n\times n} \text{ s.t. } B = Tile(C)$,
         \item $h(\lambda B|G) = \lambda^2h(B|G)$,
     \end{itemize}
     where $Tile:\R^{n\times n} \rightarrow \R^{N\times N}$ transforms a $n\times n$ matrix to size $N\times N$ by repeating each entry in submatrices of size $m\times m$.
\end{proposition}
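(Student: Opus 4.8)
The plan is to dispatch all three claims directly from elementary properties of the (sample) variance, treating each block $G_r \times G_s$ as a finite collection of $m^2$ real numbers $\{B_{ij} : i \in G_r,\, j \in G_s\}$. First I would record that $h(B|G)$ is, by definition, the value $\frac{1}{n^2}\sum_{r,s}\Var_{i\in G_r, j\in G_s}(B_{ij})$, a sum of block sample variances with the strictly positive weight $1/n^2$. Since every sample variance is an average of squared deviations from the block mean, each summand is nonnegative, and hence $h(B|G) \geq 0$. This settles the first half of the first bullet with no real work.

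For the scaling property $h(\lambda B|G) = \lambda^2 h(B|G)$, I would use that the sample variance is homogeneous of degree two: replacing each $B_{ij}$ by $\lambda B_{ij}$ multiplies the block mean by $\lambda$ and each deviation by $\lambda$, so $\Var_{i\in G_r, j\in G_s}(\lambda B_{ij}) = \lambda^2 \Var_{i\in G_r, j\in G_s}(B_{ij})$. Summing over $r,s \in [n]$ and pulling the common factor $\lambda^2$ outside the sum yields the claim. This is a one-line computation once the homogeneity of $\Var$ is invoked.

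The only part requiring a short argument is the equivalence $h(B|G) = 0 \iff B = Tile(C)$ for some $C \in \R^{n\times n}$. Because $h(B|G)$ is a positively weighted sum of nonnegative terms, it vanishes if and only if every block variance $\Var_{i\in G_r, j\in G_s}(B_{ij})$ vanishes. For a finite sample, the variance is zero exactly when the sample is constant, i.e. every $B_{ij}$ with $i \in G_r,\, j \in G_s$ equals the block mean. Thus $h(B|G) = 0$ forces $B$ to be constant on each block $G_r \times G_s$; defining $C_{rs}$ to be that common value gives $B = Tile(C)$, where $Tile(C)_{ij} = C_{\phi(i),\phi(j)}$ is well defined since the communities are balanced with $m$ nodes each. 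Conversely, if $B = Tile(C)$ then each block is constant, so each block variance is zero and $h(B|G) = 0$.

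I do not anticipate a genuine obstacle here, as the proposition is a bookkeeping consequence of the definition of $h$ together with the facts that sample variances are nonnegative, vanish precisely on constant samples, and are homogeneous of degree two. The only point meriting slight care is to use the sample (rather than population) notion of variance consistently, as flagged in the paper, and to note that the balancedness of the partition guarantees the $Tile$ map targets the correct $N \times N$ shape so that the identification $B = Tile(C)$ is unambiguous.
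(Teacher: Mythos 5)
Your proof is correct and follows essentially the same route as the paper: decompose $h(B|G)$ into block sample variances, use nonnegativity of variance for the first claim, the fact that a finite sample has zero variance iff it is constant for the $Tile$ characterization, and degree-two homogeneity of variance for the scaling property. If anything, you are slightly more careful than the paper, since you explicitly verify the converse direction ($B = Tile(C) \Rightarrow h(B|G)=0$), which the paper's proof leaves implicit.
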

\begin{proof}
    The positivity follows from the fact that the variance is positive, hence the sum is also positive. Since a sum of positive summands is zero if and only if every summand is zero, we know that the variance inside each submatrix $B_{G_rG_s}$ is zero. Hence each submatrix is constant, and thus $B$ can be obtained by taking a matrix in $\R^{n\times n}$ and repeating each entry across $(m)^2$-sized submatrices. The last property holds for variance in particular, and by linearity of the average, the claim follows.
\end{proof}

\begin{proposition}
\label{prop:discriminativeness}
     Let $B \in \R^{N\times N}$, $\lambda \in R$ and community assignment $(G_r)_{r\in[n]}$, then,
     \begin{itemize}
         \item $d(B|G) \geq 0$ and $d(B|G) = 0 \iff \exists c \in \R \text{ s.t. } \bar{A} = Tile(c)$,
         \item $d(\lambda B|G) = \lambda^2d(B|G)$,
     \end{itemize}
     where $Tile: \R \rightarrow \R^{n\times n}$ is such that for any $i,j \in [N]$, $Tile(c)_{ij} = c$.
\end{proposition}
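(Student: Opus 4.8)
The plan is to proceed exactly as in the proof of Proposition~\ref{prop:homogeneousness}, exploiting the fact that $d(B|G)$ is a nonnegatively weighted sum of squares. First I would establish nonnegativity: since $d(B|G) = \frac{1}{n^2}\sum_{r,s}\left[(B_{G_rG_s}-B_{G_rG_r})^2 + (B_{G_rG_s}-B_{G_sG_s})^2\right]$ is a sum of squares of real numbers with positive coefficient $1/n^2$, it is immediately $\geq 0$. This part is purely definitional and needs no real argument.

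For the equality characterization, the key observation is that a sum of nonnegative terms vanishes if and only if every term vanishes. Thus $d(B|G)=0$ is equivalent to $B_{G_rG_s}=B_{G_rG_r}$ and $B_{G_rG_s}=B_{G_sG_s}$ for all $r,s\in[n]$. The one step that requires a line of reasoning rather than direct computation is propagating these pairwise equalities to a single global constant: fixing any pair $(r,s)$, the two relations give $B_{G_rG_r}=B_{G_rG_s}=B_{G_sG_s}$, so all diagonal block averages coincide, and each off-diagonal average equals the corresponding diagonal ones. Hence there is a single scalar $c$ with $B_{G_rG_s}=c$ for every $r,s$, i.e. $\bar{B}=Tile(c)$. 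The converse is immediate: if $\bar{B}=Tile(c)$ then each difference $B_{G_rG_s}-B_{G_rG_r}$ and $B_{G_rG_s}-B_{G_sG_s}$ is zero, so every summand and therefore $d(B|G)$ vanishes.

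For the scaling property, I would invoke linearity of the block average, namely $(\lambda B)_{G_rG_s} = \E[i\in G_r, j\in G_s]{\lambda B_{ij}} = \lambda B_{G_rG_s}$. Substituting this into each squared difference factors out a common $\lambda^2$, yielding $d(\lambda B|G)=\lambda^2 d(B|G)$ termwise and hence after summation. I expect no genuine obstacle in this proposition; the only nontrivial point is the global-constant deduction above, which closes in a single step once the two relations are read off at each pair $(r,s)$. I also note that the statement writes $\bar{A}$ where $\bar{B}$ is intended, the matrix of block averages $(B_{G_rG_s})_{r,s}$, and $Tile$ here denotes the embedding of a scalar as the constant matrix in $\R^{n\times n}$.
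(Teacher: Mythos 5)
Your proposal is correct and follows essentially the same route as the paper's proof: nonnegativity as a sum of squares, vanishing iff every squared difference vanishes (which forces all block averages $B_{G_rG_s}$ to equal a common constant $c$), and the scaling law via linearity of the block average. Your explicit chaining $B_{G_rG_r}=B_{G_rG_s}=B_{G_sG_s}$ just spells out the step the paper states tersely, and your remark that $\bar{A}$ should read $\bar{B}$ correctly identifies a typo in the statement.
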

\begin{proof}
    As a sum of square, it is positive. Furthermore, it is zero if and only if each square distance is zero, i.e., all average entries of $B$ are the same, that is $\bar{B}$ is obtained by tiling a scalar. The last property holds for the square distance and is preserved by linearity of the average. 
\end{proof}

\begin{proposition}
\label{prop:clustering}
    Let $B \in \R^{N\times N}$ and $\lambda \in \R^*$ with community assignment $(G_r)_{r\in[n]}$, then,
    \begin{itemize}
        \item $g(B|G) \geq 0$,
        \item $g(\lambda B|G) = g(B|G)$.
    \end{itemize}
\end{proposition}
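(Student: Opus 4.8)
The plan is to deduce both claims directly from the two immediately preceding propositions, exploiting the fact that $g(B|G)$ is by definition the ratio $d(B|G)/h(B|G)$ (Definition \ref{def:block-clustering}). That definition is only in force when $h(B|G) \neq 0$, and since Proposition \ref{prop:homogeneousness} already guarantees $h(B|G) \geq 0$, the first thing I would record is that $h(B|G) > 0$ whenever $g$ is defined. This observation is what makes the quotient well-behaved and underlies both parts.

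For non-negativity, I would combine $d(B|G) \geq 0$ from Proposition \ref{prop:discriminativeness} with the strict positivity $h(B|G) > 0$ just noted. A quotient with non-negative numerator and strictly positive denominator is non-negative, so $g(B|G) \geq 0$ follows with no further work.

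For scale invariance, I would invoke the degree-two homogeneity established in both propositions, namely $d(\lambda B|G) = \lambda^2 d(B|G)$ (Proposition \ref{prop:discriminativeness}) and $h(\lambda B|G) = \lambda^2 h(B|G)$ (Proposition \ref{prop:homogeneousness}). Substituting these into the definition of $g$ gives $g(\lambda B|G) = \lambda^2 d(B|G) / \bigl(\lambda^2 h(B|G)\bigr)$, and since $\lambda \neq 0$ forces $\lambda^2 > 0$, the common factor cancels to leave $g(\lambda B|G) = d(B|G)/h(B|G) = g(B|G)$.

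There is essentially no obstacle here: the statement is an immediate corollary of the structural properties already proved for $d$ and $h$, and the proof is a two-line application of them. The only points demanding even momentary care are confirming that the denominator does not vanish (secured by the domain of definition of $g$ together with non-negativity of $h$) and that $\lambda \neq 0$, so that the shared factor $\lambda^2$ may legitimately be cancelled rather than merely factored.
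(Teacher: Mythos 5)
Your proposal is correct and follows essentially the same route as the paper: both parts are deduced directly from the non-negativity and degree-two homogeneity of $d$ and $h$ established in Propositions \ref{prop:discriminativeness} and \ref{prop:homogeneousness}, with $\lambda^2$ cancelling in the ratio. Your version is in fact slightly more careful than the paper's, which loosely calls $g$ a ``ratio of two positive values'' where your explicit remarks that $h(B|G)>0$ on the domain of definition of $g$ and that $\lambda\neq 0$ justifies the cancellation tighten the argument.
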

\begin{proof}
    As a ratio of two positive values, it follows $g(B|G)$ also is positive. For the second property, since $h(\lambda B|G) = \lambda^2h(B|G)$ and $d(\lambda B|G) = \lambda^2d(B|G)$, then $\lambda^2$ cancels out in the ratio. 
\end{proof}

\section{Hierarchical KSBM}
\label{appendix:sec:hierarchical-ksbm}

{In order to compare our experimental results with Arenas \emph{et al.} ~\cite{arenas2006-sync-top-comp-net}, we formulated a two-level hierarchical variant of the KSBM\footnote{Similarly to the assortative KSBM which is strictly speaking not a KSBM model, this hierarchical KSBM is also not a KSBM model. Nonetheless, in the large $m$ limit, the mean-field theorem tells us that they are dynamically equivalent. For all intents and purpose, the hierarchical KSBM defined below can therefore be thought of as a submodel of the KSBM.}. The \emph{hierarchical KSBM} $\cHK = \cHK(n_1, n_2, r,m,\kappa, \mu, \sigma, \theta^0)$ is a random Kuramoto model defined by Equation~\ref{eq:generalized-kuramoto} on $n_1$ fine-grained (first level) and $n_2$ coarse-grained communities (second level) (each consisting of $n_1/n_2$ sub-communities) such that:}
\begin{itemize}
    \item fine-grained communities are fully connected, $P_{rr} = 1$ for any $r \in [n_1]$;
    \item for each node $i \in G_r$ in a coarse-grained community $\hat{G}_q := \bigcup_{k=1}^{n_2}G_{r_{q+k}}$ with $q\in [n_2]$, we add $rm(n_2-1)$ edges between node $i$ and other nodes which do not belong to the same fined-grained community $G_r$ but are in the same coarse-grained community $\hat{G}_q$,
    \item for each node $i \in \hat{G}_q$, we add one edge between node $i$ and a node in another coarse-grained community $\hat{G}_p$ where $p\neq q$, and 
    \item the coupling strengths are uniform, $\oC_{ij} = \frac{\kappa}{N}$ for all $i,j\in [n]$ for some $\kappa \in \R$;
    \item intrinsic frequencies $\omega_i$ and initial conditions $\theta_i(0)$ are set according to Definition~\ref{def:ksbm}.
\end{itemize}

{By construction, in the final graph each node may be connected to more than $rm(n_2-1)$ nodes outside its fine-grained community but within its coarse-grained community. Outside of its coarse-grained community it may be connected to more than one node.}

{We introduced the terminology for fine- and coarse-grained communities for clarity, they correspond to the first and second-level communities discussed in Arenas \emph{et al.}~\cite{arenas2006-sync-top-comp-net}.}

{In our numerical experiment, we used $n_1=9$, $n_2=3$, $m=33$ and $\mu_h$ evenly spaced points in the range $[\frac{2}{9}, 2 \ rad/s]$ with $\sigma=0.1$. We chose the ratio $r = \frac{0.1}{(n_1/n_2) -1}$ in order to have the number of edges within coarse-grained community be comparable to that of Arenas \emph{et al.} \cite{arenas2006-sync-top-comp-net}. We chose $\kappa = 300$ to ensure all coarse-grained communities are synchronized in the end, technically resulting in a final third hierarchical level.}

{The resulting dynamics (Figure~\ref{fig:KSBM-dynamics-hierarchical}) show our transition time which marks the end of the clusterization regime to the formation of the first level communities, and the later transition to the steady state where all second level communities synchronize. In the transient regime, we can see that there is a second transition where the second level communities form (around $t=0.5s$) which could warrant a further regime split. Correspondingly, the lead and covariance matrices show in the transient regime a block structure which separates some of the first-level communities, whereas the steady state block structure is defined around the second-level structure (as the intrinsic frequencies between first level communities are not separated enough with respect to heterogeneity).}

\begin{figure}[H]
\centering
\includegraphics[width=0.92\linewidth]{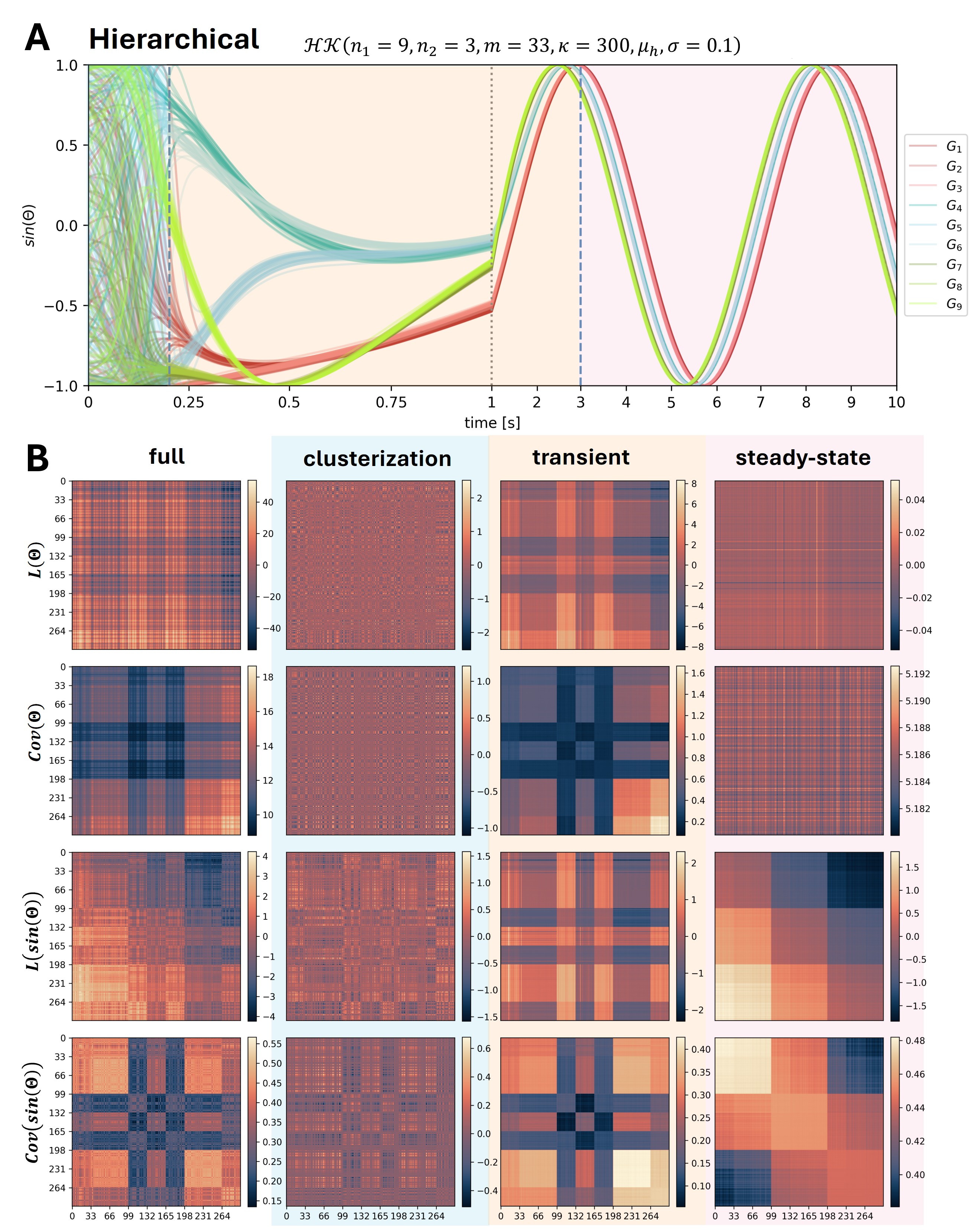}
\caption{\textbf{Hierarchical KSBM time series and lead matrices}\\
(A) Time series of the hierarchical KSBM $\cHK(n_1=9,n_2=3,r=0.05,m=33,\kappa=300,\mu_h, \sigma=0.1)$ with first level communities $G_1, ..., G_9$ and each second level community composed of three first level communities in sequential order. (B) Lead and covariance matrices computed from the full time series and each temporal regime for $\Theta$ and $\sin(\Theta)$.}
\label{fig:KSBM-dynamics-hierarchical}
\end{figure}

{We evaluated the performance of our SCE algorithm on the hierarchical KSBM, with the goal to either retrieve the first or the second level structure (Figure~\ref{fig:hierarchical-agreement}). Our algorithm is unable to perfectly extract the first level, but can still recover a significant part in the transient regime for $Cov(\sin(\Theta))$. The second regime can be reliably recovered in the steady state for $\sin(\Theta)$, and perfectly if we prune the community assignment to $n_2$ communities (see next section). This means that the SCE community estimations lie in between the first and second level, distinguishing part of the first level communities, but not all of them.}

\begin{figure}[ht!]
  \centering
  \includegraphics[width=1\linewidth]{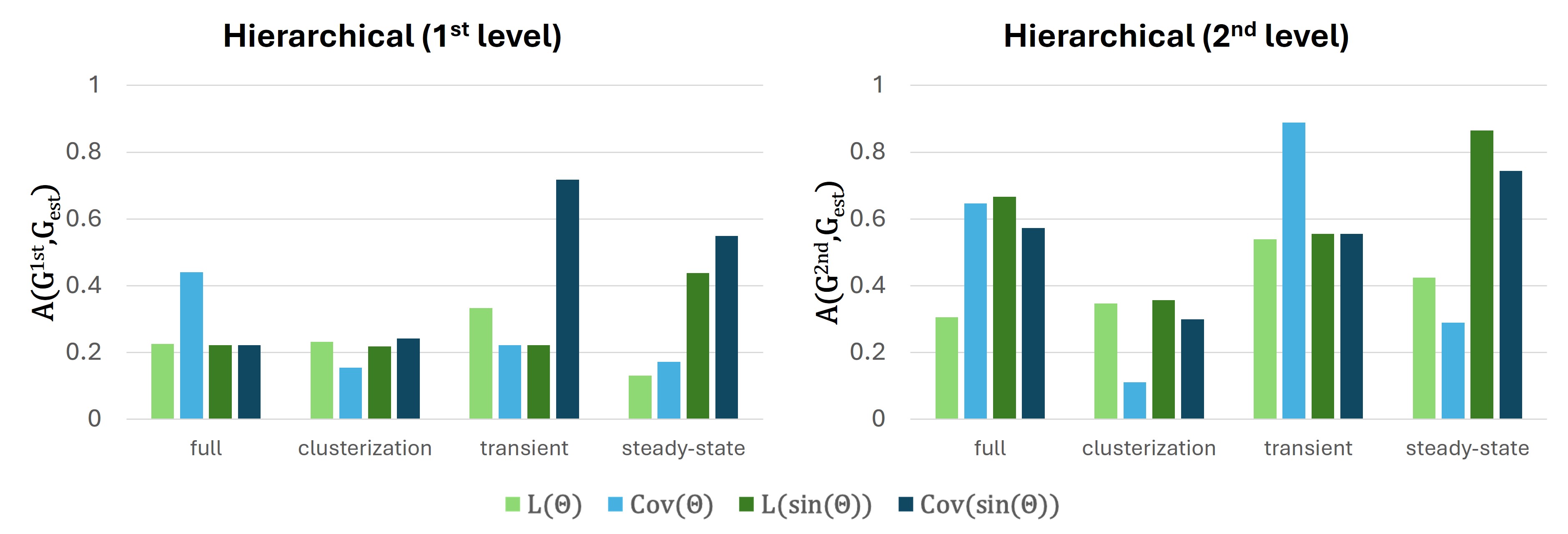}
  \caption{\textbf{SCE performance in hierarchical KSBM}\\
  Agreement $A(G,G_{est})$ and $A(\hat{G},G_{est})$ between the true first and second level community assignment $G$ and $\hat{G}$ and SCE algorithm $G_{est}$ over the full dynamic and each regime-split for the hierarchical KSBM. Exact recovery corresponds to an agreement of $1$ while for random assignment of the true number of community ($n_1$ or $n_2$) the agreement is $1/n_1$ (respectively $1/n_2$), but can be lower if the number of communities estimated is larger than $n_1$ or $n_2$. Significant recovery of the first level is possible in the transient regime for $Cov(\sin(\Theta))$, and recovery of the second level is possible with pruning in steady state for $L(\sin(\Theta))$ and $Cov(\sin(\Theta))$.}
  \label{fig:hierarchical-agreement}
\end{figure}

\section{Stochastic KSBM}
\label{appendix:sec:stochastic-ksbm}

{We also investigated the effect of noise added to the phase of each oscillator during simulation. More formally, we adapted the differential equations of the generalized Kuramoto model (Equation~\ref{eq:generalized-kuramoto}) to a system of stochastic differential equations.}

\begin{equation}
\label{eq:stochastic-generalized-kuramoto}
    d\theta_i(t) = \dot{\theta}_i(t)dt + b\,dW(t).
\end{equation}

{Where $W(t)$ is a Brownian motion scaled with Brownian noise parameter $b$. To solve the dynamics we used a forward Euler-Maruyama scheme.}

{We compared the performance in term of agreeement of the SCE and traditional algorithms such as $K$-means and hierarchical clustering (for various linkage) on the standard KSBM configuration using the stochastic differential equations above instead of the usual Kuramoto model (Figure~\ref{fig:stochastic-ksbm-agreement}). We refer to this model as the \emph{Stochastic KSBM}.}

{In order to compare our SCE algorithm with no a priori knowledge on the number of communities to traditional algorithms, we compared the methods for both the correct number $K=3$ and a wrong assumption $K=6$. We also considered a pruned version of the SCE, where we build a hierarchical clustering on the community assignment of the SCE which merges the communities based on the smallest average distance $\avg{D_{ij}}_{i\in G_r, j\in G_s}$ between two communities where $D$ is the distance matrix defined in Alg.~\ref{alg:structural-community-estimation}.}

\begin{figure}[ht!]
  \centering
  \includegraphics[width=1\linewidth]{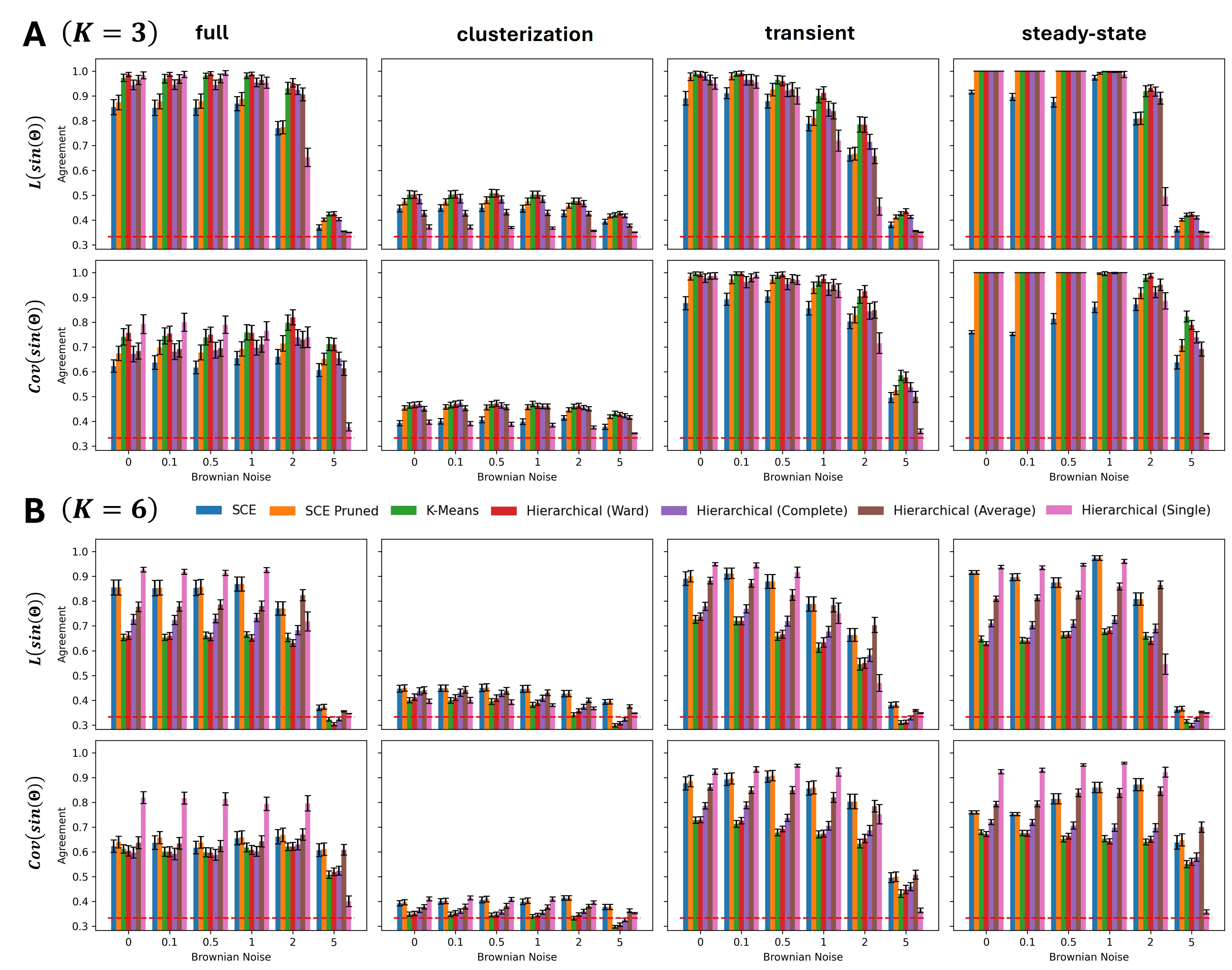}
  \caption{\textbf{Community Recovery in Stochastic KSBM}\\
  Agreement $A(G,G_{est})$ between the true community assignment $G$ and algorithms $G_{est}$ (with assumed number of communities (A) $K=3$ or (B) $K=6$) over the full dynamic and each regime-split for the Standard Stochastic KSBM at various Brownian noise level $b$ across $100$ simulations per noise level (the underlying graph is kept fixed). Exact recovery corresponds to an agreement of $1$ while for random assignment of the true number of community ($n$) the agreement is $1/n$ (red dashed line), but can be lower if the number of communities estimated is larger than $n$. Error bars correspond to the $.95$ confidence interval.}
  \label{fig:stochastic-ksbm-agreement}
\end{figure}

{We observe that our SCE underperforms at exact recovery when the number of communities is known for the other methods ($K=3$), but still captures large part of the community structure (agreement being around $0.85$). Notably the performance of all algorithms are resilient to the noise increases as long as it is low, with decreased performance in the transient regime for medium noise $b=2$, and finally major loss of performance at high noise $b=5$ when the oscillators dynamics resembles that of the Noisy KSBM. In particular, the pruned SCE has performance similar to most of the traditional method in the transient and steady state regimes.}

{If, on the contrary, the assumed number of communities is erroneous, then the performance of all traditional algorithm significantly decreases to around $0.7$, whereas SCE keeps its performance at around $0.85$. Pruning does not reduce the performance generally since it only reduces the number of clusters and does not create new clusters, hence if the SCE has found three clusters, it will not try to split them further. The exception being the single linkage hierarchical clustering which works better in almost all regimes and noise level since it preserves the large $K=3$ clusters during further partition, only creating small sub-clusters.}

\section{Generalization to Path Signatures}\label{appendix:sec:generalization-S}

\subsection{Steady State Path Signatures}

The following results cover the analytical formulas for steady state path signatures and lead matrices for $\gamma = \Theta, \ exp^{\imath\Theta}$ and $\sin(\Theta)$.\\

\begin{lemma}
\label{lemma:S-theta}
    (Steady state path signatures and lead matrix of the phase)\\Let $\Theta$ a realisation a KSBM $\cK$, then for any $I \in [N]^M, M \geq 1$ and  $i,j \in [N]$,
    \begin{equation*}
        S^{SS}_I(\Theta)(T) = \omega^{M}\frac{T^{M}}{M!} \text{ and } L^{SS}_{ij}(\Theta)(T) = 0.
    \end{equation*}
\end{lemma}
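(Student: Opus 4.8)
The plan is to exploit the fact that at synchronized steady state every oscillator advances at a common constant speed, so that $\Theta$ restricted to the steady-state window is simply a straight-line path in $\R^N$, whose signature is elementary. First I would invoke the definition of synchronization together with Lemma \ref{lemma:synchronized-steady state}: for all $t \geq t_{SS}$ we have $\dot{\theta}_k(t) = \omega$ for every $k \in [N]$, where $\omega$ is the common synchronization frequency. In particular, each factor $\dot{\theta}_{i_\ell}(t_\ell)$ appearing in the iterated integral equals the constant $\omega$, independent of the index $i_\ell$ and of the integration variable $t_\ell$.

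Next I would substitute this into the iterated-integral definition of the signature over the steady-state window $[t_{SS}, t_{SS}+T]$, obtaining
\begin{equation*}
    S^{SS}_I(\Theta)(T) = \int_{t_{SS}}^{t_{SS}+T}\int_{t_{SS}}^{t_M}\cdots\int_{t_{SS}}^{t_2}\dot{\theta}_{i_1}(t_1)\cdots\dot{\theta}_{i_M}(t_M)\,dt_1\cdots dt_M = \omega^M \cdot \mathrm{Vol}(\Delta_M),
\end{equation*}
where $\Delta_M = \{t_{SS} \le t_1 \le \cdots \le t_M \le t_{SS}+T\}$. By translation invariance this volume equals that of the standard ordered simplex $\{0 \le t_1 \le \cdots \le t_M \le T\}$, namely $T^M/M!$, which yields $S^{SS}_I(\Theta)(T) = \omega^M T^M/M!$ as claimed. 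For the lead matrix I would then specialize to $M=2$: both $S^{SS}_{(i,j)}(\Theta)(T)$ and $S^{SS}_{(j,i)}(\Theta)(T)$ equal $\omega^2 T^2/2$, a value that does not depend on the ordering of the pair. Hence their difference vanishes, and $L^{SS}_{ij}(\Theta)(T) = \tfrac{1}{2}\left(S^{SS}_{(i,j)}(\Theta) - S^{SS}_{(j,i)}(\Theta)\right) = 0$.

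There is essentially no hard step here; the only points requiring care are to use the iterated-integral form of the signature (whose basepoint is $t_{SS}$, not $0$) rather than the $\gamma(0)=0$ shortcut, and to note that the simplex volume is unchanged under the time shift. Geometrically the result is transparent: a straight line encloses no signed area in any coordinate plane, so every antisymmetrized level-two term, and therefore the entire lead matrix, vanishes.
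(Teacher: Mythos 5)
Your proposal is correct and follows essentially the same route as the paper's proof: both use Lemma \ref{lemma:synchronized-steady state} to replace each $\dot{\theta}_{i_\ell}$ by the constant $\omega$, reduce the signature to $\omega^M$ times the volume $T^M/M!$ of the ordered simplex, and conclude that the lead matrix vanishes because this value is independent of the index ordering. Your added care about the basepoint $t_{SS}$ and translation invariance of the simplex is a minor refinement, not a different argument.
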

\begin{proof}
    Let $I \in [N]^M, M \geq 1$, we know from Lemma \ref{lemma:synchronized-steady state} that $\dot{\theta}_i(t_i) = \omega t + \Delta\theta_i$. Using this in the path signatures gets us,
    \begin{equation*}
        S^{SS}_I(\Theta)(T) = \int_{\Delta^M(0,T)}{\prod_{i\in I}{\dot{\theta}_i(t_i)dt_i}}
        = \int_0^T\int_0^{t_M}...\int_0^{t_2}{\omega^M\,dt_1...dt_M}
        = \omega^M\frac{T^M}{M!},
    \end{equation*}
    which is independent of what indices are in $I$. It then directly follows that for any $i,j \in [N]$,
    $$L^{SS}_{ij}(\Theta)(T) = \frac{1}{2}(S_{ij}^{SS}(\Theta)(T)-S_{ji}^{SS}(\Theta)(T))= 0.$$
\end{proof}

\begin{lemma}
\label{lemma:S-exp-i-theta}
    (Steady state path signatures and lead matrix of the complex phase)\\
    Let $\Theta$ a realisation a KSBM $\cK$, then for any $I \in [N]^M, M \geq 1$ and  $i,j \in [N]$
    \begin{equation*}
        S^{SS}_I(e^{\imath\Theta})(T) = \frac{\lambda_{I}}{M!}(e^{\imath\omega T}-1)^M \text{ and } L^{SS}_{ij}(e^{\imath\Theta})(T) = 0,
    \end{equation*}
    where $\lambda_{I} = e^{\imath\sum_{i\in I}{\Delta\theta_i}}$.
\end{lemma}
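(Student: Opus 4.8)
The plan is to mirror the argument used for Lemma~\ref{lemma:S-theta}, exploiting the explicit form of the phases at synchronized steady state. First I would invoke Lemma~\ref{lemma:synchronized-steady state}, which gives $\theta_i(t) = \omega t + \Delta\theta_i$ for every $i \in [N]$ at steady state, where $\omega$ is the common synchronized frequency and $\Delta\theta_i$ is the constant offset. Writing $\gamma = e^{\imath\Theta}$ componentwise, $\gamma_i(t) = e^{\imath\Delta\theta_i}e^{\imath\omega t}$, so that the derivative factorizes cleanly as $\dot\gamma_i(t) = \imath\omega\, e^{\imath\Delta\theta_i}\, e^{\imath\omega t}$.

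Substituting into the definition of the path signature for a multi-index $I = (i_1,\dots,i_M)$, the integrand becomes
$$\prod_{k=1}^M \dot\gamma_{i_k}(t_k) = (\imath\omega)^M\Big(\prod_{k=1}^M e^{\imath\Delta\theta_{i_k}}\Big)\prod_{k=1}^M e^{\imath\omega t_k} = (\imath\omega)^M\,\lambda_I\,\prod_{k=1}^M e^{\imath\omega t_k},$$
where I recognize $\prod_k e^{\imath\Delta\theta_{i_k}} = e^{\imath\sum_{i\in I}\Delta\theta_i} = \lambda_I$, a constant that pulls out of the integral together with $(\imath\omega)^M$. This reduces the problem to evaluating the purely temporal iterated integral $\int_{\Delta^M(0,T)}\prod_{k=1}^M e^{\imath\omega t_k}\,dt_k$.

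The key computational step is this exponential iterated integral. I would set $F(t) := \int_0^t e^{\imath\omega s}\,ds = \tfrac{1}{\imath\omega}(e^{\imath\omega t}-1)$, noting $F(0)=0$ and $F'(t)=e^{\imath\omega t}$, so that the temporal integral is exactly the $M$-fold iterated integral of the scalar path $F$. By the standard scalar identity $\int_{\Delta^M(0,T)}dF(t_1)\cdots dF(t_M) = F(T)^M/M!$ — proved by a one-line induction on $M$, since the running integral satisfies $I_M'(t)=F'(t)I_{M-1}(t)$ with $I_0\equiv 1$ — this integral equals $\tfrac{1}{M!}F(T)^M = \tfrac{1}{M!}(\imath\omega)^{-M}(e^{\imath\omega T}-1)^M$. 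Multiplying back the prefactor $(\imath\omega)^M\lambda_I$ cancels the powers of $\imath\omega$ and yields $S^{SS}_I(e^{\imath\Theta})(T) = \tfrac{\lambda_I}{M!}(e^{\imath\omega T}-1)^M$, as claimed.

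For the lead matrix, I would specialize to $M=2$ and observe the crucial symmetry: $\lambda_{(i,j)} = e^{\imath(\Delta\theta_i+\Delta\theta_j)} = \lambda_{(j,i)}$, while the factor $(e^{\imath\omega T}-1)^2$ is index-independent. Hence $S^{SS}_{(i,j)}(e^{\imath\Theta})(T) = S^{SS}_{(j,i)}(e^{\imath\Theta})(T)$, and their antisymmetric difference in the definition of $L_{ij}$ vanishes, giving $L^{SS}_{ij}(e^{\imath\Theta})(T) = 0$. I do not anticipate a genuine obstacle here; the only point requiring care is the reduction of the temporal integral to the scalar identity $F(T)^M/M!$, which remains valid verbatim for complex-valued $F$ since it rests only on the product rule and not on any real-analytic structure.
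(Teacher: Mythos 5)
Your proof is correct and follows essentially the same route as the paper: both rest on the steady-state form $\gamma_i(t)=e^{\imath\Delta\theta_i}e^{\imath\omega t}$ from Lemma~\ref{lemma:synchronized-steady state}, an induction showing the iterated temporal integral equals $\frac{1}{M!}(e^{\imath\omega T}-1)^M$ up to the factor $\lambda_I$, and the permutation symmetry $\lambda_{(i,j)}=\lambda_{(j,i)}$ to make the lead matrix vanish. The only cosmetic difference is that you factor out $(\imath\omega)^M\lambda_I$ first and invoke the standard scalar identity $\int_{\Delta^M(0,T)}\prod_{k}F'(t_k)\,dt_k = F(T)^M/M!$, whereas the paper runs the same induction with these constants kept inside the integrand.
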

\begin{proof}
    At steady state, using Lemma \ref{lemma:synchronized-steady state}, $\gamma_i(t) = e^{\imath\Delta\theta_i}e^{\imath\omega t}$ and $\dot{\gamma}_i(t) = \imath\omega e^{\imath\Delta\theta_i}e^{\imath\omega t}$. We then prove the statement by induction on the length $M$ of the multi-index $I = (i_1, ..., i_M) \in [N]^M$. For the base case,
    \begin{equation*}
        S^{SS}_{i_1}(e^{\imath\Theta})(T) = \int_0^T{\dot{\gamma}_{i_1}(t)\, dt} = \gamma_{i_1}(T) - \gamma_{i_1}(0) = e^{\imath\Delta\theta_{i_1}}(e^{\imath\omega T}-1) = \frac{\lambda_{i_1}}{1!}(e^{\imath\omega T}-1)^1.
    \end{equation*}
    If we now assume
    $$S^{SS}_{i_1...,i_{M-1}}(e^{\imath\Theta})(T) = \frac{\lambda_{i_1...i_{M-1}}}{(M-1)!}(e^{\imath\omega T}-1)^{(M-1)},$$
    then
    \begin{equation*}
    \begin{split}
        S^{SS}_I(e^{\imath\Theta})(T) & = \int_0^T{S^{SS}_{i_1...,i_{M-1}}(e^{\imath\Theta})(t)\dot{\gamma}_{i_M}(t)\, dt} = \int_0^T{\frac{\lambda_{i_1...i_{M-1}}}{(M-1)!}(e^{\imath\omega t}-1)^{(M-1)}\imath\omega e^{\imath\Delta\theta_{i_M}}e^{\imath\omega t}\,dt},\\
        & = \lambda_I\int_0^T{\frac{\partial}{\partial t}(\frac{1}{M!}(e^{\imath\omega t}-1)^M)\,dt} = \frac{\lambda_I}{M!}(e^{\imath\omega T}-1)^M,
    \end{split}
    \end{equation*}
    as desired.\\
    \\
    If we consider a permutation $\sigma \in S_m$ on $[m]$ and define $\sigma(i_1...i_M) := (i_{\sigma(1)}...i_{\sigma(M)})$. Because the sum  $\sum_{i\in I}\Delta\theta_i$ is invariant under permutations of $I$, we have
    \begin{equation*}
        \lambda_{\sigma(I)} = \lambda_I \text{ and } S^{SS}_{\sigma(I)}(e^{\imath\Theta})(T) = S^{SS}_I(e^{\imath\Theta})(T).
    \end{equation*}
    It then immediately follows for $M=2$ that,
    $$L^{SS}_{ij}(e^{\imath\Theta})(T) = \frac{1}{2}(S_{ij}(e^{\imath\Theta})(T) - S_{ji}(e^{\imath\Theta})(T)) = 0$$
    since $(j,i)$ is a permutation of $(i,j)$.
\end{proof}

\begin{lemma}
\label{lemma:S-sin-theta}
     (Steady state path signatures and lead matrix of the sinusoid)\\
    Let $\Theta$ be a realisation a KSBM $\cK$, then for any $i,j \in [N]$
    \begin{equation*}
    \begin{split}
        S^{SS}_{ij}(\sin(\Theta))(T) = & \frac{1}{2}sin^2(\omega T)\cos(\Delta\theta_i + \Delta\theta_j) + \frac{1}{4}\sin(2\omega T)\sin(\Delta\theta_i + \Delta\theta_j),\\
        & + \frac{\omega T}{2}\sin(\Delta\theta_{ij}) + [\sin(\theta_j(t))]_0^T\sin(\Delta\theta_i),
    \end{split}
    \end{equation*}
    $$L^{SS}_{ij}(\sin(\Theta))(T) = \frac{\sin(\Delta\theta_{ij})}{2}(\omega T + \sin(\omega T)).$$
    For $I \in [N]^M, M\geq 1$, then
    \begin{equation*}
    \begin{split}
        S^{SS}_I(e^{\imath\Theta}-e^{-\imath\Theta})(T) & = (\imath\omega)^M\sum_{U\subseteq I}{\lambda_{U\subseteq I}B_{U\subseteq I}},
    \end{split}
    \end{equation*}
    where $\lambda_{U\subseteq I} = e^{\imath(\sum_{u\in U}{\Delta\theta_u}-\sum_{u\in I\setminus U}{\Delta\theta_u})}$ and,
    \begin{equation*}
        B_{U\subseteq I} = \int_{\Delta^M(0,T)}{(\prod_{u\in U}e^{\imath\omega t_u})(\prod_{u\in I\setminus U}-e^{-\imath\omega t_u})\,dt_{i_1}...dt_{i_M}}.
    \end{equation*}
\end{lemma}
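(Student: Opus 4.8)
The plan is to exploit the fact that at synchronized steady state every oscillator rotates at a common frequency with a fixed phase offset. Concretely, Lemma~\ref{lemma:synchronized-steady state} gives $\theta_i(t) = \omega t + \Delta\theta_i$ on the steady-state interval (reparametrized to $[0,T]$), where $\Delta\theta_i = \theta_i(0)$ is a constant, so that $\Delta\theta_{ij} = \Delta\theta_i - \Delta\theta_j$ is genuinely $t$-independent. Substituting this into $\gamma = \sin(\Theta)$ turns every coordinate into an explicit sinusoid $\gamma_i(t) = \sin(\omega t + \Delta\theta_i)$ with $\dot\gamma_j(t) = \omega\cos(\omega t + \Delta\theta_j)$, reducing all iterated integrals to elementary trigonometric (or exponential) integrals.

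First I would compute the level-two term. Since $\sin(\Theta)$ does not vanish at $t=0$, the inner integral produces the increment $\gamma_i(t) - \gamma_i(0)$, so that
\[
S^{SS}_{ij}(\sin(\Theta))(T) = \int_0^T\big(\sin(\omega t + \Delta\theta_i) - \sin(\Delta\theta_i)\big)\,\omega\cos(\omega t + \Delta\theta_j)\,dt.
\]
Applying $\sin A\cos B = \tfrac12\big(\sin(A+B) + \sin(A-B)\big)$ to the main integral produces the oscillatory terms $\tfrac12\sin^2(\omega T)\cos(\Delta\theta_i+\Delta\theta_j)$ and $\tfrac14\sin(2\omega T)\sin(\Delta\theta_i+\Delta\theta_j)$ (from the $\sin(2\omega t + \Delta\theta_i+\Delta\theta_j)$ piece, after using $\cos(2\omega T)-1 = -2\sin^2(\omega T)$) together with the secular term $\tfrac{\omega T}{2}\sin(\Delta\theta_{ij})$ (from the constant $\sin(\Delta\theta_{ij})$ piece), while the subtracted constant contributes the boundary term $[\sin(\theta_j(t))]_0^T\sin(\Delta\theta_i)$. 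This yields the first display.

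For the lead matrix I would antisymmetrize, $L^{SS}_{ij} = \tfrac12\big(S^{SS}_{ij} - S^{SS}_{ji}\big)$. The terms symmetric under $i\leftrightarrow j$ (those depending only on $\Delta\theta_i+\Delta\theta_j$) cancel, the two secular terms add to $\omega T\sin(\Delta\theta_{ij})$, and the boundary cross-terms combine via the identity $\sin(\Delta\theta_i)\sin(\omega T + \Delta\theta_j) - \sin(\Delta\theta_j)\sin(\omega T + \Delta\theta_i) = \sin(\omega T)\sin(\Delta\theta_{ij})$ (expand both sides and cancel the $\cos(\omega T)$ contribution); dividing by two gives $\tfrac{\sin(\Delta\theta_{ij})}{2}\big(\omega T + \sin(\omega T)\big)$. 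Equivalently, one can note that $L^{SS}_{ij}$ is the translation-invariant Lévy area $\tfrac12\int_0^T(\gamma_i\dot\gamma_j - \gamma_j\dot\gamma_i)\,dt$ plus a boundary correction, and that at steady state the area integrand collapses to the constant $\omega\sin(\Delta\theta_{ij})$, making the secular part manifestly $\tfrac{\omega T}{2}\sin(\Delta\theta_{ij})$.

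For the general multi-index formula I would pass to complex exponentials and use multilinearity of the iterated integral. Writing $\gamma = e^{\imath\Theta} - e^{-\imath\Theta}$, each coordinate becomes $\gamma_i(t) = e^{\imath\Delta\theta_i}e^{\imath\omega t} - e^{-\imath\Delta\theta_i}e^{-\imath\omega t}$, whose increment distributes the product $\prod_{j=1}^M d\gamma_{i_j}(t_j)$ over the simplex $\Delta^M(0,T)$ into $2^M$ branches indexed by the subset $U\subseteq I$ of positions taking the $e^{+\imath\omega t}$ term. Each branch factors into a frequency prefactor $(\imath\omega)^M$, the $t$-independent phase $\lambda_{U\subseteq I} = e^{\imath(\sum_{u\in U}\Delta\theta_u - \sum_{u\in I\setminus U}\Delta\theta_u)}$, and the remaining simplex integral $B_{U\subseteq I}$, giving the claimed subset sum. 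The main obstacle is not any single integral but the sign-and-phase bookkeeping in this last step: one must track the sign each branch inherits from the minus in $e^{\imath\Theta}-e^{-\imath\Theta}$ and from the factor $-\imath\omega$ carried by each $e^{-\imath\omega t}$ derivative, and verify (e.g.\ against the $M=1$ base case $S^{SS}_{i_1} = e^{\imath\Delta\theta_{i_1}}(e^{\imath\omega T}-1) - e^{-\imath\Delta\theta_{i_1}}(e^{-\imath\omega T}-1)$) that the $2^M$ expansion reassembles correctly. The same care is needed in the level-two case for the $\gamma_i(0)$ increment, whose sign convention controls whether the oscillatory part of $L^{SS}_{ij}$ appears as $+\sin(\omega T)$ or $-\sin(\omega T)$.
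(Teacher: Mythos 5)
Your proposal is correct in approach and essentially identical to the paper's proof: substitute the steady-state solution $\theta_i(t)=\omega t+\Delta\theta_i$, compute the level-two term by elementary trigonometric integration (a step the paper's proof skips as ``simple integration''), antisymmetrize so that every term depending only on $\Delta\theta_i+\Delta\theta_j$ cancels, and expand $\prod_{j}\dot\gamma_{i_j}(t_j)$ over the $2^M$ subsets $U\subseteq I$ for the complex-exponential case.

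The two sign caveats you flag at the end, however, are genuine, and they point at the paper rather than at your derivation. First, integrating the increment $\gamma_i(t)-\gamma_i(0)$ strictly produces the boundary term $-[\sin(\theta_j(t))]_0^T\sin(\Delta\theta_i)$ (minus, not plus), and antisymmetrizing then yields $L^{SS}_{ij}(\sin(\Theta))(T)=\frac{\sin(\Delta\theta_{ij})}{2}\left(\omega T-\sin(\omega T)\right)$; the paper's proof obtains the stated $+\sin(\omega T)$ only because it takes the stated $S^{SS}_{ij}$ formula, plus sign included, as its starting point. Second, your $M=1$ base case is exactly the test that the stated subset formula fails: since $\frac{d}{dt}\bigl(-e^{-\imath(\omega t+\Delta\theta_u)}\bigr)=+\imath\omega\, e^{-\imath(\omega t+\Delta\theta_u)}$, the minus from $e^{\imath\Theta}-e^{-\imath\Theta}$ and the minus from differentiating the lower branch cancel, so after factoring out $(\imath\omega)^M$ each lower branch contributes $+e^{-\imath\omega t_u}$, and the factors $-e^{-\imath\omega t_u}$ inside $B_{U\subseteq I}$ should not carry the minus sign. (The paper's proof writes $\dot\gamma_i=\imath\omega(e^{\imath(\omega t+\Delta\theta_i)}-e^{-\imath(\omega t+\Delta\theta_i)})$, which is not the derivative of $e^{\imath(\omega t+\Delta\theta_i)}-e^{-\imath(\omega t+\Delta\theta_i)}$, and the error propagates to the statement.) If you execute your plan with the signs tracked as you yourself prescribe, you will prove corrected versions of these formulas; in every other respect your method is the paper's own.
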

\begin{proof}
    $S^{SS}_{ij}(\sin(\Theta))(T)$ can be computed by expanding the sinusoid and simple integration. We focus on computing $L^{SS}_{ij}(\sin(\Theta))(T)$ as a difference of the signatures. First remark that all the terms in the signatures that are in function of $\Delta\theta_i + \Delta\theta_j$ will cancel out due to permutation invariance. Second, let us expand,
    \begin{equation*}
    \begin{split}
        [\sin(\theta_j(t))]_0^T\sin(\Delta\theta_i) & = \sin(\omega T + \Delta\theta_j)\sin(\Delta\theta_i) - \sin(\Delta\theta_j)\sin(\Delta\theta_i),\\
        & = \sin(\omega T)\cos(\Delta\theta_j)\sin(\Delta\theta_i) + (\cos(\omega T)-1)\sin(\Delta\theta_j)\sin(\Delta\theta_i),
    \end{split}
    \end{equation*}
    where the second term is invariant under permutation of $(i,j)$ and thus also vanishes. Hence,
    \begin{equation*}
    \begin{split}
        2L^{SS}_{ij}(\sin(\Theta))(T) & = \frac{\omega T}{2}\sin(\Delta\theta_{ij}) - \frac{\omega T}{2}\sin(\Delta\theta_{ji}) + \sin(\omega T)(\sin(\Delta\theta_i)\cos(\Delta\theta_j) - \cos(\Delta\theta_i)\sin(\Delta\theta_j)),\\
        & = \omega T \sin(\Delta\theta_{ij}) + \sin(\omega T)\sin(\Delta\theta_{ij}).
    \end{split}
    \end{equation*}
    From which the claim readily follows.\\
    \\
    For the general Path Signatures $S^{SS}_I(e^{\imath\Theta}-e^{-\imath\Theta})(T)$, let $I\in [N]^M, M\geq 1$, we know
    $$\dot{\gamma}_i(t) = \imath \omega (e^{\imath (\omega t + \Delta\theta_i)} - e^{-\imath (\omega t + \Delta\theta_i)})$$
    Therefore, the product of the derivatives decomposes into a sum of products of $m$ factors where one selects either $e^{\imath (\omega t + \Delta\theta_i)}$ or $-e^{-\imath (\omega t + \Delta\theta_i)}$ for each
    \begin{equation*}
    \begin{split}
        \prod_{i\in I}\dot{\gamma}_i(t_i) & = (\imath \omega)^M\sum_{U\subseteq I}{(\prod_{u\in U}e^{\imath (\omega t_u + \Delta\theta_u)})(\prod_{u\in I\setminus U}-e^{-\imath (\omega t_u + \Delta\theta_u)})},\\
        & = (\imath \omega)^M\sum_{U\subseteq I}{\lambda_{U\subseteq I}(\prod_{u\in U}e^{\imath (\omega t_u)})(\prod_{u\in I\setminus U}-e^{-\imath (\omega t_u)})}.
    \end{split}
    \end{equation*}
    Using the linearity of the integral, we obtain,
    \begin{equation*}
        S^{SS}_I(e^{\imath\Theta}-e^{-\imath\Theta})(T) = \int_{\Delta^M(0,T)}{\prod_{i\in I}\dot{\gamma}_i(t_i)\,dt_i} = (\imath\omega)^M\sum_{U\subseteq I}{\lambda_{U\subseteq I}B_{U\subseteq I}}.
    \end{equation*}
\end{proof}

\subsection{Block-Clustering Metric}

Community homogeneity and discriminativity generalizes to any tensor $B \in \R^{(N^M)}$ with properties from Prop. \ref{prop:homogeneousness},\ref{prop:discriminativeness} and Prop.\ref{prop:clustering} being preserved.\\

\begin{definition}
    (Community Homogeneity)\\
    Let $B \in \R^{(N^M)}$ be a $M$-dimensional tensor, and consider a community assignment in the form of a partition $\coprod_{r\in [n]}G_r = [N]$. We define homogeneousness of $B$ as,
    $$h(B|G) = \frac{1}{n^M}\sum_{r_1,...,r_M \in [n]}{\Var_{I \in \prod_{k\in[M]}G_{r_k}}(B_I)}.$$
\end{definition}

\begin{definition}
    (Community Discriminativity)\\
    Let $B \in \R^{(N^M)}$ a $M$-dimensional tensor and community assignment $\coprod_{r\in [n]}G_r = [N]$. We define the discriminativity of $B$ as,
    $$d(B|G) = \frac{1}{n^M}\sum_{r_1,...,r_M \in [n]}{(B_{G_{r_1}...G_{r_M}}-B_{G_{r_1}...G_{r_1}})^2 + ... +(B_{G_{r_1}...G_{r_M}}-B_{G_{r_M}...G_{r_M}})^2},$$
    where $B_{G_{r_1}...G_{r_M}} = \E[I \in \prod_{k\in[M]}G_{r_k}]{B_I}.$
\end{definition}

\subsection{Structural Community Estimation Algorithm}

We adjust $D$ as the $l_2$ distance matrix between oscillators representative vector in tensor $B \in \R^{(N^M)}$, $v_i = (B_I)_{I \in [n]^M \text{ s.t. } i \in I}$ for $i \in [N]$ ordered on the index position of $i$ in $I$.

\begin{algorithm}
\caption{Structural community estimation algorithm}\label{alg:structural-community-estimation}
\begin{algorithmic}[1]
\State Given tensor $B \in \R^{(N^M)}$ with number of oscillators $N$
\State Compute $l_2$ distance matrix $D \in \R^{N\times N}_{\geq 0}$ based on $B$
\State Initialize community $G_1 \gets [N]$ and $g \gets 0$
\For{$n = 1:N$}
    \State Find $(i,j) = \argmax_{i,j \in G_r, r\in[n]}D_{ij}$
    \State Let $\mu_r \gets i$ where $r$ is such that $i\in G_r$ and $\mu_{n+1} \gets j$
    \State Initialize new communities $G'_\cdot \gets \{\}$
    \For{$k \in [N]$}
        \State $G'_{r^*} \gets G'_{r^*} \cup \{k\}$ where $r^* = \argmin_{r\in [n+1]}D_{k\mu_r}$ 
    \EndFor
    \State Compute $g' = g(B|G)/n$
    \If{$g'> g$} 
        \State $g \gets g'$ and $G \gets G'$
    \Else
        \State \textbf{break}
    \EndIf 
\EndFor
\State return $G$
\end{algorithmic}
\end{algorithm}

\end{document}